\newtheorem{theorem}{Theorem}
\newtheorem{lemma}{Lemma}
\newtheorem{proposition}{Proposition}
\newtheorem{definition}{Definition}
\title{Rethinking the Pruning Criteria for Convolutional Neural Network}
\begin{document}

\author{\textbf{Zhongzhan Huang$^1$} \quad \textbf{Wenqi Shao$^{2,3\ddag}$} \quad \textbf{Xinjiang Wang$^3$} \quad \textbf{Liang Lin$^1$} \quad \textbf{Ping Luo$^4$}\thanks{Corresponding author: pluo.lhi@gmail.com;\quad $^\ddag$ co-first author.}\\
\vspace{-0.25cm}
\textsuperscript{\rm 1}Sun Yat-Sen University,
\textsuperscript{\rm 2}The Chinese University of Hong Kong,\\
\textsuperscript{\rm 3}SenseTime Research,\textsuperscript{\rm 4}The University of Hong Kong 
}


\maketitle

\begin{abstract}
Channel pruning is a popular technique for compressing convolutional neural networks~(CNNs), where various pruning criteria have been proposed to remove the redundant filters. From our comprehensive experiments, we found two blind spots of pruning criteria: (1)~Similarity: There are some strong similarities among several primary pruning criteria that are widely cited and compared. According to these criteria, the ranks of filters’ \textit{Importance Score} are almost identical, resulting in similar pruned structures. (2)~Applicability: The filters' \textit{Importance Score} measured by some pruning criteria are too close to distinguish the network redundancy well. In this paper, we analyze the above blind spots on different types of pruning criteria with layer-wise pruning or global pruning. We also break some stereotypes, such as that the results of $\ell_1$ and $\ell_2$ pruning are not always similar. These analyses are based on the empirical experiments and our assumption~(\textit{Convolutional Weight Distribution Assumption}) that the well-trained convolutional filters in each layer approximately follow a Gaussian-alike distribution. This assumption has been verified through systematic and extensive statistical tests.

\end{abstract}

	\section{Introduction}
	\label{Introduction}
	Pruning~\cite{lecun1990optimal,hassibi1993second, han2015deep,heyang} a trained neural network is commonly seen in network compression. In particular, for CNNs, channel pruning refers to the pruning of the filters in the convolutional layers. There are several critical factors for channel pruning. 
	\textbf{Procedures}. One-shot method~\cite{li2016pruning}: Train a network from scratch; Use a certain criterion to calculate filters’ \textit{Importance Score}, and prune the filters which have small \textit{Importance Score}; After additional training, the pruned network can recover its accuracy to some extent. Iterative method~\cite{lecun1990optimal,he2018soft,frankle2018the}: Unlike One-shot methods, they prune and fine-tune a network alternately.
	\textbf{Criteria}. The filters' \textit{Importance Score} can be definded by a given criterion. From different ideas, many types of pruning criteria have been proposed, such as Norm-based~\cite{li2016pruning}, Activation-based~\cite{hu2016network,luo2017entropy}, Importance-based~\cite{molchanov2016pruning, molchanov2019importance}, BN-based~\cite{liu2017learning} and so on. \textbf{Strategy}. 
	Layer-wise pruning: In each layer, we can sort and prune the filters, which have small \textit{Importance Score} measured by a given criterion. Global pruning: Different from layer-wise pruning, global pruning~\cite{liu2017learning,hecap} sort the filters from all the layers through their \textit{Importance Score} and prune them. 
	  
	\begin{table}[h]
	\centering

	\caption{
	An example to illustrate the phenomenon that different criteria may select the similar sequence of filters for pruning. Taking VGG16~(3$^{\rm rd}$ Conv) and ResNet18~(12$^{\rm th}$ Conv) on Norm-based criteria as examples. The pruned filters' index ~(the ranks of filters’  \textit{Importance Score}) are almost the same, which lead to the similar pruned structures.}
	\resizebox{\columnwidth}{!}{%
	\begin{tabular}{lllll}
		
		\hline
		Criteria & \multicolumn{1}{l}{Model} & \multicolumn{1}{l}{Pruned Filters' Index~(Top 8)}& \multicolumn{1}{l}{Model} & \multicolumn{1}{l}{Pruned Filters' Index~(Top 8)}\\
		\hline
		$\ell_1$    &ResNet18       &[111, 212, 33, 61, 68, 152, 171, 45] &VGG16       &[102, 28, 9, 88, 66, 109, 86, 45]\\
		$\ell_2$    &ResNet18       &[111, 33, 212, 61, 171, 42, 243, 129] &VGG16       &[102, 28, 88, 9, 109, 66, 86, 45]\\
		$\mathbf{GM}$    &ResNet18       &[111, 212, 33, 61, 68, 45, 171, 42] &VGG16       &[102, 28, 9, 88, 109, 66, 45, 86]\\
		$\mathbf{Fermat}$ &ResNet18       &[111, 212, 33, 61, 45, 171, 42, 68] &VGG16 &[102, 28, 88, 9, 109, 66, 45, 86]\\
		\hline

	\end{tabular}%
}
\label{filtersort}%
	\vspace{-0.4cm}
\end{table}%

	\begin{figure*}[t]
		\centering
		\includegraphics[width=1\textwidth]{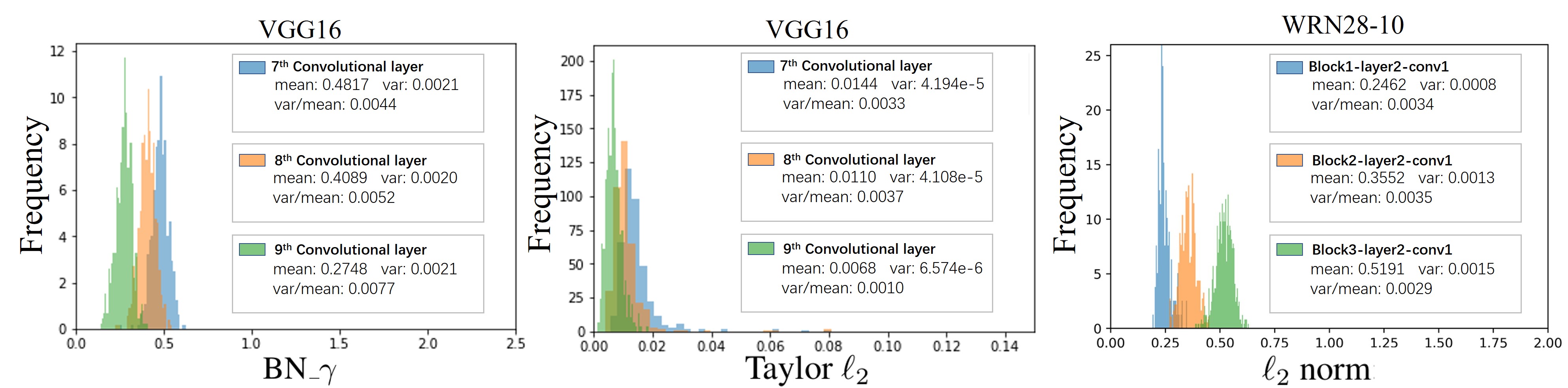}
				\vspace{-0.7cm}
		\caption{Visualization of Applicability problem, \textit{i.e.,} the histograms of the \textit{Importance Score} measured by different types of pruning criteria~(like BN\_$\gamma$, Taylor $\ell_2$ and $\ell_2$ norm). The \textit{Importance Score} in each layer are close enough, which implies that it is hard for these criteria to distinguish redundant filters well in layer-wise pruing.} 
\vspace{-0.2cm}
		\label{fig:app}
	\end{figure*}


	%
	
	In this work, we conduct our investigation on a variety of pruning criteria.
	As one of the simplest and most effective channel pruning criteria, $\ell_1$ pruning~\cite{li2016pruning} is widely used in practice. The core idea of this criterion is to sort the $\ell_1$ norm of filters in one layer and then prune the filters with a small $\ell_1$ norm. Similarly, there is $\ell_2$ pruning which instead leverages the $\ell_2$ norm~\cite{frankle2018the,he2018soft}. $\ell_1$ and $\ell_2$ can be seen as the criteria which use absolute \textit{Importance Score} of filters.
	Through the study of the distribution of norm, \cite{heyang} demonstrates that these criteria should satisfy two conditions: (1)~the variance of the norm of the filters cannot be too small;  (2)~the minimum norm of the filters should be small enough. Since these two conditions do not always hold, a new criterion considering the relative \textit{Importance Score} of the filters is proposed~\cite{heyang}. Since this criterion uses the Fermat point~(\textit{i.e.}, geometric median~\cite{cohen2016geometric}), we call this method $\mathbf{Fermat}$. Due to the high calculation cost of Fermat point, \cite{heyang} further relaxed the $\mathbf{Fermat}$ and then introduced another criterion denotes as $\mathbf{GM}$. To illustrate each of the pruning criteria, let $F_{ij}\in \mathbb{R}^{N_i\times k\times k}$ represent the $j^{\rm th}$ filter of the $i^{\rm th}$ convolutional layer, where $N_i$ is the number of input channels for $i^{\rm th}$ layer and $k$ denotes the kernel size of the convolutional filter. In $i^{\rm th}$ layer, there are $N_{i+1}$ filters. For each criteria, details are shown in Table~\ref{tab:criteria}, where $\mathbf{F}$ denotes the Fermat point of $F_{ij}$ in Euclidean space. These four pruning criteria are called Norm-based pruning in this paper as they utilize norm in their design.

	Previous works~\cite{luo2017thinet,han2015deep,ding2019global,dong2017learning,renda2020comparing}, including the criteria mentioned above, the main concerns commonly consist of (a)~How much the model was compressed; (b)~How much performance was restored; (c)~The inference efficiency of the pruned network and (d)~The cost of finding the pruned network. However, few works discussed the following two blind spots about the pruning criteria:
	
		\begin{wraptable}{r}{6cm}
	\vspace{-0.7cm}
	\caption{Norm-based pruning criteria. }
	\vspace{-0.2cm}
	\begin{center}
		\begin{small}
			
			\begin{tabular}{ll}
				\hline
				Criterion & Details of \textit{Importance Score}\\
				\hline
				$\ell_1$~\cite{li2016pruning}    & $||F_{ij}||_1$\\
				$\ell_2$~\cite{frankle2018the}    & $||F_{ij}||_2$\\
				$\mathbf{Fermat}$~\cite{heyang}    & $||\mathbf{F} - F_{ij}||_2$\\
				$\mathbf{GM}$~\cite{heyang}    & $\sum_{k=1}^{N_{i+1}}||F_{ik}-F_{ij}||_2$\\
				\hline
			\end{tabular}
			
		\end{small}
	\end{center}
	\label{tab:criteria}
	\vspace{-0.4cm}
\end{wraptable}
	
	\textbf{(1)~Similarity: What are the actual differences among these pruning criteria?} Taking the VGG16 and ResNet18 on ImageNet as an example, we show the ranks of filters’ \textit{Importance Score} under different criteria in Table~\ref{filtersort}. It is obvious that they have almost the same sequence, leading to similar pruned structures. In this situation, the criteria used absolute \textit{Importance Score} of filters~($\ell_1$,$\ell_2$) and the criteria used relative \textit{Importance Score} of filters~($\mathbf{Fermat}$, $\mathbf{GM}$) may not be significantly different. 

	\textbf{(2)~Applicability: What is the applicability of these pruning criteria to prune the CNNs?} There is a toy example w.r.t. $\ell_2$ criterion. If the $\ell_2$ norm of the filters in one layer are 0.9, 0.8, 0.4 and 0.01, according to \textit{smaller-norm-less-informative assumption}~\cite{ye2018rethinking}, it’s apparent that we should prune the last filter. However, if the norm are close, such as 0.91, 0.92, 0.93, 0.92, it is hard to determine which filter should be pruned even though the first one is the smallest. In Fig.~\ref{fig:app}, we demonstrate some real examples, \textit{i.e.,} the visualization of Applicability problem under different networks and criteria.

 In this paper, we provide comprehensive observations and in-depth analysis of these two blind spots. Before that, in Section~\ref{distribution}, we propose an assumption about the parameters distribution of CNNs, called \textit{Convolution Weight Distribution Assumption}~(CWDA), and use it as a theoretical tool to analyze the two blind spots. We explore the Similarity and Applicability problem of pruning criteria in the following order: (1)~Norm-based criteria~(layer-wise pruning) in Section~\ref{sec:norm}; (2)~Other types of criteria~(layer-wise pruning) in Section~\ref{sec:others}; (3) and different types of criteria~(global pruning) in Section~\ref{sec:global}. Last but not least, we provide further discussion on: (i) the conditions for CWDA to be satisfied, (ii) how our findings help the community in Section~\ref{Discussion}. In order to focus on the pruning criteria, all the pruning experiments are based on the relatively simple pruning procedure, \textit{i.e.,} one-shot method.

	The main \textbf{contributions} of this work are two-fold:
	
	\textbf{(1)}~We analyze the Applicability problem and the Similarity of different types of pruning criteria. These two blind spots can guide and motivate researchers to design more reasonable criteria. We also break some stereotypes, such as that the results of $\ell_1$ and $\ell_2$ pruning are not always similar.
	
	\textbf{(2)}~We propose and verify an assumption called CWDA, which reveals that the well-trained convolutional filters approximately follow a Gaussian-alike distribution. Using CWDA, we succeeded in explaining the multiple observations about these two blind spots theoretically.

		\section{Weight Distribution Assumption}
	\label{distribution}
	In this section,  we propose and verify an assumption about the parameters distribution of the convolutional filters.
	
	\textbf{(Convolution Weight Distribution Assumption)}~Let $F_{ij}\in \mathbb{R}^{N_i\times k\times k}$ be the $j^{\rm th}$ well-trained filter of the $i^{\rm th}$ convolutional layer. In general\footnote{In Section~\ref{Discussion}, we make further discussion and analysis on the conditions for CWDA to be satisfied.}, in $i^{\rm th}$ layer, $F_{ij}~( j=1,2,...,N_{i+1})$ are i.i.d and follow such a distribution:
	\begin{equation}
	F_{ij} \sim \mathbf{N}(\mathbf{0}, \mathbf{\Sigma}^i_{\text{diag}} + \epsilon\cdot\mathbf{\Sigma}^i_{\text{block}}),
	\label{cwda_org}
	\end{equation}
	where $\mathbf{\Sigma}^i_{\text{block}} = \mathrm{diag}(K_1,K_2,...,K_{N_i})$ is a block diagonal matrix and the diagonal elements of $\mathbf{\Sigma}^i_{\text{block}}$ are 0. $\epsilon$ is a small constant. The values of the off-block-diagonal elements are 0 and $K_l \in R^{k^2\times k^2}, l=1,2,...,N_i$. $\mathbf{\Sigma}^i_{\text{diag}}= \mathrm{diag}(a_1,a_2,...,a_{N_i \times k \times k})$ is a diagonal matrix and the elements of $\mathbf{\Sigma}^i_{\text{diag}}$ are close enough.

	This assumption is based on the observation shown in the Fig.~\ref{fig:cwda}. To estimate $\mathbf{\Sigma}^i_{\text{diag}} + \epsilon\cdot\mathbf{\Sigma}^i_{\text{block}}$, we use the correlation matrix  $FF^T$ where $F \in \mathbb{R}^{(N_i\times k \times k) \times N_{i+1}}$ denotes all the parameters in $i^{\rm th}$ layer. Taking a convolutional layer of ResNet18 trained on ImageNet as an example, we find that $FF^T$ is a block diagonal matrix. Specifically, each block is a $k^2 \times k^2$ matrix and the off-diagonal elements are close to 0. We visualize the $j^{\rm th}$ filter $F_{ij}\in \mathbb{R}^{N_i\times k \times k}$ in $i^{\rm th}$ layer in Fig.~\ref{fig:cwda}(c), and this phenomenon reveals that the parameters in the same channel of $F_{ij}$ tend to be linearly correlated, and the parameters of any two different channels~(yellow and green channel in Fig.~\ref{fig:cwda}(c)) in $F_{ij}$ only have a low linear correlation.
	\begin{wrapfigure}{r}{7cm}
		\includegraphics[width=0.95\linewidth]{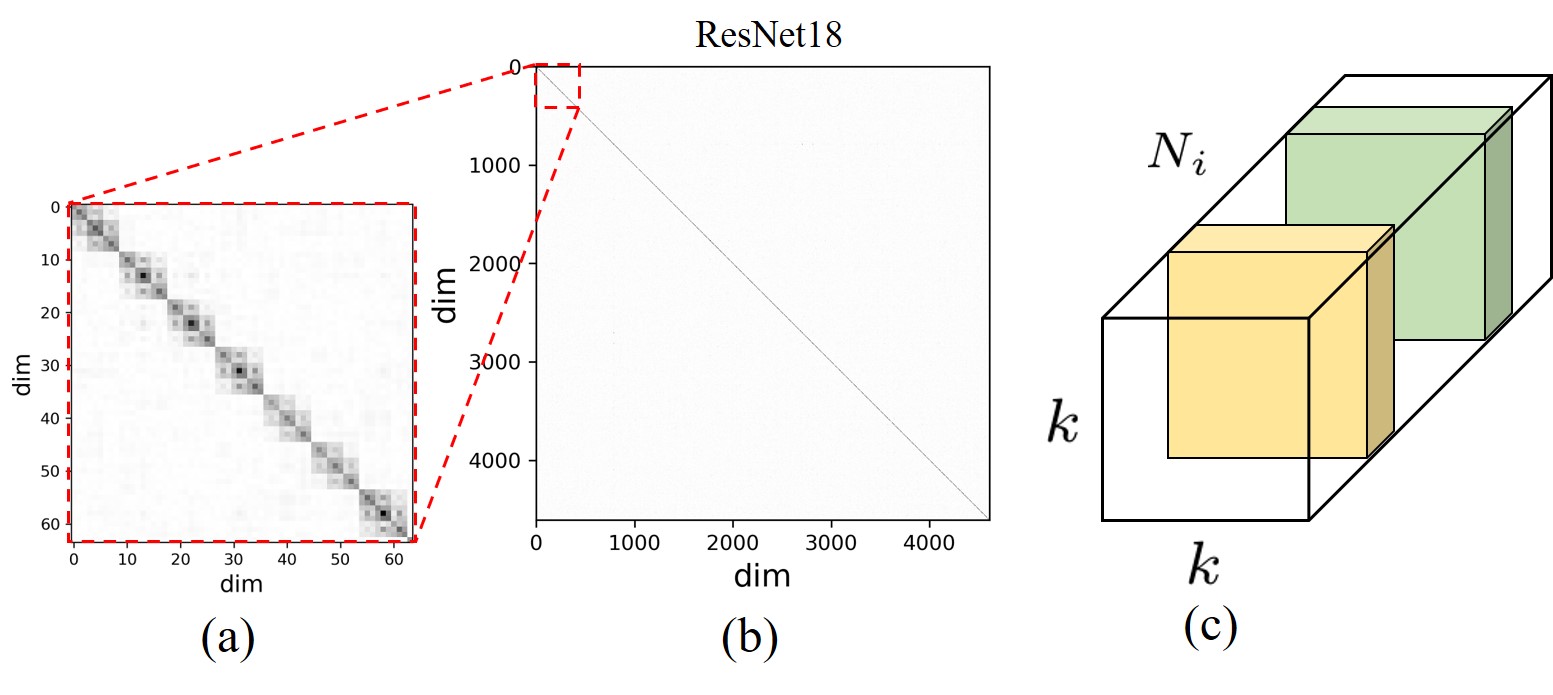}
		\vspace{-0.5cm}
		\caption{(a-b)~Visualization of $FF^T$ in ResNet-18 trained on ImageNet dataset. More experiments can be found in Appendix~\ref{app:diag_matrix}. These experiments are based on torchvison model zoo~\cite{pytorch}, which can guarantee the generality and reproducibility. (c)~A convolutional filter. $k$ is the kernel size and $N_i$ denotes the number of input channels.}
		\label{fig:cwda}
		\vspace{-0.6cm}
\end{wrapfigure}	
	


	\subsection{Statistical test for CWDA}
	\label{Statistical test}
	In fact, CWDA is not easy to be verified, \textit{e.g.}, for ResNet164 trained on Cifar100, the number of filters in the first stage is only 16, which is too small to be used to estimate the statistics in CWDA accurately. Thus, We consider verifying four \textbf{necessary conditions} of CWDA: 
	
	(1)~\textbf{Gaussian.}~Whether the weights of $F_{ij}$ approximately follows a Gaussian-alike distribution; 
	(2)~\textbf{Variance.}~Whether the variance of the diagonal elements of $\Sigma_{\text{diag}}$ are small enough; 
	(3)~\textbf{Mean.}~Whether the mean of weights of $F_{ij}$ is close to 0. 
	(4)~\textbf{The magnitude of $\epsilon$.}~Whether $\epsilon$ is small enough.
	
	The results of the tests are shown in Appendix~\ref{app:Statistical Test}, where we consider a variety of factors for the statistical tests, including different network structure, optimizer, regularization, initialization, dataset, training strategy, and other tasks in computer vision~(\textit{e.g}., semantic segmentation, detection and so on). The test results show that CWDA has a great generality for CNNs.

	\section{About the Norm-based criteria}
	\label{sec:norm}
	We start from the criteria in Table~\ref{tab:criteria}, which are widely cited and compared~\cite{liu2020joint,li2020group,he2020learning,liu2020rethinking,li2020eagleeye}. 
	\subsection{Similarity}
	\label{Experiment and theory}
	In this section, we further verify the observation that the Norm-based pruning criteria in Table~\ref{tab:criteria} are highly similar from two perspectives. Empirically, we conducted large amount of experiments on image classification to investigate the similarities. Theoretically, we rigorously prove the similarities of the criteria in Table~\ref{tab:criteria} in layer-wise pruning under CWDA.

\begin{wrapfigure}{r}{7cm}
	\centering
    \includegraphics[width=1\linewidth]{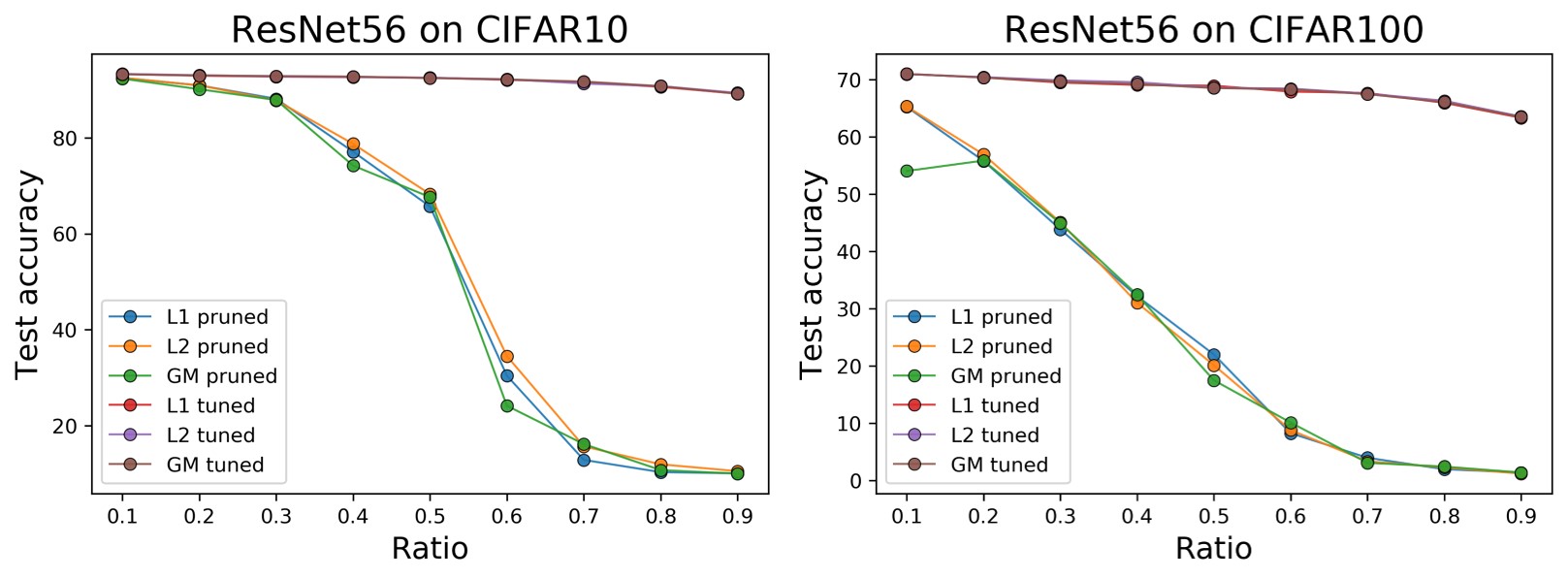}
	\vspace{-0.35cm}
	\caption{Test accuracy of the ResNet56 on CIFAR10/100 while using different pruning ratios. ``L1 pruned'' and ``L1 tuned'' denote the test accuracy of the ResNet56 after $\ell_1$ pruning and fine-tuning, respectively. If ratio is 0.5, we prune 50\% filters in all layers.}
	\label{fig:exp_pruned_tuned2}
	\vspace{-0.2cm}
\end{wrapfigure}

	\textbf{Empirical Analysis}. (1)~In Fig.~\ref{fig:exp_pruned_tuned2}, we show the test accuracy of the ResNet56 after pruning and fine-tuning under different pruning ratios and datasets. The test accuracy curves of different pruning criteria at different stages are very close under different pruning ratios. This phenomenon implies that those pruned networks using different Norm-based criteria are very similar, and there are strong similarities among these pruning criteria. The experiments about other commonly used configs of pruning ratio can be found in Appendix~\ref{app:cls}.
	(2)~In Fig.~\ref{fig:vggmore}, we show the Spearman's rank correlation coefficient\footnote{Sp is a nonparametric measurement of ranking correlation, and it assesses how well the relationship between two variables can be described using a monotonic function,  \textit{i.e.}, filters ranking sequence in the same layer under two criteria in this paper.}~(Sp) between different pruning criteria. The Sp in most convolutional layers are more than 0.9, which means the network structures are almost the same after pruning. Note that the Sp in transition layer are relatively small, and the transition layer refers to the layer where the dimensions of the filter change, like the layer between stage 1 and stage 2 of a ResNet. The reason for this phenomenon may be that the layers in these areas are sensitive. It is interesting but will not greatly impact the structural similarity of the whole pruned network. The similar observations are shown in Fig.~2 in \cite{ding2019global}, Fig.~6 and Fig.~10 in \cite{li2016pruning}. 
	\begin{figure*} [htbp]
		\vspace{-0.1cm}
	\centering 
	\includegraphics[width=0.9\linewidth]{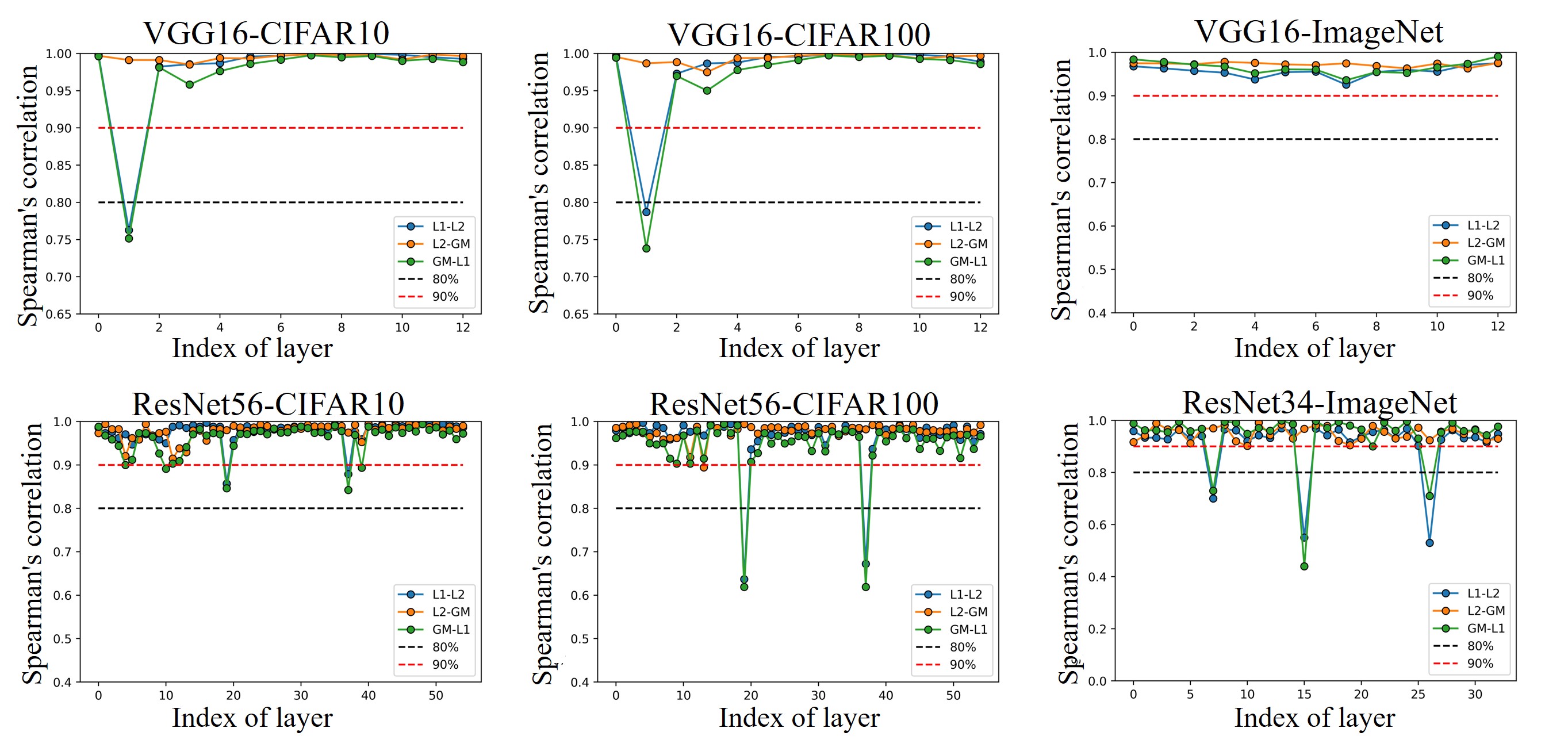}
	
	\centering 
	\caption{Spearman's rank correlation coefficient~(Sp) between different pruning criteria on several networks and datasets~(more experiments can be found in Appendix~\ref{app:sp_network}). } 
	\vspace{-0.2cm}
	\label{fig:vggmore}
\end{figure*}

	
	\textbf{Theoretical Analysis}. Besides the experimental verification, the similarities via using layer-wise pruning among the criteria in Table~\ref{tab:criteria} can also be proved theoretically in this section. Let $C_1$ and $C_2$ be two pruning criteria to calculate the \textit{Importance Score} for
	all convolutional filters in one layer. If they can produce the similar ranks of \textit{Importance Score}, we define that $C_1$ and $C_2$ are \textit{approximately monotonic} to each other and use $C_1 \cong C_2$  to represent this relationship. In Section~\ref{Experiment and theory}, we use the Sp to describe this relationship but it's hard to be analyzed theoretically. Therefore, we focus on a stronger condition. Let $\mathbf{X} = (x_1,x_2,...,x_k)$ and $\mathbf{Y} = (y_1,y_2,...,y_k)$ be two given sequences\footnote{Since $\mathbf{X}$ is not random variables here, $\mathbb{E}(\mathbf{X})$ and $\mathbf{Var}(\mathbf{X})$ denote the average value $\sum_{i=1}^kx_i/k$ and the sample variance $\sum_{i=1}^k(x_i-\mathbb{E}(\mathbf{X}))/(k-1)$, respectively.}.
	we first normalize their magnitude, \textit{i.e.}, let $\widehat{\mathbf{X}} = \mathbf{X}/\mathbb{E}(\mathbf{X})$ and $\widehat{\mathbf{Y}} = \mathbf{Y}/\mathbb{E}(\mathbf{Y})$~. This operation does not change the ranking sequence of the elements of $\mathbf{X}$ and $\mathbf{Y}$, because $\mathbb{E}(\mathbf{X})$ and $\mathbb{E}(\mathbf{Y})$ are constants, \textit{i.e.}, $\mathbf{\widehat{X}} \cong \mathbf{\widehat{Y}} \Leftrightarrow \mathbf{X} \cong \mathbf{Y}$.~After that, if both $\mathbf{Var(\widehat{\mathbf{X}}/\widehat{\mathbf{Y}})}$ and $\mathbf{Var(\widehat{\mathbf{Y}}/\widehat{\mathbf{X}})}$ are small enough, then the Sp between $\mathbf{X}$ and $\mathbf{Y}$ is close to 1, where $\widehat{\mathbf{X}}/\widehat{\mathbf{Y}} = (\widehat{x_1}/\widehat{y_1},..,\widehat{x_k}/\widehat{y_k})$. The reason is that in these situations, the ratio $\widehat{\mathbf{X}}/\widehat{\mathbf{Y}}$ and $\widehat{\mathbf{Y}}/\widehat{\mathbf{X}}$ will be close to two constants $a,b$. For any $1 \leq i \leq k$, $\widehat{x_i} \approx a\cdot \widehat{y_i}$ and $\widehat{y_i} \approx b\cdot \widehat{x_i}$. So, $ab \approx 1$ and $a,b \neq 0$. Therefore, there exists an \textit{approximately monotonic} mapping from $\widehat{y_i}$ to $\widehat{x_i}$~(linear function), which makes the Sp between $\mathbf{X}$ and $\mathbf{Y}$ close to 1. With this basic fact, we propose the Theorem \ref{theo:similarity-layer}, which implies that many Norm-based pruning criteria produces almost the same ranks of \textit{Importance Score}.
	
\begin{theorem}\label{theo:similarity-layer}
Let $n-$dimension random variable $X$ meet CWDA, and the pair of criteria $(C_1,C_2)$ is one of $(\ell_1,\ell_2)$, $(\ell_2,\mathbf{Fermat})$ or $(\mathbf{Fermat},\mathbf{GM})$, we have  
	\begin{equation}
	\mathbf{max}\left\{\mathbf{Var}_{X}\left(\frac{\widehat{C}_2(X)}{\widehat{C}_1(X)}\right),\mathbf{Var}_{X}\left(\frac{\widehat{C}_1(X)}{\widehat{C}_2(X)}\right) \right\}\lesssim B(n),
	\end{equation}
	\label{theo:bound}
	where $\widehat{C}_1(X)$ denotes $C_1(X)/\mathbb{E}(C_1(X))$ and $\widehat{C}_2(X)$ denotes $C_2(X)/\mathbb{E}(C_2(X))$. $B(n)$ denotes the upper bound of left-hand side and when $n$ is large enough, $B(n) \to 0$.
\end{theorem}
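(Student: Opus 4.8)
The plan is to treat each of the three criterion pairs separately, but through a common reduction: under CWDA, every filter $F_{ij}$ is a Gaussian vector in $\mathbb{R}^n$ (with $n = N_i k^2$) whose covariance is $\mathbf{\Sigma}^i_{\text{diag}} + \epsilon\,\mathbf{\Sigma}^i_{\text{block}}$, and the ratios $\widehat{C}_2(X)/\widehat{C}_1(X)$ are smooth homogeneous-degree-zero functions of the coordinates of $X$. The first step is to note that since $C_1, C_2$ are both $1$-homogeneous and the normalization divides by the empirical mean, each ratio $\widehat{C}_2/\widehat{C}_1$ concentrates around $1$; so it suffices to bound its variance. For $(\ell_1, \ell_2)$ I would write $\ell_1(X) = \sum_{t=1}^n |X_t|$ and $\ell_2(X) = (\sum_t X_t^2)^{1/2}$, drop the $\epsilon$-block term to first order (this is where the ``$\lesssim$'' and the ``$\epsilon$ small'' hypothesis enter), and then use that for i.i.d.\ (or weakly correlated, nearly-equal-variance) coordinates both norms are sharply concentrated: $\ell_1(X)/\sqrt{n} \to \sqrt{2/\pi}\,\sigma$ and $\ell_2(X)/\sqrt{n}\to\sigma$ by the law of large numbers, with fluctuations of order $n^{-1/2}$. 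A second-order delta-method / Taylor expansion of the ratio around these limits then gives $\mathbf{Var}(\widehat{\ell_2}/\widehat{\ell_1}) = O(1/n)$, which furnishes $B(n)$.

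For $(\ell_2, \mathbf{Fermat})$ the key observation is that the Fermat point $\mathbf{F}$ of the $N_{i+1}$ filters is close to the mean, which under the zero-mean CWDA is close to $\mathbf{0}$; more precisely $\|\mathbf{F}\| = O_p(\sigma\sqrt{n}/N_{i+1})$ or smaller, so $\|\mathbf{F} - F_{ij}\|_2 = \|F_{ij}\|_2\,(1 + o(1))$ uniformly in $j$, and the ratio $\widehat{\mathbf{Fermat}}/\widehat{\ell_2}$ is $1$ up to a fluctuation controlled by $\|\mathbf{F}\|/\|F_{ij}\|$, which is $o(1/\sqrt n)$-type small. For $(\mathbf{Fermat}, \mathbf{GM})$ I would expand each pairwise distance $\|F_{ik} - F_{ij}\|_2$: writing $F_{ik} - F_{ij}$ as a mean-zero Gaussian with covariance $2(\mathbf{\Sigma}_{\text{diag}} + \epsilon\mathbf{\Sigma}_{\text{block}})$, concentration gives $\|F_{ik} - F_{ij}\|_2 = \sqrt{2}\,\sigma\sqrt n\,(1 + O_p(n^{-1/2}))$, so $\mathbf{GM}(F_{ij}) = \sum_k \|F_{ik}-F_{ij}\|_2$ is, after centering, an affine function of $\|F_{ij}\|_2$ plus lower-order terms — again matching $\mathbf{Fermat}$ up to $O(1/n)$-variance ratios. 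Throughout, the near-equality of the diagonal entries $a_1,\dots,a_n$ of $\mathbf{\Sigma}_{\text{diag}}$ is what lets me replace sums of $X_t^2$ by $n\sigma^2(1+o(1))$ with genuinely $o(1)$ relative error rather than an $O(1)$ spread.

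The main obstacle I anticipate is making the concentration quantitative and uniform in $j$ simultaneously with tracking how $\epsilon$, the block structure, and the spread of the $a_t$'s all feed into the final bound $B(n)$ — in particular, the Fermat point has no closed form, so bounding $\|\mathbf{F}\|$ (step for the second pair) requires either a convexity/optimality-condition argument ($\sum_j (\mathbf{F} - F_{ij})/\|\mathbf{F}-F_{ij}\| = 0$, from which $\|\mathbf{F}\|$ inherits a bound from the near-symmetry of the $F_{ij}$ about $\mathbf{0}$) or a comparison with the coordinatewise median. I would also need a union bound / maximal-inequality argument over the $N_{i+1}$ filters to control the worst-case ratio deviation, and then feed that into the variance computation; keeping the resulting $B(n)$ honestly $\to 0$ as $n \to \infty$ (rather than merely bounded) is the delicate part, and is presumably why the statement uses the informal $\lesssim$ rather than an explicit constant.
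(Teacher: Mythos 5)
Your plan is sound and, for the pair $(\ell_1,\ell_2)$, essentially coincides with the paper's argument: the paper also normalizes, Taylor-expands the variance of the ratio (its Lemma on $\mathbf{Var}(X/Y)$, i.e.\ the delta method), discards the covariance term via the FKG inequality, and plugs in the exact half-normal and scaled-chi moments to get a bound of order $(\pi-1)/(2n)$; like you, it first reduces the CWDA covariance $\mathbf{\Sigma}_{\text{diag}}+\epsilon\mathbf{\Sigma}_{\text{block}}$ to an isotropic $N(\mathbf{0},\sigma^2\mathbf{I}_n)$ (via a KL-divergence estimate in its Appendix B) rather than tracking the block term inside the concentration argument. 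For the other two pairs your route is genuinely different. For $(\ell_2,\mathbf{Fermat})$ the paper does not bound the empirical Fermat point at all: it proves that the \emph{population} minimizer of $\mathbb{E}\sum_i\|F-v_i\|_2$ is exactly $\mathbf{0}$, by writing the expected distance as a generalized Laguerre function of $\|F\|_2^2$ and checking the sign of its derivative, and then substitutes $\mathbf{F}\approx\mathbf{0}$ so that $\mathbf{Fermat}\approx\ell_2$ with essentially zero extra variance; your proposal instead controls the empirical geometric median through the optimality condition $\sum_j(\mathbf{F}-F_{ij})/\|\mathbf{F}-F_{ij}\|=\mathbf{0}$, which addresses the object actually used in pruning (a point the paper glosses over) but is left unproven, would require a uniform-in-$j$ maximal inequality, and would make the bound depend on the number of filters $N_{i+1}$, a parameter absent from the stated $B(n)$. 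For $(\mathbf{Fermat},\mathbf{GM})$ the paper avoids your pairwise Taylor expansion entirely: it proves (i) via Lagrange's identity plus the delta method that $\sum_i\|a_i\|_2$ and $\sum_i\|a_i\|_2^2$ have ratio variance $\approx 1/(2nk)$, and (ii) an exact parallel-axis/centroid identity $\sum_i\|P-v_i\|_2^2=\sum_i\|G-v_i\|_2^2+(k+1)\|P-G\|_2^2$ holding with probability one for Gaussian samples, then chains $\|P-\mathbf{F}\|_2^2\cong\|P-G\|_2^2\cong\sum_i\|P-v_i\|_2\ (=\mathbf{GM})$; your direct expansion of each $\|F_{ik}-F_{ij}\|_2$ around $\sqrt{2n}\,\sigma$, showing $\mathbf{GM}$ is approximately affine in $\|F_{ij}\|_2^2$, reaches the same conclusion more elementarily and makes the $O(1/n)$ rate transparent, at the price of having to bookkeep the cross-term noise $\langle F_{ik},F_{ij}\rangle$ and the subtracted constants when converting "affine in $\|F_{ij}\|_2^2$ plus a large offset" into the normalized-ratio variance bound the theorem actually asserts. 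In short: the paper buys clean closed-form auxiliary statements (Laguerre monotonicity, the centroid identity) at the cost of working with population surrogates and loose $\cong$-chaining, while your concentration-only route stays closer to the empirical quantities but leaves precisely those two technical steps — the empirical Fermat-point bound and the uniform control feeding the final variance computation — as open work.
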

\begin{proof}
	(See Appendix \ref{proof:theorm1}).\qedhere
\end{proof}

	In specific, for $i^{\rm th}$ convolutional layer of a CNN, since $F_{ij}\in \mathbb{R}^n$, $j= 1,2,...N_{i+1}$, meet CWDA and the dimension $n$ is generally large, we can obtain $\ell_1 \cong \ell_2$, $\ell_2 \cong \mathbf{Fermat}$ and $\mathbf{Fermat} \cong \mathbf{GM}$ according to Theorem~\ref{theo:bound}. Therefore,  we have $\ell_1 \cong \ell_2 \cong \mathbf{Fermat} \cong \mathbf{GM}$, which verifies the strong similarities among the criteria shown in Table~\ref{tab:criteria}.

	\subsection{Applicability} 
\label{Applicability}	


	In this section, we analyze the Applicability problem of the Norm-based criteria. In Fig.~\ref{fig:app}~(Right), we know that there are some cases where the values of \textit{Importance Score} measured by $\ell_2$ criterion are very close~(e.g., the distribution looks sharp), which make $\ell_2$ criterion cannot distinguish the redundant filters well. It's related to the variance of \textit{Importance Score}. \cite{heyang} argue that a \textit{small norm deviation}~(the values of variance of \textit{Importance Score} are small) makes it difficult to find an appropriate threshold to select filters to prune. However, even if the values of the variance are large, it still cannot guarantee to solve this problem. Since the magnitude of these \textit{Importance Score} may be much greater than the values of the variance, we can use the mean of \textit{Importance Score} to represent their magnitude. Therefore, we consider using a relative variance $\mathbf{Var}_r[C(F_A)]$ to describe the Applicability problem. Let $\mathbb{E}[C(F_A)] > 0$ and
	\begin{equation}
	    \mathbf{Var}_r[C(F_A)] = \mathbf{Var}[C(F_A)]/\mathbb{E}[C(F_A)],
	    \label{eqn:appli}
	\end{equation}
	where $C$ is a given pruning criterion and $F_A$ denotes the filters in layer $A$. The criterion $C$ for layer $A$ has Applicability problem when $\mathbf{Var}_r[C(F_A)]$ is close to 0. Then we introduce the Proposition~\ref{prop:mean_var_cri} to provide the estimation of the mean and variance w.r.t. different criteria when the CWDA is hold:
	
	 

	
    \begin{proposition}
    If the convolutional filters $F_A$ in layer $A$ meet CWDA, then we have following estimations:
\begin{table}[H]
	\vspace{-0.4cm}
	\begin{center}
		\begin{small}
			\begin{tabular}{lll}
				\hline
				 Criterion & Mean & Variance\\
				\hline
				$\ell_1(F_A)$    & $\sqrt{2/\pi}\sigma_Ad_A$& $(1-\frac{2}{\pi})\sigma_A^2d_A$\\
				$\ell_2(F_A)$    & $\sqrt{2}\sigma_A\Gamma(\frac{d_A+1}{2})/\Gamma(\frac{d_A}{2})$& $\sigma_A^2/2$\\
				$\mathbf{Fermat}(F_A)$   & $\sqrt{2}\sigma_A\Gamma(\frac{d_A+1}{2})/\Gamma(\frac{d_A}{2})$& $\sigma_A^2/2$\\
				\hline
			\end{tabular}
		\end{small}
	\end{center}
	\label{criteria}
\end{table}
where $d_A$ and $\sigma_A^2$ denote the dimension of $F_A$ and the variance of the weights in layer $A$, respectively.
   \label{prop:mean_var_cri}
   \vspace{-0.4cm}
    \end{proposition}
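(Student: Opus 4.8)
The plan is to reduce all three criteria to moment computations for a single isotropic Gaussian vector, and then invoke classical facts about the half-normal and $\chi$ distributions. The first step is to use CWDA to argue that a filter $F_A=F_{ij}\in\mathbb R^{d_A}$ with $d_A=N_i k^2$ is, up to a negligible perturbation, distributed as $\mathbf N(\mathbf 0,\sigma_A^2 I_{d_A})$: CWDA gives $F_{ij}\sim\mathbf N(\mathbf 0,\mathbf\Sigma^i_{\text{diag}}+\epsilon\,\mathbf\Sigma^i_{\text{block}})$ with $\epsilon$ small and with the diagonal entries $a_1,\dots,a_{d_A}$ of $\mathbf\Sigma^i_{\text{diag}}$ ``close enough'', so setting $\sigma_A^2:=\tfrac1{d_A}\sum_m a_m$ (the average variance of the weights in layer $A$) and dropping the $\epsilon$-term yields the isotropic model with an error that vanishes as $\epsilon\to0$ and as $\max_m|a_m-\sigma_A^2|\to0$. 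All moment estimates below are then carried out in this idealized model, which is exactly what makes the statement an \emph{estimation} rather than an identity.

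For $\ell_1$, write $\ell_1(F_A)=\sum_{m=1}^{d_A}|w_m|$ with $w_m$ i.i.d.\ $\mathcal N(0,\sigma_A^2)$; each $|w_m|$ is half-normal with $\mathbb E|w_m|=\sigma_A\sqrt{2/\pi}$ and $\mathbf{Var}(|w_m|)=\mathbb E[w_m^2]-(\mathbb E|w_m|)^2=(1-\tfrac2\pi)\sigma_A^2$, so linearity of expectation and independence give $\mathbb E[\ell_1(F_A)]=\sqrt{2/\pi}\,\sigma_A d_A$ and $\mathbf{Var}[\ell_1(F_A)]=(1-\tfrac2\pi)\sigma_A^2 d_A$. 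For $\ell_2$, note that $\ell_2(F_A)/\sigma_A$ is $\chi$-distributed with $d_A$ degrees of freedom, whose mean is $\mu_{d_A}=\sqrt2\,\Gamma(\tfrac{d_A+1}2)/\Gamma(\tfrac{d_A}2)$ and whose variance is $d_A-\mu_{d_A}^2$; scaling by $\sigma_A$ reproduces the stated mean exactly, and for the variance I would use the Gamma-ratio asymptotics $\Gamma(\tfrac{d_A+1}2)/\Gamma(\tfrac{d_A}2)\sim\sqrt{d_A/2}\,(1-\tfrac1{4d_A}+\cdots)$, i.e.\ $\mu_{d_A}^2=d_A-\tfrac12+O(1/d_A)$, so that $d_A-\mu_{d_A}^2\to\tfrac12$ and hence $\mathbf{Var}[\ell_2(F_A)]\approx\sigma_A^2/2$ for the typically large $d_A$.

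For $\mathbf{Fermat}$, recall $\mathbf{Fermat}(F_{ij})=\|\mathbf F-F_{ij}\|_2$ with $\mathbf F$ the geometric median of $\{F_{ij}\}_{j=1}^{N_{i+1}}$. Since the idealized law $\mathbf N(\mathbf 0,\sigma_A^2 I_{d_A})$ is centrally symmetric, its population geometric median is $\mathbf 0$, and the sample geometric median concentrates at $\mathbf 0$ as $N_{i+1}$ grows; substituting $\mathbf F\approx\mathbf 0$ turns $\mathbf{Fermat}(F_A)$ into $\ell_2(F_A)$, so its mean and variance coincide with those just computed, as the table claims.

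The main obstacle is twofold. First, quantifying the error in the reduction step when replacing the block-correlated, not-perfectly-equal-variance covariance by $\sigma_A^2 I_{d_A}$: one must show the $\ell_1/\ell_2/\mathbf{Fermat}$ moments are first-order stable under this replacement, with the deviation controlled by $\epsilon$ and by the spread of the $a_m$'s. Second, the figures $\sigma_A^2/2$ for $\mathbf{Var}[\ell_2]$ and $\mathbf{Var}[\mathbf{Fermat}]$ are not exact but a $d_A\to\infty$ limit, so the argument genuinely relies on the Stirling/Gamma-ratio expansion together with $d_A=N_ik^2$ being large in practice; and the $\mathbf{Fermat}$ case additionally needs the (standard but nontrivial) consistency of the sample geometric median for $\mathbf 0$ under central symmetry.
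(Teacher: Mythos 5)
Your proposal is correct and follows essentially the same route as the paper: relax CWDA to the isotropic model $\mathbf N(\mathbf 0,\sigma_A^2 \mathbf I_{d_A})$, get the $\ell_1$ moments from the half-normal distribution summed over coordinates, get the $\ell_2$ moments from the scaled chi distribution with the Gamma-ratio asymptotics yielding the $\sigma_A^2/2$ variance, and treat $\mathbf{Fermat}$ by arguing the Fermat point coincides with the origin so that $\mathbf{Fermat}(F_A)\approx\ell_2(F_A)$. The only minor divergence is in that last step: you invoke central symmetry plus consistency of the sample geometric median, whereas the paper computes the expected objective explicitly as a generalized Laguerre function and shows its minimizer is $\mathbf 0$ (its Fermat-point theorem), and it justifies the isotropic relaxation via a KL-divergence bound rather than your informal error-control remark.
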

\begin{proof}
	(See Appendix \ref{app:prop}).\qedhere
	\vspace{-0.3cm}
\end{proof}
Based on the Proposition~\ref{prop:mean_var_cri}, we further provide the theoretical analysis for each criteria:

(i)~For $\ell_2(F_A)$. From Proposition~\ref{prop:mean_var_cri}, we can obtain that 
\begin{align}
\vspace{-0.3cm}
\mathbf{Var}_r[\ell_2(F_A)]
&= \frac{\sigma_A^2}{2} / [\sqrt{2}\sigma_A\Gamma(\frac{d_A+1}{2})/\Gamma(\frac{d_A}{2})] = O(\sigma_A/g(d_A)), 
\label{eqn:rv_l2}
\end{align}	
where $g(d_A) = \Gamma(\frac{d_A+1}{2})/\Gamma(\frac{d_A}{2})$ is a monotonically increasing function w.r.t $d_A$. From Eq.~(\ref{eqn:rv_l2}), $\mathbf{Var}_r[\ell_2(F_A)]$ depend on $\sigma_A$ and $d_A$. When $\sigma_A$ is small or $d_A$ is large enough, $\mathbf{Var}_r[\ell_2(F_A)]$ tends to be 0.



	(ii) For $\mathbf{Fermat}(F_A)$. From the proof in Appendix~\ref{proof:l1vsl2}, we know that the Fermat point $\mathbf{F}$ of $F_A$ and the origin $\mathbf{0}$ approximately coincide. From Table~\ref{criteria}, $||\mathbf{F}-F_A||_2 \approx ||\mathbf{0} - F_A||_2 = ||F_A||_2$. Therefore, the mean and variance of $\mathbf{Fermat}(F_A)$ are the same as $\ell_2(F_A)$'s in Proposition~\ref{prop:mean_var_cri}. Hence, a similar conclusion can be obtained for $\mathbf{Fermat}$ criterion. \textit{i.e.,} the \textit{Importance Score} tends to be identical and it’s hard to distinguish the network redundancy well when $\sigma_A$ is small or $d_A$ is large enough.

	(iii)~For $\ell_1(F_A)$. Intuitively, the $\ell_1$ criterion should have the same conclusion as the $\ell_2$ criterion. However, given the Proposition~\ref{prop:mean_var_cri}, we can obtain that 
\begin{align}
\vspace{-0.5cm}
\mathbf{Var}_r[\ell_1(F_A)]
&= (1-\frac{2}{\pi})\sigma_A^2d_A / [\sqrt{2/\pi}\sigma_Ad_A] = \epsilon(\pi)\cdot\sigma_A, 
\label{eqn:rv_l1}
\vspace{-0.3cm}
\end{align}	
where $\epsilon(\pi)<1$ is a constant w.r.t $\pi$. Note that $\mathbf{Var}_r[\ell_1(F_A)]$ only depend on $\sigma_A$, but not the dimension $n$. Moreover, for the common network structures, like VGG, ResNet shown in Fig.~\ref{fig:magnitude}~(b) and (d), the dimension of the filters are usually large enough. Therefore, compared with $\ell_2$, $\ell_1$ criterion is relatively not prone to have Applicability problems, unless the $\sigma_A$ is very small.


\begin{figure*} [htbp]
	\centering 
	\includegraphics[width=0.92\linewidth]{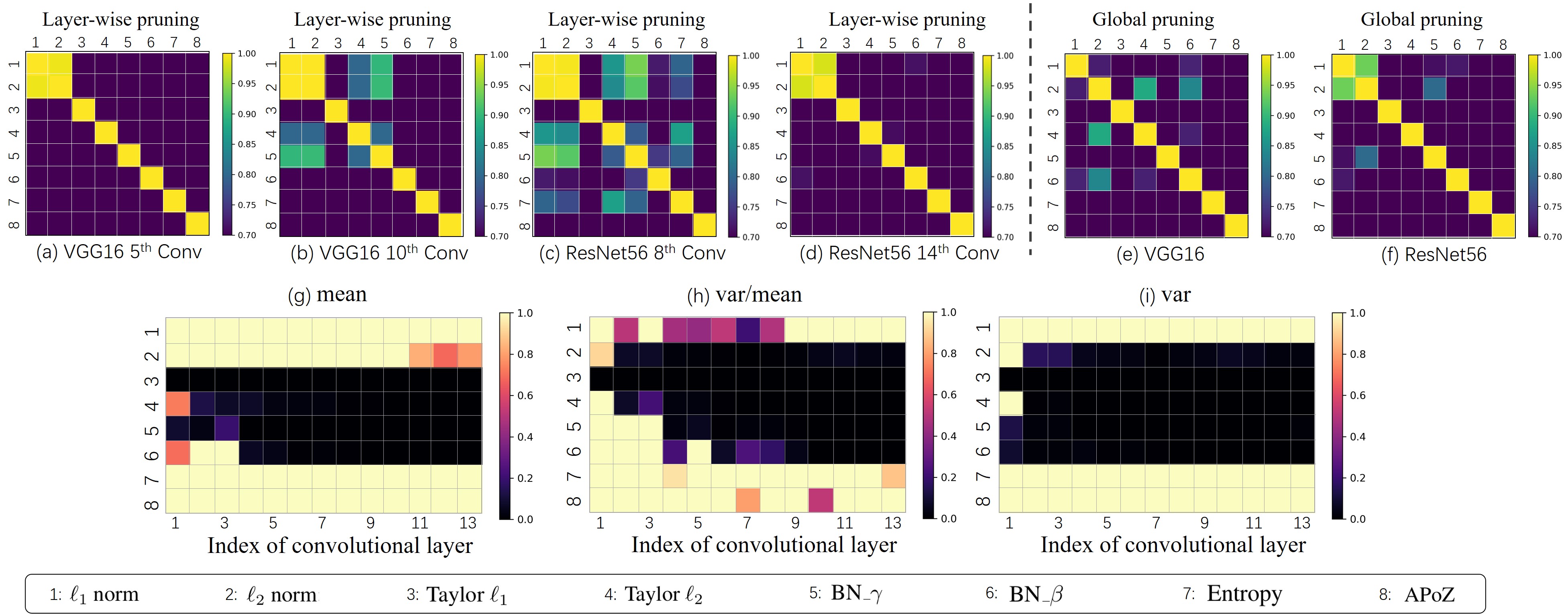}
	\centering 
	\caption{The Similarity and Applicability problem for different types of pruning criteria in layer-wise or global pruning. } 
	\label{fig:other_simi}
\end{figure*}


\section{About other types of pruning criteria}
\label{sec:others}

In this section, we study the Similarity and Applicability problem in other types of pruning criteria through numerical experiments, such as Activation-based pruning~\cite{hu2016network,luo2017entropy}, Importance-based pruning~\cite{molchanov2016pruning, molchanov2019importance} and BN-based pruning~\cite{liu2017learning}. For each type, we choose two representative criteria and we call them: (1)~Norm-based: $\ell_1$ and $\ell_2$; (2)~Importance-based: Taylor $\ell_1$ and  Taylor $\ell_2$~\cite{molchanov2016pruning, molchanov2019importance, molchanov2019taylor}; (3)~BN-based:  BN\_$\gamma$\footnote{The empirical result for slimming training~\cite{liu2017learning} is shown in Appendix~\ref{slimming}.} and BN\_$\beta$~\cite{liu2017learning}; (4) Activation-based:  Entropy~\cite{luo2017entropy} and APoZ~\cite{hu2016network}. The details of these criteria can be found in Appendix \ref{app:other_criteria}.

\textbf{The Similarity for different types of pruning criteria}. In Fig.~\ref{fig:other_simi}~(a-d), we show the Sp between different types of pruning criteria, and only the Sp greater than 0.7 are shown because if Sp $<$ 0.7, it means that there is no strong similarity between two criteria in the current layer. 

According to the Sp shown in Fig.~\ref{fig:other_simi}~(a-d), we obtain the following observations: 
(1)~As verified in Section~\ref{Experiment and theory}, $\ell_1$ and $\ell_2$ can maintain a strong similarity in each layer; 
(2)~In the layers shown in Fig.~\ref{fig:other_simi}~(a) and Fig.~\ref{fig:other_simi}~(d), the Sp between most different pruning criteria are not large in these layers, which indicates that these criteria have great differences in the redundancy measurement of convolutional filters. This may lead to a phenomenon that one criterion considers a convolutional filter to be important, while another considers it redundant. We find a specific example which is shown in Appendix~\ref{app:case};
(3)~Intuitively, the same type of criteria should be similar. However, Fig.~\ref{fig:other_simi}~(b) and Fig.~\ref{fig:other_simi}~(c) show that the Sp between Taylor $\ell_1$ and Taylor $\ell_2$ is not large, but Taylor $\ell_2$ has strong similarity with both two Norm-based criteria. Moreover, the Sp between BN\_$\gamma$ and each Norm-based criteria exceeds 0.9, but it is not large in other layers~(Fig.~\ref{fig:other_simi}~(a) and Fig.~\ref{fig:other_simi}~(d)). These phenomena are worthy of further study.

	\begin{figure*} [htbp]
		\vspace{-0.2cm}
	\centering 
	\includegraphics[width=1.0\linewidth]{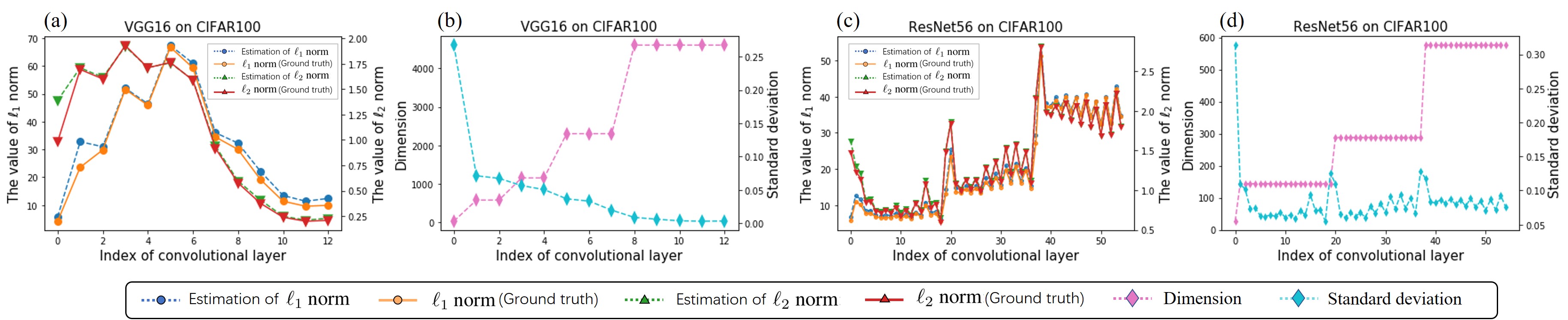}
	\centering 
	\vspace{-0.4cm}
	\caption{The magnitude of the \textit{Importance Score} measured by $\ell_1$ and $\ell_2$ criteria. } 
	\label{fig:magnitude}
\end{figure*}

\textbf{The Applicability for different types of pruning criteria}. According to the analysis in Section~\ref{Applicability}, the Applicability problem depends on the mean and variance of the \textit{Importance Score}. Fig.~\ref{fig:other_simi}~(g-i) shows the result of the \textit{Importance Score} measured by different pruning criteria on each layer of VGG16. Due to the difference in the magnitude of \textit{Importance Score} for different criteria, for the convenience of visualization, the value greater than 1 is represented by 1. 
 
  First, we analyze the Norm-based criteria. In most layers, the relative variance $\mathbf{Var}_r[\ell_2]$ is much smaller than that of $\mathbf{Var}_r[\ell_1]$, which means that the $\ell_2$ pruning has Applicability problem in VGG16, while the $\ell_1$ does not. This is consistent with our conclusion in Section~\ref{Applicability}. 
  Next, for the Activation-based criteria, the relative variance $\mathbf{Var}_r$ is large in each layer, which means that these two Activation-based criteria can distinguish the network redundancy well from their measured filters' \textit{Importance Score}. However, for the Importance-based and BN-based criteria, their relative variance $\mathbf{Var}_r$ are close to 0. According to Section~\ref{Applicability}, these criteria have Applicability problem, especially in the deeper layers (e.g., from 6$^{\rm th}$ layer to the last layer).

	\begin{wrapfigure}{r}{7cm}
		\vspace{-0.3cm}
    \includegraphics[width=1\linewidth]{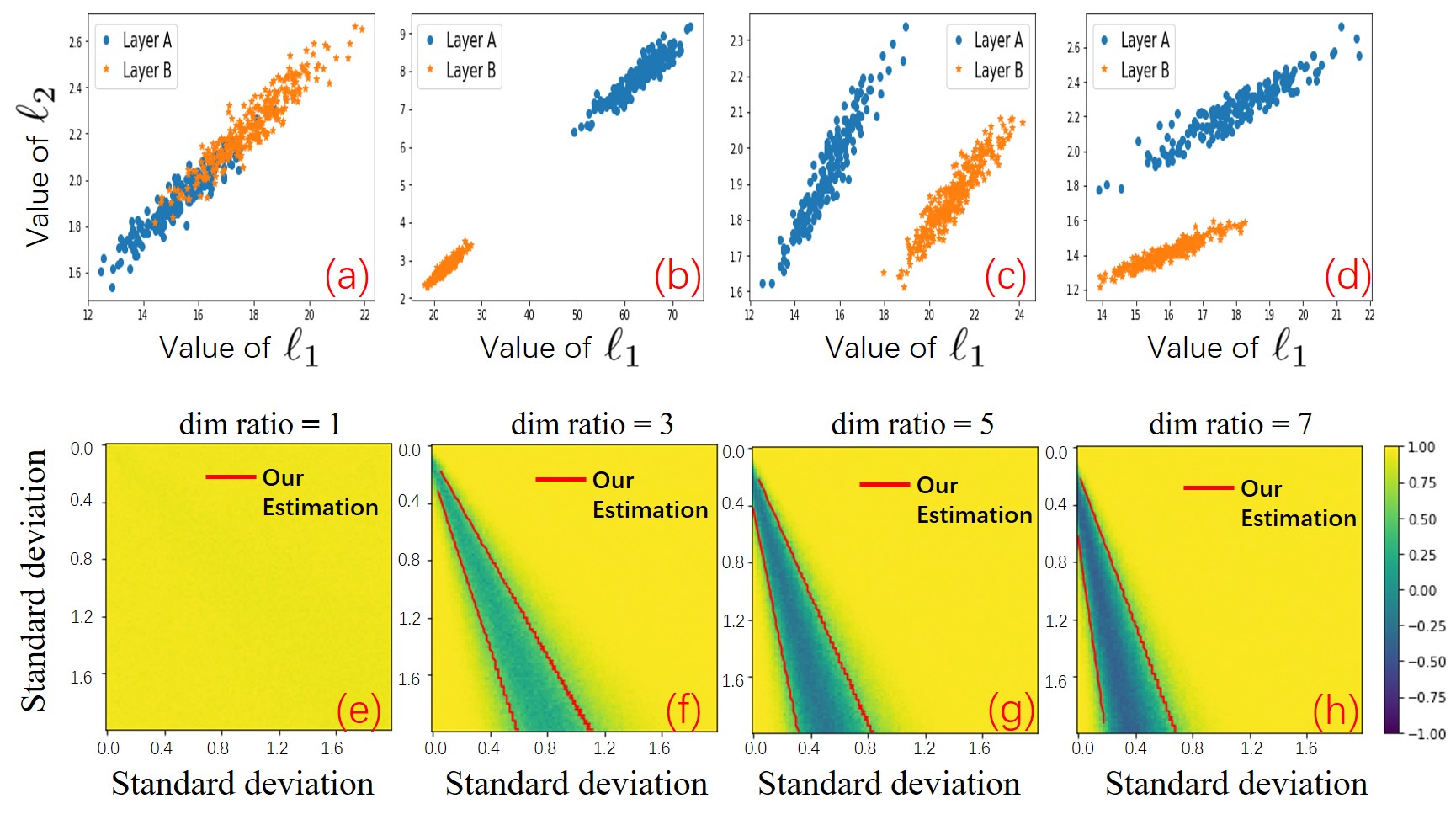}
	\vspace{-0.5cm}
	\caption{The global pruning simulation for the unpruned network with only two layers.}
	\label{fig:exp_pruned_tuned}
	\vspace{-0.2cm}
\end{wrapfigure}

\section{About global pruning}\label{sec:global}
	
	Compared with layer-wise pruning, global pruning is more widely~\cite{liu2018rethinking, molchanov2016pruning,liu2017learning} used in the current research of channel pruning. Therefore, in this section we may also analyze the Similarity and Applicability problem of global pruning.
	
	\textbf{Applicability while using global pruning}. In fact, for global pruning, both $\ell_1$ and $\ell_2$ criteria are not prone to Applicability problems. From Proposition~\ref{prop:mean_var_cri}, we show that the estimations for the mean of \textit{Importance Score} in layer $A$ for $\ell_1$ and $\ell_2$ are $\sigma_A\cdot d_A\sqrt{\frac{2}{\pi}}$ and $\sqrt{2}\sigma_A\cdot\Gamma(\frac{d_A+1}{2})/\Gamma(\frac{d_A}{2})$, respectively. Since $\sigma_A$ and $d_A$ are quite different, shown in Fig.~\ref{fig:magnitude}~(b) and (d), hence the variance of the \textit{Importance Score} may be large in this situation. 
	Fig.~\ref{fig:magnitude}~(a) and (c) show such kind of difference of the magnitude on different convolutional layers. In addition, from our estimations in Fig.~\ref{fig:magnitude}~(c), this inconsistent magnitude can be explained for another common problem in practical applications of global pruning: the ResNet is easily pruned off. As shown in Fig.~\ref{fig:magnitude}~(c), we take ResNet56 as an example. Since the \textit{Importance Score} in first stage is much smaller than the \textit{Importance Score} in the deeper layer, global pruning will give priority to prune the convolutional filters of the first stage. For problem, we suggest that some normalization tricks should be implemented or a protection mechanism should be established, \textit{e.g.}, a mechanism which can ensure that each layer has at least a certain number of convolutional filters that will not be pruned. Unlike some previous works~\cite{hecap,chin2020towards,wang2019cop}, which make suggestions from qualitative observation, we provide a quantitative view to illustrate that these tricks are necessary.

	

	\textbf{Similarity while using global pruning}. In Fig.~\ref{fig:other_simi}~(e-f), we show the similarity of different types of pruning criteria using global pruning on VGG16 and ResNet56. Comparing to the results from the layer-wise pruning shown in Fig.~\ref{fig:other_simi}~(a-d), we can find that the similarities of most pruning criteria are quite different in global pruning. In addition, the same criteria may have different results for different network structures in global pruning, \textit{e.g.,} in Fig.~\ref{fig:other_simi}~(e), we can find $\ell_2 \cong$ Taylor $\ell_2$ and BN$_\gamma \cong \ell_2$, but this observation does not hold in Fig.~\ref{fig:other_simi}~(f). In particular, different from the result about ResNet56 in Fig.~\ref{fig:other_simi}~(f), the similarity between $\ell_1$ and $\ell_2$ is not as strong as the one in the layer-wise case. This phenomenon is counter intuitive.
	
	To understand this phenomenon, we first consider about a simple case, \textit{i.e.,} the unpruned network has only two convolutional layers~(layer $A$ and layer $B$). The filters in these two layers are $F_A = (F_A^1,F_A^2,...,F_A^n)$ and $F_B = (F_B^1,F_B^2,...,F_B^m)$. According to CWDA, for $1\leq i \leq n$ and $1 \leq j \leq m$, $F_A^i$ and $F_B^j$ can follow $N(\mathbf{0},\sigma_A^2\mathbf{I}_{d_A})$ and $N(\mathbf{0},\sigma_B^2\mathbf{I}_{d_B})$, respectively. Next, we show Sp between \textit{Importance Score} measured by $\ell_1$ and $\ell_2$ pruning in different dimension ratio $d_A/d_B$, $\sigma_A$ and $\sigma_B$ in Fig.~\ref{fig:exp_pruned_tuned}~(e-h). Moreover, to analyze this phenomenon concisely, we draw some scatter plots as shown in Fig.~\ref{fig:exp_pruned_tuned}~(a-d), where the coordinates of each point are given by (value of $\ell_1$, value of $\ell_2$). The set of the points consisting of the filters in layer $A$ is called group-$A$. Then we introduce the Proposition 2.
    \begin{proposition}
    If the convolutional filters $F_A$ in layer $A$ meet CWDA, then $\mathbb{E}[\ell_1(F_A)/\ell_2(F_A)]$ and $\mathbb{E}[\ell_2(F_A)/\ell_1(F_A)]$ only depend on their dimension $d_A$.
    \label{prop:slope}
    \vspace{-0.3cm}
    \end{proposition}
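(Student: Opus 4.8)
The plan is to exploit the scale invariance of the ratio of two positively $1$-homogeneous norms. Under CWDA, and in line with the working model already adopted for the two-layer discussion above, the filter $F_A$ is distributed as $N(\mathbf{0},\sigma_A^2\mathbf{I}_{d_A})$ once we drop the $\epsilon\,\mathbf{\Sigma}^A_{\text{block}}$ term and treat the almost-equal diagonal entries of $\mathbf{\Sigma}^A_{\text{diag}}$ as a common value $\sigma_A^2$. Hence $F_A$ has the same law as $\sigma_A Z$, where $Z=(Z_1,\dots,Z_{d_A})\sim N(\mathbf{0},\mathbf{I}_{d_A})$ is a standard Gaussian vector whose distribution depends on nothing but $d_A$.

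First I would use $1$-homogeneity: $\ell_1(\sigma_A Z)=\sigma_A\,\ell_1(Z)$ and $\ell_2(\sigma_A Z)=\sigma_A\,\ell_2(Z)$, so the scale factor $\sigma_A$ cancels and
\begin{equation}
\frac{\ell_1(F_A)}{\ell_2(F_A)}\overset{d}{=}\frac{\ell_1(Z)}{\ell_2(Z)},\qquad
\frac{\ell_2(F_A)}{\ell_1(F_A)}\overset{d}{=}\frac{\ell_2(Z)}{\ell_1(Z)}.
\end{equation}
Both sides are bounded random variables ($\ell_2(Z)\le\ell_1(Z)\le\sqrt{d_A}\,\ell_2(Z)$), so their expectations exist and are functionals of the law of $Z$ alone, i.e. functions of $d_A$ only — which is the claim.

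To make the dimension dependence explicit (and to connect with the scatter plots of Fig.~\ref{fig:exp_pruned_tuned}), I would then insert the polar decomposition $Z=R\,U$ with $R=\ell_2(Z)$ and $U$ uniform on the sphere $S^{d_A-1}$, where $R$ and $U$ are independent. Then $\ell_1(Z)/\ell_2(Z)=\ell_1(U)=\sum_{i=1}^{d_A}|U_i|$, giving $\mathbb{E}[\ell_1(F_A)/\ell_2(F_A)]=d_A\,\mathbb{E}|U_1|$, and similarly $\mathbb{E}[\ell_2(F_A)/\ell_1(F_A)]=\mathbb{E}\big[1/\sum_{i=1}^{d_A}|U_i|\big]$; both are manifestly determined by $d_A$. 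This also explains why each group-$A$ in the $(\ell_1,\ell_2)$ plane clusters around a ray whose slope is fixed by $d_A$, and hence why global $\ell_1$ and $\ell_2$ rankings can disagree once layers have different dimensions.

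The main obstacle is the one shared with Proposition~\ref{prop:mean_var_cri} and Theorem~\ref{theo:bound}: controlling the error from the two CWDA approximations — discarding $\epsilon\,\mathbf{\Sigma}^A_{\text{block}}$ and collapsing the nearly-equal $a_1,\dots,a_{d_A}$ to $\sigma_A^2$ — since with a genuinely non-scalar covariance $\mathbf{\Sigma}$ the ratio $\ell_1(\mathbf{\Sigma}^{1/2}Z)/\ell_2(\mathbf{\Sigma}^{1/2}Z)$ is no longer exactly scale free. I would handle this by a perturbation bound: write $\mathbf{\Sigma}=\sigma_A^2\mathbf{I}_{d_A}+\mathbf{E}$ with $\|\mathbf{E}\|$ small, estimate $|\ell_p(\mathbf{\Sigma}^{1/2}Z)-\sigma_A\ell_p(Z)|\le C\|\mathbf{E}\|\,\ell_2(Z)$ for $p=1,2$, and conclude that the two expected ratios equal the dimension-only quantities above up to an error that vanishes with the spread of the $a_i$'s and with $\epsilon$ — which the statistical tests of Section~\ref{Statistical test} indicate is negligible in practice. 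For the approximate version stated in the proposition, the first two paragraphs already suffice.
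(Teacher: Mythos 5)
Your proposal is correct, but it proves the proposition by a genuinely different route than the paper. The paper's proof works at the level of moments: it invokes the second-order Taylor (delta-method) estimate $\mathbb{E}[\mathbf{A}/\mathbf{B}] \approx \mathbb{E}\mathbf{A}/\mathbb{E}\mathbf{B} + \mathbf{Var}(\mathbf{B})\cdot \mathbb{E}\mathbf{A}/(\mathbb{E}\mathbf{B})^3$ from Eq.~(\ref{estimation:E}), substitutes the means and variances of $\ell_1(F_A)$ and $\ell_2(F_A)$ from Proposition~\ref{prop:mean_var_cri}, and observes that $\sigma_A$ cancels in both terms, leaving quantities of order $O(\sqrt{d_A})+O(1/\sqrt{d_A})$ and $O(1/\sqrt{d_A})+O(d_A^{-1.5})$ respectively. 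You instead argue exactly under the relaxed CWDA $F_A\sim N(\mathbf{0},\sigma_A^2\mathbf{I}_{d_A})$: by $1$-homogeneity the ratio $\ell_1(F_A)/\ell_2(F_A)$ is scale free, so its entire law (not only its mean) coincides with that of $\ell_1(Z)/\ell_2(Z)$ for standard Gaussian $Z$, boundedness $\ell_2\le\ell_1\le\sqrt{d_A}\,\ell_2$ guarantees the expectations exist, and the polar decomposition $Z=RU$ yields closed forms $d_A\,\mathbb{E}|U_1|$ and $\mathbb{E}[1/\sum_i|U_i|]$ depending only on $d_A$. Your route buys exactness (no delta-method approximation error) and a stronger conclusion (distributional, not just in expectation), plus an explicit handle on the deviation from isotropy via your perturbation remark, which the paper leaves to the CWDA-Relaxation discussion in Appendix~\ref{app:relax}; the paper's route buys consistency with the machinery it reuses throughout (Lemma~\ref{lemma:varx_y}, Proposition~\ref{prop:mean_var_cri}) and directly exhibits the leading-order magnitudes that feed the slope discussion around Fig.~\ref{fig:exp_pruned_tuned}. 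Note that your closed form $d_A\,\mathbb{E}|U_1|\approx\sqrt{2d_A/\pi}$ matches the paper's $O(\sqrt{d_A})$ leading term, so the two arguments are quantitatively consistent.
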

\begin{proof}
	(See Appendix \ref{app:prop}).\qedhere
	\vspace{-0.4cm}
\end{proof}

    Now we analyze the simple case under different situations:

	(1)~For $d_A/d_B = 1$. If $\sigma_A^2 = \sigma_B^2$, in fact, it's the same situation as layer-wise pruning. From Theorem~\ref{theo:similarity-layer}, we know that group-$A$ and group-$B$ coincide and approximately lie on the same line, resulting $\ell_1 \cong \ell_2$ . If $\sigma_A^2 \not= \sigma_B^2$, group-$A$ and group-$B$ lie on two lines, respectively. However, these two lines have the same slope based on Proposition~\ref{prop:slope}, as shown in Fig.~\ref{fig:exp_pruned_tuned}~(a). For these reasons, we have $\ell_1 \cong \ell_2$ when $d_A/d_B = 1$.
	
	(2)~For $d_A/d_B \not= 1$. In Fig.~\ref{fig:exp_pruned_tuned}~(b-d), there are three main situations about the position relationship between group-$A$ and group-$B$. In Fig.~\ref{fig:exp_pruned_tuned}~(b), according to Theorem~\ref{theo:similarity-layer}, the points in group-$A$ and group-$B$ are monotonic respectively. Moreover, their \textit{Importance Score} measured by $\ell_1$ and $\ell_2$ do not overlap, which make $\ell_1$ and $\ell_2$ are \textit{approximately monotonic} overall. Thus, $\ell_1 \cong \ell_2$. However, for Fig.~\ref{fig:exp_pruned_tuned}~(c-d), the Sp is small since the points in these two group are not monotonic (the \textit{Importance Score} measured by $\ell_1$ or $\ell_2$ has a large overlap). From Proposition~\ref{prop:mean_var_cri} and the approximation $\Gamma(\frac{d_A+1}{2})/\Gamma(\frac{d_A}{2}) \approx \sqrt{d_A/2}$~(Appendix~\ref{proof:l1vsl2}), these two situations can be described as: 
\begin{equation}
\sigma_{A} d_{A} \approx \sigma_{B} d_{B}\quad or\quad \sigma_{A} \sqrt{d_{A}} \approx \sigma_{B} \sqrt{d_{B}},
\label{eqn:overlap}
\end{equation}
where $d_A \not= d_B$. Through Eq.~(\ref{eqn:overlap}) we can obtain the two red lines shown in Fig.~\ref{fig:exp_pruned_tuned}~(f-h). It can be seen that the area surrounded by these two red lines is consistent with the area where the Sp is relatively small, which means our analysis is reasonable. Based on the above analysis, we can summarize the conditions about $\ell_1 \cong \ell_2$ in global pruning for two convolutional layers as shown in Table~\ref{tab:twolayer_global}.

\begin{wraptable}{r}{7cm}
\vspace{-0.6cm}
  \centering
  \small
  \caption{The conditions about $\ell_1 \cong \ell_2$ in global pruning for two layers~(layer $A$ and layer $B$)}
  	\resizebox{0.5\columnwidth}{!}{
    \begin{tabular}{rccc|c}
    \hline
          & \multicolumn{1}{l}{$d_A = d_B$?} & \multicolumn{1}{l}{$\frac{\sigma_{A}}{\sigma_{B}}  \approx \frac{d_{B}}{d_{A}} $?} & \multicolumn{1}{l|}{$ \frac{\sigma_{A}}{\sigma_{B}}   \approx \frac{\sqrt{d_{B}}}{\sqrt{d_{A}}} $?} & \multicolumn{1}{l}{$\ell_1 \cong \ell_2$?} \\
    \hline
    (1)     &{\color{black}{\Checkmark}}       & \textbf{--}      & \textbf{--}      &{\color{green}{\Checkmark}}  \\
    (2)     &{\color{black}{\XSolidBrush}}       &{\color{black}{\XSolidBrush}}       &{\color{black}{\XSolidBrush}}       &{\color{green}{\Checkmark}}   \\
    (3)     &{\color{black}{\XSolidBrush}}     &{\color{black}{\Checkmark}}       & \textbf{--}      &{\color{red}{\XSolidBrush}}  \\
    (4)     &{\color{black}{\XSolidBrush}}      & \textbf{--}      &{\color{black}{\Checkmark}}       &{\color{red}{\XSolidBrush}}  \\

    \hline
    \end{tabular}%
    }
  \label{tab:twolayer_global}%
  \vspace{-0.2cm}
\end{wraptable}

Next, we go back to the the situation about real neural networks in Fig.~\ref{fig:other_simi}~(e-f).
 (1)~For ResNet56. As shown in Fig.\ref{fig:magnitude}~(d), the dimensions of the filters in each stage are almost the same. From Table~\ref{tab:twolayer_global}~(1), the pruning results after $\ell_1$ and $\ell_2$ pruning in each stage are similar. And, the magnitudes of the \textit{Importance Score} in each stage are very different, since Table~\ref{tab:twolayer_global}~(2), we can obtain that $\ell_1 \cong \ell_2$ for ResNet56.

(2) For VGG16. As shown in Fig.\ref{fig:magnitude}~(a-b), compared with ResNet56, VGG16 has some layers with different dimensions but similar \textit{Importance Score} measured by $\ell_1$ or $\ell_2$, such as ``layer 2'' and ``layer 8'' for $\ell_2$ criterion in Fig.\ref{fig:magnitude}~(a). From Table~\ref{tab:twolayer_global}~(3-4), these pairs of layers make the Sp small, which explain why the result of $\ell_1$ and $\ell_2$ pruning is not similar in Fig.~\ref{fig:other_simi}~(e) for VGG16. In Appendix~\ref{app:support}, more experiments show that we can increase the Sp in global pruning by ignoring part of these pairs of layers, which support our analysis.


	\section{Discussion} 
	\label{Discussion}
	
	\subsection{Why CWDA sometimes does not hold?} 
	\label{why}
	CWDA may not always hold. As shown in Appendix~\ref{app:Statistical Test}, a small number of convolutional filters may not pass all statistical tests. In this section, we try to analyze this phenomenon.

	(1)~\textbf{The network is not trained well enough}. The distribution of parameters should be discussed \textbf{only when} the network is trained well. If the network does not converge, it is easy to construct a scenario which does not satisfy CWDA, \textit{e.g.}, for a network with uniform initialization, when it is only be trained for a few epochs, the distribution of parameters may be still close to a uniform distribution. At this time, the distribution obviously does not satisfy CWDA. A specific example is in Appendix~\ref{app:other_result}.

	(2)~\textbf{The number of filters is insufficient.} In Appendix~\ref{app:Statistical Test}, the layers that can not pass the statistical tests are almost those whose position is in the front of the network. A common characteristic of these layers is that they have a few filters, which may not estimate statistics well. Taking the second convolutional layer~(64 filters) in VGG16 on CIFAR10 as an example, first, the filters in this layer can not pass all the statistical tests. And then the Sp in this transition layer is relatively small, as shown in Fig.~\ref{fig:vggmore}. However, in Fig.~\ref{fig:vgg_change}, we change the number of filters in this layer from 64 to 128 or 256. After that, the Sp increases significantly, and the filters can pass all the statistical tests when the number of filters is 256. These observations suggest that the number of filters is a major factor for CWDA to be hold.

 		\begin{figure*} [h]
	\centering 
	\vspace{-0.2cm}
	\includegraphics[width=0.92\linewidth]{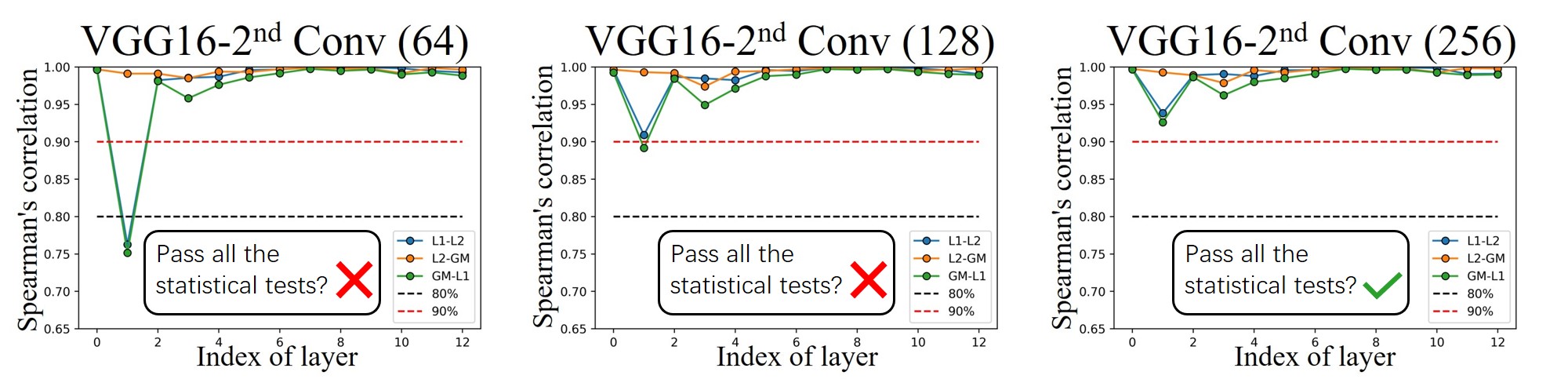}
	
	\centering 
	\vspace{-0.2cm}
	\caption{The Sp between different pruning criteria on VGG16~(CIFAR10). The number of filters in the second convolutional layers is changed from 64 to 256. The filters in this layer can pass all the statistical tests when the number of filters is 256.} 
	\vspace{-0.2cm}
	\label{fig:vgg_change}
\end{figure*}


%

	\subsection{How our findings help the community?} 
	\label{how}	
	
	(1)~We propose an assumption about the parameters distribution of the CNNs called CWDA, which is an effective theoretical tool for analyzing convolutional filter. In this paper, CWDA is successfully used to explain many phenomena in the Similarity and Applicability of pruning criteria. In addition, it also explains why the ResNet is easily pruned off in global pruning. In Section~\ref{Statistical test}, since CWDA can pass statistical tests in various situations, it can be expected that it can also be used as an effective and concise analysis tool for other CNNs-related areas, \textbf{not just} pruning area.
	
	(2)~In this paper, we study the Similarity and Applicability problem about pruning criteria, which can guide and motivate the researchers to design more reasonable criteria. For Applicability problem, we suggest that, intuitively, it is reasonable that the \textit{Importance Score} should be distinguishable for the proposed novel criteria. For Similarity, as more and more criteria are proposed, these criteria can be used for ensemble
learning to enhance their pruning performance~\cite{he2020learning}. In this case,
the similarity analysis between criteria in this paper is important, because highly similar criteria cannot bring gains to ensemble learning.
	
	(3)~In pruning area, $\ell_1$ and $\ell_2$ are usually regarded as the same pruning criteria, which is intuitive. In layer-wise pruning, we do prove that the $\ell_1$ and $\ell_2$ pruning are almost the same. However, in global pruning, the pruning results by these two criteria are sometimes very different. In addition, compared with $\ell_1$ criterion, $\ell_2$ criterion is prone to Applicability problems. These counter-intuitive phenomena enlighten us that we can't just rely on intuition when analyzing problems.


	

\begin{table}[htbp]
  \centering
  \caption{The random pruning results of VGGNet with different criteria which have the Applicability problem. The VGG16 and VGG19 are trained on CIFAR100. The unpruned baseline accuracy of VGG16 and VGG19 are 72.99 and 73.42, respectively.}
  \resizebox{\columnwidth}{!}{%
    \begin{tabular}{cl|cccc|cccc}
    \hline
    \textbf{Model} & \textbf{criterion} & \textbf{min (r=10\%)} & \textbf{max (r=10\%)} & \textbf{mean (r=10\%)} & \textbf{$\Delta$} & \textbf{min (r=20\%)} & \textbf{max (r=20\%)}& \textbf{mean  (r=20\%)} & \textbf{$\Delta$} \\
    \hline
    \multirow{5}[1]{*}{\begin{sideways}VGG16\end{sideways}} & $\ell_2$    & 71.41 & 72.65 & 71.75 & 1.24  & 71.01 & 72.47 & 71.32 & 1.46 \\
          & Taylor $\ell_1$  & 71.67 & 72.34 & 71.89 & 0.67  & 71.32 & 72.32 & 71.45 & 1.01 \\
          & Taylor $\ell_2$  & 71.87 & 72.37 & 71.91 & 0.5   & 71.66 & 72.27 & 71.65 & 0.61 \\
          & BN$_\gamma$ & 71.09 & 71.66 & 71.36 & 0.57  & 71.02 & 71.57 & 71.12 & 0.55 \\
          & BN$_\beta$ & 71.15 & 72.58 & 71.43 & 1.43  & 71.06 & 72.11 & 71.87 & 1.05 \\
    \hline
    \multirow{5}[2]{*}{\begin{sideways}VGG19\end{sideways}} & $\ell_2$     & 71.99 & 73.15 & 72.26 & 1.16  & 71.11 & 73.02 & 72.15 & 1.91 \\
          & Taylor $\ell_1$  & 71.67 & 73.04 & 72.23 & 1.37  & 71.6  & 72.98 & 72.24 & 1.38 \\
          & Taylor $\ell_2$  & 72.12 & 72.99 & 72.28 & 0.87  & 72.04 & 72.83 & 72.54 & 0.79 \\
          & BN$_\gamma$ & 72.01 & 73.23 & 72.25 & 1.22  & 71.98 & 72.32 & 72.12 & 0.34 \\
          & BN$_\beta$ & 72.25 & 73.23 & 72.41 & 0.98  & 72.04 & 72.65 & 72.33 & 0.61 \\
    \hline
    \end{tabular}%
    }
  \label{tab:addlabel}%
\end{table}%

(4)~Similar to the setting in Fig.~\ref{fig:other_simi}, we can explore the effect of pruning filters with similar \textit{Importance Score} on the performance. First, we find that the criteria ($\ell_2$,Taylor $\ell_1$, Taylor $\ell_2$, BN$_{\gamma}$  and BN$_{\beta}$) for VGGNet can cause the Applicability problem in most layers (Fig.~\ref{fig:other_simi}). As such, we randomly select 10\% or 20\% filters to be pruned by the uniform distribution $U[0,1]$ in each layer, and the selective filters will be in similar \textit{Importance Score}. Finally, we finetune the pruned model (there are 20 random repeated experiments). $\Delta$ denotes the difference between max acc. and min acc. (\textit{i.e.} max acc. - min acc.) . Since their \textit{Importance Score} are very similar, when the network is pruned and finetuned, it can be expected that the performance should be similar in these repeated experiments. However, from the results in the above table, although the \textit{Importance Score} of the pruned filters is very close, we can still get pruning results with very different results (\textit{e.g.} the $\Delta$ of VGG16 on $\ell_2$ are more than 1). It means that these criteria may not really represent the importance of convolutional filters. Therefore, it is necessary to re-evaluate the correctness of the existing pruning criteria. 


\textbf{Acknowledgments}. Z. Huang gratefully acknowledges the technical and writing support from Mingfu Liang~(Northwestern University), Senwei Liang~(Purdue University) and Wei He~(Nanyang Technological University). Moreover, he sincerely thanks Mingfu Liang for offering his self-purchasing GPUs and Qinyi Cai (NetEase, Inc.) for checking part of the proof in this paper. This work was supported in part by the General Research Fund of Hong Kong No.27208720, the National Key R\&D Program of China under Grant No. 2020AAA0109700, the National Science Foundation of China under Grant  No.61836012 and 61876224, the National High Level Talents Special
Support Plan (Ten Thousand Talents Program), and GD-NSF (no.2017A030312006).

\clearpage
\onecolumn
\appendix

\clearpage
\section{Related Proposition}
\label{app:prop}
\begin{proposition}[Amoroso distribution]~The Amoroso distribution is a four parameter, continuous, univariate, unimodal probability density, with semi-infinite range~\cite{crooks2012survey}. And its probability density function is
	\begin{equation}
	\mathbf{Amoroso}(X|a,\theta,\alpha,\beta) = \frac{1}{\Gamma(\alpha)}|\frac{\beta}{\theta}|(\frac{X-a}{\theta})^{\alpha\beta-1}\exp \left\{ -(\frac{X-a}{\theta})^\beta\right\},
	\end{equation}
	for $x,a,\theta, \alpha, \beta \in \mathbb{R}, \alpha >0$ and range $x \geq a$ if $\theta > 0$, $x \leq a$ if $\theta < 0$. The mean and variance of Amoroso distribution are
	\begin{equation}
	\mathbb{E}_{X \sim \mathbf{Amoroso}(X|a,\theta,\alpha,\beta)}X = a + \theta\cdot \frac{\Gamma(\alpha + \frac{1}{\beta})}{\Gamma(\alpha)},
	\label{equ:mean_amo}
	\end{equation}
	and
	\begin{equation}
	\mathbf{Var}_{X \sim \mathbf{Amoroso}(X|a,\theta,\alpha,\beta)}X = \theta^2 \left[ \frac{\Gamma(\alpha + \frac{2}{\beta})}{\Gamma(\alpha)} - \frac{\Gamma(\alpha + \frac{1}{\beta})^2}{\Gamma(\alpha)^2} \right].
	\label{equ:var_amo}
	\end{equation}
	\vspace{0.3cm}
	\label{pro:Amoroso}
\end{proposition}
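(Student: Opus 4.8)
The plan is to reduce everything to a single moment computation for a standardized version of $X$ and then recognize the resulting integral as a Gamma function. First I would set $Z = (X-a)/\theta$, so that $X = a + \theta Z$. A one-line change of variables in the density — the Jacobian $|\theta|$ cancelling the factor $|\theta|^{-1}$ hidden inside $|\beta/\theta|$ — shows that $Z$ is supported on $[0,\infty)$ (this is exactly what the sign hypotheses on $\theta$ and $\beta$ guarantee) with density
\begin{equation*}
f_Z(z) = \frac{|\beta|}{\Gamma(\alpha)}\, z^{\alpha\beta-1}\, e^{-z^{\beta}}, \qquad z \ge 0 .
\end{equation*}
By linearity of expectation and the scaling rule for variance it then suffices to prove the single identity
\begin{equation*}
\mathbb{E}\!\left[Z^{r}\right] = \frac{\Gamma(\alpha + r/\beta)}{\Gamma(\alpha)}, \qquad r = 0,1,2,
\end{equation*}
after which $\mathbb{E}[X] = a + \theta\,\mathbb{E}[Z]$ and $\mathbf{Var}(X) = \theta^{2}\bigl(\mathbb{E}[Z^{2}] - \mathbb{E}[Z]^{2}\bigr)$ reproduce Eq.~(\ref{equ:mean_amo}) and Eq.~(\ref{equ:var_amo}) verbatim, while the case $r=0$ simultaneously confirms that $f_Z$ is a genuine probability density.

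Next I would compute that moment directly. Writing $\mathbb{E}[Z^{r}] = \frac{|\beta|}{\Gamma(\alpha)}\int_{0}^{\infty} z^{\alpha\beta-1+r} e^{-z^{\beta}}\,dz$, the decisive step is the substitution $t = z^{\beta}$, i.e.\ $z = t^{1/\beta}$ and $dz = \frac{1}{\beta} t^{1/\beta-1}\,dt$; this turns the integrand into $\frac{1}{|\beta|}\, t^{\alpha + r/\beta - 1} e^{-t}$ and hence the integral into $\frac{1}{|\beta|}\,\Gamma(\alpha + r/\beta)$ by the definition of the Gamma function, so the prefactor $|\beta|$ cancels and the claimed formula drops out. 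Specializing to $r = 1$ and $r = 2$ and substituting back into the two identities above finishes the proof.

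The computation is essentially routine, so the only point requiring genuine care — and hence the main obstacle — is the bookkeeping of signs and convergence: one must treat the two regimes ($\theta>0$ with $\beta>0$, and $\theta<0$, where $\beta<0$ is also admissible) separately so that the support of $Z$ and the orientation of the monotone substitution $t=z^{\beta}$ are correct, and one must verify integrability. Near $0$ this uses $\alpha\beta + r > 0$ together with the super-exponential cutoff $e^{-z^{\beta}}$, while at infinity it is controlled by the sign of $\alpha\beta$ (for $\beta<0$ this is precisely what restricts which moments are finite, in agreement with $\Gamma(\alpha+r/\beta)$ being finite only when $\alpha + r/\beta > 0$). Once these conventions are pinned down, assembling the mean and the variance is immediate; note that neither CWDA nor any earlier result in the paper is needed here, since this proposition is a self-contained fact about the Amoroso family that the later proofs will invoke.
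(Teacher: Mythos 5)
Your derivation is correct: the standardization $Z=(X-a)/\theta$, the Gamma-integral substitution $t=z^{\beta}$ yielding $\mathbb{E}[Z^{r}]=\Gamma(\alpha+r/\beta)/\Gamma(\alpha)$, and the reassembly of $\mathbb{E}[X]$ and $\mathbf{Var}(X)$ all check out and reproduce Eq.~(\ref{equ:mean_amo}) and Eq.~(\ref{equ:var_amo}). Note, however, that the paper does not prove this statement at all: it is listed among the ``Related Propositions'' and simply cited from the survey of Crooks~\cite{crooks2012survey} as a known fact to be invoked later (for the half-normal and scaled chi specializations). So your argument is not a variant of the paper's proof but a self-contained replacement for a citation; what it buys is an explicit verification, including the normalization check ($r=0$) and the precise conditions $\alpha+r/\beta>0$ under which the first and second moments exist when $\beta<0$, none of which the paper makes explicit. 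One small expository correction: you tie the sign of $\beta$ to the sign of $\theta$ when describing the case analysis, but these are independent — $\theta$ only fixes the direction of the support, while $\beta$ only fixes the orientation of the substitution $t=z^{\beta}$ — and since both parameters enter your computation exclusively through $|\theta|$ and $|\beta|$, the same two-line calculation covers all four sign combinations without genuinely separate regimes.
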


\begin{proposition}[Half-normal distribution]~Let random variable $X$ follow a normal distribution $N(0,\sigma ^{2})$, then $Y = |X|$ follows a half-normal distribution~\cite{pescim2010beta}. Moreover, $Y$ also follows $\mathbf{Amoroso}(x|0,\sqrt{2}\sigma,\frac{1}{2},2)$. By Eq.~(\ref{equ:mean_amo}) and Eq.~(\ref{equ:var_amo}), the mean and variance of half-normal distribution are
	\begin{equation}
	\mathbb{E}_{X \sim N(0,\sigma ^{2})} |X| = \sigma\sqrt{2/\pi},
	\label{equ:mean_half}
	\end{equation}
	and
	\begin{equation}
	\mathbf{Var}_{X \sim N(0,\sigma ^{2})} |X| = \sigma^2\left(1 - \frac{2}{\pi}\right).
	\label{equ:var_half}
	\end{equation}
	
	\vspace{0.3cm}
	\label{pro:half normal}
\end{proposition}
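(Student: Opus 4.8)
The plan is to prove the statement in two stages: first identify the law of $Y=|X|$ as one specific member of the Amoroso family, and then read off the mean and variance directly from the general moment formulas already recorded in Proposition~\ref{pro:Amoroso}. First I would compute the density of $Y=|X|$ by folding. Since the density of $X\sim N(0,\sigma^2)$ is symmetric about the origin, for $y\ge 0$ we have
\begin{equation}
f_Y(y)=f_X(y)+f_X(-y)=2f_X(y)=\sqrt{\frac{2}{\pi}}\,\frac{1}{\sigma}\exp\!\left(-\frac{y^2}{2\sigma^2}\right),
\end{equation}
and $f_Y(y)=0$ for $y<0$; this is by definition the half-normal density.

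The central step is to verify that this density coincides with $\mathbf{Amoroso}(y\,|\,0,\sqrt{2}\sigma,\tfrac12,2)$. Substituting $a=0$, $\theta=\sqrt{2}\sigma$, $\alpha=\tfrac12$, $\beta=2$ into the Amoroso density, two simplifications do all the work: the exponent $\alpha\beta-1=0$, so the power factor $\big((y-a)/\theta\big)^{\alpha\beta-1}$ collapses to $1$; and $\Gamma(\alpha)=\Gamma(\tfrac12)=\sqrt\pi$ together with $|\beta/\theta|=\sqrt2/\sigma$. The exponential becomes $\exp\{-(y/(\sqrt2\sigma))^2\}=\exp(-y^2/(2\sigma^2))$, and the prefactor $\tfrac{1}{\sqrt\pi}\cdot\tfrac{\sqrt2}{\sigma}=\sqrt{2/\pi}/\sigma$, reproducing exactly the folded density above. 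This confirms $Y\sim\mathbf{Amoroso}(y\,|\,0,\sqrt{2}\sigma,\tfrac12,2)$.

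Finally I would substitute these four parameters into the Amoroso moment formulas~(\ref{equ:mean_amo}) and~(\ref{equ:var_amo}). For the mean, $\alpha+1/\beta=1$, so $\Gamma(\alpha+1/\beta)/\Gamma(\alpha)=\Gamma(1)/\Gamma(\tfrac12)=1/\sqrt\pi$, giving $\mathbb{E}Y=\sqrt2\sigma/\sqrt\pi=\sigma\sqrt{2/\pi}$, which is~(\ref{equ:mean_half}). For the variance I also need $\alpha+2/\beta=\tfrac32$ with $\Gamma(\tfrac32)=\tfrac12\Gamma(\tfrac12)=\sqrt\pi/2$; then the bracket equals $\tfrac12-\tfrac1\pi$ and the factor $\theta^2=2\sigma^2$ yields $\mathbf{Var}(Y)=2\sigma^2(\tfrac12-\tfrac1\pi)=\sigma^2(1-2/\pi)$, which is~(\ref{equ:var_half}).

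The only genuinely delicate point is the parameter identification in the middle step: one must check that the normalization $|\beta/\theta|/\Gamma(\alpha)$ and the scale $\theta=\sqrt2\sigma$ are chosen consistently, so that the Amoroso density integrates to the same folded Gaussian rather than a rescaled version. Everything else reduces to evaluating $\Gamma$ at the half-integer points $\tfrac12,\,1,\,\tfrac32$, so I do not anticipate any real obstacle beyond careful bookkeeping.
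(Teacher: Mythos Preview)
Your proposal is correct and follows exactly the route the paper intends: identify the folded density as the Amoroso member with $(a,\theta,\alpha,\beta)=(0,\sqrt{2}\sigma,\tfrac12,2)$ and then read off the moments from~(\ref{equ:mean_amo}) and~(\ref{equ:var_amo}). The paper itself does not spell out a proof beyond citing \cite{pescim2010beta,crooks2012survey} and invoking those two formulas, so your write-up is in fact more detailed than what appears there; the parameter check and the $\Gamma(\tfrac12),\Gamma(1),\Gamma(\tfrac32)$ evaluations are all carried out correctly.
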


\begin{proposition}[Scaled Chi distribution]~Let $X = (x_1,x_2,...x_k)$ and $x_i,i=1,...,k$ are $k$ independent, normally distributed random variables  with mean 0 and standard deviation $\sigma$. The statistic $\ell_2(X) = \sqrt{\sum_{i=1}^kx_i^2}$ follows Scaled Chi distribution~\cite{crooks2012survey}. Moreover, $\ell_2(X)$ also follows $\mathbf{Amoroso}(x|0,\sqrt{2}\sigma,\frac{k}{2},2)$. By Eq.~(\ref{equ:mean_amo}) and Eq.~(\ref{equ:var_amo}), the mean and variance of Scaled Chi distribution are
	\begin{equation}
	\mathbb{E}_{X \sim N(\mathbf{0},\sigma ^{2}\cdot \mathbf{I_k})} [\ell_2(X)]^j = 2^{j/2}\sigma^j\cdot \frac{\Gamma(\frac{k+j}{2})}{\Gamma(\frac{k}{2})},
	\label{equ:mean_chi}
	\end{equation}
	and
	\begin{equation}
	\mathbf{Var}_{X \sim N(\mathbf{0},\sigma ^{2}\cdot \mathbf{I_k})} \ell_2(X) = 2\sigma^2 \left[ \frac{\Gamma(\frac{k}{2}+1)}{\Gamma(\frac{k}{2})} - \frac{\Gamma(\frac{k+1}{2})^2}{\Gamma(\frac{k}{2})^2} \right].
	\label{equ:var_chi}
	\end{equation}
	\vspace{0.3cm}
	\label{pro:chi}
\end{proposition}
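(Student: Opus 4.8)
The statement packs together three claims --- that $\ell_2(X)$ is a scaled Chi variable, that its law is the Amoroso density $\mathbf{Amoroso}(\cdot\mid 0,\sqrt2\sigma,\tfrac k2,2)$, and the moment/variance formulas in Eq.~(\ref{equ:mean_chi}) and Eq.~(\ref{equ:var_chi}) --- and the plan is to settle them in that order, leaning on Proposition~\ref{pro:Amoroso}.

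First I would pin down the distribution of $\ell_2(X)$. Since $x_1,\dots,x_k$ are i.i.d.\ $N(0,\sigma^2)$, the variables $x_i/\sigma$ are i.i.d.\ standard normal, so $\sum_{i=1}^k(x_i/\sigma)^2\sim\chi^2_k$, and therefore $\ell_2(X)/\sigma=\sqrt{\sum_i(x_i/\sigma)^2}$ is Chi-distributed with $k$ degrees of freedom; equivalently $\ell_2(X)=\sigma\,\chi_k$. A change of variables (Jacobian $1/\sigma$) in the Chi density $\bigl(2^{k/2-1}\Gamma(k/2)\bigr)^{-1}t^{\,k-1}e^{-t^2/2}$ then gives $f_{\ell_2(X)}(y)=\bigl(\sigma^{k}2^{k/2-1}\Gamma(k/2)\bigr)^{-1}y^{\,k-1}e^{-y^{2}/(2\sigma^{2})}$ on $y\ge0$.

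Next I would verify that this coincides with $\mathbf{Amoroso}(y\mid 0,\sqrt2\sigma,\tfrac k2,2)$ by substituting $a=0,\ \theta=\sqrt2\sigma,\ \alpha=k/2,\ \beta=2$ into the density of Proposition~\ref{pro:Amoroso}: the factor $\bigl|\beta/\theta\bigr|(y/\theta)^{\alpha\beta-1}=\tfrac{2}{\sqrt2\sigma}(y/\sqrt2\sigma)^{k-1}$ and $\exp\{-(y/\theta)^\beta\}=e^{-y^2/(2\sigma^2)}$, and collecting $(\sqrt2\sigma)^{-(k-1)}\cdot(2/\sqrt2\sigma)=(\sigma^{k}2^{k/2-1})^{-1}$ reproduces exactly the density from the previous step. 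With the identification in hand, Eq.~(\ref{equ:mean_chi}) follows by specializing the general moment of an Amoroso variable with $a=0$: the substitution $u=(y/\theta)^{\beta}$ turns $\int_0^\infty y^{j}f_{\mathbf{Amoroso}}(y)\,dy$ into $\theta^{j}\Gamma(\alpha+j/\beta)/\Gamma(\alpha)$, which here is $2^{j/2}\sigma^{j}\Gamma(\tfrac{k+j}{2})/\Gamma(\tfrac k2)$; and Eq.~(\ref{equ:var_chi}) is then just $\mathbb{E}[\ell_2(X)^2]-\mathbb{E}[\ell_2(X)]^2=2\sigma^2\Gamma(\tfrac k2+1)/\Gamma(\tfrac k2)-2\sigma^2\bigl(\Gamma(\tfrac{k+1}{2})/\Gamma(\tfrac k2)\bigr)^2$, using $\Gamma(\tfrac k2+1)=\tfrac k2\Gamma(\tfrac k2)$.

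The argument is essentially bookkeeping; the only place needing care is tracking the powers of $2$ and $\sigma$ buried in $\theta^{\alpha\beta-1}$ during the density match, and the mild upgrade from the first-moment formula Eq.~(\ref{equ:mean_amo}) of Proposition~\ref{pro:Amoroso} to the general $j$-th moment --- which one may sidestep entirely by instead quoting the classical Chi moments $\mathbb{E}[\chi_k^{j}]=2^{j/2}\Gamma(\tfrac{k+j}{2})/\Gamma(\tfrac k2)$ and rescaling by $\sigma^{j}$. No genuine obstacle is expected.
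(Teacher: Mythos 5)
Your argument is correct and takes essentially the same route as the paper, which states this proposition as cited background: identify $\ell_2(X)=\sigma\chi_k$ with the Amoroso density $\mathbf{Amoroso}(\cdot\,|\,0,\sqrt2\sigma,\tfrac{k}{2},2)$ and read the mean and variance off the formulas of Proposition~\ref{pro:Amoroso}. The only difference is that you explicitly verify the density match and the general $j$-th moment (which the paper leaves to the cited survey, since Eq.~(\ref{equ:mean_amo}) alone gives only the first moment), and your bookkeeping of the powers of $2$ and $\sigma$ checks out.
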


\begin{proposition}[Stirling's formula]\footnote{\url{en.wikipedia.org/wiki/Stirling's approximation}} For big enough $x$ and $x \in \mathbb{R}^+$, we have an approximation of Gamma function:
	\begin{equation}
	\Gamma(x+1) \approx \sqrt{2\pi x}\left(\frac{x}{e}\right)^x.
	\end{equation}
	\label{pro:stir}
	\vspace{0.3cm}
\end{proposition}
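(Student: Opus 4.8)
The plan is to prove the asymptotic by applying \emph{Laplace's method} (the saddle-point approximation) to the integral representation of the Gamma function, establishing that the ratio of the two sides tends to $1$ as $x\to\infty$, which is precisely the meaning of the stated $\approx$. First I would write
\begin{equation}
\Gamma(x+1) = \int_0^\infty t^x e^{-t}\,dt = \int_0^\infty e^{g(t)}\,dt, \qquad g(t) = x\ln t - t .
\end{equation}
For large $x$ the integrand is sharply peaked, so the dominant contribution comes from a neighborhood of the maximizer of $g$, and the task reduces to identifying that peak and the Gaussian curvature there.

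Next I would locate and characterize the maximizer. Setting $g'(t) = x/t - 1 = 0$ gives the unique critical point $t^\ast = x$, and since $g''(t) = -x/t^2 < 0$ on $(0,\infty)$, the function $g$ is strictly concave with global maximum at $t^\ast$. Evaluating yields $g(t^\ast) = x\ln x - x$ and $g''(t^\ast) = -1/x$, so the second-order expansion near the peak reads
\begin{equation}
g(t) \approx x\ln x - x - \frac{(t-x)^2}{2x} .
\end{equation}
I would then rescale to expose the Gaussian shape by substituting $t = x + \sqrt{x}\,s$, which turns the integral into
\begin{equation}
\Gamma(x+1) = \sqrt{x}\,x^x e^{-x}\int_{-\sqrt{x}}^{\infty}\exp\Bigl\{x\bigl[\ln(1+s/\sqrt{x}) - s/\sqrt{x}\bigr]\Bigr\}\,ds .
\end{equation}
The exponent converges pointwise to $-s^2/2$ as $x\to\infty$, the remaining integral tends to $\int_{-\infty}^{\infty}e^{-s^2/2}\,ds = \sqrt{2\pi}$, and recombining $\sqrt{x}\,x^x e^{-x}\cdot\sqrt{2\pi}$ gives exactly $\sqrt{2\pi x}\,(x/e)^x$.

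The main obstacle is making the passage to the Gaussian limit rigorous: I must justify replacing the true exponent by its quadratic limit and extending the lower endpoint from $-\sqrt{x}$ to $-\infty$. The clean route is to dominate the rescaled integrand by an $x$-independent integrable function and invoke the dominated convergence theorem. Writing $u = s/\sqrt{x}$, I would use the two elementary inequalities $\ln(1+u)-u \le -u^2/2$ for $-1<u\le 0$ and $\ln(1+u)-u \le -\tfrac{u^2}{2(1+u)}$ for $u\ge 0$ (both verified by checking that the relevant difference vanishes at $u=0$ and is monotone), which control the left and right tails respectively; together with the strict concavity of $g$ — guaranteeing a uniform gap $g(t)-g(t^\ast)\le -\delta$ away from the peak so that those contributions are exponentially suppressed — this supplies the dominating bound. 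Since the proposition is stated only as an approximation, I would lean on the standard Laplace-method estimate rather than re-deriving every tail bound in full detail.
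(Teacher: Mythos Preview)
Your proposal is correct: the Laplace/saddle-point argument you outline is the standard rigorous derivation of Stirling's formula, and the tail-control inequalities you invoke for $\ln(1+u)-u$ are valid and suffice for dominated convergence.

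However, there is nothing to compare against: the paper does not prove this proposition at all. It simply records Stirling's formula as a classical fact, with a footnote pointing to the Wikipedia entry, and then uses it downstream (in the proof of Lemma~\ref{lemma:gamma1}) to evaluate limits of Gamma-function ratios. So your write-up goes well beyond what the paper provides. If your goal is merely to match the paper, a one-line citation of the standard result would be enough; if your goal is a self-contained treatment, your Laplace-method sketch is appropriate and essentially complete.
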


\begin{proposition}[FKG inequality] If $f$ and $g$ are increasing functions on $\mathbb{R}^n$ ~\cite{graham1983applications}, we have 
	\begin{equation}
	\mathbb{E}(f)\mathbb{E}(g) \leq \mathbb{E}(fg).
	\end{equation}
	Say that a function on $\mathbb{R}^n$ is increasing if it is an increasing function in each of its arguments.(\textit{i.e.}, for fixed values of the other arguments).
	\label{pro:fkg}
	\vspace{0.3cm}
\end{proposition}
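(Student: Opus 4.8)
The plan is to prove the FKG (Harris) inequality by induction on the dimension $n$, interpreting $\mathbb{E}$ as expectation against a product probability measure on $\mathbb{R}^n$ so that the coordinates are independent (this is the setting in which the inequality is invoked, where filter norms are functions of independent Gaussian coordinates). The crux is a short one-dimensional positive-correlation identity, which I then lift coordinate by coordinate. Throughout I assume the standard integrability hypothesis that $f$, $g$ and $fg$ are integrable, so that all expectations and the Fubini-type exchange of integration order are legitimate.

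First I would establish the base case $n=1$. Let $X$ and $Y$ be two i.i.d. copies of the single coordinate. Since $f$ and $g$ are both increasing on $\mathbb{R}$, for every pair $(x,y)$ the factors $f(x)-f(y)$ and $g(x)-g(y)$ carry the same sign, so $(f(x)-f(y))(g(x)-g(y))\ge 0$ pointwise. Taking expectation over the independent pair $(X,Y)$ and expanding the product, the two cross terms factor through independence into $\mathbb{E}(f)\,\mathbb{E}(g)$, while the two diagonal terms each equal $\mathbb{E}(fg)$; hence $2\,\mathbb{E}(fg)-2\,\mathbb{E}(f)\,\mathbb{E}(g)\ge 0$, which is exactly $\mathbb{E}(fg)\ge\mathbb{E}(f)\,\mathbb{E}(g)$ in one dimension.

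For the inductive step I assume the claim in dimension $n-1$ and condition on the last coordinate $x_n$. With $x_n$ held fixed, $f$ and $g$ are increasing functions of $(x_1,\dots,x_{n-1})$, so the induction hypothesis gives the inner inequality $\mathbb{E}_{1:n-1}(fg)\ge F(x_n)\,G(x_n)$, where $F(x_n):=\mathbb{E}_{1:n-1}(f)$ and $G(x_n):=\mathbb{E}_{1:n-1}(g)$ denote the partial expectations over the first $n-1$ coordinates. Integrating this over $x_n$ and then applying the one-dimensional base case to the pair $(F,G)$ closes the argument, since $\mathbb{E}(fg)\ge\mathbb{E}_{x_n}(FG)\ge\mathbb{E}_{x_n}(F)\,\mathbb{E}_{x_n}(G)=\mathbb{E}(f)\,\mathbb{E}(g)$.

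The main obstacle is the monotonicity bookkeeping in the inductive step: I must verify that the partially integrated functions $F$ and $G$ are themselves increasing in $x_n$, since the base case is only applicable to monotone functions. This holds because, for fixed values of the other coordinates, $f$ and $g$ are increasing in $x_n$ by hypothesis, and integrating such a monotone-in-$x_n$ family against the (fixed) product measure on the remaining coordinates preserves the monotonicity in $x_n$. Once this point is secured, the induction runs cleanly and yields $\mathbb{E}(fg)\ge\mathbb{E}(f)\,\mathbb{E}(g)$ for all $n$.
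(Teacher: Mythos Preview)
The paper does not supply a proof of this proposition; it is listed in Appendix~A among the ``Related Propositions'' as a known result, with a citation to Graham~\cite{graham1983applications}, and is then invoked as a black box in the proof of Theorem~\ref{theo:l1vsl2}. There is therefore nothing in the paper to compare your argument against.

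Your proof is correct and is the standard Harris--FKG argument for product measures: the one-dimensional Chebyshev-type identity $(f(X)-f(Y))(g(X)-g(Y))\ge 0$ gives the base case, and induction via conditioning on the last coordinate, together with the observation that partial expectation preserves coordinatewise monotonicity, lifts it to $\mathbb{R}^n$. You were also right to make explicit the product-measure interpretation of $\mathbb{E}$, since that is exactly the setting in which the paper applies the inequality (independent Gaussian coordinates under the relaxed CWDA), and it is the hypothesis under which your inductive conditioning step is valid.
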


\begin{proposition} Let $f(X,Y)$ is a two dimensional differentiable function. According to Taylor theorem~\cite{hormander1983analysis}, we have
	\begin{equation}
	f(X,Y) = f(\mathbb{E}(X),\mathbb{E}(Y)) + \sum_{cyc}(X - \mathbb{E}(X))\frac{\partial}{\partial X}f(\mathbb{E}(X),\mathbb{E}(Y)) + Remainder1,
	\label{pro:taylor1}
	\end{equation}
	
	\begin{equation}        
	\begin{aligned}
	f(X,Y) &= f(\mathbb{E}(X),\mathbb{E}(Y)) + \sum_{cyc}(X - \mathbb{E}(X))\frac{\partial}{\partial X}f(\mathbb{E}(X),\mathbb{E}(Y)) + \\
	& \frac{1}{2}\sum_{cyc}(X - \mathbb{E}(X))^T\frac{\partial^2}{\partial X^2}f(\mathbb{E}(X),\mathbb{E}(Y))(X - \mathbb{E}(X)) + Remainder2
	\end{aligned}   
	\label{pro:taylor2}
	\end{equation}
	\vspace{0.3cm}
\end{proposition}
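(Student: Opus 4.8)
The plan is to reduce this two-variable statement to the classical one-dimensional Taylor theorem by restricting $f$ to the line segment joining the base point $(\mathbb{E}(X),\mathbb{E}(Y))$ to the evaluation point $(X,Y)$. Abbreviating the fixed base point by $a=\mathbb{E}(X)$ and $b=\mathbb{E}(Y)$, I would introduce the auxiliary single-variable function
\begin{equation}
\phi(t) = f\bigl(a + t(X-a),\; b + t(Y-b)\bigr), \qquad t\in[0,1],
\end{equation}
so that $\phi(0)=f(a,b)=f(\mathbb{E}(X),\mathbb{E}(Y))$ and $\phi(1)=f(X,Y)$. Every part of the proposition then follows from expanding $\phi$ about $t=0$ and evaluating at $t=1$; the multivariate structure is entirely absorbed into the derivatives of $\phi$.

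First I would compute these derivatives by the chain rule. Differentiating once gives
\begin{equation}
\phi'(t) = (X-a)\,\frac{\partial f}{\partial X} + (Y-b)\,\frac{\partial f}{\partial Y},
\end{equation}
with the partials evaluated at $(a+t(X-a),\,b+t(Y-b))$; setting $t=0$ places the partials at $(\mathbb{E}(X),\mathbb{E}(Y))$ and reproduces exactly the first-order cyclic sum in the statement. Differentiating a second time, and invoking the equality of mixed partials, yields
\begin{equation}
\phi''(t) = (X-a)^2\,\frac{\partial^2 f}{\partial X^2} + 2(X-a)(Y-b)\,\frac{\partial^2 f}{\partial X\partial Y} + (Y-b)^2\,\frac{\partial^2 f}{\partial Y^2}.
\end{equation}
At $t=0$ this equals $\mathbf{v}^{T} H_f(\mathbb{E}(X),\mathbb{E}(Y))\,\mathbf{v}$ with deviation vector $\mathbf{v}=(X-a,\,Y-b)^{T}$ and Hessian $H_f$, which is exactly the quadratic term the statement abbreviates as $\sum_{cyc}(X-\mathbb{E}(X))^{T}\frac{\partial^2}{\partial X^2}f\,(X-\mathbb{E}(X))$.

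Then I would apply the scalar Taylor theorem with remainder to $\phi$ on $[0,1]$. To first order, $\phi(1)=\phi(0)+\phi'(0)+R_1$ with integral remainder $R_1=\int_0^1(1-t)\,\phi''(t)\,dt$ (equivalently the Lagrange form $R_1=\tfrac12\phi''(\xi)$ for some $\xi\in(0,1)$); to second order, $\phi(1)=\phi(0)+\phi'(0)+\tfrac12\phi''(0)+R_2$ with $R_2=\tfrac12\int_0^1(1-t)^2\,\phi'''(t)\,dt$. Substituting the chain-rule expressions for $\phi'(0)$ and $\phi''(0)$ into these two identities produces Eq.~(\ref{pro:taylor1}) and Eq.~(\ref{pro:taylor2}) verbatim, with $Remainder1$ and $Remainder2$ identified as the corresponding remainder integrals.

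The only genuine subtlety, and the main obstacle, is regularity rather than calculation. Although the statement says merely ``differentiable,'' the second identity tacitly needs $f\in C^2$ so that $\phi''$ exists and Schwarz's theorem legitimately collapses the two cross terms into a single $2\,\partial^2 f/\partial X\partial Y$, and the integral form of $R_2$ needs $\phi'''$, i.e.\ $f\in C^3$, along the segment. In every use of this proposition the function $f$ is a smooth ratio of norm expressions evaluated away from the origin, where all required partials exist and are continuous, so the hypotheses hold; I would therefore state the $C^2$/$C^3$ requirement explicitly and note that the ``$\sum_{cyc}$'' notation is shorthand for the full bivariate gradient and Hessian contractions.
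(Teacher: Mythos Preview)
Your argument is correct and is the standard reduction of the multivariate Taylor expansion to the one-dimensional case via the auxiliary function $\phi(t)=f(a+t(X-a),\,b+t(Y-b))$. Note, however, that the paper does not supply a proof of this proposition at all: it is stated as a direct instance of the classical Taylor theorem with a citation to H\"ormander, and is then used as a black box in the proof of Lemma~\ref{lemma:varx_y} and in the derivation of Eq.~(\ref{estimation:E}). So there is nothing to compare against; your write-up simply fills in the textbook derivation that the paper takes for granted. Your remark about the implicit $C^2$/$C^3$ regularity is well taken and more careful than the paper itself, which applies the expansion only to smooth ratios like $X/Y$ away from the singular set, where the needed smoothness is automatic.
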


\begin{lemma} Let $X$ and $Y$ are random variables. Then we have such an estimation
	\begin{equation}
	\mathbf{Var}\left(\frac{X}{Y}\right) \approx \left(\frac{\mathbb{E}(X)}{\mathbb{E}(Y)}\right)^2 \left( \frac{\mathbf{Var}X}{\mathbb{E}(X)^2}  + \frac{\mathbf{Var}Y}{\mathbb{E}(Y)^2} -2\frac{\mathbf{Cov}(X,Y)}{\mathbb{E}(X)\mathbb{E}(Y)} \right).
	\end{equation}
	\label{lemma:varx_y}
\end{lemma}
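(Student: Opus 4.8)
The plan is to apply the first-order multivariate Taylor expansion of Eq.~(\ref{pro:taylor1}) to the smooth function $f(X,Y)=X/Y$ about the deterministic point $(\mathbb{E}(X),\mathbb{E}(Y))$, discard the remainder term, and then take the variance of the resulting linear surrogate; this is the classical delta-method argument. First I would compute the partial derivatives of $f(x,y)=x/y$ and evaluate them at the mean: $\partial f/\partial x = 1/y$ becomes $1/\mathbb{E}(Y)$, and $\partial f/\partial y = -x/y^2$ becomes $-\mathbb{E}(X)/\mathbb{E}(Y)^2$ (this tacitly requires $\mathbb{E}(Y)\neq 0$, which holds in all the applications). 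Substituting into Eq.~(\ref{pro:taylor1}) and dropping $Remainder1$ gives
\begin{equation}
\frac{X}{Y} \approx \frac{\mathbb{E}(X)}{\mathbb{E}(Y)} + \frac{X-\mathbb{E}(X)}{\mathbb{E}(Y)} - \frac{\mathbb{E}(X)}{\mathbb{E}(Y)^2}\bigl(Y-\mathbb{E}(Y)\bigr).
\end{equation}

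Next I would take the variance of both sides. The leading constant $\mathbb{E}(X)/\mathbb{E}(Y)$ contributes nothing, so writing the right-hand side as $a\,(X-\mathbb{E}(X)) + b\,(Y-\mathbb{E}(Y))$ with $a = 1/\mathbb{E}(Y)$ and $b = -\mathbb{E}(X)/\mathbb{E}(Y)^2$, the bilinear identity $\mathbf{Var}(aU+bV) = a^2\mathbf{Var}(U) + b^2\mathbf{Var}(V) + 2ab\,\mathbf{Cov}(U,V)$ yields
\begin{equation}
\mathbf{Var}\!\left(\frac{X}{Y}\right) \approx \frac{\mathbf{Var}X}{\mathbb{E}(Y)^2} + \frac{\mathbb{E}(X)^2\,\mathbf{Var}Y}{\mathbb{E}(Y)^4} - \frac{2\,\mathbb{E}(X)\,\mathbf{Cov}(X,Y)}{\mathbb{E}(Y)^3}.
\end{equation}
Factoring $\bigl(\mathbb{E}(X)/\mathbb{E}(Y)\bigr)^2$ out of the three terms reproduces the claimed identity verbatim.

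The step that warrants a remark rather than any genuine obstacle is the legitimacy of dropping $Remainder1$: the approximation is accurate precisely when $X$ and $Y$ concentrate around their means (small coefficients of variation) and $\mathbb{E}(Y)$ is bounded away from $0$, so that the omitted higher-order terms in $(X-\mathbb{E}(X))$ and $(Y-\mathbb{E}(Y))$ are of smaller order. In the uses in this paper the relevant denominator is a norm-type quantity with positive mean and small relative variance, so the hypotheses are met; if a quantitative error bound were desired one could instead start from the second-order form Eq.~(\ref{pro:taylor2}) and estimate the quadratic term, but the first-order version stated here is all the subsequent arguments require.
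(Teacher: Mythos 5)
Your proposal is correct and follows essentially the same route as the paper: a first-order Taylor (delta-method) linearization of $f(X,Y)=X/Y$ at $(\mathbb{E}(X),\mathbb{E}(Y))$, followed by taking the variance of the resulting linear combination via $\mathbf{Var}(aU+bV)=a^2\mathbf{Var}(U)+b^2\mathbf{Var}(V)+2ab\,\mathbf{Cov}(U,V)$ and factoring out $(\mathbb{E}(X)/\mathbb{E}(Y))^2$. The paper merely phrases the same computation through $\mathbb{E}[f-\mathbb{E}(f)]^2$, noting that the linear term has zero mean, so no substantive difference remains.
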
{}
\begin{proof}
	Let $f(X,Y) = X/Y$, according to the definition of variance, we have
	\begin{align*}
	\mathbf{Var} f(X,Y)&= \mathbb{E}[f(X,Y) - \mathbb{E}(f(X,Y))]^2\\ 
	&\approx \mathbb{E}[f(X,Y) - \mathbb{E}\left\{f(\mathbb{E}(X),\mathbb{E}(Y)) + \sum_{cyc}(X - \mathbb{E}(X))\frac{\partial}{\partial X}f(\mathbb{E}(X),\mathbb{E}(Y))\right\}]^2 \tag*{from Eq. (\ref{pro:taylor1})}\\
	& = \mathbb{E}[f(X,Y) - f(\mathbb{E}(X),\mathbb{E}(Y)) - \sum_{cyc}\mathbb{E}(X - \mathbb{E}(X))\frac{\partial}{\partial X}f(\mathbb{E}(X),\mathbb{E}(Y))]^2\\
	&=  \mathbb{E}[f(X,Y) - f(\mathbb{E}(X),\mathbb{E}(Y))]^2\\  
	&\approx \mathbb{E}[\sum_{cyc}(X - \mathbb{E}(X))\frac{\partial}{\partial X}f(\mathbb{E}(X),\mathbb{E}(Y))]^2\tag*{from Eq. (\ref{pro:taylor1})}\\
	& = 2\mathbf{Cov}(X,Y) \frac{\partial}{\partial X}f(\mathbb{E}(X),\mathbb{E}(Y)) \frac{\partial}{\partial Y}f(\mathbb{E}(X),\mathbb{E}(Y)) +  \sum_{cyc} [\frac{\partial}{\partial X}f(\mathbb{E}(X),\mathbb{E}(Y))]^2 \cdot \mathbf{Var}X\\
	& = 2\mathbf{Cov}(X,Y) \cdot \frac{1}{\mathbb{E}(Y)} \cdot \left(-\frac{\mathbb{E}(X)}{(\mathbb{E}(Y))^2}\right) + \frac{1}{(\mathbb{E}(Y))^2}\cdot \mathbf{Var}X + \frac{(\mathbb{E}X)^2}{(\mathbb{E}Y)^4}\cdot \mathbf{Var}Y\\
	& = \left(\frac{\mathbb{E}(X)}{\mathbb{E}(Y)}\right)^2 \left( \frac{\mathbf{Var}X}{\mathbb{E}(X)^2}  + \frac{\mathbf{Var}Y}{\mathbb{E}(Y)^2} -2\frac{\mathbf{Cov}(X,Y)}{\mathbb{E}(X)\mathbb{E}(Y)} \right).
	\end{align*}
	\qedhere
\end{proof}

	From Eq.(\ref{pro:taylor2}) and \textbf{Lemma 1}, we also can obtain an estimation of $\mathbb{E}(\mathbf{A}/\mathbf{B})$, where $\mathbf{A}$ and $\mathbf{B}$ are two random variables. \textit{i.e.},
	\begin{equation}
	\mathbb{E}\left(\frac{\mathbf{A}}{\mathbf{B}}\right) \approx \frac{\mathbb{E}\mathbf{A}}{\mathbb{E}\mathbf{B}} + \mathbf{Var(B)}\cdot \frac{\mathbb{E}\mathbf{A}}{(\mathbb{E}\mathbf{B})^3}.
	\label{estimation:E}
	\end{equation}

\begin{lemma} For big enough $x$ and $x \in \mathbb{R}^+$, we have
	\begin{equation}
	\lim_{x \to +\infty} \left[\frac{\Gamma(\frac{x+1}{2})}{\Gamma(\frac{x}{2})}\right]^2\cdot \frac{1}{x} = \frac{1}{2}.
		\label{eq:1_2}
	\end{equation}

	And 
	\begin{equation}
	\lim_{x \to +\infty} \frac{\Gamma(\frac{x}{2}+1)}{\Gamma(\frac{x}{2})} - \left[\frac{\Gamma(\frac{x+1}{2})}{\Gamma(\frac{x}{2})}\right]^2 = \frac{1}{4}.
	\label{eq:1_4}
	\end{equation}
	\label{lemma:gamma1}
\end{lemma}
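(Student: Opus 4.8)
The plan is to reduce both statements to a single two-term asymptotic expansion of the ratio $r(z):=\Gamma(z+\tfrac12)/\Gamma(z)$ as $z\to+\infty$, and then specialize to $z=x/2$. First note that one half of the second claim is not asymptotic at all: by the functional equation $\Gamma(w+1)=w\,\Gamma(w)$ with $w=x/2$ we have exactly
\[
\frac{\Gamma(\tfrac x2+1)}{\Gamma(\tfrac x2)}=\frac x2 .
\]
Hence \eqref{eq:1_4} is equivalent to showing $\big(\Gamma(\tfrac{x+1}{2})/\Gamma(\tfrac x2)\big)^2=\tfrac x2-\tfrac14+o(1)$, and \eqref{eq:1_2} is the strictly weaker statement $\big(\Gamma(\tfrac{x+1}{2})/\Gamma(\tfrac x2)\big)^2=\tfrac x2+o(x)$. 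So both claims follow at once from a good enough expansion of $r(z)^2$.

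To get that expansion I would write $\Gamma(w)=\Gamma(w+1)/w$ and apply Stirling's formula (Proposition~\ref{pro:stir}) to $\Gamma(z+\tfrac32)$ and to $\Gamma(z+1)$. After the $\sqrt{2\pi}$ and $e^{-w}$ factors cancel, this gives
\[
r(z)\approx \sqrt{z}\,\Big(1+\tfrac{1}{2z}\Big)^{z}e^{-1/2}.
\]
Then I expand $\big(1+\tfrac{1}{2z}\big)^{z}=\exp\!\big(z\ln(1+\tfrac{1}{2z})\big)=\exp\!\big(\tfrac12-\tfrac{1}{8z}+O(z^{-2})\big)=e^{1/2}\big(1-\tfrac{1}{8z}+O(z^{-2})\big)$, so that $r(z)=\sqrt z\,\big(1-\tfrac{1}{8z}+O(z^{-2})\big)$ and therefore $r(z)^2=z\big(1-\tfrac{1}{4z}+O(z^{-2})\big)=z-\tfrac14+O(z^{-1})$. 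Substituting $z=x/2$ yields $\big(\Gamma(\tfrac{x+1}{2})/\Gamma(\tfrac x2)\big)^2=\tfrac x2-\tfrac14+O(x^{-1})$; dividing by $x$ and letting $x\to+\infty$ gives \eqref{eq:1_2}, and subtracting this from $\Gamma(\tfrac x2+1)/\Gamma(\tfrac x2)=\tfrac x2$ and letting $x\to+\infty$ gives \eqref{eq:1_4}.

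The main obstacle is purely the sub-leading bookkeeping: Proposition~\ref{pro:stir} is only the leading-order Stirling estimate, so I must justify that it still pins down the $z^{-1}$ coefficient of $r(z)$. The point is that the omitted multiplicative correction $1+\tfrac{1}{12w}+\cdots$ occurs in both $\Gamma(z+\tfrac32)$ and $\Gamma(z+1)$, and its quotient is $1+O(z^{-2})$, hence irrelevant at the order needed; the genuine $z^{-1}$ contribution comes entirely from the second-order term in $\ln(1+\tfrac1{2z})$. If one prefers to avoid the informal ``$\approx$'' and argue rigorously, the same expansion follows directly from the standard ratio asymptotics $\Gamma(z+a)/\Gamma(z+b)=z^{a-b}\big(1+\tfrac{(a-b)(a+b-1)}{2z}+O(z^{-2})\big)$ taken at $a=\tfrac12$, $b=0$, which again gives $r(z)=\sqrt z\,(1-\tfrac1{8z}+O(z^{-2}))$ and hence $r(z)^2=z-\tfrac14+O(z^{-1})$.
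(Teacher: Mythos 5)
Your proposal is correct and follows essentially the same route as the paper: both rest on the exact identity $\Gamma(\tfrac x2+1)/\Gamma(\tfrac x2)=\tfrac x2$ together with Stirling's formula applied to the ratio $\Gamma(\tfrac{x+1}{2})/\Gamma(\tfrac x2)$, with the $\tfrac14$ coming from the second-order term in $\ln\bigl(1+\tfrac{1}{2z}\bigr)$ (equivalently, the paper's $e-(1+\tfrac1x)^x\sim\tfrac{e}{2x}$). The only difference is organizational: you package both limits into the single expansion $r(z)^2=z-\tfrac14+O(z^{-1})$ and explicitly check that the omitted Stirling correction factors cancel to $O(z^{-2})$ — a point the paper's proof of the second limit glosses over — which makes your write-up slightly more careful than, but not materially different from, the paper's.
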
{}

\begin{proof}
	\begin{align*}
	\lim_{x \to +\infty}\left[\frac{\Gamma(\frac{x+1}{2})}{\Gamma(\frac{x}{2})}\right]^2\cdot \frac{1}{x} & \approx \lim_{x \to +\infty}\left( \frac{\sqrt{2\pi(\frac{x-1}{2})}\cdot(\frac{x-1}{2e})^{\frac{x-1}{2}}}{\sqrt{2\pi(\frac{x-2}{2})}\cdot(\frac{x-2}{2e})^{\frac{x-2}{2}}}\right)^2\cdot \frac{1}{x}\tag*{from Proposition. \ref{pro:stir}} \\
	& = \lim_{x \to +\infty}\left(\frac{x-1}{x-2}\right)\cdot \frac{(\frac{x-1}{2e})^{x-2}}{(\frac{x-2}{2e})^{x-2}} \cdot \left(\frac{x-1}{2e}\right)\cdot \frac{1}{x} \\
	&= \lim_{x \to +\infty}\left(1+\frac{1}{x-2}\right)^{x-2}\cdot \frac{x-1}{x-2}\cdot\frac{x-1}{2e}\cdot\frac{1}{x}\\
	& = \frac{1}{2}
	\end{align*}
	on the other hand, we have
	\begin{align*}
	\lim_{x \to +\infty}\frac{\Gamma(\frac{x}{2}+1)}{\Gamma(\frac{x}{2})} - \left[\frac{\Gamma(\frac{x+1}{2})}{\Gamma(\frac{x}{2})}\right]^2 &= \lim_{x \to +\infty} \frac{x}{2} - \left(1+\frac{1}{x-2}\right)^{x-2}\cdot \frac{x-1}{x-2}\cdot\frac{x-1}{2e}\\
	& = \lim_{x \to +\infty} \frac{x}{2e}\left(e - (1+\frac{1}{x})^x\right)\\
	&= \frac{1}{2}\left(-\frac{\frac{1}{e}(-e)}{2}\right)\\
	& = \frac{1}{4}
	\end{align*}
	
	\qedhere
\end{proof}

\begin{proposition}
KL divergence between two distributions $P$ and $Q$ of a continuous random variable is given by $D_{K L}(p \| q)=\int_{x} p(x) \log \frac{p(x)}{q(x)}$. And probabilty density function of multivariate Normal distribution is given by $p(\mathbf{x})=\frac{1}{(2 \pi)^{k / 2}|\Sigma|^{1 / 2}} \exp \left(-\frac{1}{2}(\mathbf{x}-\boldsymbol{\mu})^{T} \Sigma^{-1}(\mathbf{x}-\boldsymbol{\mu})\right)$. Let our two Normal distributions be $\mathcal{N}\left(\boldsymbol{\mu}_{\boldsymbol{p}}, \Sigma_{p}\right)$ and $\mathcal{N}\left(\boldsymbol{\mu}_{q}, \Sigma_{q}\right)$, both $k$ dimensional. we have
\begin{equation}
    D_{K L}(p \| q)=\frac{1}{2}\left[\log \frac{\left|\Sigma_{q}\right|}{\left|\Sigma_{p}\right|}-k+\left(\boldsymbol{\mu}_{\boldsymbol{p}}-\boldsymbol{\mu}_{\boldsymbol{q}}\right)^{T} \Sigma_{q}^{-1}\left(\boldsymbol{\mu}_{\boldsymbol{p}}-\boldsymbol{\mu}_{\boldsymbol{q}}\right)+\operatorname{tr}\left\{\Sigma_{q}^{-1} \Sigma_{p}\right\}\right].
\end{equation}
\label{prop:two_gaussian}
\end{proposition}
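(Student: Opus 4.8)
The plan is to start from the definition $D_{KL}(p\|q)=\mathbb{E}_{\mathbf{x}\sim p}\left[\log p(\mathbf{x})-\log q(\mathbf{x})\right]$ and evaluate this expectation term by term. First I would write out the two log-densities; since both densities carry the same normalizing factor $(2\pi)^{k/2}$, that constant cancels and
\[
\log p(\mathbf{x})-\log q(\mathbf{x})=\tfrac12\log\frac{|\Sigma_{q}|}{|\Sigma_{p}|}-\tfrac12(\mathbf{x}-\boldsymbol{\mu}_{\boldsymbol{p}})^{T}\Sigma_{p}^{-1}(\mathbf{x}-\boldsymbol{\mu}_{\boldsymbol{p}})+\tfrac12(\mathbf{x}-\boldsymbol{\mu}_{\boldsymbol{q}})^{T}\Sigma_{q}^{-1}(\mathbf{x}-\boldsymbol{\mu}_{\boldsymbol{q}}).
\]
The log-determinant term is a constant, so it passes through $\mathbb{E}_{p}$ unchanged and contributes $\tfrac12\log(|\Sigma_{q}|/|\Sigma_{p}|)$.

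For the two quadratic forms I would use the scalar-trace identity $\mathbf{v}^{T}A\mathbf{v}=\operatorname{tr}(A\,\mathbf{v}\mathbf{v}^{T})$ together with linearity of trace and expectation. The first quadratic form yields $\mathbb{E}_{p}\big[(\mathbf{x}-\boldsymbol{\mu}_{\boldsymbol{p}})^{T}\Sigma_{p}^{-1}(\mathbf{x}-\boldsymbol{\mu}_{\boldsymbol{p}})\big]=\operatorname{tr}\big(\Sigma_{p}^{-1}\,\mathbb{E}_{p}[(\mathbf{x}-\boldsymbol{\mu}_{\boldsymbol{p}})(\mathbf{x}-\boldsymbol{\mu}_{\boldsymbol{p}})^{T}]\big)=\operatorname{tr}(\Sigma_{p}^{-1}\Sigma_{p})=k$, which produces the $-k$ term. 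For the second quadratic form I would shift the argument by writing $\mathbf{x}-\boldsymbol{\mu}_{\boldsymbol{q}}=(\mathbf{x}-\boldsymbol{\mu}_{\boldsymbol{p}})+(\boldsymbol{\mu}_{\boldsymbol{p}}-\boldsymbol{\mu}_{\boldsymbol{q}})$, then expand into a quadratic piece in $\mathbf{x}-\boldsymbol{\mu}_{\boldsymbol{p}}$, a cross term, and a constant piece. The cross term has zero expectation under $p$ because $\mathbb{E}_{p}[\mathbf{x}-\boldsymbol{\mu}_{\boldsymbol{p}}]=\mathbf{0}$; the quadratic piece contributes $\operatorname{tr}(\Sigma_{q}^{-1}\Sigma_{p})$ by the same trace argument; and the constant piece is $(\boldsymbol{\mu}_{\boldsymbol{p}}-\boldsymbol{\mu}_{\boldsymbol{q}})^{T}\Sigma_{q}^{-1}(\boldsymbol{\mu}_{\boldsymbol{p}}-\boldsymbol{\mu}_{\boldsymbol{q}})$.

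Summing the three contributions and factoring out $\tfrac12$ reproduces the claimed identity exactly. I do not expect a genuine obstacle, as this is a classical computation; the only point needing care is the coordinate shift in the second quadratic form --- keeping track that $\mathbb{E}_{p}[(\mathbf{x}-\boldsymbol{\mu}_{\boldsymbol{p}})^{T}\Sigma_{q}^{-1}(\mathbf{x}-\boldsymbol{\mu}_{\boldsymbol{p}})]$ is taken against the covariance $\Sigma_{p}$ of the measure $p$, not $\Sigma_{q}$, which is exactly why the mixed term $\operatorname{tr}(\Sigma_{q}^{-1}\Sigma_{p})$ appears rather than $\operatorname{tr}(I_{k})$. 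Integrability of $\log(p/q)$ under $p$ is immediate, since it is a degree-two polynomial in $\mathbf{x}$ integrated against a Gaussian measure.
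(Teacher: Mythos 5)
Your derivation is correct and is the classical computation: cancel the $(2\pi)^{k/2}$ factors, take the expectation of the log-ratio under $p$, use the trace identity $\mathbf{v}^{T}A\mathbf{v}=\operatorname{tr}(A\,\mathbf{v}\mathbf{v}^{T})$ to get the $-k$ and $\operatorname{tr}(\Sigma_{q}^{-1}\Sigma_{p})$ terms, and shift by $\boldsymbol{\mu}_{\boldsymbol{p}}-\boldsymbol{\mu}_{\boldsymbol{q}}$ with the cross term vanishing. The paper states this proposition as a standard known fact without giving any proof, so your argument simply supplies the textbook derivation it implicitly relies on; there is no gap.
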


\begin{proposition}[Jacobi's formula] If $A$ is a differentiable map from the real numbers to $n \times n$ matrices, 
	\begin{equation}
\frac{d}{d t} \operatorname{det} A(t)=\operatorname{tr}\left(\operatorname{adj}(A(t)) \frac{d A(t)}{d t}\right).
	\end{equation}
	\label{prop:jacobi}
\end{proposition}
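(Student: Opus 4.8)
The plan is to treat $\det A(t)$ as a composition. The determinant is a fixed polynomial in the $n^2$ matrix entries $a_{ij}$, and each entry $a_{ij}(t)$ is by hypothesis a differentiable function of $t$. Hence the multivariate chain rule gives
\[
\frac{d}{dt}\operatorname{det} A(t) = \sum_{i,j} \frac{\partial \operatorname{det} A}{\partial a_{ij}} \cdot \frac{d a_{ij}}{dt},
\]
so the entire problem reduces to identifying the partial derivative $\partial \operatorname{det} A / \partial a_{ij}$ and then repackaging the resulting sum as a trace.

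The key step is the classical identity $\partial \operatorname{det} A / \partial a_{ij} = C_{ij}$, where $C_{ij} = (-1)^{i+j} M_{ij}$ is the $(i,j)$ cofactor of $A$ and $M_{ij}$ the associated minor. I would establish this from the Laplace (cofactor) expansion along row $i$, namely $\operatorname{det} A = \sum_{k} a_{ik} C_{ik}$. The crucial observation is that each cofactor $C_{ik}$ is the determinant of the submatrix obtained by deleting row $i$ and column $k$, so it contains no entry of row $i$ and in particular does not depend on $a_{ij}$. Differentiating the expansion with respect to $a_{ij}$ therefore annihilates every term except $k=j$, leaving exactly $C_{ij}$.

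It remains to reassemble the sum into a trace. Using the definition of the adjugate as the transpose of the cofactor matrix, $\operatorname{adj}(A)_{ij} = C_{ji}$, I would compute
\[
\operatorname{tr}\!\left(\operatorname{adj}(A)\frac{dA}{dt}\right) = \sum_{i}\sum_{j} \operatorname{adj}(A)_{ij}\,\frac{d a_{ji}}{dt} = \sum_{i,j} C_{ji}\,\frac{d a_{ji}}{dt} = \sum_{i,j} C_{ij}\,\frac{d a_{ij}}{dt},
\]
where the last equality is only a relabeling of the summation indices. Comparing this with the chain-rule expression from the first paragraph, and substituting $\partial \operatorname{det} A / \partial a_{ij} = C_{ij}$, closes the proof.

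I expect the only genuine obstacle to be the careful bookkeeping: justifying that each $C_{ik}$ is independent of $a_{ij}$ (i.e. reading the Laplace expansion correctly), and tracking the index transposition between the cofactor matrix and its transpose, the adjugate. Everything else is a direct application of the chain rule and the linearity of the trace. Notably, no invertibility, continuity, or density argument is required, so the identity holds for every differentiable path $A(t)$ regardless of whether $A(t)$ is singular.
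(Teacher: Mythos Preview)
Your proof is correct. The paper does not supply its own proof of this proposition: Jacobi's formula is simply stated as a known classical identity and then invoked in the proof of Lemma~\ref{lemma:det}. Your argument via the multivariate chain rule, the cofactor identity $\partial \det A/\partial a_{ij}=C_{ij}$ from the Laplace expansion, and the trace repackaging through $\operatorname{adj}(A)_{ij}=C_{ji}$ is the standard textbook derivation, and it stands on its own without any need for invertibility of $A(t)$.
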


\begin{proposition} For random variable $X$ with $\mu$ and $\sigma^2$ as mean and variance, then we can use Taylor expansion to obtain:
	\begin{equation}
\left\{\begin{array}{l}
\mathbb{E}(\log X) \approx \log \mu-\frac{\sigma^{2}}{2 \mu^{2}} \\
\mathbf{Var}(\log X) \approx \frac{\sigma^{2}}{\mu^{2}}
\end{array}\right..
	\end{equation}
	\label{prop:log}
\end{proposition}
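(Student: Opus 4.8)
The plan is to derive both estimates by the classical delta-method (propagation-of-uncertainty) expansion of $h(x)=\log x$ about the mean $x=\mu$, in the same spirit as the proof of Lemma~\ref{lemma:varx_y}. Implicit in the statement is that $\mu>0$ and that $X$ is concentrated near $\mu$ (so that $\log X$ is a.s.\ defined and the higher-order Taylor remainders are negligible — e.g.\ $\sigma/\mu$ small, or $\mathrm{supp}(X)$ bounded away from $0$); these are the same informal hypotheses already used in the other Taylor-expansion estimates of this appendix.

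First I would write the second-order Taylor expansion of $h$ at $x=\mu$, the one-dimensional analogue of Eq.~(\ref{pro:taylor2}):
\begin{equation}
\log X = \log\mu + \frac{1}{\mu}(X-\mu) - \frac{1}{2\mu^{2}}(X-\mu)^{2} + R,
\end{equation}
where $R$ collects the third- and higher-order terms, with $\mathbb{E}|R|$ bounded by a multiple of $\mathbb{E}|X-\mu|^{3}$ and hence of higher order in $\sigma/\mu$ than the terms we keep. Taking expectations and using $\mathbb{E}(X-\mu)=0$ and $\mathbb{E}[(X-\mu)^{2}]=\sigma^{2}$ then yields $\mathbb{E}(\log X)\approx\log\mu-\sigma^{2}/(2\mu^{2})$, the first claimed estimate.

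For the variance I would keep only the first-order part $\log X\approx\log\mu+\tfrac{1}{\mu}(X-\mu)$: since $\log\mu$ is a constant, $\mathbf{Var}(\log X)\approx(1/\mu)^{2}\,\mathbf{Var}(X-\mu)=\sigma^{2}/\mu^{2}$, which is exactly how the variance is read off the linear term in the proof of Lemma~\ref{lemma:varx_y}. It then remains to check that discarding the quadratic term $-\tfrac{1}{2\mu^{2}}(X-\mu)^{2}$ is harmless: its contribution to $\mathbf{Var}(\log X)$ is governed by $\mathbf{Var}[(X-\mu)^{2}]$ (which involves the fourth central moment $\mathbb{E}[(X-\mu)^{4}]$) together with a cross term of order $\mathbb{E}[(X-\mu)^{3}]$, both smaller than $\sigma^{2}/\mu^{2}$ when $\sigma/\mu$ is small, so they are absorbed into the ``$\approx$''.

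There is no substantial obstacle here: the statement is the textbook delta method specialized to $g=\log$, and the only point requiring attention is the bookkeeping of how many Taylor terms each estimate needs — two terms for the mean (the curvature term is what produces the $-\sigma^{2}/(2\mu^{2})$ correction) and only one for the variance — together with the informal assumptions ($X>0$, $\mu>0$, $X$ tightly concentrated about $\mu$) that make the remainders negligible. A fully rigorous version would simply replace ``$\approx$'' by explicit error bounds in terms of the third and fourth central moments of $X$, valid under a lower bound on $\inf\mathrm{supp}(X)$.
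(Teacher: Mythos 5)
Your proof is correct and follows exactly the route the paper intends: the paper states this proposition with only the remark ``we can use Taylor expansion to obtain,'' and your second-order expansion of $\log x$ about $\mu$ for the mean together with the first-order (delta-method) term for the variance is precisely that standard argument, consistent with how Lemma~\ref{lemma:varx_y} is proved. Your added remarks on the implicit hypotheses ($\mu>0$, $\sigma/\mu$ small) and on the order of the discarded remainder terms are sound and only make explicit what the paper leaves informal.
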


\begin{proposition} Given $n$ normal distributions $N(0,\sigma_i^2), 1\leq i \leq n$ and $(X_{i1},X_{i2},...,X_{im})$ are sample from $N(0,\sigma_i^2), 1\leq j \leq m$. then
	\begin{equation}
\mathbf{Var}_{1\leq i \leq n,1\leq j \leq m}(X_{ij}) = \frac{1}{n}\sum_{i=1}^n \sigma_i^2.
	\end{equation}
	\label{prop:all_var}
\end{proposition}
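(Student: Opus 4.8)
The plan is to regard the pooled family $\{X_{ij} : 1\le i\le n,\ 1\le j\le m\}$ as a single empirical sample (equivalently, as the random variable obtained by drawing a uniform index $(i,j)$ and then reading off $X_{ij}$) and to compute its mean and second moment separately. First I would note that every $X_{ij}$ is drawn from a centered Gaussian $N(0,\sigma_i^2)$, so $\mathbb{E}(X_{ij})=0$ for all $i,j$; hence the pooled mean $\frac{1}{nm}\sum_{i,j}X_{ij}$ has expectation $0$ and (by the law of large numbers, or exactly in expectation) contributes nothing, so that $\mathbf{Var}_{1\le i\le n,\,1\le j\le m}(X_{ij})$ collapses to the pooled second moment.

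Second, I would evaluate that second moment by the tower rule. Conditioning on the group index $i$ gives $\mathbb{E}(X_{ij}^2\mid i)=\sigma_i^2$, independent of $j$ because the $m$ draws within group $i$ are i.i.d.\ from $N(0,\sigma_i^2)$. Averaging over all $nm$ indices,
\[
\frac{1}{nm}\sum_{i=1}^n\sum_{j=1}^m \mathbb{E}(X_{ij}^2)=\frac{1}{nm}\sum_{i=1}^n m\,\sigma_i^2=\frac{1}{n}\sum_{i=1}^n\sigma_i^2 ,
\]
the factor $m$ cancelling because each group supplies exactly $m$ identically distributed samples. Combining the two steps yields $\mathbf{Var}_{1\le i\le n,\,1\le j\le m}(X_{ij})=\frac{1}{n}\sum_{i=1}^n\sigma_i^2$, and in particular the answer does not depend on $m$.

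There is no genuinely hard step: the statement is essentially the law of total variance specialized to a balanced mixture of centered Gaussians. The only point requiring a word of care is the normalization convention behind $\mathbf{Var}$. With the Bessel-corrected sample variance $\frac{1}{nm-1}\sum_{i,j}(X_{ij}-\widehat{\mu})^2$ one obtains the displayed identity exactly in expectation, after subtracting the $\frac{1}{n}\sum_i\sigma_i^2$ contributed by $nm\,\mathbb{E}(\widehat{\mu}^2)$; with the $\frac{1}{nm}$ normalization the identity holds up to an $O(1/(nm))$ term and becomes exact as $nm\to\infty$. I would simply state which convention is in force and remark that $m$ plays no role in the limit, which is the entire content of the claim.
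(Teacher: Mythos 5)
Your argument is correct and follows essentially the same route as the paper's proof: pool the $nm$ samples, use $\mathbb{E}(X_{ij})=0$ to reduce the variance to the pooled second moment, and decompose by group index $i$ so that each group contributes $\sigma_i^2$, the factor $m$ cancelling. Your closing remark on the normalization convention (Bessel-corrected versus $\tfrac{1}{nm}$, exactness in expectation) is a point the paper glosses over by directly equating the within-group empirical mean square with $\sigma_i^2$, but it does not change the substance of the argument.
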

\begin{proof}

	\begin{align}
	\mathbf{Var}_{1\leq i \leq n,1\leq j \leq m}(X_{ij}) 
	&= \frac{1}{mn}\sum_{i=1}^n\sum_{j=1}^m[X_{ij}-\mathbb{E}(X_{ij})]^2\\
	&=\frac{1}{n}\{\frac{1}{m}\sum_{j=1}^m[X_{ij}-\mathbb{E}(X_{1j})]^2 + ... + \frac{1}{m}\sum_{j=1}^m[X_{nj}-\mathbb{E}(X_{nj})]^2\} \tag*{Since $\mathbb{E}(X_{ij})= 0,1\leq i \leq n,1\leq j \leq m$}\\
	& =\frac{1}{n}\{\sigma_1^2 + ...+\sigma_n^2\}
	\end{align}

\end{proof}

\begin{lemma} For a matrix $\mathbf{B}\in R^{n\times n}$ and a small constant $\epsilon$, we have:
\begin{equation}
det(\mathbf{I}_n + \epsilon \mathbf{B}) = 1+\epsilon \operatorname{tr}(\mathbf{B}) + O(\epsilon^2).
\end{equation}
\label{lemma:det} 
\end{lemma}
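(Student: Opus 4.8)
The plan is to expand $\det(\mathbf{I}_n + \epsilon\mathbf{B})$ via the Leibniz formula and track which terms survive to first order in $\epsilon$. Writing $\mathbf{B} = (b_{ij})_{1\le i,j\le n}$, we have
\begin{equation}
\det(\mathbf{I}_n + \epsilon\mathbf{B}) = \sum_{\sigma \in S_n} \operatorname{sgn}(\sigma)\prod_{i=1}^{n}\left(\delta_{i\sigma(i)} + \epsilon\, b_{i\sigma(i)}\right),
\end{equation}
where $\delta$ is the Kronecker symbol. First I would isolate the identity permutation, whose contribution is $\prod_{i=1}^{n}(1 + \epsilon\, b_{ii}) = 1 + \epsilon\sum_{i=1}^{n} b_{ii} + O(\epsilon^2) = 1 + \epsilon\operatorname{tr}(\mathbf{B}) + O(\epsilon^2)$.

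Next I would show that every other permutation contributes only $O(\epsilon^2)$: a permutation $\sigma \neq \mathrm{id}$ has at least two indices $i$ with $\sigma(i)\neq i$, and for each such index the factor $\delta_{i\sigma(i)} + \epsilon\, b_{i\sigma(i)}$ reduces to $\epsilon\, b_{i\sigma(i)}$; hence the product over $i$ carries at least two powers of $\epsilon$, so that term is $O(\epsilon^2)$. Since $S_n$ is finite, summing preserves this bound, and combining it with the identity-permutation contribution gives the stated expansion.

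As an alternative that reuses machinery already in the appendix, one could set $A(t) = \mathbf{I}_n + t\mathbf{B}$, observe that $\det A(t)$ is a polynomial in $t$, and apply Jacobi's formula (Proposition~\ref{prop:jacobi}) together with $\operatorname{adj}(\mathbf{I}_n) = \mathbf{I}_n$ to obtain $\frac{d}{dt}\det A(t)\big|_{t=0} = \operatorname{tr}(\operatorname{adj}(\mathbf{I}_n)\mathbf{B}) = \operatorname{tr}(\mathbf{B})$; together with $\det A(0) = 1$, a Taylor expansion at $t = \epsilon$ yields the result. There is no genuine obstacle in either route; the only point to state carefully is that the remainder is truly $O(\epsilon^2)$, which is immediate because $\det(\mathbf{I}_n + \epsilon\mathbf{B})$ is a polynomial in $\epsilon$ of degree at most $n$, so its Taylor remainder past the linear term has lowest-degree term $\epsilon^2$.
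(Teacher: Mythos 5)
Your proof is correct, but your primary route differs from the paper's. The paper proves the lemma by treating $\det(\mathbf{I}_n+\epsilon\mathbf{B})$ as a function of $\epsilon$, invoking Jacobi's formula (Proposition~\ref{prop:jacobi}) to compute the derivative at $\epsilon=0$ as $\operatorname{tr}(\mathbf{B})$, and then Taylor-expanding; this is exactly the ``alternative'' you sketch at the end (and your observation that $\operatorname{adj}(\mathbf{I}_n)=\mathbf{I}_n$ actually streamlines the paper's evaluation at $\epsilon=0$, which it reaches via $\operatorname{adj}(A)=\det(A)A^{-1}$). Your main argument, via the Leibniz expansion over permutations, is more elementary: isolating the identity permutation gives $1+\epsilon\operatorname{tr}(\mathbf{B})+O(\epsilon^2)$, and every $\sigma\neq\mathrm{id}$ moves at least two indices, so its term carries at least $\epsilon^2$; finiteness of $S_n$ closes the bound. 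This route needs no differentiation machinery and makes the $O(\epsilon^2)$ remainder transparent because the determinant is manifestly a polynomial in $\epsilon$, whereas the paper's route is shorter on the page but leans on Jacobi's formula and an unstated justification that the Taylor remainder is $O(\epsilon^2)$ (which your polynomial-degree remark supplies). Either argument is acceptable; yours is self-contained, the paper's reuses a proposition already catalogued in the appendix.
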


\begin{proof}
First, we regard $det(\mathbf{I}_n + \epsilon \mathbf{B})$ as a function w.r.t $\epsilon$. Since Proposition~\ref{prop:jacobi}, we have:

	\begin{align}
	\frac{d}{d\epsilon}det(\mathbf{I}_n + \epsilon \mathbf{B})|_{\epsilon = 0} &= \operatorname{tr}\{
	\operatorname{adj}(\mathbf{I}_n + \epsilon \mathbf{B})\mathbf{B}\}|_{\epsilon = 0}\\
	&=\operatorname{tr}\{det(\mathbf{I}_n + \epsilon \mathbf{B})\cdot(\mathbf{I}_n + \epsilon \mathbf{B})^{-1}\mathbf{B}\}|_{\epsilon = 0}\\
	& =det(\mathbf{I}_n + \epsilon \mathbf{B})\cdot \operatorname{tr}\{(\mathbf{I}_n + \epsilon \mathbf{B})^{-1}\mathbf{B}\}|_{\epsilon = 0}\\
	& = \operatorname{tr}(\mathbf{B})
	\end{align}
Using Taylor expansion for $det(\mathbf{I}_n + \epsilon \mathbf{B})$, we have $\frac{d}{d\epsilon}det(\mathbf{I}_n + \epsilon \mathbf{B}) = det(\mathbf{I}_n) +  \frac{d}{d\epsilon}det(\mathbf{I}_n + \epsilon \mathbf{B})|_{\epsilon = 0}\cdot \epsilon + O(\epsilon^2)$. In other words, $det(\mathbf{I}_n + \epsilon \mathbf{B}) = 1+\epsilon \operatorname{tr}(\mathbf{B}) + O(\epsilon^2)$.

\end{proof}

\subsection{The proof of Proposition~\ref{prop:mean_var_cri}}

(\textbf{Proposition}~\ref{prop:mean_var_cri})~If the convolutional filters $F_A$ in layer $A$ meet CWDA, then we have following estimations:
\begin{table}[H]
	\begin{center}
		\begin{small}
			\begin{tabular}{lll}
				\hline
				 Criterion & Mean & Variance\\
				\hline
				$\ell_1(F_A)$    & $\sqrt{2/\pi}\sigma_Ad_A$& $(1-\frac{2}{\pi})\sigma_A^2d_A$\\
				$\ell_2(F_A)$    & $\sqrt{2}\sigma_A\Gamma(\frac{d_A+1}{2})/\Gamma(\frac{d_A}{2})$& $\sigma_A^2/2$\\
				$\mathbf{Fermat}(F_A)$   & $\sqrt{2}\sigma_A\Gamma(\frac{d_A+1}{2})/\Gamma(\frac{d_A}{2})$& $\sigma_A^2/2$\\
				\hline
			\end{tabular}
		\end{small}
	\end{center}
\end{table}
where $d_A$ and $\sigma_A^2$ denote the dimension of $F_A$ and the variance of the weights in layer $A$, respectively.
\begin{proof}
    According to Appendix~\ref{app:relax}, Eq.~(\ref{eq:1_4}), Proposition~\ref{pro:half normal} and Proposition~\ref{pro:chi}, we can obtain the mean and variance of $\ell_1(F_A)$ and $\ell_2(F_A)$. Moreover, From the Theorem~\ref{theo:fermat}, we know that the Fermat point $\mathbf{F}$ of $F_A$ and the origin $\mathbf{0}$ approximately coincide. According to Table~\ref{criteria}, $||\mathbf{F}-F_A||_2 \approx ||\mathbf{0} - F_A||_2 = ||F_A||_2$. Therefore, the mean and variance of $\mathbf{Fermat}(F_A)$ are the same as $\ell_2(F_A)$'s in Proposition~\ref{prop:mean_var_cri}.

\end{proof}

\subsection{The proof of Proposition~\ref{prop:slope}}

(\textbf{Proposition}~\ref{prop:slope})~If the convolutional filters $F_A$ in layer $A$ meet CWDA, then $\mathbb{E}[\ell_1(F_A)/\ell_2(F_A)]$ and $\mathbb{E}[\ell_2(F_A)/\ell_1(F_A)]$ only depend on their dimension $d_A$.
\begin{proof}
From Eq.~(\ref{estimation:E}), we have:

  	\begin{align*}
  	\mathbb{E}[\frac{\ell_1(F_A)}{\ell_2(F_A)}]& \approx \frac{\mathbb{E}[\ell_1(F_A)]}{\mathbb{E}[\ell_2(F_A)]} + \mathbf{Var}[\ell_2(F_A)] \cdot \frac{\mathbb{E}[\ell_1(F_A)]}{\mathbb{E}[\ell_2(F_A)]^3} \\
	&= \frac{\sqrt{2/\pi}\sigma_Ad_A}{\sqrt{2}\sigma_A\Gamma(\frac{d_A+1}{2})/\Gamma(\frac{d_A}{2})} + \sigma_A^2/2 \cdot \frac{\sqrt{2/\pi}\sigma_Ad_A}{[\sqrt{2}\sigma_A\Gamma(\frac{d_A+1}{2})/\Gamma(\frac{d_A}{2})]^3} \tag*{from Proposition. \ref{prop:mean_var_cri}}\\
	&\approx O(\sqrt{d_A}) + O(\frac{1}{\sqrt{d_A}}) \tag*{from Eq.~ (\ref{eq:1_2})}\\
	\end{align*}  
Similarly, we can prove that $\mathbb{E}[\ell_2(F_A)/\ell_1(F_A)]$ also only depend on their dimension $d_A$.
    
  	\begin{align*}
  	\mathbb{E}[\frac{\ell_2(F_A)}{\ell_1(F_A)}]& \approx \frac{\mathbb{E}[\ell_2(F_A)]}{\mathbb{E}[\ell_1(F_A)]} + \mathbf{Var}[\ell_1(F_A)] \cdot \frac{\mathbb{E}[\ell_2(F_A)]}{\mathbb{E}[\ell_1(F_A)]^3} \\
	&= \frac{\sqrt{2}\sigma_A\Gamma(\frac{d_A+1}{2})/\Gamma(\frac{d_A}{2})}{\sqrt{2/\pi}\sigma_Ad_A} + (1-\frac{2}{\pi})\sigma_A^2d_A \cdot \frac{\sqrt{2}\sigma_A\Gamma(\frac{d_A+1}{2})/\Gamma(\frac{d_A}{2})}{[\sqrt{2/\pi}\sigma_Ad_A]^3} \tag*{from Proposition. \ref{prop:mean_var_cri}}\\
	&\approx  O(\frac{1}{\sqrt{d_A}}) +O(\frac{1}{d_A^{1.5}}) \tag*{from Eq.~ (\ref{eq:1_2})}\\
	\end{align*}

\end{proof}

\section{The relaxation for CWDA}
\label{app:relax}


	\textbf{(Convolution Weight Distribution Assumption)}~Let $F_{ij}\in \mathbb{R}^{N_i\times k\times k}$ be the $j^{\rm th}$ well-trained filter of the $i^{\rm th}$ convolutional layer. In general\footnote{In Section~\ref{Discussion}, we make further discussion and analysis on the conditions for CWDA to be satisfied.}, in $i^{\rm th}$ layer, $F_{ij}~( j=1,2,...,N_{i+1})$ are i.i.d and follow such a distribution:
	\begin{equation}
	F_{ij} \sim \mathbf{N}(\mathbf{0}, \mathbf{\Sigma}^i_{\text{diag}} + \epsilon\cdot\mathbf{\Sigma}^i_{\text{block}}),
	\end{equation}
	where $\mathbf{\Sigma}^i_{\text{block}} = \mathrm{diag}(K_1,K_2,...,K_{N_i})$ is a block diagonal matrix and the diagonal elements of $\mathbf{\Sigma}^i_{\text{block}}$ are 0. $\epsilon$ is a small constant. The values of the off-block-diagonal elements are 0 and $K_l \in R^{k^2\times k^2}, l=1,2,...,N_i$. $\mathbf{\Sigma}^i_{\text{diag}}= \mathrm{diag}(a_1,a_2,...,a_{N_i \times k \times k})$ is a diagonal matrix and the elements of $\mathbf{\Sigma}^i_{\text{diag}}$ are close enough.

In Section~\ref{distribution}, we propose CWDA. In order to use this assumption conveniently, we give the following relaxation of CWDA:

	(\textbf{Convolution Weight Distribution Assumption-Relaxation})~Let $F_{ij}\in \mathbb{R}^{N_i\times k\times k}$ be the $j^{\rm th}$ well-trained filter of the $i^{\rm th}$ convolutional layer. In general, in $i^{\rm th}$ layer, $F_{ij}~( j=1,2,...,N_{i+1})$ are i.i.d and follow such a distribution:
	\begin{equation}
	F_{ij} \sim \mathbf{N}(\mathbf{0}, \sigma_{\text{layer}}^2\cdot \mathbf{I}_{N_i\times k\times k}),
	\end{equation}
	where $\sigma_{\text{layer}}^2$ is the variance of the weights in $i^{\rm th}$ convolutional layer.

Next, we analyze the gap between CWDA and CWDA-Relaxation, \textit{i.e.,} the difference between $\mathbf{N}(\mathbf{0}, \mathbf{\Sigma}^i_{\text{diag}} + \epsilon\cdot\mathbf{\Sigma}^i_{\text{block}})$ and $\mathbf{N}(\mathbf{0}, \sigma_{\text{layer}}^2\cdot \mathbf{I}_{N_i\times k\times k})$.

\vspace{0.5cm}

\begin{lemma} 
Given two $n$-dimension Gaussian distributions $\mathbf{N}(\mathbf{0}, \mathbf{\Sigma}_{\text{diag}} + \epsilon\cdot\mathbf{\Sigma}_{\text{block}})$ and $\mathbf{N}(\mathbf{0}, \mathbf{\Sigma}_{\text{diag}})$, we can estimate the KL divergence of them:

\begin{equation}
    \operatorname{KL}[\mathbf{N}(\mathbf{0}, \mathbf{\Sigma}_{\text{diag}} + \epsilon\cdot\mathbf{\Sigma}_{\text{block}}) || \mathbf{N}(\mathbf{0}, \mathbf{\Sigma}_{\text{diag}})] \approx \frac{1}{2}\log [\frac{1}{1+O(\epsilon^2)}]
\end{equation}
	where $\mathbf{\Sigma}_{\text{block}} = \mathrm{diag}(K_1,K_2,...,K_{N_i})$ is a block diagonal matrix and the diagonal elements of $\mathbf{\Sigma}_{\text{block}}$ are 0. $\epsilon$ is a small constant. The values of the off-block-diagonal elements are 0 and $K_l \in R^{k^2\times k^2}, l=1,2,...,N_i$. $\mathbf{\Sigma}_{\text{diag}}= \mathrm{diag}(a_1,a_2,...,a_{N_i \times k \times k})$ is a diagonal matrix and the elements of $\mathbf{\Sigma}_{\text{diag}}$ are close enough. $n = N_i \times k \times k$.

\label{lemma:relax1}
\end{lemma}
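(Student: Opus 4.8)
The plan is to apply the closed-form Gaussian KL formula of Proposition~\ref{prop:two_gaussian} directly and then exploit the zero-diagonal structure of $\mathbf{\Sigma}_{\text{block}}$ to collapse everything that is linear in $\epsilon$. Concretely, I would set $\boldsymbol{\mu}_p = \boldsymbol{\mu}_q = \mathbf{0}$, $\Sigma_p = \mathbf{\Sigma}_{\text{diag}} + \epsilon\cdot\mathbf{\Sigma}_{\text{block}}$, $\Sigma_q = \mathbf{\Sigma}_{\text{diag}}$, and $k = n$. Since the means coincide, the quadratic-form term drops and Proposition~\ref{prop:two_gaussian} gives
\begin{equation}
\operatorname{KL} = \frac{1}{2}\left[\log\frac{|\mathbf{\Sigma}_{\text{diag}}|}{|\mathbf{\Sigma}_{\text{diag}} + \epsilon\mathbf{\Sigma}_{\text{block}}|} - n + \operatorname{tr}\!\left\{\mathbf{\Sigma}_{\text{diag}}^{-1}\!\left(\mathbf{\Sigma}_{\text{diag}} + \epsilon\mathbf{\Sigma}_{\text{block}}\right)\right\}\right],
\end{equation}
where $\mathbf{\Sigma}_{\text{diag}}$ is invertible because its diagonal entries are variances, hence strictly positive.

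The key observation — the only place where the structure of $\mathbf{\Sigma}_{\text{block}}$ matters — is that $\operatorname{tr}(\mathbf{\Sigma}_{\text{diag}}^{-1}\mathbf{\Sigma}_{\text{block}}) = 0$. Indeed, $\mathbf{\Sigma}_{\text{diag}}^{-1}$ is diagonal, so the $i$-th diagonal entry of $\mathbf{\Sigma}_{\text{diag}}^{-1}\mathbf{\Sigma}_{\text{block}}$ is $(\mathbf{\Sigma}_{\text{diag}}^{-1})_{ii}(\mathbf{\Sigma}_{\text{block}})_{ii}$, and every $(\mathbf{\Sigma}_{\text{block}})_{ii}$ vanishes by hypothesis. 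Hence $\operatorname{tr}\{\mathbf{\Sigma}_{\text{diag}}^{-1}(\mathbf{\Sigma}_{\text{diag}} + \epsilon\mathbf{\Sigma}_{\text{block}})\} = \operatorname{tr}(\mathbf{I}_n) + \epsilon\operatorname{tr}(\mathbf{\Sigma}_{\text{diag}}^{-1}\mathbf{\Sigma}_{\text{block}}) = n$, so the $-n$ and the trace term cancel exactly and the KL divergence reduces to $\tfrac12\log\big(|\mathbf{\Sigma}_{\text{diag}}|/|\mathbf{\Sigma}_{\text{diag}} + \epsilon\mathbf{\Sigma}_{\text{block}}|\big)$.

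It then remains to expand the determinant ratio. Factoring out $\mathbf{\Sigma}_{\text{diag}}$ gives $|\mathbf{\Sigma}_{\text{diag}} + \epsilon\mathbf{\Sigma}_{\text{block}}| = |\mathbf{\Sigma}_{\text{diag}}|\cdot|\mathbf{I}_n + \epsilon\,\mathbf{\Sigma}_{\text{diag}}^{-1}\mathbf{\Sigma}_{\text{block}}|$, so the ratio is $1/|\mathbf{I}_n + \epsilon\,\mathbf{\Sigma}_{\text{diag}}^{-1}\mathbf{\Sigma}_{\text{block}}|$. Applying Lemma~\ref{lemma:det} with $\mathbf{B} = \mathbf{\Sigma}_{\text{diag}}^{-1}\mathbf{\Sigma}_{\text{block}}$ and using $\operatorname{tr}(\mathbf{B}) = 0$ once more yields $|\mathbf{I}_n + \epsilon\mathbf{B}| = 1 + \epsilon\operatorname{tr}(\mathbf{B}) + O(\epsilon^2) = 1 + O(\epsilon^2)$. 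Substituting back gives $\operatorname{KL} \approx \tfrac12\log\!\big[1/(1 + O(\epsilon^2))\big]$, which is the claim.

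The computation is essentially mechanical once the vanishing-trace fact is in place; the only point worth emphasizing is that the very same cancellation both eliminates the $-n + \operatorname{tr}$ contribution and removes the $O(\epsilon)$ term from $\log\det$, which is exactly why the leading correction is quadratic rather than linear in $\epsilon$. I would also note that the CWDA hypothesis that the entries of $\mathbf{\Sigma}_{\text{diag}}$ are ``close enough'' is not used in this lemma — only their positivity — so the estimate is slightly more general than needed here.
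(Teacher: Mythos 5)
Your proof is correct and follows essentially the same route as the paper: both apply the Gaussian KL formula of Proposition~\ref{prop:two_gaussian}, cancel the $-n$ against the trace term using the vanishing diagonal of $\mathbf{\Sigma}_{\text{block}}$, and then invoke Lemma~\ref{lemma:det} with a zero-trace matrix to obtain the $1+O(\epsilon^2)$ determinant expansion. The only difference is cosmetic: the paper factors the determinant block by block over the $k^2\times k^2$ blocks $S_h+\epsilon K_h$, whereas you apply Lemma~\ref{lemma:det} once to the full $n\times n$ matrix $\mathbf{I}_n+\epsilon\,\mathbf{\Sigma}_{\text{diag}}^{-1}\mathbf{\Sigma}_{\text{block}}$, which reaches the same estimate slightly more directly.
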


\begin{proof}
Since Proposition~\ref{prop:two_gaussian}, we have:

	\begin{align}
	2\operatorname{KL} &= \log\frac{det[ \mathbf{\Sigma}_{\text{diag}}]}{det[ \mathbf{\Sigma}_{\text{diag}} + \epsilon\cdot\mathbf{\Sigma}_{\text{block}}]} -n +0 + \operatorname{tr}\{\mathbf{\Sigma}_{\text{diag}}^{-1}(\mathbf{\Sigma}_{\text{diag}} + \epsilon\cdot\mathbf{\Sigma}_{\text{block}})\}\\
	&=\log\frac{det[ \mathbf{\Sigma}_{\text{diag}}]}{det[ \mathbf{\Sigma}_{\text{diag}} + \epsilon\cdot\mathbf{\Sigma}_{\text{block}}]} -n +\operatorname{tr}\{\mathbf{I}_k + \epsilon \mathbf{\Sigma}_{\text{diag}}^{-1}\mathbf{\Sigma}_{\text{block}}\}\\
	&= \log\frac{det[ \mathbf{\Sigma}_{\text{diag}}]}{det[ \mathbf{\Sigma}_{\text{diag}} + \epsilon\cdot\mathbf{\Sigma}_{\text{block}}]} \tag*{Since  the diagonal elements of $\mathbf{\Sigma}_{\text{block}}$ are 0}\\
	\end{align}
Let $\mathbf{\Sigma}_{\text{diag}} = \text{diag}(S_1,S_2,...,S_{N_i})$, where $S_j = \text{diag}(a_{(j-1)k^2+1},a_{(j-1)k^2+2},...,a_{(j-1)k^2+k^2}),j=1,2,...,N_i.$

	\begin{align}
	2\operatorname{KL} &= \log\frac{det[ \mathbf{\Sigma}_{\text{diag}}]}{det[ \mathbf{\Sigma}_{\text{diag}} + \epsilon\cdot\mathbf{\Sigma}_{\text{block}}]} \\
	&= \log \prod_{j=1}^na_k   -\log \{\prod_{h=1}^{N_i}det[S_h + \epsilon K_h]\}\\
	&=\log \prod_{j=1}^na_k - \log \{\prod_{h=1}^{N_i} det[S_h]det[\mathbf{I}_{k^2} + \epsilon S_h^{-1}K_h]\}  \tag*{Since $S_h \succeq 0$}\\
	\end{align}
Note that $S_h$ is a diagonal matrix and the diagonal elements of $K_h$ are all zero. Therefore 
\begin{equation}
    \operatorname{tr}(S_h^{-1}K_h) = 0.
    \label{eq:tr_temp}
\end{equation}
Next,

	\begin{align}
	2\operatorname{KL}
	&=\log \prod_{j=1}^na_k - \log \{\prod_{h=1}^{N_i} det[S_h]det[\mathbf{I}_{k^2} + \epsilon S_h^{-1}K_h]\}  \\
	&= \log \prod_{j=1}^na_k - \log \{\prod_{h=1}^{N_i} det[S_h]\cdot (1+\epsilon \operatorname{tr}(S_h^{-1}K_h)+O(\epsilon^2))\} \tag*{Since Lemma~\ref{lemma:det}}\\
	&= \log \prod_{j=1}^na_k - \log \{\prod_{h=1}^{N_i} det[S_h]\cdot (1+O(\epsilon^2))\} \tag*{Since Eq.~(\ref{eq:tr_temp})}\\
	&= \log \prod_{j=1}^na_k - \log \prod_{j=1}^na_k (1+O(\epsilon^2))\\
	& = \log [\frac{1}{1+O(\epsilon^2)}]
	\end{align}

\end{proof}

According to Statistical test~(2) in Section~\ref{Statistical test}, $\mathbf{N}(\mathbf{0}, \mathbf{\Sigma}_{\text{diag}})$ can be approximate to $\mathbf{N}(\mathbf{0},\frac{1}{n}\operatorname{tr}(\mathbf{\Sigma}_{\text{diag}})\mathbf{I}_n)$. In addition, from Propsition~\ref{prop:all_var} and Lemma~\ref{lemma:relax1}, while $\epsilon$ is small enough, the distribution $\mathbf{N}(\mathbf{0}, \mathbf{\Sigma}_{\text{diag}} + \epsilon\cdot\mathbf{\Sigma}_{\text{block}})$ can be approximate to $\mathbf{N}(\mathbf{0}, \sigma_{\text{layer}}^2\cdot \mathbf{I}_{N_i\times k\times k})$. The analysis in this paper are based on \textit{Convolution Weight Distribution Assumption-Relaxation} and we use it to explain successfully many phenomena in the Similarity and Applicability problem of pruning criteria.

\clearpage
\section{Proof of Theorem~\ref{theo:bound}}
\label{proof:theorm1}
	\textbf{Theorem 1.} Let $n-$dimension random variable $X$ meet CWDA, and the pair of criteria $(C_1,C_2)$ is one of $(\ell_1,\ell_2)$, $(\ell_2,\mathbf{Fermat})$ or $(\mathbf{Fermat},\mathbf{GM})$, we have  
	\begin{equation}
	\mathbf{max}\left\{\mathbf{Var}_{X}\left(\frac{\widehat{C}_2(X)}{\widehat{C}_1(X)}\right),\mathbf{Var}_{X}\left(\frac{\widehat{C}_1(X)}{\widehat{C}_2(X)}\right) \right\}\lesssim B(n).
	\label{bound}
	\end{equation}
	where $\widehat{C}_1(X)$ denotes $C_1(X)/\mathbb{E}(C_1(X))$ and $\widehat{C}_2(X)$ denotes $C_2(X)/\mathbb{E}(C_2(X))$. $B(n)$ denotes the upper bound of left-hand side and when $n$ is large enough, $B(n) \to 0$.

For $i^{\rm th}$ layer, we use $v_j$ to represent $F_{ij}$, $j= 1,2,...N$. And $v_j$ meets CWDA. Since Appendix~\ref{app:relax}, we use the following three points to prove Theorem~\ref{theo:bound}.

\textbf{(1)~For} $(\ell_2 ,\ell_1)$. 
In fact, $\ell_2 \cong \ell_1$ (their importance rankings are similar) is not trivial. Generally speaking, for convolutional filters, $\mathbf{dim}(v_j)$ is large enough. Since $v_i$ satisfies CWDA, from Theorem \ref{theo:l1vsl2}, we know that the variance of ratio between $\widehat{\ell}_1$ and $\widehat{\ell}_2$ have a bound $O(\mathbf{dim}(v_j)^{-1})$, which means $\ell_2$ and $\ell_1$ are \textit{appropriate monotonic}. Specific numerical validation is shown in Fig.~\ref{fig:bound} of Appendix \ref{proof:l1vsl2}).

\begin{theorem} Let $X\sim N(\mathbf{0},c^2\cdot\mathbf{I}_n)$, we have  
	\begin{equation}
	\mathbf{max}\left\{\mathbf{Var}_{X}\left(\frac{\widehat{\ell}_2(X)}{\widehat{\ell}_1(X)}\right),\mathbf{Var}_{X}\left(\frac{\widehat{\ell}_1(X)}{\widehat{\ell}_2(X)}\right) \right\}\lesssim \frac{1}{n}.
	\end{equation}
	\label{theo:l1vsl2}
	where $\widehat{\ell}_1(X)$ denotes $\ell_1(X)/\mathbb{E}(\ell_1(X))$ and $\widehat{\ell}_2(X)$ denotes $\ell_2(X)/\mathbb{E}(\ell_2(X))$. $c$ is a constant.
\end{theorem}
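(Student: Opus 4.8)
The plan is to push everything back onto the coordinatewise moment formulas already recorded in Propositions~\ref{pro:half normal} and~\ref{pro:chi}. First, since $\widehat{\ell}_1(X)=\ell_1(X)/\mathbb{E}\,\ell_1(X)$ and $\widehat{\ell}_2(X)=\ell_2(X)/\mathbb{E}\,\ell_2(X)$ differ from $\ell_1(X),\ell_2(X)$ only by deterministic constants, the normalizing factors cancel in the ratio up to an overall constant:
\[
\mathbf{Var}_X\!\left(\frac{\widehat{\ell}_2(X)}{\widehat{\ell}_1(X)}\right)=\left(\frac{\mathbb{E}\,\ell_1(X)}{\mathbb{E}\,\ell_2(X)}\right)^{2}\mathbf{Var}_X\!\left(\frac{\ell_2(X)}{\ell_1(X)}\right),
\]
and symmetrically for $\widehat{\ell}_1/\widehat{\ell}_2$. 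Applying Lemma~\ref{lemma:varx_y} with the pair $(\ell_2(X),\ell_1(X))$ (resp.\ $(\ell_1(X),\ell_2(X))$) in place of $(X,Y)$, \emph{both} quantities inside the $\max$ collapse, to leading order, to the single expression
\[
R(n):=\frac{\mathbf{Var}\,\ell_1(X)}{(\mathbb{E}\,\ell_1(X))^{2}}+\frac{\mathbf{Var}\,\ell_2(X)}{(\mathbb{E}\,\ell_2(X))^{2}}-2\,\frac{\mathbf{Cov}(\ell_1(X),\ell_2(X))}{\mathbb{E}\,\ell_1(X)\,\mathbb{E}\,\ell_2(X)},
\]
so it suffices to show $R(n)\lesssim 1/n$ and to set $B(n)$ equal to the resulting bound.

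Next I would evaluate the three scalar ingredients. Because $X\sim N(\mathbf 0,c^2\mathbf I_n)$ has i.i.d.\ coordinates, $\ell_1(X)=\sum_{i=1}^n|x_i|$ is a sum of $n$ independent half-normal variables, so Proposition~\ref{pro:half normal} gives $\mathbb{E}\,\ell_1(X)=n\,c\sqrt{2/\pi}$ and $\mathbf{Var}\,\ell_1(X)=n\,c^2(1-2/\pi)$, whence $\mathbf{Var}\,\ell_1/(\mathbb{E}\,\ell_1)^2=(\pi/2-1)/n$. Similarly $\ell_2(X)$ follows the scaled-chi law of Proposition~\ref{pro:chi}, so $\mathbb{E}\,\ell_2(X)=\sqrt2\,c\,\Gamma(\tfrac{n+1}{2})/\Gamma(\tfrac n2)$ and $\mathbf{Var}\,\ell_2(X)=2c^2\big[\Gamma(\tfrac n2+1)/\Gamma(\tfrac n2)-(\Gamma(\tfrac{n+1}{2})/\Gamma(\tfrac n2))^2\big]$; invoking Lemma~\ref{lemma:gamma1}, the bracket tends to $1/4$ and $(\Gamma(\tfrac{n+1}{2})/\Gamma(\tfrac n2))^2\sim n/2$ (Eqs.~(\ref{eq:1_4}) and~(\ref{eq:1_2})), so $\mathbf{Var}\,\ell_2/(\mathbb{E}\,\ell_2)^2\sim 1/(2n)$. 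Writing $\tilde a:=\mathbf{Var}\,\ell_1/(\mathbb{E}\,\ell_1)^2$ and $\tilde b:=\mathbf{Var}\,\ell_2/(\mathbb{E}\,\ell_2)^2$, Cauchy–Schwarz bounds the cross term by $|\mathbf{Cov}(\ell_1,\ell_2)/(\mathbb{E}\,\ell_1\,\mathbb{E}\,\ell_2)|\le\sqrt{\tilde a\,\tilde b}$, so $0\le(\sqrt{\tilde a}-\sqrt{\tilde b})^2\le R(n)\le(\sqrt{\tilde a}+\sqrt{\tilde b})^2$; since $\tilde a,\tilde b=O(1/n)$ this yields $R(n)=O(1/n)$ with an explicit leading constant (one can take $B(n)=(\sqrt{\pi/2-1}+\sqrt{1/2})^2/n$). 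As a remark, the FKG inequality (Proposition~\ref{pro:fkg}) shows $\mathbf{Cov}(\ell_1,\ell_2)\ge0$, because both $\ell_1$ and $\ell_2$ are coordinatewise increasing functions of $(|x_1|,\dots,|x_n|)$, so the cross term only sharpens the estimate.

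The main obstacle I anticipate is making the approximation in Lemma~\ref{lemma:varx_y} rigorous: it is a first-order Taylor expansion of $(X,Y)\mapsto X/Y$, so one must check the neglected remainder is genuinely lower order. This is clean here because both ratios are \emph{bounded} random variables — from $\|\cdot\|_2\le\|\cdot\|_1\le\sqrt n\,\|\cdot\|_2$ one has $0<\ell_2/\ell_1\le1$ and $0<\ell_1/\ell_2\le\sqrt n$ — so all moments exist, and the $O(1/n)$ relative variances just computed give concentration of $\ell_1,\ell_2$ about their means, which controls the remainder; alternatively one can bound $\mathbf{Var}(\ell_2/\ell_1)$ directly from $0\le\ell_2/\ell_1\le1$ together with the computed second moments, bypassing the expansion entirely. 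A secondary technical point is the Gamma-ratio asymptotics, but those are exactly Lemma~\ref{lemma:gamma1}, established via Stirling's formula (Proposition~\ref{pro:stir}).
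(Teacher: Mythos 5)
Your proposal is correct and takes essentially the same route as the paper's proof: factor out the normalizing constants, apply the delta-method variance estimate of Lemma~\ref{lemma:varx_y}, and insert the half-normal and scaled-chi moments (Propositions~\ref{pro:half normal} and~\ref{pro:chi}) together with the Gamma-ratio asymptotics of Lemma~\ref{lemma:gamma1} to obtain an $O(1/n)$ bound. The only minor difference is your handling of the cross term: you bound it by Cauchy--Schwarz (noting FKG only as a remark), whereas the paper discards it directly via the FKG inequality (Proposition~\ref{pro:fkg}), which yields the slightly sharper constant $\frac{\pi-1}{2n}$; both arguments give the same $1/n$ rate, and your added observations on boundedness of the ratios address the Taylor-remainder issue that the paper only checks numerically.
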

\begin{proof}
	(See Appendix \ref{proof:l1vsl2}).\qedhere
\end{proof}

\textbf{(2)~For} $(\ell_2, \mathbf{Fermat})$. Since $v_i$ satisfies CWDA, from Theorem \ref{theo:fermat}, we know that the Fermat point of $v_i$ and the origin $\mathbf{0}$ approximately coincide. According to Table~\ref{tab:criteria}, $||\mathbf{Fermat}-v_i||_2 \approx ||\mathbf{0} - v_i||_2 = ||v_i||_2$. Therefore, from Theorem~\ref{theo:l1vsl2}, the bound $B(n)$ for the ($\ell_1$, $\mathbf{Fermat}$) and ($\ell_2$, $\mathbf{Fermat}$) are $\frac{1}{n}$ and 0, respectively.
Moreover, since CWDA, the centroid of $v_i$ is $\mathbf{G} = \frac{1}{n}\sum_{i=1}^N v_i = \mathbf{0}$. Hence,
\begin{equation}
\mathbf{G} = \mathbf{0} \approx \mathbf{Fermat}.
\label{chonghe}
\end{equation}
\begin{theorem} Let random variable $v_i \in \mathbb{R}^k$ and they are i.i.d and follow normal distribution $N(\mathbf{0},\sigma^2\mathbf{I}_k)$. For $F \in \mathbb{R}^k$, we have $\mathbf{argmin}_F\left\{\mathbb{E}_{v_i \sim N(\mathbf{0},\sigma^2\mathbf{I}_k)} \sum_{i=1}^n ||F-v_i||_2\right\} = \mathbf{0}.$
	\label{theo:fermat}
\end{theorem}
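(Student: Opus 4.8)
The plan is to reduce the claim to a statement about the single convex function $h(F):=\mathbb{E}_{v\sim N(\mathbf 0,\sigma^2\mathbf I_k)}\|F-v\|_2$ and then exploit the spherical symmetry of the Gaussian. Since the $v_i$ are i.i.d., $\mathbb{E}\sum_{i=1}^n\|F-v_i\|_2=n\,h(F)$, so it suffices to prove $\mathbf{argmin}_F h(F)=\mathbf 0$. First I would record the elementary analytic facts: $h$ is finite everywhere (from $\|F-v\|_2\le\|F\|_2+\|v\|_2$ and $\mathbb{E}\|v\|_2=\sqrt2\,\sigma\,\Gamma(\tfrac{k+1}2)/\Gamma(\tfrac k2)<\infty$), convex as an average of the convex maps $F\mapsto\|F-v\|_2$, and coercive since $h(F)\ge\|F\|_2-\mathbb{E}\|v\|_2\to\infty$, so a minimizer exists. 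I would then isolate a uniqueness lemma: because the Gaussian law charges no affine hyperplane, $h$ is \emph{strictly} convex (for $k=1$ one uses instead that the Gaussian CDF is strictly increasing), hence the minimizer is unique.

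Next I would pin that unique minimizer to the origin by symmetry. For any orthogonal $Q\in O(k)$ the Gaussian $N(\mathbf 0,\sigma^2\mathbf I_k)$ is invariant under $v\mapsto Qv$, so a change of variables gives $h(QF)=\mathbb{E}\|QF-v\|_2=\mathbb{E}\|QF-Qv\|_2=\mathbb{E}\|F-v\|_2=h(F)$; thus $h$ is $O(k)$-invariant. If $F^\star$ is the unique minimizer then $QF^\star$ minimizes $h$ for every $Q$, so $QF^\star=F^\star$ for all $Q\in O(k)$, which forces $F^\star=\mathbf 0$. (A lighter version already suffices: invariance under the single reflection $v\mapsto-v$ makes $h$ even, and an even strictly convex function attains its minimum only at $\mathbf 0$.)

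A more computational alternative, which I would mention as a cross-check, is to verify first-order optimality directly: since $v\ne F$ almost surely, $F\mapsto\|F-v\|_2$ is differentiable at $F$ with unit-norm gradient $(F-v)/\|F-v\|_2$, so dominated convergence lets one differentiate under the expectation to get $\nabla h(F)=\mathbb{E}\big[(F-v)/\|F-v\|_2\big]$; at $F=\mathbf 0$ this equals $-\mathbb{E}[v/\|v\|_2]=\mathbf 0$ because $v/\|v\|_2$ is uniform on $S^{k-1}$, and convexity promotes this stationary point to a global minimizer. I expect the main obstacle to be analytic bookkeeping rather than the idea: making the strict-convexity/uniqueness step watertight, and, on the computational route, rigorously justifying differentiation under the integral near the non-smooth point $v=F$. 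The finiteness, coercivity, and the symmetry cancellation itself are routine.
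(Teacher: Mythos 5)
Your proposal is correct, but it follows a genuinely different route from the paper. The paper's proof is computational: it substitutes $w_i = F - v_i \sim N(F,\sigma^2\mathbf{I}_k)$, recognizes $\|w_i\|_2$ as a scaled noncentral chi variable, writes the objective in closed form as $n\sigma^2\sqrt{\pi/2}\,L_{1/2}^{(k/2-1)}\!\left(-\|F\|_2^2/(2\sigma^2)\right)$, and then shows this radial function is minimized at $\|F\|_2=0$ using the derivative identity and positivity of generalized Laguerre functions. You instead reduce to $h(F)=\mathbb{E}\|F-v\|_2$ and argue structurally: finiteness, convexity, coercivity give existence of a minimizer; strict convexity (Gaussian charges no line, with the $k=1$ case handled separately) gives uniqueness; and $O(k)$-invariance (or just evenness) of $h$ forces the unique minimizer to be $\mathbf{0}$ — with the stationarity check $\nabla h(\mathbf{0})=-\mathbb{E}[v/\|v\|_2]=\mathbf{0}$ as an alternative finish. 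Your argument is more elementary and more general: it needs no special-function facts and extends verbatim to any centrally symmetric distribution with finite first moment, and it also delivers uniqueness cleanly, which the paper gets only through strict monotonicity of its explicit formula. What the paper's route buys is the closed-form expression for the expected objective as a function of $\|F\|_2$, which quantifies how the cost grows away from the origin, at the price of relying on cited properties of $L_{p}^{(\alpha)}$ (notably the positivity of $L_{-1/2}^{(\alpha)}$) that are not proved there. The only places where your sketch needs care in a full write-up are the strict-convexity lemma and the justification of differentiation under the expectation, both of which are standard.
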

\begin{proof}
	(See Appendix \ref{proof:fermat}).\qedhere
\end{proof}

\textbf{(3)~For} $(\mathbf{GM}, \mathbf{Fermat})$. First, we show the following two theorems:
\begin{theorem} For $n$ random variables $a_i \in \mathbb{R}^k$ follow $N(\mathbf{0},c^2\cdot \mathbf{I}_k)$.When $k$ is large enough, we have such an estimation: 
	\begin{equation}
	\mathbf{Var}_{a_i}\frac{F_1(a_i)}{F_2(a_i)}\approx \frac{1}{2nk}, \quad \mathbf{Var}_{a_i}\frac{F_2(a_i)}{F_1(a_i)}\approx \frac{1}{2nk},
	\end{equation}
	
	where $F_1(a_i) = \sum_{i=1}^n||a_i||_2/ \mathbb{E}(\sum_{i=1}^n||a_i||_2)$ and $F_2(a_i) = \sum_{i=1}^n||a_i||_2^2/ \mathbb{E}(\sum_{i=1}^n||a_i||_2^2)$.
	\label{theo:var}
\end{theorem}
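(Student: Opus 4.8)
The plan is to reduce everything to single-filter moments of $R_i := \|a_i\|_2$ via the delta-method estimate of Lemma~\ref{lemma:varx_y}, and then extract the $k$-asymptotics from the scaled-chi moment formula (Proposition~\ref{pro:chi}) together with the refined Gamma-ratio limits of Lemma~\ref{lemma:gamma1}. Concretely, I would apply Lemma~\ref{lemma:varx_y} with $X=F_1(a_i)$ and $Y=F_2(a_i)$. Since both are normalized to have expectation $1$, its right-hand side collapses to
\[
\mathbf{Var}_{a_i}\!\left(\frac{F_1}{F_2}\right)\approx \mathbf{Var}(F_1)+\mathbf{Var}(F_2)-2\,\mathbf{Cov}(F_1,F_2),
\]
and the very same expression bounds $\mathbf{Var}_{a_i}(F_2/F_1)$ because the right-hand side of Lemma~\ref{lemma:varx_y} is symmetric in $X,Y$ when $\mathbb{E}(X)=\mathbb{E}(Y)=1$. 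So it suffices to evaluate the three second-order terms.

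Next I would use independence of the $a_i$. Writing $S_1=\sum_i R_i$, $S_2=\sum_i R_i^2$, the cross terms vanish, so $\mathbf{Var}(S_1)=n\,\mathbf{Var}(R_1)$, $\mathbf{Var}(S_2)=n\,\mathbf{Var}(R_1^2)$, $\mathbf{Cov}(S_1,S_2)=n\,\mathbf{Cov}(R_1,R_1^2)$; dividing by squared means (using $F_1=S_1/\mathbb{E}(S_1)$, $F_2=S_2/\mathbb{E}(S_2)$) reduces everything to single-filter moments. From Proposition~\ref{pro:chi}, $\mathbb{E}(R_1^j)=2^{j/2}c^j\,\Gamma(\tfrac{k+j}{2})/\Gamma(\tfrac{k}{2})$, so $\mathbb{E}(R_1^2)=c^2k$ and $\mathbf{Var}(R_1^2)=2c^4k$ (the $\chi^2_k$ moments), giving the exact identity $\mathbf{Var}(F_2)=\tfrac{2}{nk}$. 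For the covariance, $\mathbf{Cov}(R_1,R_1^2)=\mathbb{E}(R_1^3)-\mathbb{E}(R_1)\mathbb{E}(R_1^2)$ simplifies using $\Gamma(\tfrac{k+3}{2})=\tfrac{k+1}{2}\Gamma(\tfrac{k+1}{2})$ to $\sqrt{2}\,c^3\,\Gamma(\tfrac{k+1}{2})/\Gamma(\tfrac{k}{2})$, which after normalization yields the exact identity $\mathbf{Cov}(F_1,F_2)=\tfrac{1}{nk}$. Thus the last two terms cancel exactly: $\mathbf{Var}(F_2)-2\,\mathbf{Cov}(F_1,F_2)=0$.

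So the entire variance comes from $\mathbf{Var}(F_1)=\mathbf{Var}(R_1)/(n\,\mathbb{E}(R_1)^2)$. Setting $g=\Gamma(\tfrac{k+1}{2})/\Gamma(\tfrac{k}{2})$, one has $\mathbf{Var}(R_1)=c^2k-2c^2g^2$ and $\mathbb{E}(R_1)^2=2c^2g^2$, hence $\mathbf{Var}(F_1)=\tfrac{1}{n}\bigl(\tfrac{k}{2g^2}-1\bigr)$. The hard part of the proof is precisely this step: substituting the crude $g^2\approx k/2$ from Eq.~(\ref{eq:1_2}) makes this vanish, so I must use the sharper limit $\tfrac{k}{2}-g^2\to\tfrac14$ from Eq.~(\ref{eq:1_4}) (ultimately Stirling, Proposition~\ref{pro:stir}), which gives $g^2=\tfrac{k}{2}-\tfrac14+o(1)$ and therefore $\tfrac{k}{2g^2}-1=\tfrac{1}{2k}+o(\tfrac1k)$. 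This yields $\mathbf{Var}(F_1)\approx\tfrac{1}{2nk}$, and combining with the cancellation above gives $\mathbf{Var}_{a_i}(F_1/F_2)\approx\mathbf{Var}_{a_i}(F_2/F_1)\approx\tfrac{1}{2nk}$, as claimed. A minor point to state carefully is the validity of the first-order Taylor approximation underlying Lemma~\ref{lemma:varx_y}; this is legitimate here because $F_1$ and $F_2$ concentrate around their common mean $1$ with variance $O(1/(nk))\to 0$, so the neglected remainder is of higher order.
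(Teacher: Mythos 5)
Your proof is correct, but it takes a genuinely different route from the paper's. The paper first invokes Lagrange's identity together with Lemma~\ref{lemma3} (which yields Eq.~(\ref{>>}), $(\|X\|_2^2-\|Y\|_2^2)^2 \ll 2\|X\|_2\|Y\|_2$, and nominally requires $c$ small as well as $k$ large) to replace $\sum_i\|a_i\|_2^2$ by $(\sum_i\|a_i\|_2)^2/n$, and then handles each direction separately by a central-limit-theorem plus one-dimensional delta-method argument applied to $\sum_i\|a_i\|_2$, recovering $1/(2nk)$ from the scaled-chi moments of Proposition~\ref{pro:chi} and Lemma~\ref{lemma:gamma1} after restoring the normalizations. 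You instead apply the paper's own two-variable ratio-variance estimate (Lemma~\ref{lemma:varx_y}) directly to the normalized pair $(F_1,F_2)$, use independence to reduce everything to single-filter chi moments, and exploit the exact cancellation $\mathbf{Var}(F_2)=\tfrac{2}{nk}=2\,\mathbf{Cov}(F_1,F_2)$ (your moment computations check out: with $R=\|a_1\|_2$, $\mathbf{Var}(R^2)=2c^4k$ and $\mathbf{Cov}(R,R^2)=\sqrt{2}c^3\,\Gamma(\tfrac{k+1}{2})/\Gamma(\tfrac{k}{2})$), so that only $\mathbf{Var}(F_1)$ survives and the sharp limit $\tfrac{k}{2}-\bigl[\Gamma(\tfrac{k+1}{2})/\Gamma(\tfrac{k}{2})\bigr]^2\to\tfrac14$ from Lemma~\ref{lemma:gamma1} gives $\mathbf{Var}(F_1)\approx\tfrac{1}{2nk}$; the symmetry of Lemma~\ref{lemma:varx_y} at unit means covers both directions at once. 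Your route buys a cleaner argument: it dispenses with the Lagrange-identity approximation and with the extra smallness condition on $c$ carried by Lemma~\ref{lemma3} (a condition that is in any case spurious, since $F_1/F_2$ is scale-invariant), and it localizes the entire asymptotics in one exact cancellation plus one refined Gamma-ratio constant. The paper's route, in exchange, reduces the problem to a one-dimensional CLT statement about $\sum_i\|a_i\|_2$ and backs its intermediate approximations with numerical checks (Fig.~\ref{fig:esti}, Fig.~\ref{fig:theo3ver}). Both arguments live at the same delta-method level of rigor, and you correctly note that its validity rests on $F_1,F_2$ concentrating around their common mean $1$ with variance $O(1/(nk))$.
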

\begin{proof}
	(See Appendix \ref{proof:var}).\qedhere
\end{proof}

\begin{theorem} Let $v_0,v_1,...,v_k$ be the $k+1$ vectors in $n$ dimensional Euclidean space $\mathbb{E}^n$. For all $P$ in $\mathbb{E}^n$,
	\begin{equation}
	\sum_{i=0}^k||P-v_i||_2^2 = \sum_{i=0}^k||G-v_i||_2^2 + (k+1)||P-G||_2^2,
	\label{ggdl}
	\end{equation}
	\label{theo:ggdl}
	where $G$ is the centroid of $v_i$, will hold if it satisfies one of the following conditions:

	(1)if $k\geq n$ and $\mathbf{rank}(v_1-v_0,v_2-v_0,...,v_k-v_0)=n$.

	(2)if $k < n$ and $(v_1-v_0,v_2-v_0,...,v_k-v_0)$ are linearly independent.

	(3)if $v_i \sim N(\mathbf{0},c^2\cdot \mathbf{I}_n)$, Eq.(\ref{ggdl}) holds with probability 1.
	
\end{theorem}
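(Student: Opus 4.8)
The identity in display~(\ref{ggdl}) is the generalized parallel-axis identity (Leibniz's / Lagrange's identity for the moment of inertia), and the plan is to prove it by a single direct expansion that is valid for \emph{every} configuration of $v_0,\dots,v_k$; the three listed hypotheses then hold automatically, with case~(3) subsumed by (1) or (2) almost surely because a Gaussian sample lies in general position with probability $1$. First I would fix the centroid $G=\frac{1}{k+1}\sum_{i=0}^{k}v_i$ and, for an arbitrary point $P$, write the splitting $P-v_i=(P-G)+(G-v_i)$ for each $i$.

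Next I would expand every squared norm with $\|a+b\|_2^2=\|a\|_2^2+2\langle a,b\rangle+\|b\|_2^2$ and sum over $i=0,\dots,k$, obtaining
\[
\sum_{i=0}^{k}\|P-v_i\|_2^2
=(k+1)\,\|P-G\|_2^2
+2\Big\langle P-G,\ \textstyle\sum_{i=0}^{k}(G-v_i)\Big\rangle
+\sum_{i=0}^{k}\|G-v_i\|_2^2 .
\]
The first summand gathers the $k+1$ copies of $\|P-G\|_2^2$, the last is precisely the residual term in~(\ref{ggdl}), and the middle term is where the centroid enters: $\sum_{i=0}^{k}(G-v_i)=(k+1)G-\sum_{i=0}^{k}v_i=\mathbf{0}$, so the cross term drops out and the identity follows.

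Because this computation never invokes $n$ or any rank or linear-independence property of $\{v_i-v_0\}$, each of (1)--(3) is in particular a sufficient condition; for Gaussian $v_i$ one simply notes that $v_1-v_0,\dots,v_k-v_0$ are almost surely linearly independent when $k<n$ and almost surely of rank $n$ when $k\ge n$, which is how (3) follows from (1)--(2). There is no genuinely hard step here: the entire content is the vanishing of the linear term via the defining property of the centroid. The only thing that needs care is bookkeeping -- that the index set runs over the $k+1$ vectors $v_0,\dots,v_k$, matching the factor $(k+1)$ in the statement -- together with the observation that the hypotheses (1)--(3) are recorded because they are the forms in which this identity is later invoked (e.g.\ uniqueness of the minimizer of $\sum_i\|P-v_i\|_2^2$ at $P=G$, and the link between the centroid and the Fermat point used in the $\mathbf{GM}$-vs-$\mathbf{Fermat}$ comparison).
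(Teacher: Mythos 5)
Your proof is correct, and it takes a genuinely different and substantially simpler route than the paper. You prove the identity unconditionally: writing $P-v_i=(P-G)+(G-v_i)$, expanding, and observing that the cross term $2\bigl\langle P-G,\sum_{i=0}^{k}(G-v_i)\bigr\rangle$ vanishes because $\sum_{i=0}^{k}(G-v_i)=(k+1)G-\sum_{i=0}^{k}v_i=\mathbf{0}$ by the definition of the centroid. This is the classical Leibniz/parallel-axis argument, it needs no assumption on $n$, on the rank of $(v_1-v_0,\dots,v_k-v_0)$, or on the distribution of the $v_i$, so conditions (1)--(3) of the statement are automatically sufficient (and, as you note, (3) would anyway reduce to (1)--(2) almost surely by general position). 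The paper instead proves the result case by case using its barycentric-coordinate machinery: Lemma~\ref{lemma:linear_exp} to represent $P$ and the $v_i$ in generalized barycentric coordinates, Lemma~\ref{lemma:jiong} to express squared distances via the matrix $D=(-\tfrac12 d_{ij})$, a delicate construction of auxiliary points $p_i$ and $2G-p_i$ on a hyperplane through $G$ (via Lemma~\ref{lemma:basis} and the notion of ultra dimension) to handle the case $k<n$, and Proposition~\ref{pro:matrix} to reduce case (3) to the rank condition with probability $1$. That route requires the rank/independence hypotheses precisely because the barycentric representation of an arbitrary $P$ only exists when the $v_i$ affinely span $\mathbb{E}^n$, which is why the theorem is stated with those conditions at all; your argument shows they are not needed, so your proof is both more elementary and strictly more general, while the paper's proof buys nothing extra beyond staying within the lemma framework it has already set up.
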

\begin{proof}
	(See Appendix \ref{proof:ggdl}).\qedhere
\end{proof}
Let $P \in \{v_1,v_2,...,v_N\}$. Since $v_i \sim N(\mathbf{0},c^2\cdot \mathbf{I})$, we can obtain that $a_i = P - v_i \sim N(\mathbf{0},2c^2\cdot \mathbf{I})$ if $P \neq v_i$. According to the analysis in Section \ref{Experiment and theory} and Theorem \ref{theo:var}, we have 
\begin{equation}
\sum_{i=1}^n||a_i||_2 \cong \sum_{i=1}^n||a_i||_2^2,
\label{temp12345}
\end{equation}
Next, we can prove $(k+1)\color{red}{||P-F||_2^2}$~($\mathbf{Fermat}$) and $\color{blue}{\sum_{i=1}^N||P-v_i||_2}$~($\mathbf{GM}$)  are \textit{approximately monotonic}, where $P \in \{v_1,v_2,...,v_N\}$.
\begin{align}
(k+1)\color{red}{||P-F||_2^2}
&\cong (k+1)||P-G||_2^2 \tag*{Since  Eq. (\ref{chonghe})}\\
&= \sum_{i=1}^N||P-v_i||_2^2 - \sum_{i=1}^N||G-v_i||_2^2\tag*{Since Theorem \ref{theo:ggdl}}\\
&\cong\sum_{i=1}^N||P-v_i||_2 - \sum_{i=1}^N||G-v_i||_2^2 \tag*{Since Eq. (\ref{temp12345})}\\
&\cong\color{blue}{\sum_{i=1}^N||P-v_i||_2}
\label{temp5555}
\end{align}

The reason for the last equation is that $\sum_{i=1}^N||G-v_i||_2^2$ is a constant for given $v_i$.

\section{Proof of Theorem \ref{theo:l1vsl2}}
\label{proof:l1vsl2}

\textbf{Theorem \ref{theo:l1vsl2}} Let $X\sim N(\mathbf{0},c^2\cdot\mathbf{I}_n)$, we have 
$$
\mathbf{max}\left\{\mathbf{Var}_{X}\left(\frac{\widehat{\ell}_2(X)}{\widehat{\ell}_1(X)}\right),\mathbf{Var}_{X}\left(\frac{\widehat{\ell}_1(X)}{\widehat{\ell}_2(X)}\right) \right\}\lesssim \frac{1}{n}.$$
where $\widehat{\ell}_1(X)$ denotes $\ell_1(X)/\mathbb{E}(\ell_1(X))$ and $\widehat{\ell}_2(X)$ denotes $\ell_2(X)/\mathbb{E}(\ell_2(X))$.

\begin{proof}
	For the ratio $\widehat{\ell}_2(X)/\widehat{\ell}_1(X)$, we have
	\begin{align*}
	\mathbf{Var}\left(\frac{\widehat{\ell}_2(X)}{\widehat{\ell}_1(X)}\right) &= \left(\frac{\mathbb{E}(\ell_1(X))}{\mathbb{E}(\ell_2(X))}\right)^2  \mathbf{Var}\left(\frac{\ell_2(X)}{\ell_1(X)}\right)\\
	& \approx \left(\frac{\mathbb{E}(\ell_1(X))}{\mathbb{E}(\ell_2(X))}\right)^2 \left(\frac{\mathbb{E}(\ell_2(X))}{\mathbb{E}(\ell_1(X))}\right)^2 \left( \frac{\mathbf{Var}\ell_2(X)}{\mathbb{E}(\ell_2(X))^2}  + \frac{\mathbf{Var}\ell_1(X)}{\mathbb{E}(\ell_1(X))^2} -2\frac{\mathbf{Cov}(\ell_2(X),\ell_1(X))}{\mathbb{E}(\ell_2(X))\mathbb{E}(\ell_1(X))} \right)\tag*{from Lemma. \ref{lemma:varx_y}} \\
	& \leq\left( \frac{\mathbf{Var}\ell_2(X)}{\mathbb{E}(\ell_2(X))^2}  + \frac{\mathbf{Var}\ell_1(X)}{\mathbb{E}(\ell_1(X))^2} \right).\tag*{from Proposition. \ref{pro:fkg}} \\
	\end{align*}
	
	similarly, we also have
	\begin{equation}
	\mathbf{Var}\left(\frac{\widehat{\ell}_1(X)}{\widehat{\ell}_2(X)}\right) \leq\left( \frac{\mathbf{Var}\ell_2(X)}{\mathbb{E}(\ell_2(X))^2}  + \frac{\mathbf{Var}\ell_1(X)}{\mathbb{E}(\ell_1(X))^2} \right).             
	\end{equation}  
	Therefore,      
	
	\begin{align*}
	\mathbf{max}\left\{\mathbf{Var}_{X}\left(\frac{\widehat{\ell}_2(X)}{\widehat{\ell}_1(X)}\right),\mathbf{Var}_{X}\left(\frac{\widehat{\ell}_1(X)}{\widehat{\ell}_2(X)}\right) \right\} &\leq\left( \frac{\mathbf{Var}\ell_2(X)}{\mathbb{E}(\ell_2(X))^2}  + \frac{\mathbf{Var}\ell_1(X)}{\mathbb{E}(\ell_1(X))^2} \right)\\
	&=  \frac{2\sigma^2 \left[ \frac{\Gamma(\frac{n}{2}+1)}{\Gamma(\frac{n}{2})} - \frac{\Gamma(\frac{n+1}{2})^2}{\Gamma(\frac{n}{2})^2} \right]}{(\sqrt{2}\sigma\cdot\frac{\Gamma(\frac{n+1}{2})}{\Gamma(\frac{n}{2})})^2}  + \frac{\sigma^2\left(1 - \frac{2}{\pi}\right)n}{(n\cdot\sigma\sqrt{2/\pi})^2}  \tag*{from Proposition. \ref{pro:chi} and \ref{pro:half normal}}\\
	&\approx \left(\frac{1}{2n} + (\frac{\pi}{2}-1)\frac{1}{n}\right)  \tag*{from Lemma \ref{lemma:gamma1}}\\
	&= \frac{\pi-1}{2n}
	\end{align*}

	\qedhere
\end{proof}

Because the approximation is widely used in the proof of Theorem \ref{lemma:varx_y}, it is necessary to verify it numerically. As shown in Fig.~\ref{fig:bound}, we use ResNet56 on Cifar100 and ResNet110 on Cifar10 respectively to verify Theorem \ref{lemma:varx_y}. From Fig.~\ref{fig:bound}, we find that the estimationn of Theorem \ref{lemma:varx_y} is reliable, \textit{i.e.}, the estimation $O(\frac{1}{n})$ for $\mathbf{max}\left\{\mathbf{Var}_{X}\left(\frac{\widehat{\ell}_2(X)}{\widehat{\ell}_1(X)}\right),\mathbf{Var}_{X}\left(\frac{\widehat{\ell}_1(X)}{\widehat{\ell}_2(X)}\right) \right\}$ is appropriate.


\begin{figure} [t]
	\centering 
	\includegraphics[height=2in, width=2.5in]{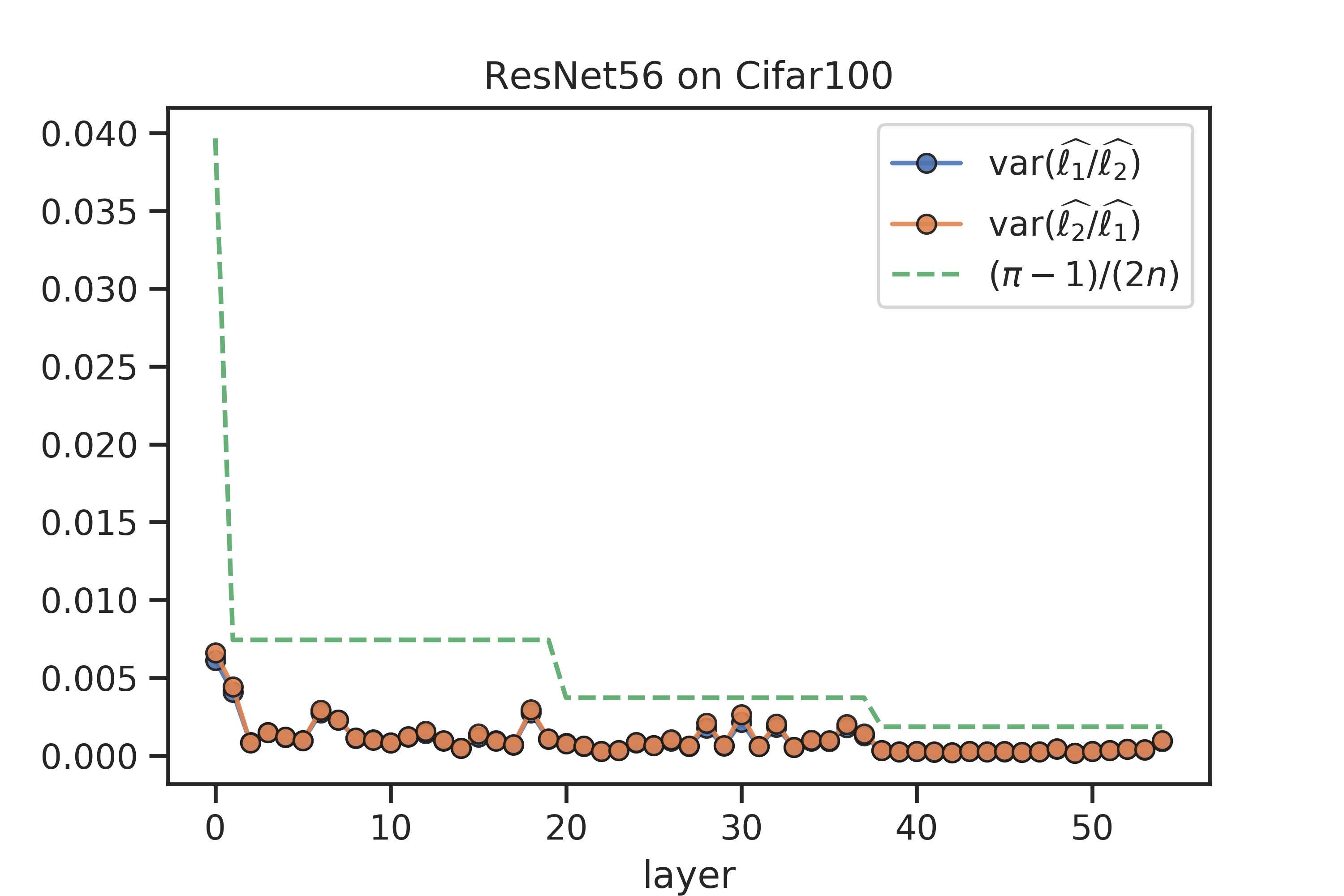} 
	\includegraphics[height=2in, width=2.5in]{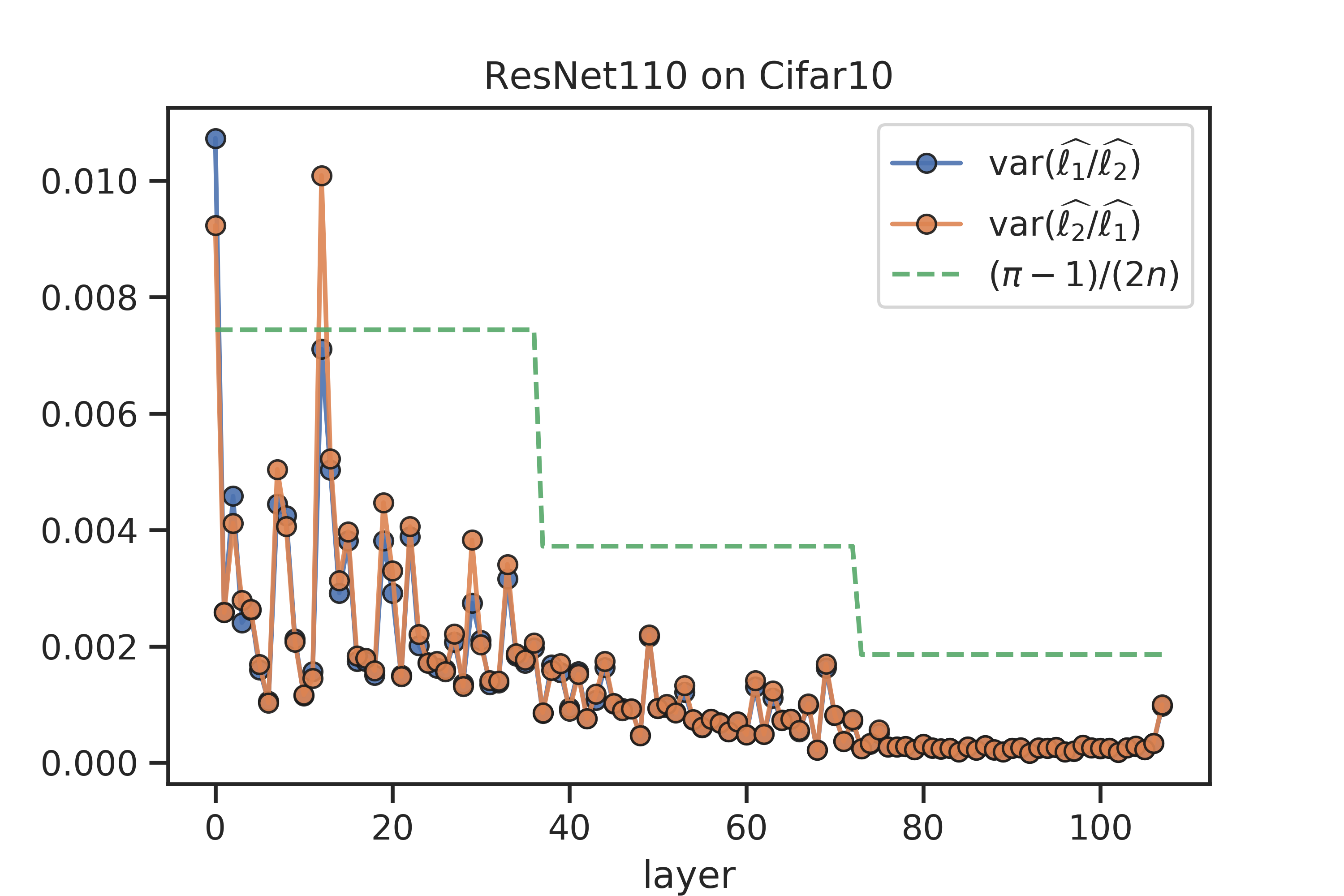} 
	\caption{The approximation of \textbf{Theorem \ref{theo:l1vsl2}}: (Left)~the example about ResNet56; (Right)~the example about ResNet110. } 
	\label{fig:bound}
\end{figure}

\section{Proof of Theorem \ref{theo:fermat}}
\label{proof:fermat}

\begin{proposition} Let $L_{p}^{(\alpha)}(x)$ denotes generalized Laguerre function, and it have following properties:
	\begin{equation}
	\frac{\partial^n}{\partial x^n}L_{p}^{(\alpha)} = (-1)^n L_{p-n}^{(\alpha+n)}(x),
	\label{pro:diff_L}
	\end{equation}
	and for $\alpha > 0$,
	\begin{equation}
	L_{-\frac{1}{2}}^{(\alpha)}(x) >0.
	\label{pro:positive}
	\end{equation}
	
\end{proposition}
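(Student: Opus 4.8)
The plan is to base everything on the standard representation of the generalized Laguerre function through Kummer's confluent hypergeometric function ${}_1F_1$. Since the index $p$ is allowed to be non-integer here (we need $p=-\tfrac12$), the Rodrigues/polynomial formulas do not apply, and the right object to work with is
\[
L_p^{(\alpha)}(x)\;=\;\frac{\Gamma(p+\alpha+1)}{\Gamma(p+1)\,\Gamma(\alpha+1)}\;{}_1F_1(-p;\,\alpha+1;\,x),
\qquad {}_1F_1(a;b;x)=\sum_{k\ge 0}\frac{(a)_k}{(b)_k}\,\frac{x^k}{k!},
\]
which (for $\alpha>-1$) is an entire function of $x$, so the power series may be differentiated term by term. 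With this in hand both claims become power-series manipulations.

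For the derivative identity (\ref{pro:diff_L}) I would first prove the single-step case $\frac{d}{dx}L_p^{(\alpha)}(x)=-L_{p-1}^{(\alpha+1)}(x)$ and then iterate. Differentiating the series term by term and shifting the summation index $k\mapsto k+1$, one uses $(-p)_{k+1}=(-p)\,(1-p)_k$ and $(\alpha+1)_{k+1}=(\alpha+1)\,(\alpha+2)_k$ to collapse the sum into $\tfrac{-p}{\alpha+1}\,{}_1F_1(1-p;\alpha+2;x)$; reinstating the prefactor and simplifying with $\Gamma(p+1)=p\,\Gamma(p)$ and $\Gamma(\alpha+2)=(\alpha+1)\Gamma(\alpha+1)$ shows that the leading constant is precisely that of $-L_{p-1}^{(\alpha+1)}(x)$. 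An induction on $n$, applying the one-step rule to $L_{p-j}^{(\alpha+j)}(x)$ at stage $j$, then yields $\frac{\partial^n}{\partial x^n}L_p^{(\alpha)}(x)=(-1)^n L_{p-n}^{(\alpha+n)}(x)$.

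For the positivity claim (\ref{pro:positive}), specialize $p=-\tfrac12$: the prefactor becomes $\frac{\Gamma(\alpha+\tfrac12)}{\Gamma(\tfrac12)\,\Gamma(\alpha+1)}$, which is strictly positive for $\alpha>0$ (indeed for $\alpha>-\tfrac12$), so it suffices to show ${}_1F_1(\tfrac12;\alpha+1;x)>0$ for every real $x$. When $x\ge 0$ this is immediate, since each coefficient $\frac{(1/2)_k}{(\alpha+1)_k\,k!}$ is positive and the $k=0$ term is $1$, giving a value $\ge 1$. When $x<0$ I would use Kummer's transformation ${}_1F_1(\tfrac12;\alpha+1;x)=e^{x}\,{}_1F_1(\alpha+\tfrac12;\alpha+1;-x)$: because $\alpha+\tfrac12>0$, the series on the right has only non-negative terms evaluated at the non-negative argument $-x$ and is again $\ge 1$, while $e^{x}>0$, so the product is positive. (Equivalently, the Euler integral ${}_1F_1(a;b;x)=\frac{\Gamma(b)}{\Gamma(a)\Gamma(b-a)}\int_0^1 e^{xt}t^{a-1}(1-t)^{b-a-1}\,dt$, valid here since $b-a=\alpha+\tfrac12>0$, has a strictly positive integrand and settles all $x$ at once.)

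The only place requiring care is the bookkeeping of Pochhammer symbols and Gamma prefactors in the one-step derivative identity: one must verify that the re-indexed, normalized series reassembles into $L_{p-1}^{(\alpha+1)}$ with exactly the right multiplicative constant, and that the ${}_1F_1$ series, being entire in $x$, legitimizes term-by-term differentiation. Both are routine but easy to get wrong; everything else reduces to reading off the sign of explicit series coefficients.
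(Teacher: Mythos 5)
Your proposal is correct. Note, however, that the paper does not actually prove this proposition: it is stated as a pair of standard properties of the generalized Laguerre function (used as background in the proof of the Fermat-point theorem), so there is no in-paper argument to compare against. Your self-contained derivation via $L_p^{(\alpha)}(x)=\frac{\Gamma(p+\alpha+1)}{\Gamma(p+1)\Gamma(\alpha+1)}\,{}_1F_1(-p;\alpha+1;x)$ is sound: term-by-term differentiation of the entire ${}_1F_1$ series with the Pochhammer identities $(-p)_{k+1}=(-p)(1-p)_k$ and $(\alpha+1)_{k+1}=(\alpha+1)(\alpha+2)_k$, together with $\Gamma(p+1)=p\,\Gamma(p)$ and $\Gamma(\alpha+2)=(\alpha+1)\Gamma(\alpha+1)$, does reassemble into $-L_{p-1}^{(\alpha+1)}(x)$, and induction gives the $n$-step formula; for positivity, the prefactor at $p=-\tfrac12$ is positive (in fact for all $\alpha>-\tfrac12$), the series argument handles $x\ge 0$, and Kummer's transformation ${}_1F_1(\tfrac12;\alpha+1;x)=e^{x}{}_1F_1(\alpha+\tfrac12;\alpha+1;-x)$ (or the Euler integral, valid since $\alpha+\tfrac12>0$) handles $x<0$, which is precisely the regime the paper needs since it evaluates the function at $-\|F\|_2^2/(2\sigma^2)\le 0$. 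This is a legitimate, slightly more general justification of facts the paper takes for granted; the only care needed, as you note, is that the Gamma prefactor manipulations are valid away from poles of $\Gamma(p+1)$, which is unproblematic for the half-integer indices actually used.
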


\textbf{Theorem \ref{theo:fermat}.} Let random variable $v_i \in \mathbb{R}^k$. They are i.i.d and follow normal distribution $N(\mathbf{0},\sigma^2\mathbf{I}_k)$. For $F$ in $\mathbb{R}^k$, we have
$$
\mathbf{argmin}_F\left\{\mathbb{E}_{v_i \sim N(\mathbf{0},\sigma^2\mathbf{I}_k)} \sum_{i=1}^n ||F-v_i||_2\right\} = \mathbf{0}.
$$
\begin{proof}
	Let $w_i = F - v_i$ and we have $w_i \sim N(F,\sigma^2 \mathbf{I}_k)$, then
	\begin{align*}
	\mathbb{E}_{v_i \sim N(\mathbf{0},\sigma^2\mathbf{I}_k)} \sum_{i=1}^n ||F-v_i||_2  & = \sum_{i=1}^n \mathbb{E}_{v_i \sim N(\mathbf{0},\sigma^2\mathbf{I}_k)}||F-v_i||_2 \\
	& = \sum_{i=1}^n \mathbb{E}_{w_i \sim N(F,\sigma^2\mathbf{I}_k)}||w_i||_2\\
	& = n\cdot \sigma^2\sqrt{\frac{\pi}{2}}\cdot L_{\frac{1}{2}}^{(\frac{k}{2}-1)} \left(-\frac{||F||_2^2}{2\sigma^2}\right)
	\end{align*}
	The reason for the last equation is that $||w_i||_2$ follows scaled noncentral chi distribution\footnote{\href{https://pdfs.semanticscholar.org/cf53/f8c9dfa71bf17649feb86af5d7d8d294b06a.pdf}{Survey of simple,continuous,uniariate probability distribution} and \href{https://en.wikipedia.org/wiki/Noncentral_chi_distribution}{Wikipredia}.} when $w_i \sim N(F,\sigma^2\mathbf{I}_k)$. Let $T(x) = L_{\frac{1}{2}}^{(\frac{k}{2}-1)} \left(-\frac{x^2}{2\sigma^2}\right)$, we calculate the minimum of $T(x)$. From Eq.~(\ref{pro:diff_L}),
	\begin{equation}
	\frac{d}{dx}T(x) = \frac{x}{\sigma^2} \cdot L_{-\frac{1}{2}}^{(\frac{k}{2})}\left(-\frac{x^2}{2\sigma^2}\right).
	\label{first_diff}
	\end{equation}
	Since Eq.~(\ref{pro:positive}), we find that $\frac{d}{dx}T(x) > 0$ when $x > 0 $ and if $x \leq 0$, then $\frac{d}{dx}T(x) \leq 0 $. It means that $T(x)$ gets the minimizer at $||F||_2 = 0$, \textit{i.e.}, $F = \mathbf{0}$.

	\qedhere
\end{proof}

\section{Proof of Theorem \ref{theo:var}}
\label{proof:var}

\begin{lemma} For two random variables $X,Y \in \mathbb{R}^k$ follow $N(\mathbf{0},c^2\cdot \mathbf{I}_k)$ and they are i.i.d. When $k$ is large enough, we have:
	\begin{equation}
	\mathbb{E}\left(\frac{(||X||_2^2 - ||Y||_2^2)^2}{2||X||_2\cdot||Y||_2}\right)\approx 2c^2+\frac{4c^2k+1}{2k^2},
	\label{eratio}
	\end{equation}
	and
	\begin{equation}
	\mathbf{Var}\left(\frac{(||X||_2^2 - ||Y||_2^2)^2}{2||X||_2\cdot||Y||_2}\right)\lesssim 8c^4 + \frac{16c^4k+c^2}{k^2},
	\label{varratio}
	\end{equation}
	\label{lemma3}
\end{lemma}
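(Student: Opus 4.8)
The plan is to reduce everything to explicit moments of $\chi$ and $\chi^{2}$ variables. Write $R=\|X\|_2$ and $S=\|Y\|_2$. By hypothesis $R$ and $S$ are independent and identically distributed, $R^{2},S^{2}\sim c^{2}\chi^{2}_{k}$, and $R,S$ follow the scaled‑chi (Amoroso) law of Proposition~\ref{pro:chi}, so that $\mathbb{E}[R^{j}]=2^{j/2}c^{j}\,\Gamma(\tfrac{k+j}{2})/\Gamma(\tfrac{k}{2})$ for every integer $j>-k$. In particular I would record $\mathbb{E}[R^{2}]=c^{2}k$, $\mathbb{E}[R^{4}]=c^{4}k(k+2)$, $\mathbb{E}[R^{6}]=c^{6}k(k+2)(k+4)$, $\mathbb{E}[R^{-2}]=1/\big(c^{2}(k-2)\big)$, $\mathbb{E}[R]=\sqrt2\,c\,\Gamma(\tfrac{k+1}{2})/\Gamma(\tfrac{k}{2})$ and $\mathbb{E}[R^{-1}]=\tfrac{1}{\sqrt2\,c}\,\Gamma(\tfrac{k-1}{2})/\Gamma(\tfrac{k}{2})$; these negative moments are the only place where ``$k$ large enough'' is genuinely needed (for finiteness).

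For the mean I would expand $(R^{2}-S^{2})^{2}=R^{4}-2R^{2}S^{2}+S^{4}$ and divide by $2RS$, so the target equals $\mathbb{E}\big[\tfrac{R^{3}}{2S}-RS+\tfrac{S^{3}}{2R}\big]$. Independence together with $R$ and $S$ being identically distributed collapses this to $\mathbb{E}[R^{3}]\mathbb{E}[R^{-1}]-(\mathbb{E}[R])^{2}$. Substituting the Gamma‑ratio formulas and simplifying via $\Gamma(\tfrac{k+3}{2})=\tfrac{k+1}{2}\tfrac{k-1}{2}\Gamma(\tfrac{k-1}{2})$ and $\Gamma(\tfrac{k+1}{2})=\tfrac{k-1}{2}\Gamma(\tfrac{k-1}{2})$, the two terms combine into the closed form $c^{2}(k-1)\big(\Gamma(\tfrac{k-1}{2})/\Gamma(\tfrac{k}{2})\big)^{2}$. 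I would then feed this into Stirling's formula (Proposition~\ref{pro:stir}), or equivalently the Gamma‑ratio estimates of Lemma~\ref{lemma:gamma1}, to expand $\big(\Gamma(\tfrac{k-1}{2})/\Gamma(\tfrac{k}{2})\big)^{2}=\tfrac2k\big(1+O(1/k)\big)$ and read off the leading $2c^{2}$ plus a correction of order $1/k$, matching $\tfrac{4c^{2}k+1}{2k^{2}}$. The same answer and correction can be reached directly from the first‑order estimate of $\mathbb{E}(A/B)$ in Eq.~(\ref{estimation:E}) with $A=(R^{2}-S^{2})^{2}$ and $B=2RS$, since $\mathbb{E}[A]=2\,\mathbf{Var}(R^{2})=4c^{4}k$ while $\mathbb{E}[B]=2(\mathbb{E}R)^{2}$ and $\mathbf{Var}(B)=4\big((\mathbb{E}R^{2})^{2}-(\mathbb{E}R)^{4}\big)$ are all explicit.

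For the variance I would write $\mathbf{Var}(Z)=\mathbb{E}[Z^{2}]-(\mathbb{E}[Z])^{2}$ with $Z$ the quantity in the statement. Expanding $(R^{2}-S^{2})^{4}$ by the binomial theorem and dividing by $4R^{2}S^{2}$ gives $Z^{2}=\tfrac{R^{6}}{4S^{2}}-R^{4}+\tfrac32R^{2}S^{2}-S^{4}+\tfrac{S^{6}}{4R^{2}}$; taking expectations, independence and identical distribution reduce this to $\tfrac12\mathbb{E}[R^{6}]\mathbb{E}[R^{-2}]-2\mathbb{E}[R^{4}]+\tfrac32(\mathbb{E}[R^{2}])^{2}$. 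Plugging in the $\chi^{2}$ moments, the $k^{3}$ and $k^{2}$ powers cancel and there remains the clean value $\mathbb{E}[Z^{2}]=\tfrac{12c^{4}k}{k-2}$. Subtracting the square of the mean from the previous step and expanding everything in powers of $1/k$ leaves the leading term $12c^{4}-4c^{4}=8c^{4}$, everything else being $O(1/k)$; collecting these corrections yields the claimed $\lesssim 8c^{4}+\tfrac{16c^{4}k+c^{2}}{k^{2}}$.

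The routine part is the algebra — the two binomial expansions and the cancellations that make $\mathbb{E}[Z]$ and $\mathbb{E}[Z^{2}]$ come out clean. The part that will need care is the Gamma‑ratio bookkeeping: showing that all the subleading contributions genuinely combine into, or are dominated by, the stated $\tfrac1k$ and $\tfrac1{k^{2}}$ terms, which is where Stirling's formula / Lemma~\ref{lemma:gamma1} must be applied to sufficient order. It is also worth stating explicitly that ``$\approx$'' and ``$\lesssim$'' are meant in the large‑$k$ asymptotic sense with a genuinely lower‑order residual, and that $k$ must be large enough for $\mathbb{E}[R^{-1}]$ and $\mathbb{E}[R^{-2}]$ to exist.
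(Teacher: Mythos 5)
Your proposal is correct in substance but follows a genuinely different route from the paper. The paper never works with the ratio's exact moments: it computes only the first two moments of the numerator $(\|X\|_2^2-\|Y\|_2^2)^2$ and of the denominator $2\|X\|_2\cdot\|Y\|_2$ (via Proposition~\ref{pro:chi} and Lemma~\ref{lemma:gamma1}) and then feeds them into generic delta-method estimates --- Lemma~\ref{lemma:varx_y} for the variance (with the covariance term simply dropped to get an upper bound) and Eq.~(\ref{estimation:E}) for the mean; the specific remainders $\tfrac{4c^2k+1}{2k^2}$ and $\tfrac{16c^4k+c^2}{k^2}$ in the statement are precisely the output of that approximation scheme. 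You instead expand $Z=(R^2-S^2)^2/(2RS)$ and $Z^2$ termwise and exploit independence together with positive and negative moments of the scaled chi law, arriving at the closed forms $\mathbb{E}[Z]=c^2(k-1)\bigl(\Gamma(\tfrac{k-1}{2})/\Gamma(\tfrac{k}{2})\bigr)^2$ and $\mathbb{E}[Z^2]=\tfrac{12c^4k}{k-2}$, both of which check out; this is exact, requires no unjustified omission of covariance terms, and cleanly recovers the leading behaviour $\mathbb{E}[Z]\to 2c^2$ and $\mathbf{Var}(Z)\to 8c^4$, which is all the lemma is used for downstream (both quantities are small when $c$ is small). One caveat: your subleading terms will not literally reproduce the displayed corrections. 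Expanding your closed forms gives $\mathbb{E}[Z]\approx 2c^2+\tfrac{c^2}{k}$ rather than $2c^2+\tfrac{2c^2}{k}+\tfrac{1}{2k^2}$ as in Eq.~(\ref{eratio}), and $\mathbf{Var}(Z)\approx 8c^4+\tfrac{20c^4}{k}$, which for large $k$ actually sits slightly above the right-hand side of Eq.~(\ref{varratio}); so you should present your computation as exact with its own $O(1/k)$ corrections, not claim that it ``matches'' the stated remainders, which are artifacts of the paper's Taylor approximation (in particular of neglecting covariance contributions). With that adjustment, your argument establishes the leading-order content of Lemma~\ref{lemma3}, and does so more rigorously than the original derivation.
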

\begin{proof}
	According to \textbf{Proposition 3} and \textbf{Lemma 2}, it is easy to know, when $k$ is large enough, that
	\begin{equation}
	\mathbb{E}\left(2||X||_2\cdot||Y||_2\right)=2c^2k,\quad  \mathbf{Var}\left(2||X||_2\cdot||Y||_2\right)=c^2+4c^4k,
	\label{evar1}
	\end{equation}  
	and
	\begin{equation}
	\mathbb{E}\left((||X||_2^2 - ||Y||_2^2)^2\right)=4c^4k,\quad  \mathbf{Var}\left((||X||_2^2 - ||Y||_2^2)^2\right)=16c^8(2k^2+3k).
	\label{evar2}
	\end{equation}      
	Since Lemma \ref{lemma:varx_y}, we have an estimation 
	\begin{align*}
	\mathbf{Var}\left(\frac{(||X||_2^2 - ||Y||_2^2)^2}{2||X||_2\cdot||Y||_2}\right) & \leq \left(\frac{\mathbb{E}(||X||_2^2 - ||Y||_2^2)^2}{\mathbb{E}2||X||_2\cdot||Y||_2}\right)^2 \left( \frac{\mathbf{Var}(||X||_2^2 - ||Y||_2^2)^2}{\mathbb{E}(||X||_2^2 - ||Y||_2^2)^2}  + \frac{\mathbf{Var}(2||X||_2\cdot||Y||_2)^2)}{\mathbb{E}(2||X||_2\cdot||Y||_2)^2} \right)    \\
	& \approx  \left(\frac{4c^4k}{2c^2k}\right)^2\cdot \left(\frac{c^2+4c^4k}{4c^4k}+\frac{16c^8(2k^2+3k)}{16c^8k^2}\right)    \tag*{Since  Eq.(\ref{evar1}) and Eq.(\ref{evar2})}\\
	& = 8c^4 + \frac{16c^4k+c^2}{k^2}.
	\end{align*}

	Therefore,
	\begin{align*}
	\mathbb{E}\left(\frac{(||X||_2^2 - ||Y||_2^2)^2}{2||X||_2\cdot||Y||_2}\right) & \approx \frac{\mathbb{E}(||X||_2^2 - ||Y||_2^2)^2}{\mathbb{E}2||X||_2\cdot||Y||_2} + \mathbf{Var}(2||X||_2\cdot||Y||_2)\cdot \frac{\mathbb{E}(||X||_2^2 - ||Y||_2^2)^2}{(\mathbb{E}2||X||_2\cdot||Y||_2)^3}  \tag*{Since  Eq.(\ref{estimation:E})}\\
	& \approx  \frac{4c^4k}{2c^2k}+\frac{4c^4k}{8c^6k^3}\cdot(c^2+4c^4k)    \tag*{Since  Eq.(\ref{evar1}) and Eq.(\ref{evar2})}\\
	& = 2c^2 + \frac{4c^2k+1}{2k^2}.
	\end{align*}    
\end{proof}
Note that, the approximation is widely used in the proof of Eq.(\ref{eratio}) and Eq.(\ref{varratio}). Hence, it is also necessary to verify it numerically. As shown in Fig.~\ref{fig:esti}, the estimation is appropriate. According to \textbf{Lemma}~\ref{lemma3}, the mathematical expectation and variance of the ratio of $(||X||_2^2 - ||Y||_2^2)^2$ and $2||X||_2\cdot||Y||_2$ are both close to 0 when $k$ is large enough and $c$ is small enough. that is,
\begin{equation}
2(||X||_2\cdot||Y||_2) \gg (||X||_2^2 - ||Y||_2^2)^2.
\label{>>}
\end{equation}
By the way, the convolutional filters easily meet the condition that $k$ is large enough.

%

\begin{figure} [t]
	\centering 
	\includegraphics[height=2in, width=2.5in]{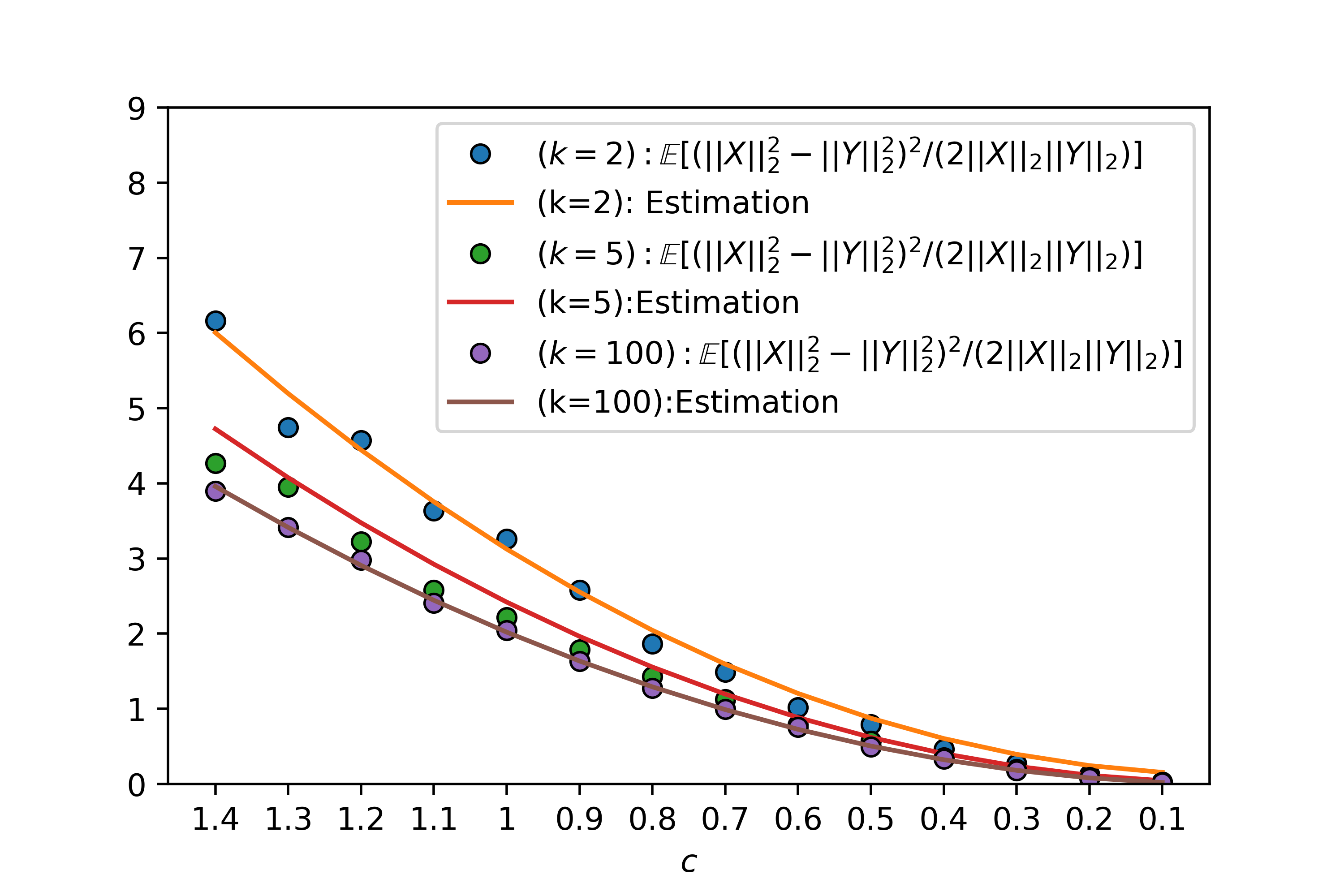} 
	\includegraphics[height=2in, width=2.5in]{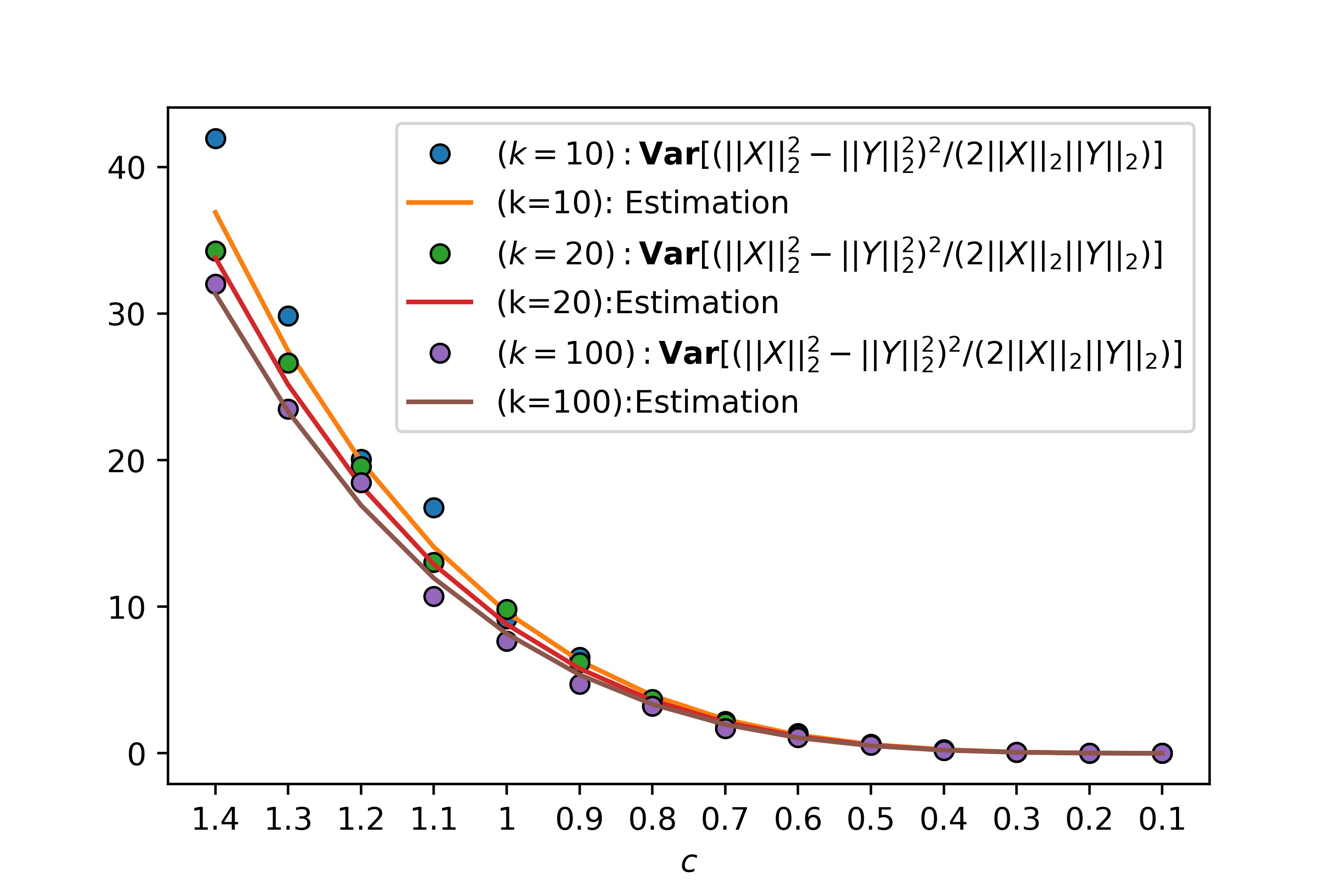} 
	\caption{(Left)~The numerical verification of Eq.(\ref{eratio}) and (Right)~The numerical verification of Eq.(\ref{varratio}). $X$ and $Y$ follow $N(\mathbf{0},c^2\cdot I_k)$.} 
	\label{fig:esti}
\end{figure}

\textbf{Theorem \ref{theo:var}.} For $n$ random variables $a_i \in \mathbb{R}^k$ follow $N(\mathbf{0},c^2\cdot \mathbf{I}_k)$.When $k$ is large enough, we have such an estimation:
$$
\mathbf{Var}_{a_i}\frac{F_1(a_i)}{F_2(a_i)}\approx \frac{1}{2nk},\quad \mathbf{Var}_{a_i}\frac{F_2(a_i)}{F_1(a_i)}\approx \frac{1}{2nk}.
$$  
where $F_1(a_i) = \sum_{i=1}^n||a_i||_2/ \mathbb{E}(\sum_{i=1}^n||a_i||_2)$ and $F_2(a_i) = \sum_{i=1}^n||a_i||_2^2/ \mathbb{E}(\sum_{i=1}^n||a_i||_2^2)$.
\begin{proof}
	Since Eq.~(\ref{equ:mean_chi}) and Eq.~(\ref{equ:var_chi}), we have
	\begin{equation}
	\mathbf{Var}_{a_i}\frac{F_1(a_i)}{F_2(a_i)} = \left(\frac{nc^2k}{nc\sqrt{k}}\right)^2\cdot\mathbf{Var}_{a_i}\left(\frac{\sum_{i=1}^n||a_i||_2}{\sum_{i=1}^n||a_i||_2^2}\right).
	\label{temptemptemp1}
	\end{equation}
	and 
	\begin{equation}
	\mathbf{Var}_{a_i}\frac{F_2(a_i)}{F_1(a_i)} = \left(\frac{nc\sqrt{k}}{nc^2k}\right)^2\cdot\mathbf{Var}_{a_i}\left(\frac{\sum_{i=1}^n||a_i||_2^2}{\sum_{i=1}^n||a_i||_2}\right).
	\label{temptemptemp2}
	\end{equation}

	According to Lagrange's identity, we have
	\begin{align*}
	\left(\sum_{i=1}^n||a_i||_2^2\right)\left(\sum_{i=1}^n1\right) &= \left(\sum_{i=1}^n||a_i||_2\right)^2 + \sum_{1\leq i<j\leq n}(||a_i||_2^2 - ||a_j||_2^2)^2 \\
	& = \sum_{i=1}^n||a_i||_2^2 + \sum_{1\leq i<j\leq n}(||a_i||_2\cdot||a_j||_2) + 2\sum_{1\leq i<j\leq n}(||a_i||_2^2 - ||a_j||_2^2)^2\\
	& \approx  \sum_{i=1}^n||a_i||_2^2 + 2\sum_{1\leq i<j\leq n}(||a_i||_2\cdot||a_j||_2)\tag*{Since  Eq.~(\ref{>>})}\\
	& = \left(\sum_{i=1}^n||a_i||_2\right)^2
	\end{align*} 
	so we have
	\begin{equation}
	\mathbf{Var}_{a_i \sim N(\mathbf{0},c^2\cdot\mathbf{I}_k)}\frac{\sum_{i=1}^n||a_i||_2}{\sum_{i=1}^n||a_i||_2^2} \approx \mathbf{Var}_{a_i \sim N(\mathbf{0},c^2\cdot\mathbf{I}_k)}\frac{n}{\sum_{i=1}^n||a_i||_2}
	\label{temp2}
	\end{equation}
	By central limit theorem, we have $\sqrt{n}(\frac{1}{n}\sum_{i=1}^n||a_i||_2-\mu)\sim N(\mathbf{0},\sigma^2)$. And let $g(x) = \frac{1}{x}$, we can use Delta method\footnote{\url{https://en.wikipedia.org/wiki/Delta_method}} to find the distribution of $g(\frac{1}{n}\sum_{i=1}^n||a_i||_2)$:
	\begin{equation}
	\sqrt{n}\left(g(\frac{\sum_{i=1}^n||a_i||_2}{n})-g(\mu)) \right) \sim N(0,\sigma^2\cdot [g\prime(\mu)]^2) = N(0,\sigma^2\cdot \frac{1}{\mu^4}).
	\label{temp3}
	\end{equation} 
	where $\mu$ and $\sigma^2$ denote the mean and variance of $||a_i||_2$ respectively. From Eq.~(\ref{temp2}), we have
	\begin{align*}
	\mathbf{Var}_{a_i \sim N(\mathbf{0},c^2\cdot\mathbf{I}_k)}\frac{\sum_{i=1}^n||a_i||_2}{\sum_{i=1}^n||a_i||_2^2} &\approx \mathbf{Var}_{a_i \sim N(\mathbf{0},c^2\cdot\mathbf{I}_k)}\frac{n}{\sum_{i=1}^n||a_i||_2}\\
	&= \sigma^2\cdot \frac{1}{\mu^4\cdot n}\tag*{Since  Eq.~(\ref{temp3})}\\
	&= 2c^2 \left[ \frac{\Gamma(\frac{k}{2}+1)}{\Gamma(\frac{k}{2})} - \frac{\Gamma(\frac{k+1}{2})^2}{\Gamma(\frac{k}{2})^2} \right]\cdot \frac{1}{(\sqrt{2}c\cdot \frac{\Gamma(\frac{k+1}{2})}{\Gamma(\frac{k}{2})})^4\cdot n}\tag*{Since  Eq.~(\ref{equ:mean_chi}) and Eq.~(\ref{equ:var_chi})}\\
	&= \frac{1}{2c^2\cdot nk^2}\tag*{Since  Lemma. \ref{lemma:gamma1}}\\
	\end{align*} 
	
	Since Eq.~(\ref{temptemptemp1}), we have
	\begin{equation}
	\mathbf{Var}_{a_i}\frac{F_1(a_i)}{F_2(a_i)} = \left(\frac{nc^2k}{nc\sqrt{k}}\right)^2\cdot\mathbf{Var}_{a_i}\left(\frac{\sum_{i=1}^n||a_i||_2}{\sum_{i=1}^n||a_i||_2^2}\right) \approx \frac{1}{2nk}.
	\label{temptemptemp3}
	\end{equation}

	Similar to Eq.~(\ref{temp2}), 
	\begin{equation}
	\mathbf{Var}_{a_i \sim N(\mathbf{0},c^2\cdot\mathbf{I}_k)}\frac{\sum_{i=1}^n||a_i||_2^2}{\sum_{i=1}^n||a_i||_2} \approx \mathbf{Var}_{a_i \sim N(\mathbf{0},c^2\cdot\mathbf{I}_k)}\frac{\sum_{i=1}^n||a_i||_2}{n}
	\label{temp4}
	\end{equation}
	
	\begin{align*}
	\mathbf{Var}_{a_i \sim N(\mathbf{0},c^2\cdot\mathbf{I}_k)}\frac{\sum_{i=1}^n||a_i||_2^2}{\sum_{i=1}^n||a_i||_2} &\approx \mathbf{Var}_{a_i \sim N(\mathbf{0},c^2\cdot\mathbf{I}_k)}\frac{\sum_{i=1}^n||a_i||_2}{n}\tag*{Similar to Eq.~(\ref{temp2})}\\
	&= \sigma^2\cdot \frac{1}{n}\tag*{Since central limit theorem}\\
	&= 2c^2 \left[ \frac{\Gamma(\frac{k}{2}+1)}{\Gamma(\frac{k}{2})} - \frac{\Gamma(\frac{k+1}{2})^2}{\Gamma(\frac{k}{2})^2} \right]\cdot \frac{1}{n}\tag*{Since Eq.~(\ref{equ:var_chi})}\\
	&= \frac{c^2}{2n}\tag*{Since  Lemma. \ref{lemma:gamma1}}\\
	\end{align*} 
	Since Eq.~(\ref{temptemptemp2}), we have
	\begin{equation}
	\mathbf{Var}_{a_i}\frac{F_2(a_i)}{F_1(a_i)} = \left(\frac{nc\sqrt{k}}{nc^2k}\right)^2\cdot\mathbf{Var}_{a_i}\left(\frac{\sum_{i=1}^n||a_i||_2^2}{\sum_{i=1}^n||a_i||_2}\right) \approx \frac{1}{2nk}.
	\label{temptemptemp4}
	\end{equation}
	From Eq.(\ref{temptemptemp3}) and Eq.(\ref{temptemptemp4}), \textbf{Theorem \ref{theo:var}} holds.
	
	\qedhere
\end{proof}
\begin{figure} [t]
	\centering 
	\includegraphics[height=2in, width=2.5in]{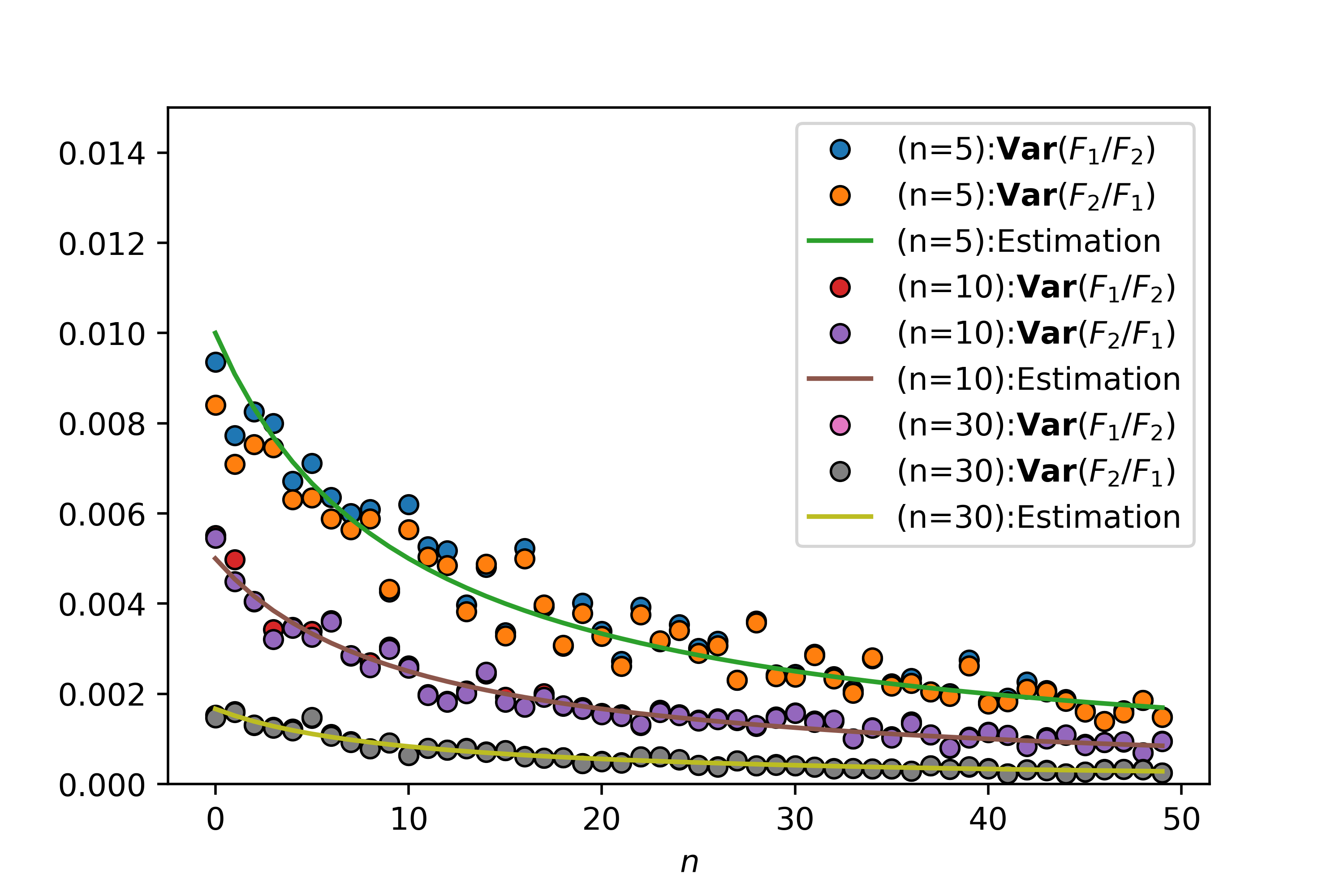} 
	\includegraphics[height=2in, width=2.5in]{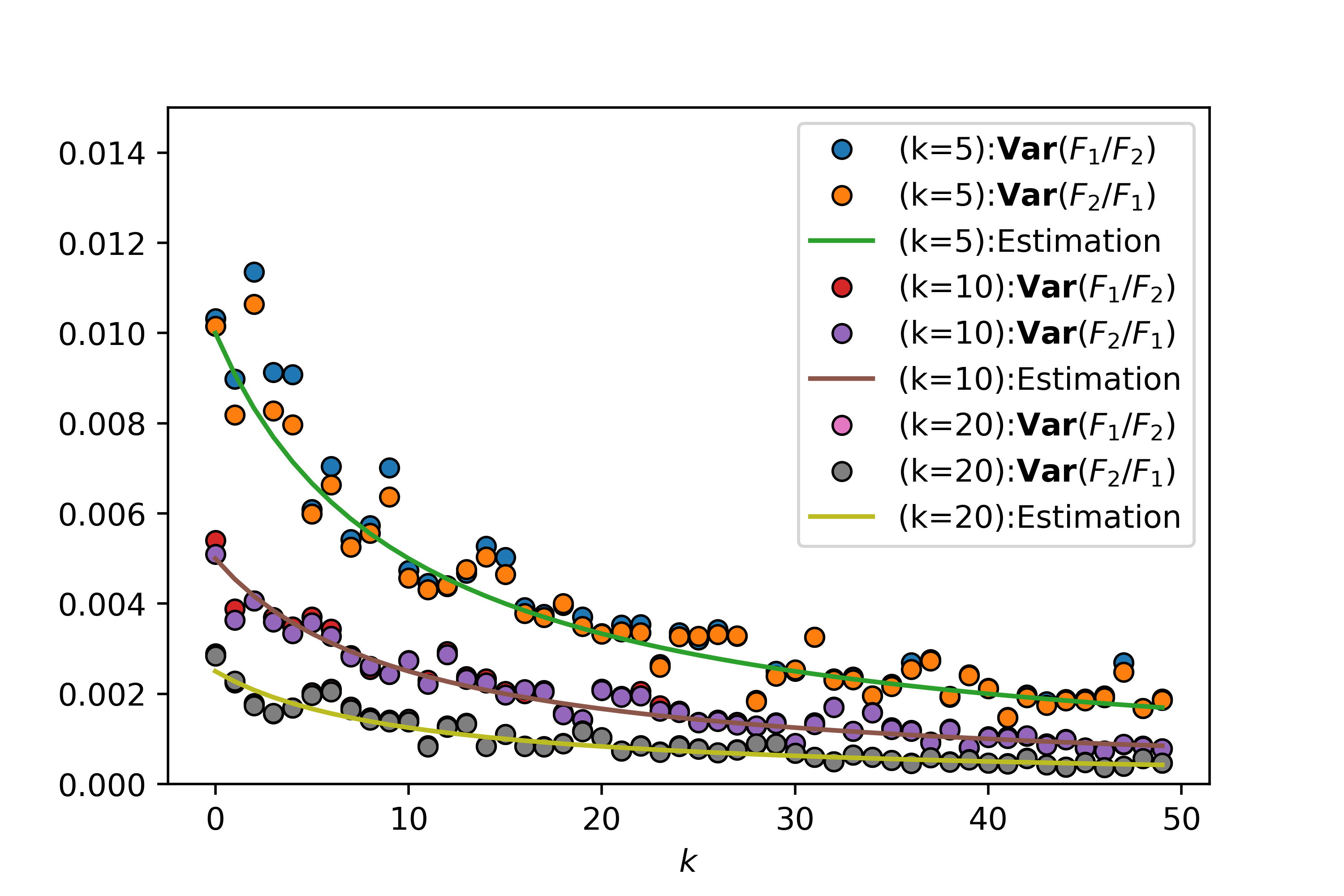} 
	\caption{A numerical verification of \textbf{Theorem \ref{theo:var}}, where $F_1 = \sum_{i=1}^n||a_i||_2/ \mathbb{E}(\sum_{i=1}^n||a_i||_2)$ and $F_2 = \sum_{i=1}^n||a_i||_2^2/ \mathbb{E}(\sum_{i=1}^n||a_i||_2^2)$.  $a_i$ follow $N(\mathbf{0},0.01^2\cdot I_k)$.} 
	\label{fig:theo3ver}
\end{figure}

In Fig.~\ref{fig:theo3ver}, we also show a numerical verification of \textbf{Theorem \ref{theo:var}}.

\section{Proof of Theorem \ref{theo:ggdl}}
\label{proof:ggdl}

\begin{proposition} For a $n \times m$ random matrix $(a_{ij})_{n \times m}$, where $a_{ij} \sim N(0,\sigma^2)$. And Eq.~(\ref{pro:matrix}) holds with probability 1.
	\begin{equation}
	\mathbf{rank}((a_{ij})_{n \times m}) = \mathbf{min}(m,n).
	\end{equation}
	\label{pro:matrix}
\end{proposition}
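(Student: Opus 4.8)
The plan is to reduce the statement to the claim that the rows of $(a_{ij})_{n\times m}$ are almost surely linearly independent, and then to deduce that from the single elementary fact that a linear subspace of $\mathbb{R}^m$ of dimension strictly less than $m$ has Lebesgue measure zero, together with the absolute continuity of the Gaussian law.

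First I would assume without loss of generality that $n \le m$; otherwise I replace the matrix by its transpose, which has the same rank and again has i.i.d.\ $N(0,\sigma^2)$ entries, so the two cases are symmetric. Write $r_1,\dots,r_n \in \mathbb{R}^m$ for the rows; these are independent random vectors with the absolutely continuous law $N(\mathbf{0},\sigma^2\mathbf{I}_m)$. The key observation is that for each $k$ with $1 \le k \le n$, the subspace $\mathrm{span}(r_1,\dots,r_{k-1})$ has dimension at most $k-1 \le n-1 < m$ (with the convention that this span is $\{\mathbf{0}\}$ when $k=1$), hence it is always a proper subspace and therefore Lebesgue-null. Since $r_k$ has a density and is independent of $r_1,\dots,r_{k-1}$, conditioning on the latter and using that a fixed proper subspace carries zero Gaussian mass gives $P\big(r_k \in \mathrm{span}(r_1,\dots,r_{k-1})\big) = 0$.

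Then I would take a union bound over $k = 1,\dots,n$: with probability one, no $r_k$ lies in the span of the preceding rows, which is exactly the statement that $r_1,\dots,r_n$ are linearly independent. Consequently $\mathbf{rank}\big((a_{ij})_{n\times m}\big) = n = \mathbf{min}(m,n)$ almost surely. There is no genuine analytic obstacle here; the only point requiring care is the measure-theoretic bookkeeping in the conditioning step, namely that ``a fixed proper subspace is Gaussian-null, hence the random proper subspace spanned by the earlier rows is Gaussian-null with probability one.'' So this Proposition is essentially a routine fact, recorded for use in the proof of Theorem~\ref{theo:ggdl}.

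An equivalent route, which I would mention but not carry out in detail, is to observe that the set of rank-deficient $n\times m$ matrices is the common vanishing locus of all $\mathbf{min}(m,n)\times\mathbf{min}(m,n)$ minors, hence an algebraic subvariety of positive codimension in $\mathbb{R}^{nm}$ and therefore Lebesgue-null; absolute continuity of the joint density of the $a_{ij}$ with respect to Lebesgue measure then yields probability zero immediately, without any induction or conditioning.
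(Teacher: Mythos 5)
Your proof is correct. Note, however, that the paper itself offers no proof of this Proposition at all: it is simply recorded as a standard fact at the head of Appendix~\ref{proof:ggdl} and then invoked in part~(3) of Theorem~\ref{theo:ggdl} to say that the relevant difference vectors have full rank (equivalently are linearly independent) with probability~1. So there is nothing in the paper to compare your argument against; what you have done is supply the missing justification. Your main route -- reduce to $n\le m$ by transposing, then condition row by row and use that a fixed proper subspace of $\mathbb{R}^m$ is Lebesgue-null and hence carries zero mass under the absolutely continuous law $N(\mathbf{0},\sigma^2\mathbf{I}_m)$, finishing with a finite union bound -- is sound; the only point worth stating explicitly is the one you already flag, that linear dependence of $r_1,\dots,r_n$ forces some $r_k$ to lie in the span of its predecessors (take the largest index with a nonzero coefficient), so the union of your null events really does cover the rank-deficient event. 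Your second route, observing that the rank-deficient matrices form the common zero set of all $\mathbf{min}(m,n)\times\mathbf{min}(m,n)$ minors, at least one of which is a not-identically-zero polynomial on $\mathbb{R}^{nm}$, is even shorter and avoids the conditioning bookkeeping entirely; either argument would serve the paper's purpose.
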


\begin{lemma} Let $v_0,v_1,...,v_k$ be the $k+1$ vectors in $n$ dimensional Euclidean space $V$ and $k \leq n$.
	If $\mathbf{rank}(v_1-v_0,v_2-v_0,...,v_k-v_0)$ = $n$, then $\forall x \in V$, $\exists \lambda_i (0\leq i \leq k)$, s.t. 
	\begin{equation}
	x = \sum_{i=0}^k \lambda_i \cdot v_i,
	\end{equation}
	and $\sum_{i=0}^k\lambda_i = 1$. We call $\mathbf{\lambda} = (\lambda_0,\lambda_1,...,\lambda_k)$ the generalized barycentric coordinate with respect to $(v_0,v_1,...,v_k)$. (In general, barycentric coordinate is a concept in Polytope)
	\label{lemma:linear_exp}
\end{lemma}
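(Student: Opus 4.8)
The plan is to reduce the statement to the single fact that the difference vectors form a basis of $V$. First I would observe that the hypothesis is more rigid than it appears: a family of $k$ vectors has rank at most $k$, so $\mathbf{rank}(v_1-v_0,\dots,v_k-v_0)=n$ together with $k\le n$ forces $k=n$. Consequently the $n$ vectors $w_i := v_i - v_0$ $(1\le i\le k)$ are linearly independent in the $n$-dimensional space $V$, i.e.\ they constitute a basis of $V$. This structural observation is the whole content of the argument.

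Next, fix an arbitrary $x\in V$ and expand the displaced vector $x-v_0$ in this basis: there exist (unique) scalars $\mu_1,\dots,\mu_k$ with $x-v_0=\sum_{i=1}^k\mu_i w_i=\sum_{i=1}^k\mu_i(v_i-v_0)$. Rearranging,
\[
x = v_0 + \sum_{i=1}^k \mu_i (v_i - v_0) = \Bigl(1-\sum_{i=1}^k\mu_i\Bigr)v_0 + \sum_{i=1}^k \mu_i v_i .
\]
Setting $\lambda_0 := 1-\sum_{i=1}^k\mu_i$ and $\lambda_i := \mu_i$ for $1\le i\le k$ yields $x=\sum_{i=0}^k\lambda_i v_i$ and $\sum_{i=0}^k\lambda_i=\lambda_0+\sum_{i=1}^k\mu_i=1$, which is exactly the asserted existence of a generalized barycentric coordinate. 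Uniqueness, if one wishes to record it, is immediate from uniqueness of the $\mu_i$ together with the fact that the map $(\mu_1,\dots,\mu_k)\mapsto(\lambda_0,\dots,\lambda_k)$ is an invertible affine change of coordinates.

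There is essentially no obstacle here; the argument is elementary linear algebra. The only point requiring a little care is the opening remark: without first noting that the hypothesis forces $k=n$, one might be tempted to argue in a genuine $k<n$ regime, where the conclusion fails (the affine hull of $v_0,\dots,v_k$ would be a proper affine subspace of $V$). Once that reduction is made, the rest is the direct computation displayed above.
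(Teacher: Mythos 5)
Your argument is correct and matches the paper's own proof essentially verbatim: both note that the difference vectors $v_i-v_0$ span (indeed form a basis of) $V$, write $x-v_0$ as a linear combination of them, and regroup the coefficients into $\lambda_0=1-\sum_i t_i$, $\lambda_i=t_i$. Your preliminary remark that the hypotheses force $k=n$, and the uniqueness aside, are harmless refinements of the same argument.
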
{}
\begin{proof}
	Note that $v_i$ is the element of $n$ dimensional linear space $V$ and $\mathbf{rank}(v_1-v_0,v_2-v_0,...,v_k-v_0) = n$. It means $(v_1-v_0,v_2-v_0,...,v_k-v_0)$ form a set of basis in the linear space $V$. $\forall x \in V$,  $x - v_0$ can be expressed linearly by them, \textit{i.e.},$\exists t_i (1 \leq i \leq k)$ s.t.
	\begin{align*}
	x & =v_0 + \sum_{i=1}^k t_i(v_i - v_0)\\
	& = (1 - \sum_{i=1}^k t_i)v_0 + \sum_{i=1}^k t_iv_i.
	\end{align*}{}
	Let $\lambda_0 = (1 - \sum_{i=1}^k t_i)$ and $\lambda_i = t_i (1 \leq i \leq k)$,  Lemma ~\ref{lemma:linear_exp}  holds.
	\vspace{0.5cm}
\end{proof}

\begin{lemma}  Let $v_0,v_1,...,v_k$ be the $k+1$ vectors in $n$ dimensional Euclidean space $V$. $\forall a,b \in V$, and the generalized barycentric coordinate of $a,b$ with respect to $(v_0,v_1,...,v_k)$ are $\mathbf{\lambda} = (\lambda_0,\lambda_1,...,\lambda_k)^T$ and $\mathbf{\mu} = (\mu_0,\mu_1,...,\mu_k)^T$,respectively. Then
	\begin{equation}
	||a-b||_2^2 = (\mathbf{\lambda} - \mathbf{\mu})^TD(\mathbf{\lambda} - \mathbf{\mu}),
	\end{equation}
	where $D = (-\frac{1}{2}d_{ij})_{(k+1)\times(k+1)}$, and $d_{ij} = ||v_i-v_j||_2^2$.
	\label{lemma:jiong}
\end{lemma}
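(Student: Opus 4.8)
The plan is to reduce this quadratic-form identity to an elementary expansion in inner products, the only genuine input being the normalization of barycentric coordinates, \emph{i.e.} that the weights sum to $1$. First I would record the two representations supplied by the preceding lemma (Lemma~\ref{lemma:linear_exp}): $a = \sum_{i=0}^k \lambda_i v_i$ and $b = \sum_{i=0}^k \mu_i v_i$, with $\sum_{i=0}^k \lambda_i = \sum_{i=0}^k \mu_i = 1$. Put $\mathbf{\nu} := \mathbf{\lambda} - \mathbf{\mu}$, with components $\nu_i = \lambda_i - \mu_i$; the affine constraints then give the linear relation $\sum_{i=0}^k \nu_i = 0$, while $a - b = \sum_{i=0}^k \nu_i v_i$. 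Since $(\mathbf{\lambda}-\mathbf{\mu})^T D (\mathbf{\lambda}-\mathbf{\mu}) = -\tfrac12 \sum_{i,j} \nu_i \nu_j \,\|v_i - v_j\|_2^2$ by the definition of $D$, it suffices to prove
$$\Big\| \sum_{i=0}^k \nu_i v_i \Big\|_2^2 \;=\; -\frac12 \sum_{i=0}^k \sum_{j=0}^k \nu_i \nu_j \, \|v_i - v_j\|_2^2 .$$

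Next I would expand both sides. By bilinearity of the inner product the left side equals $\sum_{i,j} \nu_i \nu_j \langle v_i, v_j\rangle$. For the right side I apply the polarization identity $\|v_i - v_j\|_2^2 = \|v_i\|_2^2 + \|v_j\|_2^2 - 2\langle v_i, v_j\rangle$, which gives
$$-\frac12 \sum_{i,j} \nu_i \nu_j \|v_i - v_j\|_2^2 = -\frac12\Big( \sum_{i,j}\nu_i\nu_j\|v_i\|_2^2 + \sum_{i,j}\nu_i\nu_j\|v_j\|_2^2 \Big) + \sum_{i,j}\nu_i\nu_j\langle v_i,v_j\rangle .$$

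The key step — and the only place the hypotheses are actually used — is that the two ``diagonal'' sums vanish: $\sum_{i,j}\nu_i\nu_j\|v_i\|_2^2 = \big(\sum_j \nu_j\big)\big(\sum_i \nu_i \|v_i\|_2^2\big) = 0$ because $\sum_j \nu_j = 0$, and the sum with $\|v_j\|_2^2$ vanishes by the same argument with the roles of $i$ and $j$ swapped. What remains is exactly $\sum_{i,j}\nu_i\nu_j\langle v_i,v_j\rangle = \big\|\sum_i \nu_i v_i\big\|_2^2 = \|a-b\|_2^2$, closing the argument. I do not expect any real obstacle here: the computation is routine, and the entire content is the observation that replacing $\mathbf{\lambda}$ and $\mathbf{\mu}$ by their difference $\mathbf{\nu}$ converts the affine normalization $\sum \lambda_i = \sum \mu_i = 1$ into $\sum \nu_i = 0$, which is precisely what kills the $\|v_i\|_2^2$ terms and leaves a Gram-type expression; this is the finite-dimensional Cayley–Menger / Schoenberg mechanism specialized to a difference of barycentric points.
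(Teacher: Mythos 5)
Your proposal is correct and is essentially the paper's own argument in componentwise form: the paper writes $D = R^TR - \tfrac{1}{2}(V\alpha^T + \alpha V^T)$ with $R=[v_0,\dots,v_k]$ and kills the correction term using $\alpha^T(\lambda-\mu)=0$, which is exactly your polarization expansion together with the observation that $\sum_i(\lambda_i-\mu_i)=0$ annihilates the $\|v_i\|_2^2$ terms. No gap; the two write-ups differ only in notation.
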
{}

\begin{proof}
	Since Lemma \ref{lemma:linear_exp}, let $R = [v_0,v_1,...,v_k]_{n \times (k+1)}$, and we have $a = R\lambda$ and $b = R\mu$. Moreover,
	\begin{align}
	||a-b||_2^2 &= (a-b)^T(a-b)\\
	&= [R(\lambda - \mu)]^T[R(\lambda - \mu)]\\
	&=(\lambda - \mu)^TR^TR(\lambda - \mu).
	\end{align}
	Note that, for $D = (-\frac{1}{2}d_{ij})_{(k+1)\times(k+1)}$,
	\begin{align}
	-\frac{1}{2}d_{ij} &= -\frac{1}{2}(v_i-v_j)^T(v_i-v_j)\\
	&=v_i^Tv_j-\frac{1}{2}(v_i^Tv_i + v_j^Tv_j).
	\end{align}
	So we have $D = R^TR -  \frac{1}{2}\left((v_i^Tv_i + v_j^Tv_j)_{(k+1)\times(k+1)}\right)$. It can be further simplified to $D = R^TR -  \frac{1}{2}(V\alpha^T + \alpha V^T)$, where $V = (v_0^Tv_0,...,v_k^Tv_k)^T$ and $\alpha = (1,...,1)^T$. So
	
	\begin{align}
	||a-b||_2^2 &= (\lambda - \mu)^TR^TR(\lambda - \mu)\\
	&= (\lambda - \mu)^T(D + \frac{1}{2}(V\alpha^T + \alpha V^T))(\lambda - \mu)\\
	&=(\lambda - \mu)^TD(\lambda - \mu) + \frac{1}{2}(\lambda - \mu)^T(V\alpha^T + \alpha V^T)(\lambda - \mu),
	\end{align}
	therefore, we only need to prove $(\lambda - \mu)^T(V\alpha^T + \alpha V^T)(\lambda - \mu) = 0$. From Lemma \ref{lemma:linear_exp}, we have $\alpha^T(\lambda - \mu) = (\lambda - \mu)^T\alpha = 0$ and the Lemma \ref{lemma:jiong} holds.
	
	\qedhere
\end{proof}

\begin{definition}[Ultra dimension] For a set $U$ composed of vectors in a $n$ dimensional linear space $V$, we define $\widehat{\mathbf{dim}}(U)$ as the Ultra dimension of $U$. The definition is that if $U$ has $k$ linearly independent vectors and there are no more, then $\widehat{\mathbf{dim}}(U) = k$.
	\label{definition}
\end{definition}{}
In fact, if $ U $ is a linear subspace in $V$, then the Ultra dimension and the dimensions of the linear subspace are equivalent. If $U$ is a linear manifold, $U = \{x + v_0|x \in W \}$, where $v_0$ and $W$ are non-zero vectors and linear subspaces in $V$, respectively. And $\mathbf{dim} (W) = r$. Then
\begin{equation}
\widehat{\mathbf{dim}}(U) = \left\{ {\begin{array}{*{20}{l}}
	r, \quad v_0 \in W\\
	r +1,  v_0 \notin W
	\end{array}} \right.
\label{ultra}
\end{equation}
In other words, $\widehat{\mathbf{dim}}(U) \geq \widehat{\mathbf{dim}}(W)$ always holds.

\begin{lemma} For arbitrary $ k$~$(1 \leq k \leq n-1)$, let $ a_1, a_2, ..., a_k $ be $k$ linearly independent vectors in $n$ dimensional linear space $V$. Consider one $n-1$ dimensional linear subspace $W$ in $V$ and a non-zero vector $v_0$ in $V$. They form a linear manifold $P = \{v_0 + \alpha|\alpha \in W \}$. If $a_1, a_2, ..., a_k$ do not all belong to $P$, then there must exist $n-k$ vectors $p_1,p_2,...,p_{n-k}$ from $P$, s.t $(a_1,a_2,...,a_k,p_1,p_2,...,p_{n-k})$ are a set of basis for the linear space $V$.
	
	\label{lemma:basis}
\end{lemma}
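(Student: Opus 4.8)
The plan is to construct the required basis greedily, adjoining one vector of $P$ at a time in a Steinitz-exchange fashion. The whole argument rests on a single observation: writing $U:=\operatorname{span}(a_1,\dots,a_k)$, the affine manifold $P$ is not contained in any \emph{proper} subspace of $V$ that contains $U$, as soon as $a_1,\dots,a_k$ do not all lie in $P$.

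First I would isolate that observation. The key identity is $P-P=W$, since $(v_0+\alpha)-(v_0+\beta)=\alpha-\beta$ sweeps out all of $W$ as $\alpha,\beta$ range over $W$. Hence if $P\subseteq U'$ for a subspace $U'$, then $W=P-P\subseteq U'$, so $\dim U'\ge n-1$. If moreover $U'$ is proper, then $\dim U'=n-1$, and therefore $U'=W$ (equal dimensions, one contained in the other); but $P\subseteq W$ forces $v_0=v_0+0\in W$, so $P=v_0+W=W=U'$. Thus a proper subspace can contain $P$ only if it equals $P$ itself. If in addition $U\subseteq U'=P$, then every $a_i$ lies in $P$, contradicting the hypothesis. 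This proves the observation: whenever $U'\subsetneq V$ and $U\subseteq U'$, we have $P\not\subseteq U'$.

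Now I would run the induction. Suppose $p_1,\dots,p_j\in P$ with $0\le j<n-k$ have been chosen so that $a_1,\dots,a_k,p_1,\dots,p_j$ are linearly independent, and set $U_j:=\operatorname{span}(a_1,\dots,a_k,p_1,\dots,p_j)$, so that $\dim U_j=k+j\le n-1$; thus $U_j$ is proper and $U\subseteq U_j$. By the observation $P\not\subseteq U_j$, so I may pick $p_{j+1}\in P\setminus U_j$; since $p_{j+1}$ lies outside the span of the vectors chosen so far, the enlarged list stays linearly independent. Iterating until $j=n-k$ yields $n$ linearly independent vectors $a_1,\dots,a_k,p_1,\dots,p_{n-k}$ in the $n$-dimensional space $V$, hence a basis, which is exactly the claim.

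The only place needing care — and the reason the hypothesis ``$a_1,\dots,a_k$ do not all belong to $P$'' is imposed — is the degenerate case $v_0\in W$, where $P=W$ is already a hyperplane; there the greedy step could stall at the final stage precisely when $U=W=P$, and the hypothesis is exactly what forbids this. Beyond pinning down that case, everything is routine dimension counting, so I do not anticipate a genuine obstacle.
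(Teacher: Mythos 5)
Your proof is correct, and its skeleton is the same as the paper's: both arguments adjoin one vector of $P$ at a time to $a_1,\dots,a_k$, the crux at each stage being that $P$ cannot be contained in the span of the vectors collected so far. Where you differ is in how that non-containment is justified. The paper proceeds by induction on $n-k$, introduces the notion of ``ultra dimension'' of the affine manifold $P$ (Definition~\ref{definition} and Eq.~(\ref{ultra})), and splits into the cases $v_0\in W$ and $v_0\notin W$ both in the base case and in the inductive step. You instead use the single identity $P-P=W$: any subspace containing $P$ contains the hyperplane $W$, so a proper one must equal $W=P$ with $v_0\in W$, and then the hypothesis that the $a_i$ are not all in $P$ gives the contradiction. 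This buys a more uniform and elementary argument — no auxiliary dimension notion, no case analysis, and it makes transparent (as you note) that the hypothesis is only genuinely needed at the last step, when the current span has dimension $n-1$; at earlier stages $\dim U_j<n-1$ already rules out $W\subseteq U_j$. The paper's formulation, in turn, is packaged as an induction whose hypothesis bookkeeping (the enlarged list still not lying entirely in $P$) you avoid by always invoking the original hypothesis on $a_1,\dots,a_k$. Both proofs are complete; yours is the leaner of the two.
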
{}

\begin{proof}
	we use mathematical induction. First, show that the Lemma \ref{lemma:basis} holds for $n-k = 1$. it means we need to find a vector $p_1 \in P$ s.t. $a_1,a_2,...,a_k,p_1$ linearly independent. If $p_1$ does not exist, then $\forall p \in P$ would be linearly represented by $a_1,a_2,...,a_k$. In other word,
	\begin{equation}
	P \subset L = \mathbf{span}(a_1,a_2,...,a_k),
	\label{psubsetl}
	\end{equation}
	{\small{\textcircled{\tiny{1}}}}~For the linear manifold $P$, if $v_0 \in W$. This means that $P$ is equal to the linear subspace $W$. Since Eq.~(\ref{psubsetl}), we have $W \subset L$ and $\widehat{\mathbf{dim}}(W) = \widehat{\mathbf{dim}}(L)$. Hence, $P = W = L$. However, $a_1, a_2, ..., a_k$ do not all belong to $P$, a contradiction. 
	
	{\small{\textcircled{\tiny{2}}}}~For the linear manifold $P$, if $v_0 \notin W$, then $ \widehat {\mathbf {dim}} (P) = n$. Because $v_0 \notin W$, that is, $v_0$ cannot be represented by a set of basis of $W$. In other words, $v_0$ and a set of basis of $W$ are linearly independent. However, the dimension of $W$ is $n-1$, hence $\widehat{\mathbf{dim}}(P) = n$. From Eq.~(\ref{psubsetl}), we have $P \subset L$, so
	\begin{equation}
	n = \widehat{\mathbf{dim}}(P) \leq \widehat{\mathbf{dim}}(L) = k = n-1,
	\end{equation}
	a contradiction. Therefore, Lemma \ref{lemma:basis} holds for $n-k = 1$. Assume the induction hypothesis that Lemma \ref{lemma:basis} is true when $n - k = l$, where $1 \leq l $. when $n - k = l + 1$, \textit{i.e.}, $k = n - (l+1)$, we also can find a vector $p_1 \in P$ s.t. $a_1,a_2,...,a_k,p_1$ linearly independent. Otherwise, $\forall p \in P$ would be linearly represented by $a_1,a_2,...,a_k$. Similarly, we have Eq.~(\ref{psubsetl}). Note that, from Definition \ref{definition}, $\widehat{\mathbf{dim}}(P) \geq n-1$, hence
	\begin{equation}
	n-1 \leq \widehat{\mathbf{dim}}(P) \leq \widehat{\mathbf{dim}}(L) = k = n - (l+1).
	\end{equation}
	a contradiction. At this time, we have $k+1 = n - (l+1) +1 = n - l$ vectors $a_1,a_2,...,a_k,p_1$ which are not all on $P$. Note that $n - (n - l) = l$, using the induction hypothesis, the Lemma \ref{lemma:basis} also holds for $n-k = l$. In summary, Lemma \ref{lemma:basis} holds.

	\qedhere
\end{proof}

\textbf{Theorem \ref{theo:ggdl}.} Let $v_0,v_1,...,v_k$ be the $k+1$ vectors in $n$ dimensional Euclidean space $\mathbb{E}^n$. For all $P$ in $\mathbb{E}^n$,
$$
\sum_{i=0}^k||P-v_i||_2^2 = \sum_{i=0}^k||G-v_i||_2^2 + (k+1)||P-G||_2^2.
$$
where $G$ is the centroid of $v_i$, will hold if it satisfies one of the following conditions:

(1)if $k\geq n$ and $\mathbf{rank}(v_1-v_0,v_2-v_0,...,v_k-v_0)=n$.

(2)if $k < n$ and $(v_1-v_0,v_2-v_0,...,v_k-v_0)$ are linearly independent.

(3)if $v_i \sim N(\mathbf{0},c\cdot \mathbf{I}_n)$, Eq.(\ref{ggdl}) holds with probability 1 where $c$ is a constant.

\begin{proof}
	
	\textbf{For Theorem \ref{theo:ggdl}~(1)}. From Lemma \ref{lemma:linear_exp}, $\forall P \in E^n$ ,$\exists \mathbf{\gamma} = (\gamma_0,...,\gamma_k)$, s.t. $P$ can be represented by $\sum_{i=0}^k \gamma_iv_i$, where $\sum_{i=0}^k \gamma_i = 1$. In fact, for each $v_i$, it also can be respresented by $\sum_{j=0}^k \beta_{ij}v_i$, where $\sum_{i=0}^k \beta_{ij} = 1$. We just take 
	$(\beta_{i0},\beta_{i1},...,\beta_{ik})$ as one of the standard orthogonal basis $\epsilon_i = (0,0,...,1_{i},...0)$. According to lemma ~\ref{lemma:jiong},
	\begin{align}
	||P-v_i||_2^2 &= (\gamma - \epsilon_i)^TD(\gamma - \epsilon_i)\\
	&= \gamma^TD\gamma - 2\gamma^TD\epsilon_i + \epsilon_i^TD\epsilon_i\\
	& = \gamma^TD\gamma - 2\gamma^TD\epsilon_i .
	\end{align}
	The final equation is because the diagonal elements of the matrix are all 0. On the other hand, we have
	\begin{align}
	||G-v_i||_2^2 &= (\frac{1}{k+1}\sum_{i=0}^k\epsilon_i - \epsilon_i)^TD(\frac{1}{k+1}\sum_{i=0}^k\epsilon_i - \epsilon_i)\\
	&= \frac{1}{(k+1)^2}\alpha^TD\alpha - \frac{2}{k+1}\alpha^TD\epsilon_i +  \epsilon_i^TD\epsilon_i\\
	&= \frac{1}{(k+1)^2}\alpha^TD\alpha - \frac{2}{k+1}\alpha^TD\epsilon_i,
	\end{align}
	where $\alpha=\sum_{i=0}^k\epsilon_i$, \textit{i.e.},$\alpha = (1,1,...,1)$. Next, we consider $||P-G||_2^2$.
	\begin{align}
	||P-G||_2^2 &= (\gamma - \frac{1}{k+1}\alpha)^TD(\gamma - \frac{1}{k+1}\alpha)\\
	&=\gamma^TD\gamma + \frac{1}{(k+1)^2}\alpha^TD\alpha - \frac{2}{k+1}\gamma^TD\alpha.
	\end{align}
	In summary, we have
	\begin{align}
	\sum_{i=0}^k ||P-v_i||_2^2 - ||G-v_i||_2^2 &= (k+1)\gamma^TD\gamma - 2\gamma^TD\alpha + \frac{1}{k+1}\alpha^TD\alpha\\
	&= (k+1)||P-G||_2^2
	\end{align}
	Therefore, Theorem \ref{theo:ggdl}~(1) holds.

	\textbf{For Theorem \ref{theo:ggdl}~(2)}. Next, we prove the case of $k <n$. Obviously, Lemma ~\ref{lemma:linear_exp} does not hold. We consider about such a linear space $W_1 = \mathbf{span}(P-G)$, \textit{i.e.}, a linear space expanded by $P-G$, and its orthogonal complement $W_1^\perp$ (in $E^n$). Since dimension formula from linear space, it is easy to konw that $\mathbf{dim}(W_1^\perp) = n-1$.
	
	Two linear manifolds $T_1$ and $T_2$ are constructed as follows,
	\begin{align}
	T_1 &= \{x + G | x \in W_1^\perp\}\\
	T_2 &= \{x + G - v_0 | x \in W_1^\perp\}
	\end{align}
	$\forall v_i \in T_1$, we have $(v_i - G)^T(P-G) = 0$, Furthermore, 
	\begin{equation}
	||P - v_i||_2^2 = ||v_i - G||_2^2 + ||P - G||_2^2.
	\label{gg}
	\end{equation}
	It  is easy to know that $G - v_0$ is not 0. If $v_1-v_0,...,v_k-v_0$ are all belong to $T_2$, it means $v_1,..,v_k$ are all in $T_1$. Hence, we have Eq.~(\ref{gg}). By summing both sides of Eq.~(\ref{gg}) for $i$, it is obvious find that Theorem \ref{theo:ggdl}~(2) holds. If $v_1-v_0,...,v_k-v_0$ are not all belong to $T_2$, since Lemma \ref{lemma:basis}, there are $n-k$ vectors $p_1 - v_0,p_2 - v_0,..,p_{n-k} - v_0$ from $T_2$ s.t. they and $v_1-v_0,...,v_k-v_0$ are linearly independent, where $p_i$ obviously belongs to manifold $T_1$.
	
	At the same time, we have $ 2G-p_i \in T_1 $, we can also construct $ n-k $ new vectors $ 2G-p_i -v_0 \in T_2 $ and calculate the rank that 
	
	$$\mathbf{rank}(v_1-v_0,...,v_k-v_0, p_1-v_0,...,p_{n-k}-v_0, 2G-p_1-v_0,...,2G-p_{n-k}-v_0)$$
	\begin{align}
	&=\mathbf{rank}(v_1-v_0,...,v_k-v_0, p_1-v_0,...,p_{n-k}-v_0, 2(G-v_0),...,2(G-v_0))\\
	&=\mathbf{rank}(v_1-v_0,...,v_k-v_0, p_1-v_0,...,p_{n-k}-v_0, 0,...,0)\\
	&= n
	\end{align}
	The reason of the final equation is that $\sum_{i=1}^k (v_i - v_0)=(k+1)(G-v_0)$. Note that there are a total of $ k + (n-k) + (n-k) = n + (n-k) \geq n$ vectors, meets the lemma \ref{lemma:linear_exp} condition. For the convenience of description, we define
	\begin{align}
	L^{(1)}_i &= v_i, (0 \leq i \leq k),\\
	L^{(2)}_i &= p_i, (1 \leq i \leq n-k),\\
	L^{(3)}_i &= 2G - p_i, (1 \leq i \leq n-k).
	\end{align}
	And their centroid is
	\begin{align}
	G' &= \frac{1}{2n-k+1}\left(\sum_{i=0}^k v_i + \sum_{i=1}^{n-k}(L^{(2)}_i+L^{(3)}_i)\right)\\
	&= \frac{1}{2n-k+1}((k+1)G + 2(n-k)G)\\
	& = G
	\end{align}
	That is, the newly added vector does not change the centroid of $v_i$. On the other hand, since both $L^{(2)}_i$ and $L^{(3)}_i$ are in the linear manifold $T_1$, and it meets the conditions of the Eq.(\ref {gg}). Similar to the derivation in the Theorem \ref{theo:ggdl}~(1), we have
	\begin{align}
	(2n-k+1)||P-G||_2^2 &=  \sum_{t = L^{(1)}_i,L^{(2)}_i,L^{(3)}_i} \left(||P-t||_2^2 - ||G-t||_2^2\right)\\
	&= \sum_{i=0}^k \left(||P-v_i||_2^2 - ||G-v_i||_2^2 \right)+\sum_{t = L^{(2)}_i,L^{(3)}_i} \left(||P-t||_2^2 - ||G-t||_2^2\right)\\
	&=\sum_{i=0}^k \left(||P-v_i||_2^2 - ||G-v_i||_2^2  \right)+ 2(n-k)||P-G||_2^2
	\label{temp5}
	\end{align}
	The final equation is because both $L^{(2)}_i$ and $L^{(3)}_i$ are in the linear manifold $T_1$ and satisfy Eq.~(\ref{gg}). To simplify Eq.~(\ref{temp5}), we obtain $\sum_{i=0}^k \left(||P-v_i||_2^2 - ||G-v_i||_2^2\right) = (k+1)||P-G||_2^2$. Therefore, Theorem \ref{theo:ggdl}~(2) holds.

	\textbf{For Theorem \ref{theo:ggdl}~(3)}. When $k \geq n$, from Proposition \ref{pro:matrix}, we know that  $\mathbf{rank}(v_1-v_0,v_2-v_0,...,v_k-v_0)$ = $n$ holds with probability 1. Hence, if we use the similar deduction from Theorem \ref{theo:ggdl}~(1), we can find that Theorem \ref{theo:ggdl}~(3) holds when $k \geq n$. On the other hand, when $k < n$, we can get the same result also according to Proposition \ref{pro:matrix}. The reason is that $(v_1-v_0,v_2-v_0,...,v_k-v_0)$ are linearly independent with probability 1.

\end{proof}

\clearpage
\section{The result of Sp}
\label{app:spearman}

\begin{figure} [H]
	\centering 
	\subfigure[Sp = 0.99]{ 
		\begin{minipage}[t]{0.5\linewidth} 
			\centering 
			\includegraphics[height=2.3in, width=2.3in]{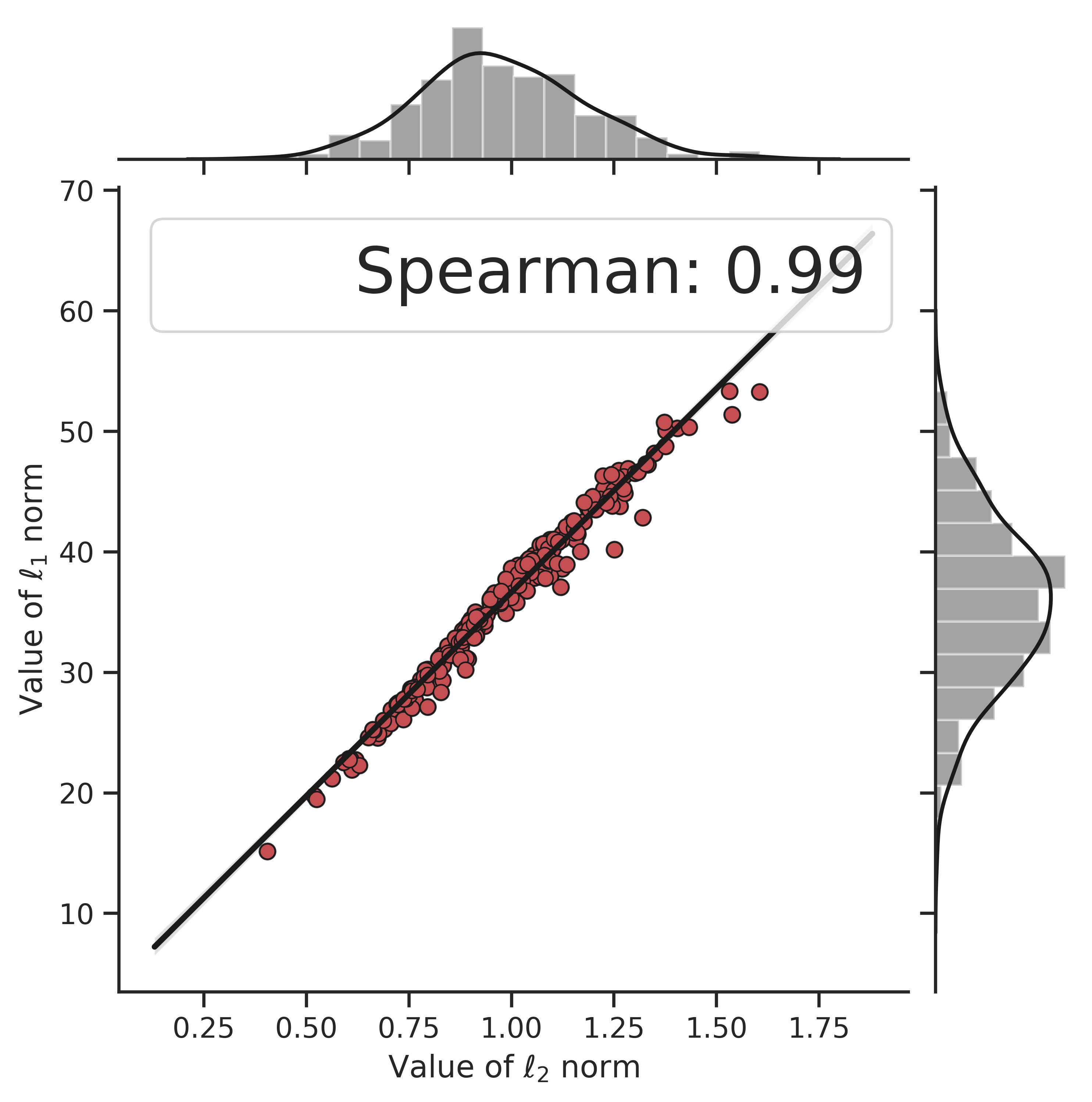} 
	\end{minipage} }%
	\subfigure[Sp = 0.99]{ 
		\begin{minipage}[t]{0.5\linewidth} 
			\centering 
			\includegraphics[height=2.3in, width=2.3in]{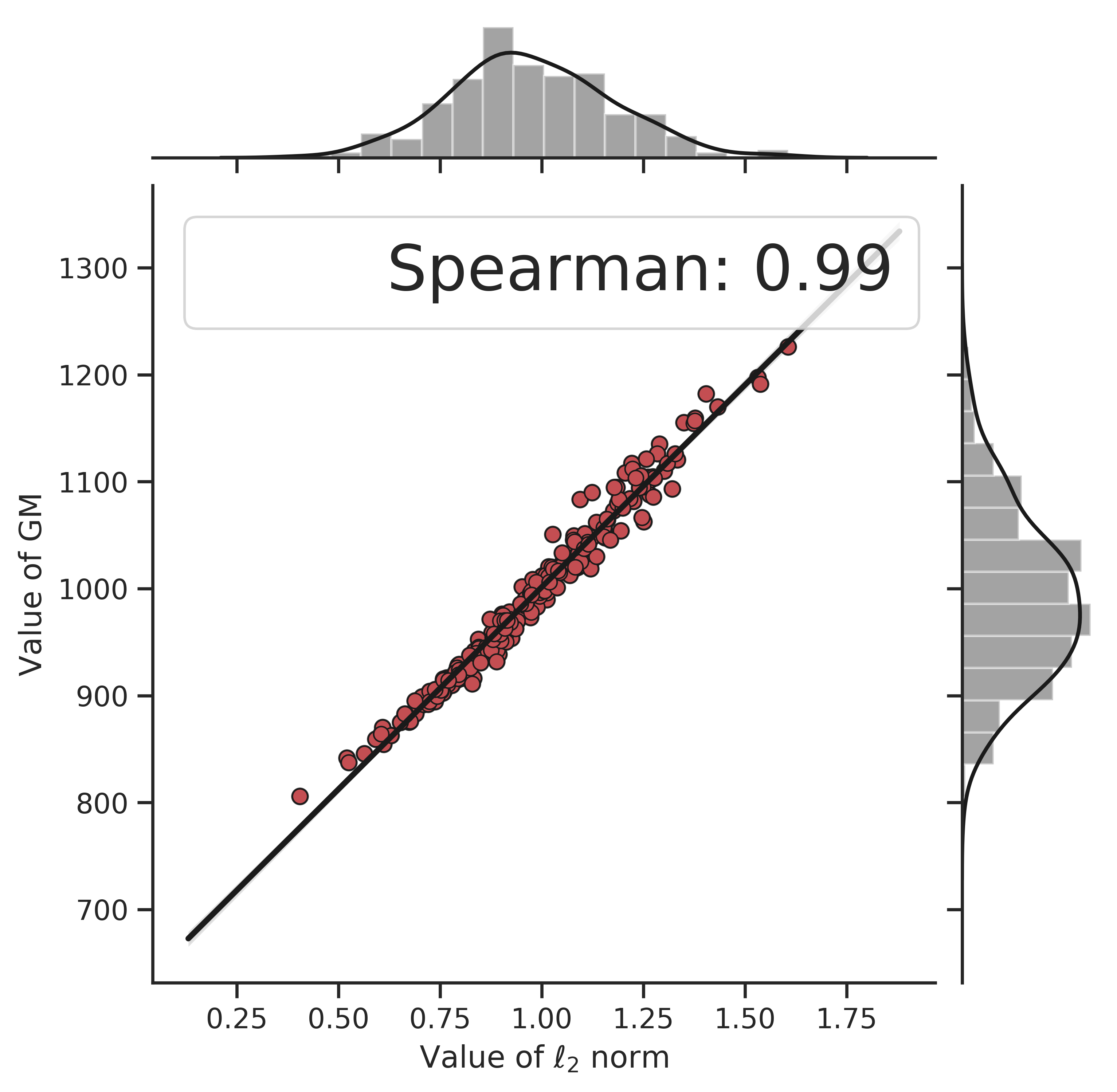} 
	\end{minipage} }%

	\subfigure[Sp = 0.99]{ 
		\begin{minipage}[t]{0.5\linewidth} 
			\centering 
			\includegraphics[height=2.3in, width=2.3in]{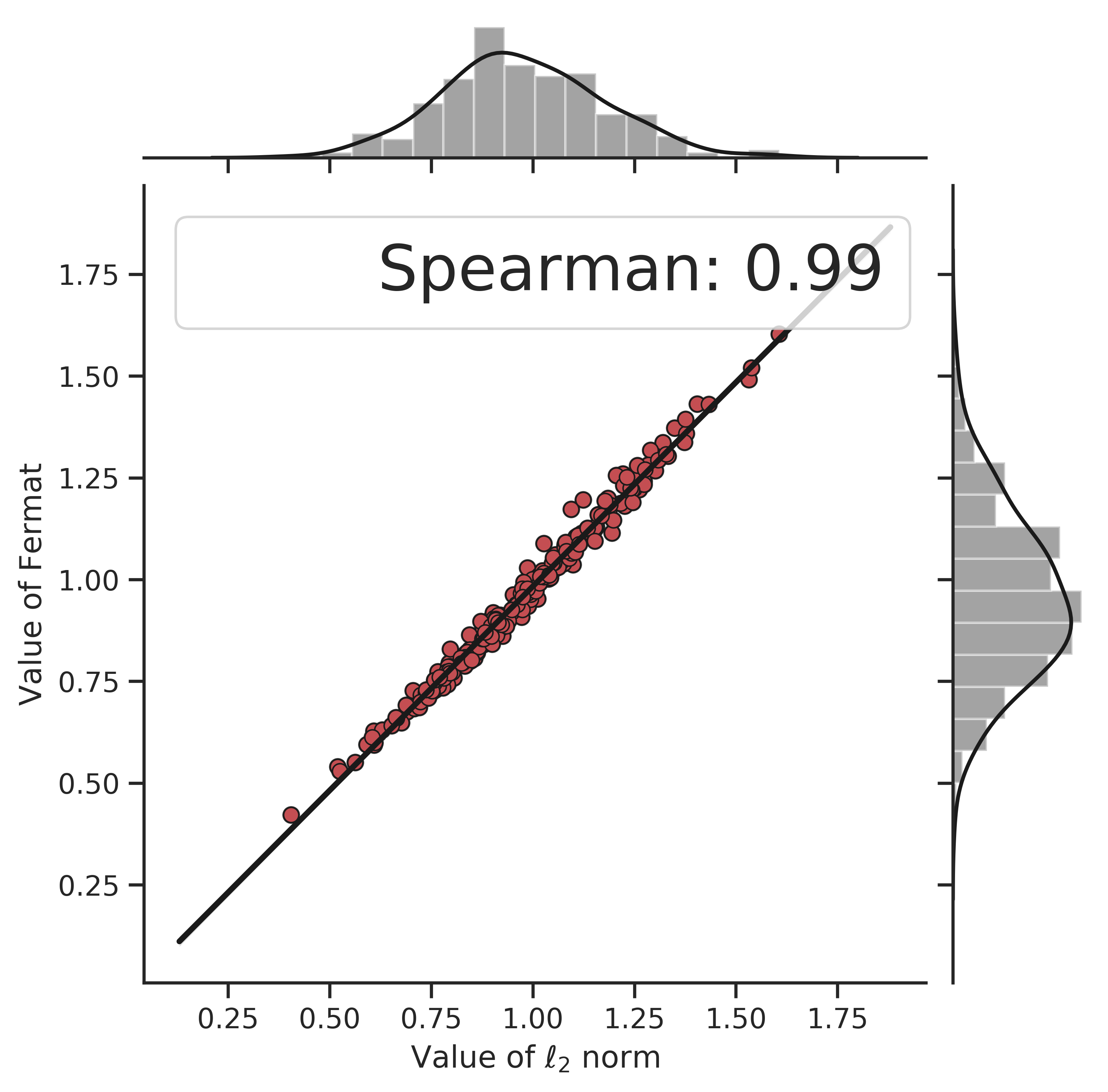} 
	\end{minipage} }%
	\subfigure[Sp = 0.98]{ 
		\begin{minipage}[t]{0.5\linewidth} 
			\centering 
			\includegraphics[height=2.3in, width=2.3in]{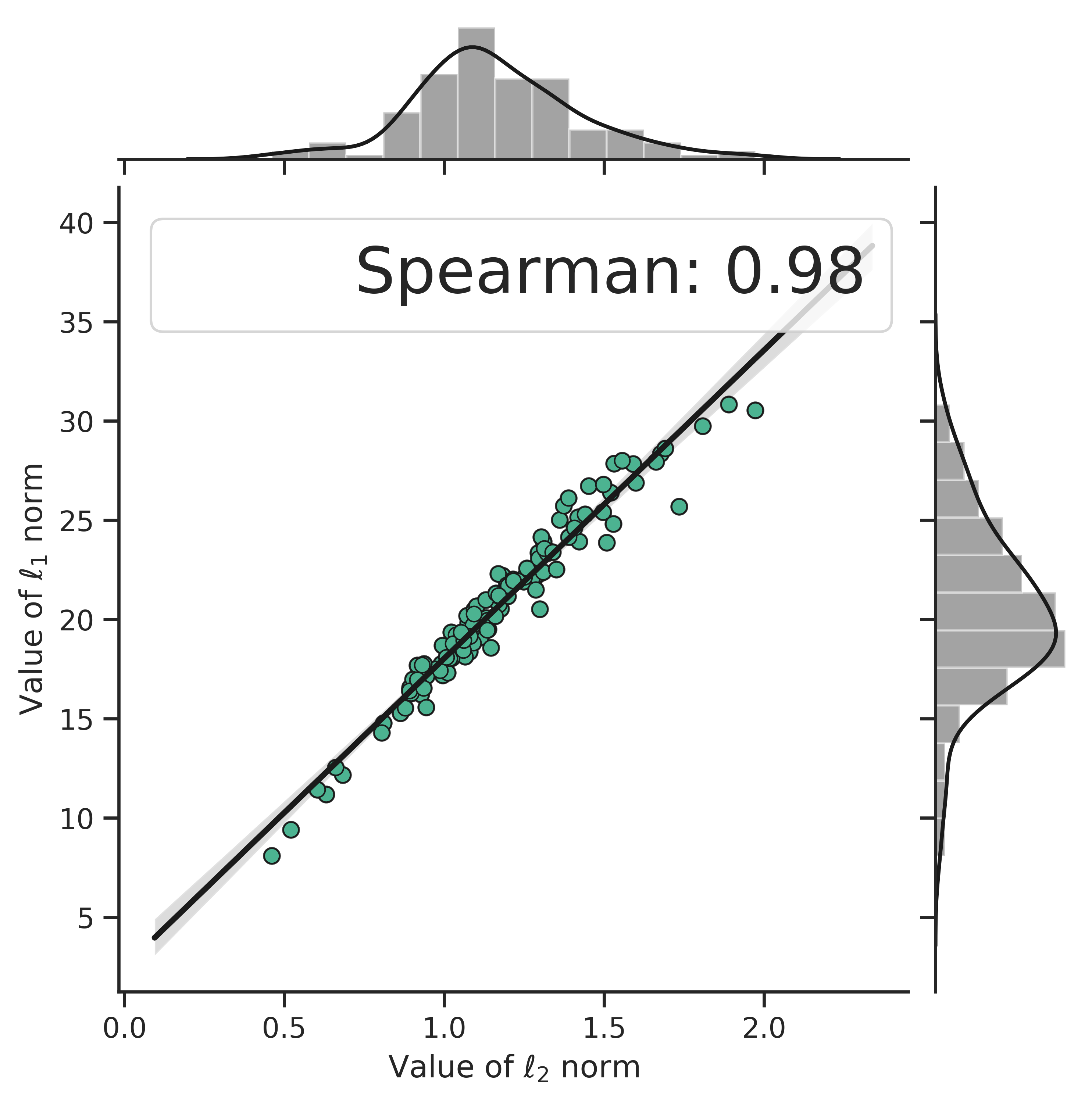} 
		\end{minipage}%
	}%

	\subfigure[Sp = 0.98]{ 
		\begin{minipage}[t]{0.5\linewidth} 
			\centering 
			\includegraphics[height=2.3in, width=2.3in]{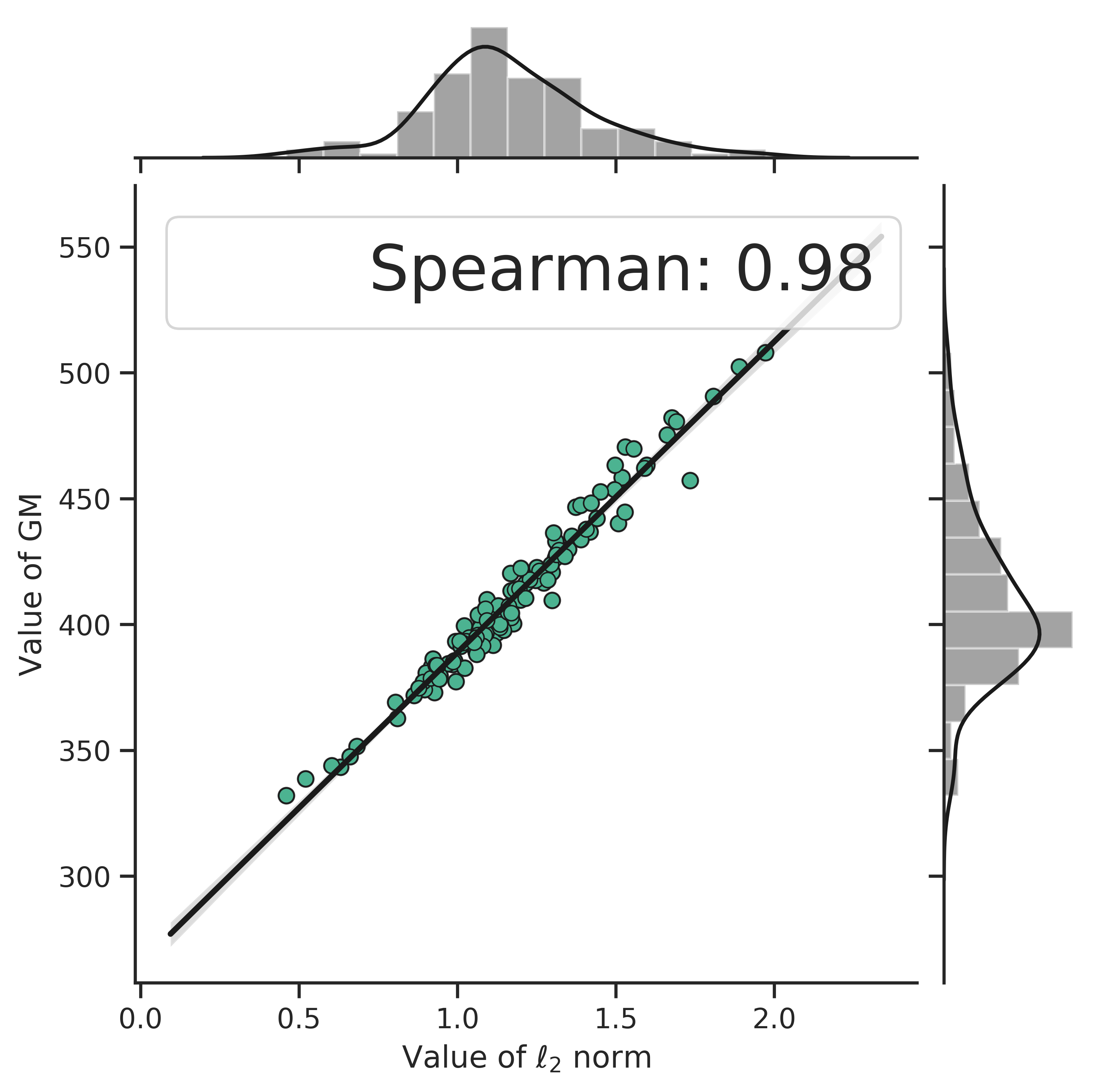} 
		\end{minipage}%
	}
	\subfigure[Sp = 1.00]{ 
		\begin{minipage}[t]{0.5\linewidth} 
			\centering 
			\includegraphics[height=2.3in, width=2.3in]{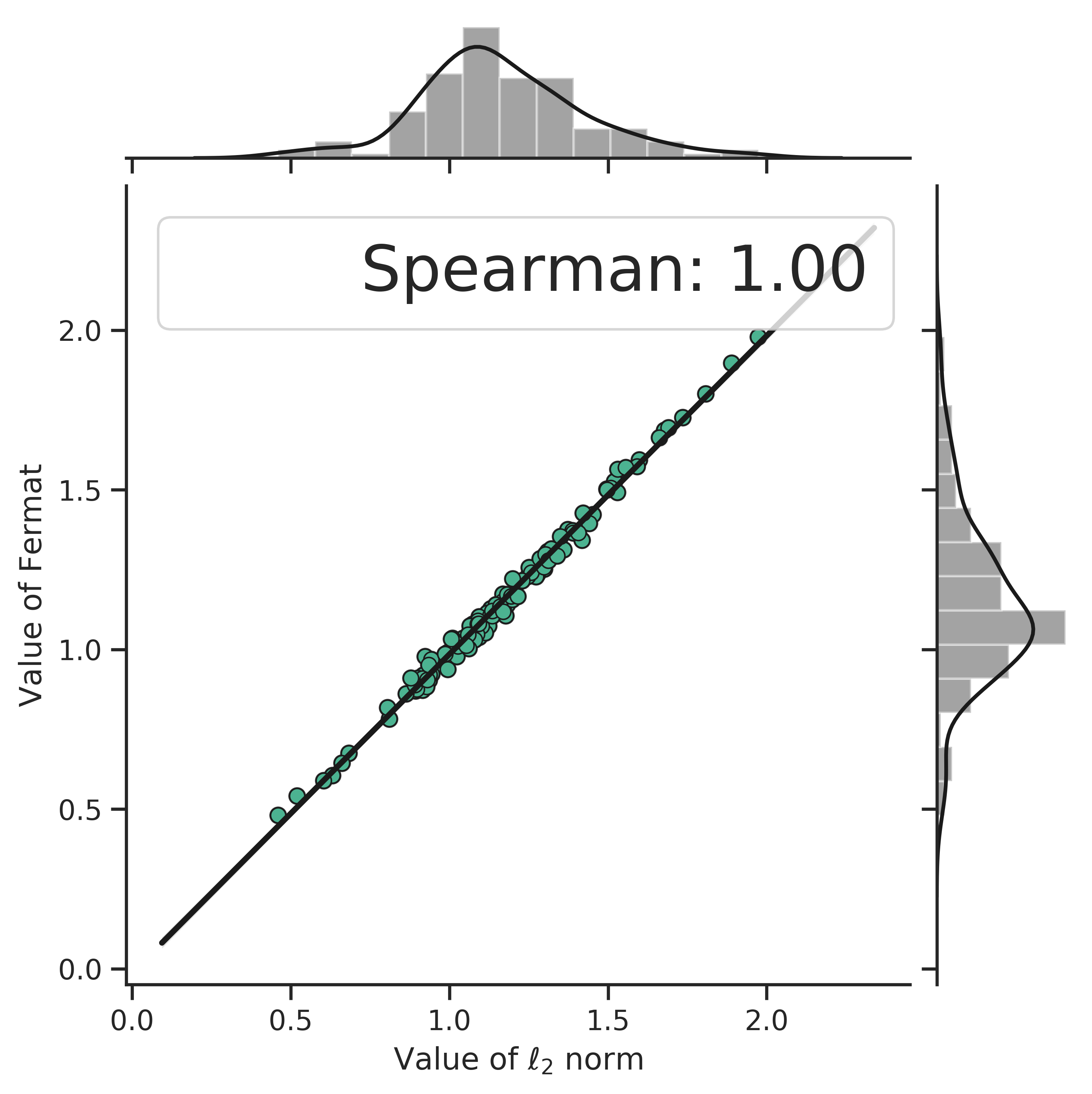} 
	\end{minipage} }%
	\centering 
	\caption{The Spearman's rank correlation coefficient~(Sp) for different criteria.~(a-c) are Sp between $\ell_1$ and $\ell_2$, $\mathbf{GM}$ and $\ell_2$, $\mathbf{Fermat}$ and $\ell_2$ from ResNet18~(12$^{\rm th}$ Conv), respectively.~The results of VGG16~(3$^{\rm rd}$ Conv) are shown in (d-f). If the Sp of two pruning criteria is close to 1, then the sequence of their pruned filters may have strong similarity.} 
	\vspace{-0.4cm}
	\label{spearman}
\end{figure}

\clearpage
\section{Other result}
\label{app:other_result}

\begin{figure} [htbp]
	\centering 
	\includegraphics[width=0.4\linewidth]{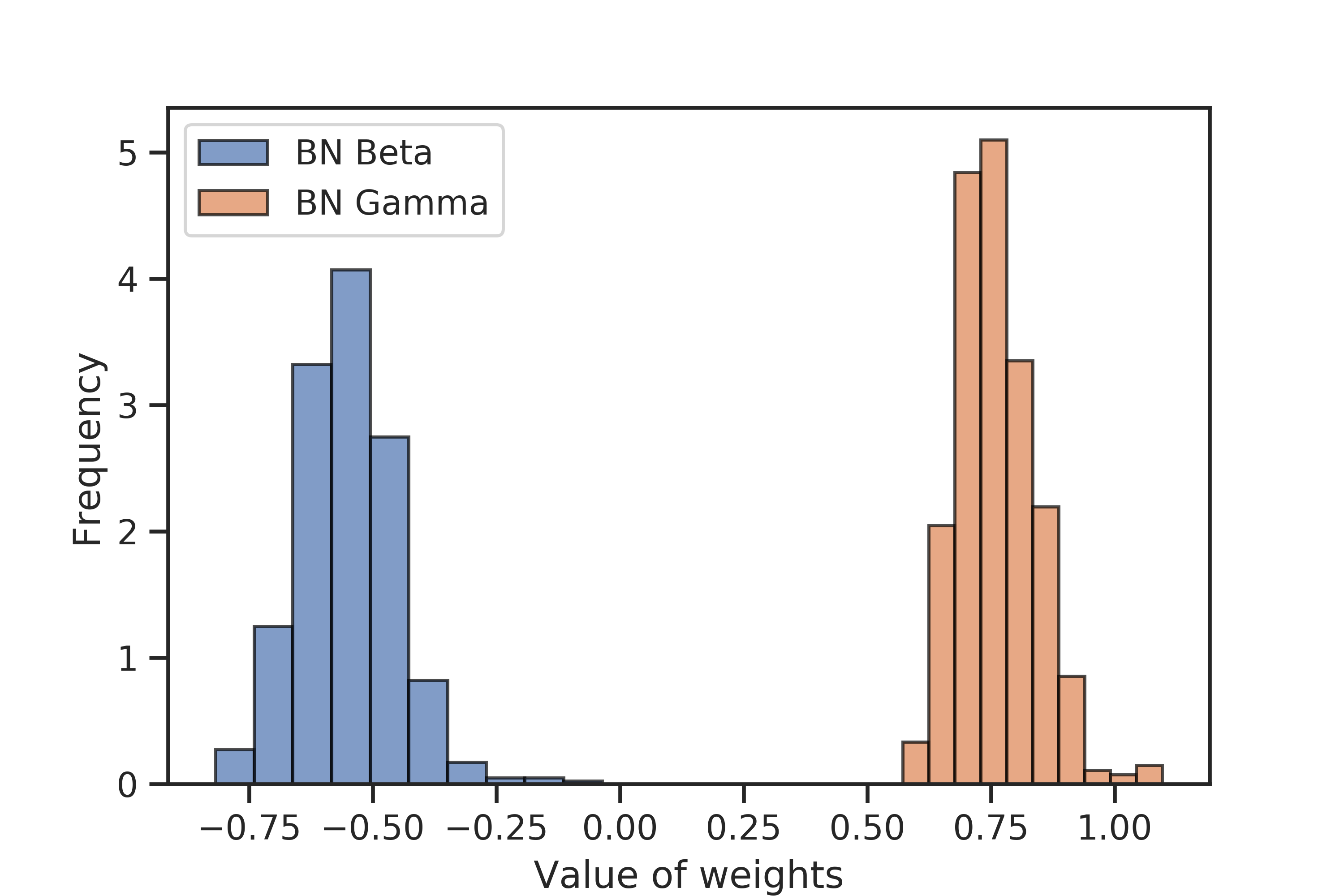}
	\includegraphics[width=0.4\linewidth]{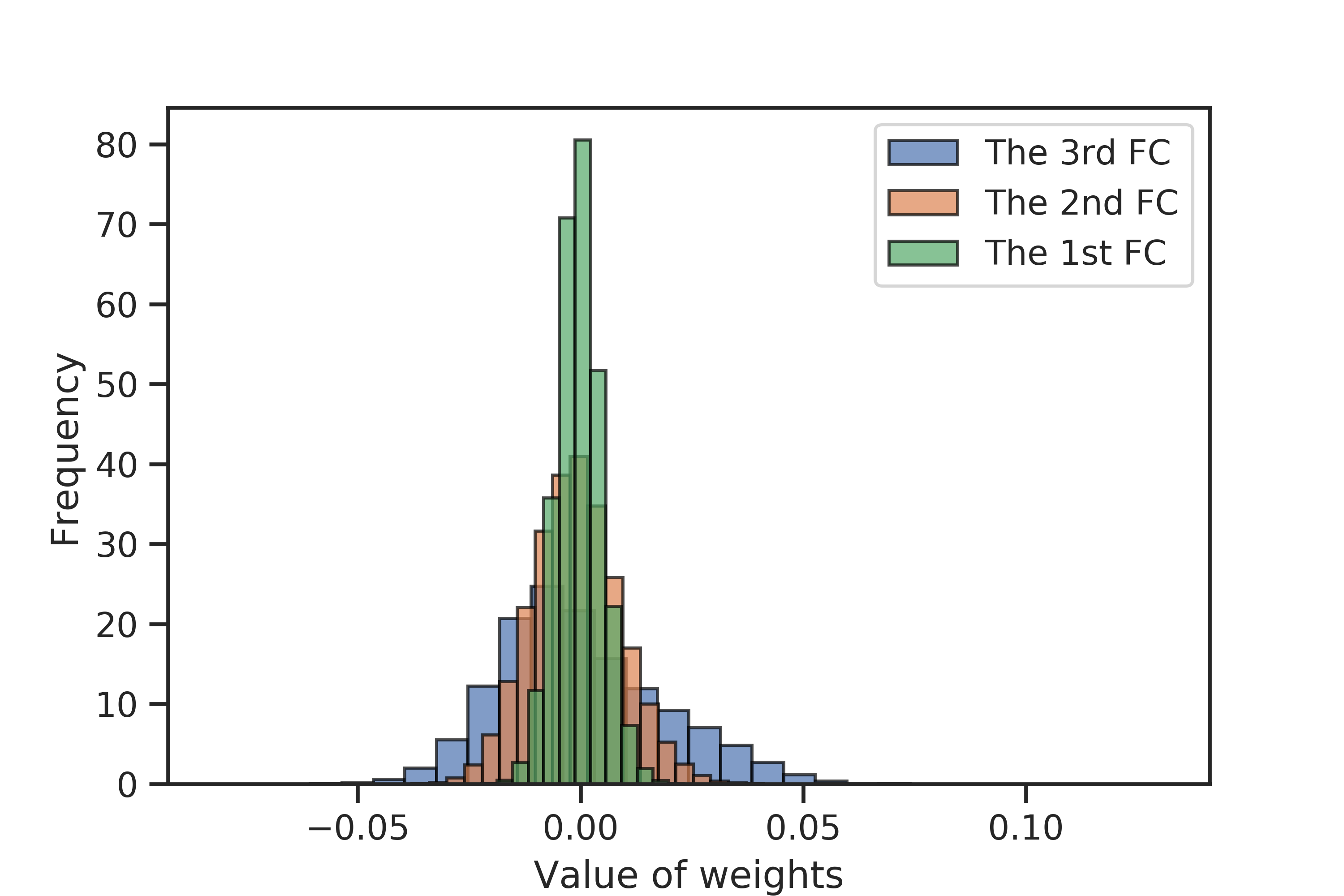} 
	\caption{The distribution about other learnable parameters. (Left): The disrtibution about the learnable parameters of batch normalization. (Rihgt): The parameters distribution of the fully-connected layers~(FC). For FC, the Sp between the criteria in Table\ref{tab:criteria} are greater than 0.9.} 
	\label{fig:fig:bnfc}
\end{figure}

In Fig~\ref{fig:fig:bnfc}, we show the other learnable parameters~(\textit{i.e.} Batch normalization~(BN) and fully connected neural network~(FC)) in VGG16-BN. For BN, the distribution of its parameters does not satisfy CWDA, and similar results are shown in \cite{Liu_2017_ICCV,tian2019luck}. Moreover, the learnable parameters of fully-connected layers also do not follow a Gaussian-alike distribution, which is consistent with the conclusion in previous work~\cite{bellido1993backpropagation,neal1995bayesian,go2004analyzing}.

\begin{figure} [htbp]
	\centering 
	\includegraphics[width=1.0\linewidth]{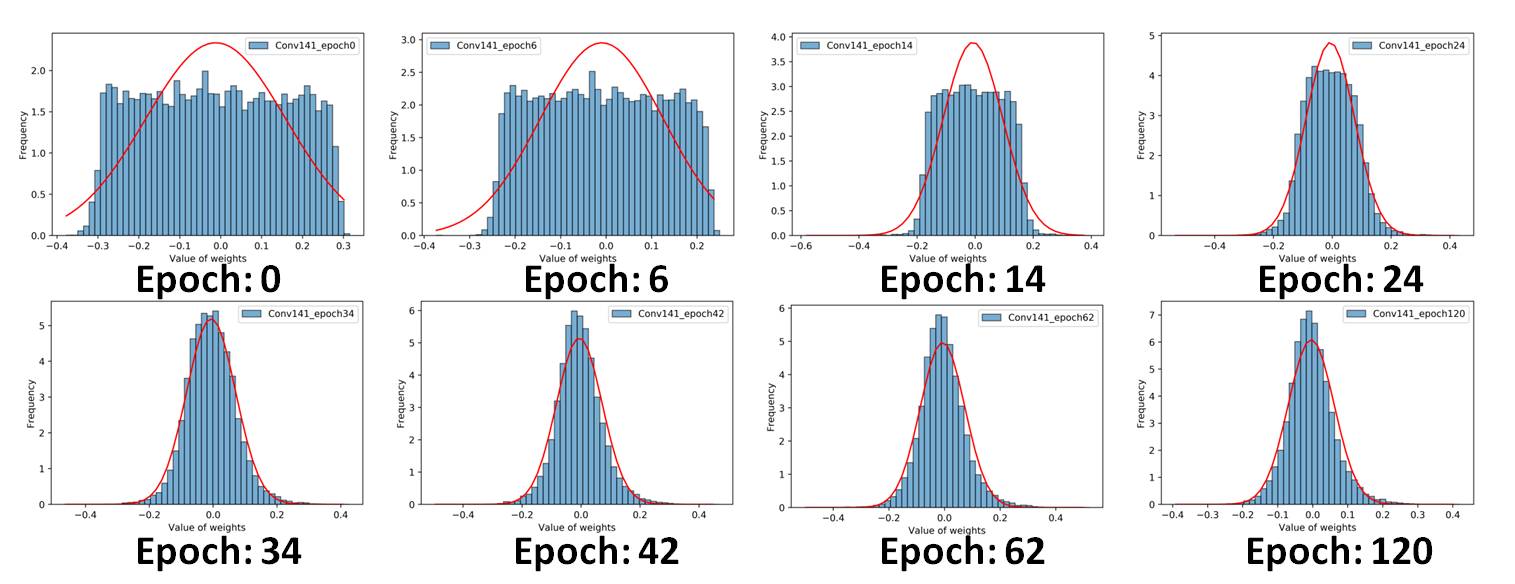} 
	\caption{The distribution of the convolutional filter~(141$^{\rm th}$ Conv) with kaiming-uniform initialization for each epoch.} 
	\label{fig:uniform}
\end{figure}    

\clearpage
\section{An interesting case for \textit{Importance Score} measured by different criteria}
\label{app:case}

The following results are the index of pruned filters obtained by the filters' \textit{Importance Score} from different types of pruning criteria. We take VGG16~(2$^{\rm nd}$) as an example. The 5$^{\rm th}$ filter in this layer is regarded as a redundant convolutional filter for APoZ criterion, but other criteria consider it to be almost the most important.

Taylor $\ell_1$:
[27, 36, 25, 11, 6, 23, 24, 16, 0, 57, 48, 53, 1, 61, 18, 55, 34, 15, 51, 58, 31, 3, 12, 21, 59, 30, 7, 38, 41, 50, 10, 33, 17, 46, 62, 13, 49, 43, 42, 47, 2, 32, 44, 20, 39, 52, 56, 40, 9, 26, 37, 22, 29, 54, 60, 8, 14, 45, 4, 63, 19, 35, 28, \textbf{\textcolor{red}{5}}]

Taylor $\ell_2$:
[23, 32, 36, 11, 62, 16, 30, 59, 10, 13, 2, 50, 38, 0, 46, 43, 21, 26, 15, 22, 7, 51, 39, 33, 14, 58, 9, 40, 57, 6, 61, 44, 20, 48, 3, 53, 41, 56, 17, 12, 18, 31, 4, 1, 25, 19, 63, 24, 54, 45, 52, 37, 55, 47, 34, 35, 8, 29, 42, 27, 49, 28, 60, \textbf{\textcolor{red}{5}}]

BN$\_\beta$:
[52, 46, 32, 21, 14, 29, 17, 0, 19, 36, 1, 51, 44, 40, 41, 60, 57, 27, 22, 53, 63, 8, 30, 26, 23, 58, 39, 18, 9, 47, 31, 35, 11, 37, 55, 45, 3, 61, 6, 4, 33, 25, 15, 48, 43, 28, 56, 2, 13, 16, 34, 20, 59, 10, 7, 24, 50, 62, 12, 49, 38, 42, \textbf{\textcolor{red}{5}}, 54]

APoZ:
[\textbf{\textcolor{red}{5}}, 10, 38, 42, 62, 24, 13, 12, 7, 28, 59, 15, 23, 11, 16, 56, 34, 35, 57, 19, 2, 49, 43, 25, 6, 63, 61, 36, 9, 27, 33, 20, 48, 58, 55, 18, 51, 31, 1, 0, 53, 37, 26, 29, 47, 60, 8, 44, 41, 46, 21, 17, 14, 32, 52, 22, 39, 3, 40, 30, 4, 45, 50, 54]

\clearpage    
\section{The details of other pruning criteria}
\label{app:other_criteria}

For notation, we denote $i^{\rm th}$ convolutional filter in layer $l$ as $F^l_i$ and the input feature maps in layer $l$ as $\mathbf{I}^l\in\mathbb{R}^{{N}\times{I^l}\times{H^l}\times{W^l}}$, where $N, I^l, H^l, W_l$ mean the train set size, number of channels, height and width respectively, $i=1,2,\cdots,\lambda_l,$ and $l=1,2,\cdots,L$. The formulation of the filters' \textit{Importance Score} under each pruning criteria are illustrated as follows: 

\textbf{Norm-based criteria:}
\begin{itemize}
	\item $\ell_1$-Norm~\cite{li2016pruning}: $||F_i^l||_1$;
	\item $\ell_2$-Norm~\cite{frankle2018the}: $||F_i^l||_2$;
\end{itemize}

\textbf{BN-based criteria~\cite{liu2017learning}:}
\begin{itemize}
	\item BN\_$\gamma$: $|\gamma_i^l|$, where $\gamma_i^l$ is the scaling factor in the Batch Normalization layer $l$;
	\item BN\_$\beta$: $|\beta_i^l|$, where $\beta_i^l$ is the shifting factor in the Batch Normalization layer $l$.
\end{itemize}

\textbf{Activation-based criteria:}
\begin{itemize}
	\item APoZ~\cite{hu2016network}: $\frac{\sum^{}_{p,q}\mathds{1}\left((|\mathbf{I}^l*F_i^l|)_{p,q}>\sigma \right)}{{N}\times{I^l}\times{H^l}\times{W^l}}$, where we set $\sigma=0.0001$ same as~\cite{luo2017entropy}, and $\mathds{1}(\cdot)$ is the indicator function, $*$ is convolution operator and $\mathbf{I}^l*F_i^l$ is the $i$-th output feature map;
	\item Entropy~\cite{luo2017entropy}: we first prepare $\mathbf{G}_i^l=GAP(\mathbf{I}^l*F_i^l)$, where $\mathbf{G}_i^l\in\mathbb{R}^{{N}\times{1}}$ and $GAP(\cdot)$ is the Global Average Pooling. Then, we estimate statistical distribution for $\mathbf{G}_i^l$ by dividing all elements in $\mathbf{G}_i^l$ into $m$ bins. Let $p_j$ is the probability of bin $j$, and the the \textit{Importance Score} score is $-\sum^m_{j=1}{p_j}\log{p_j}$.
\end{itemize}
\textbf{First order Taylor based criteria~\cite{molchanov2016pruning, molchanov2019importance, molchanov2019taylor}:}
\begin{itemize}
	\item Taylor $\ell_1$-Norm: $||\frac{\partial{loss}}{\partial{F^l_i}}\cdot{F^l_i}||_1$;
	\item Taylor $\ell_2$-Norm: $||\frac{\partial{loss}}{\partial{F^l_i}}\cdot{F^l_i}||_2$;
\end{itemize}
The $loss$ is the Cross Entropy Loss on the split training set from the original training set.

\clearpage    
\section{Additional experiments about image clasification}
\label{app:cls}

	\begin{table*}[htbp]
		\centering
		\small
		\caption{The accuracy(\%) of several networks and datasets using different pruning criteria. }
		
		\resizebox{\columnwidth}{!}{%
			\begin{tabular}{|c|c|ccc|ccc|ccc|}
				\hline
				\multicolumn{1}{|r}{} &       & \multicolumn{3}{c|}{Experiment~(1)} & \multicolumn{3}{c|}{Experiment~(2)} & \multicolumn{3}{c|}{Experiment~(3)} \\
				\cline{3-11}    \multicolumn{1}{|r}{} &       & Trained & Pruned & Fine-tuned & Trained & Pruned & Fine-tuned & Trained & Pruned & Fine-tuned \\
				\hline
				\multicolumn{1}{|p{4.25em}|}{CIFAR10} & $\ell_1$    & 93.61 & 61.21 & 93.51 & 93.21 & 54.31 & 93.22 & 93.26 & 57.74 & 93.32 \\
				\multicolumn{1}{|p{4.25em}|}{VGG16} & $\ell_2$    & 93.61 & 63.41 & 93.32 & 93.21 & 54.61 & 93.42 & 93.26 & 57.42 & 93.29 \\
				& $\mathbf{GM}$    & 93.61 & 61.22 & 93.41 & 93.21 & 53.71 & 93.25 & 93.26 & 57.46 & 93.36 \\
				\hline
				\multicolumn{1}{|p{4.25em}|}{CIFAR100} & $\ell_1$    & 72.67 & 25.91 & 71.50  & 72.99 & 20.43 & 71.36 & 72.56 & 24.01 & 71.07 \\
				\multicolumn{1}{|p{4.25em}|}{VGG16} & $\ell_2$    & 72.67 & 27.07 & 71.28 & 72.99 & 22.31 & 71.12 & 72.56 & 24.45 & 70.92 \\
				& $\mathbf{GM}$    & 72.67 & 26.37 & 71.27 & 72.99 & 21.67 & 71.26 & 72.56 & 24.26 & 70.78 \\
				\hline
				\multicolumn{1}{|p{4.25em}|}{ImageNet} & $\ell_1$    & 71.58 & 30.33 & 71.02 & 71.33 & 40.33 & 70.12 & 72.01 & 28.07 & 70.93 \\
				\multicolumn{1}{|p{4.25em}|}{VGG16} & $\ell_2$    & 71.58 & 29.47 & 70.83 & 71.33 & 40.45 & 70.13 & 72.01 & 27.89 & 71.02 \\
				& $\mathbf{GM}$    & 71.58 & 30.76 & 70.95 & 71.33 & 39.86 & 70.33 & 72.01 & 28.01 & 70.74 \\
				\hline
				\multicolumn{1}{|p{4.25em}|}{CIFAR10} & $\ell_1$    & 92.98 & 77.73 & 93.08 & 92.97 & 76.02 & 92.82 & 93.01 & 79.93 & 92.81 \\
				\multicolumn{1}{|p{4.25em}|}{ResNet56} & $\ell_2$    & 92.98 & 79.02 & 92.83 & 92.97 & 77.91 & 92.72 & 93.01 & 82.43 & 92.81 \\
				& $\mathbf{GM}$    & 92.98 & 74.26 & 92.77 & 93.2  & 73.93 & 92.61 & 93.01 & 80.48 & 92.84 \\
				\hline
				\multicolumn{1}{|p{4.25em}|}{CIFAR100} & $\ell_1$    & 71.36 & 50.64 & 70.15 & 70.02 & 52.41 & 69.19 & 70.48 & 52.19 & 69.77 \\
				\multicolumn{1}{|p{4.25em}|}{ResNet56} & $\ell_2$    & 71.36 & 53.44 & 70.16 & 70.02 & 52.73 & 69.31 & 70.48 & 52.16 & 69.62 \\
				& $\mathbf{GM}$    & 71.36 & 45.12 & 70.22 & 70.02 & 52.62 & 69.54 & 70.48 & 50.74 & 69.69 \\
				\hline
				\multicolumn{1}{|p{4.25em}|}{ImageNet} & $\ell_1$    & 73.31 & 62.22 & 73.06 & 73.16 & 54.24 & 72.99 & 73.21 & 63.12 & 73.02 \\
				\multicolumn{1}{|p{4.25em}|}{ResNet34} & $\ell_2$    & 73.31 & 62.02 & 72.91 & 73.16 & 53.64 & 72.78 & 73.21 & 62.98 & 72.86 \\
				& $\mathbf{GM}$    & 73.31 & 61.88 & 72.96 & 73.16 & 53.48 & 72.94 & 73.21 & 62.36 & 73.04 \\
				\hline
			\end{tabular}%
		}
		\label{tab:vggmore}%
	\end{table*}%
All the setting of these experiments are under can be found in \url{https://github.com/bearpaw/pytorch-classification}. Specifically, for pruning ratio:

VGG16 on CIFAR10, CIFAR100 and ImageNet:

\url{https://github.com/Eric-mingjie/rethinking-network-pruning/blob/master/cifar/l1-norm-pruning/vggprune.py#L84}

ResNet56 on CIFAR10 and CIFAR100:

\url{https://github.com/Eric-mingjie/rethinking-network-pruning/blob/master/cifar/l1-norm-pruning/res56prune.py#L94}

ResNet34 on ImageNet:

\url{https://github.com/Eric-mingjie/rethinking-network-pruning/blob/master/imagenet/l1-norm-pruning/prune.py#L138}

\clearpage    
\section{About weight decay}
\label{app:weight_decay}

\begin{figure} [htbp]
	\centering 
	\includegraphics[width=0.4\linewidth]{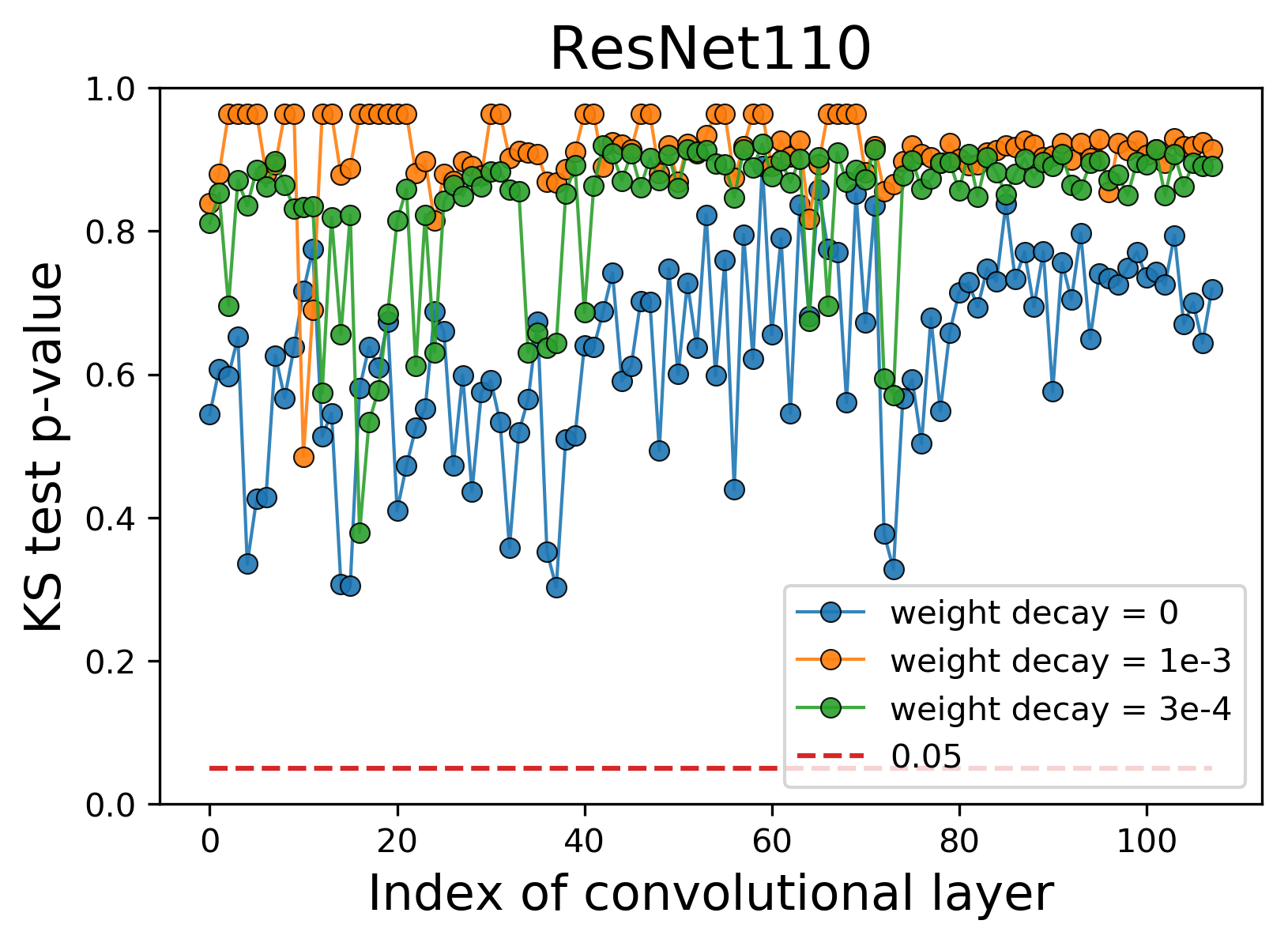}
	\includegraphics[width=0.4\linewidth]{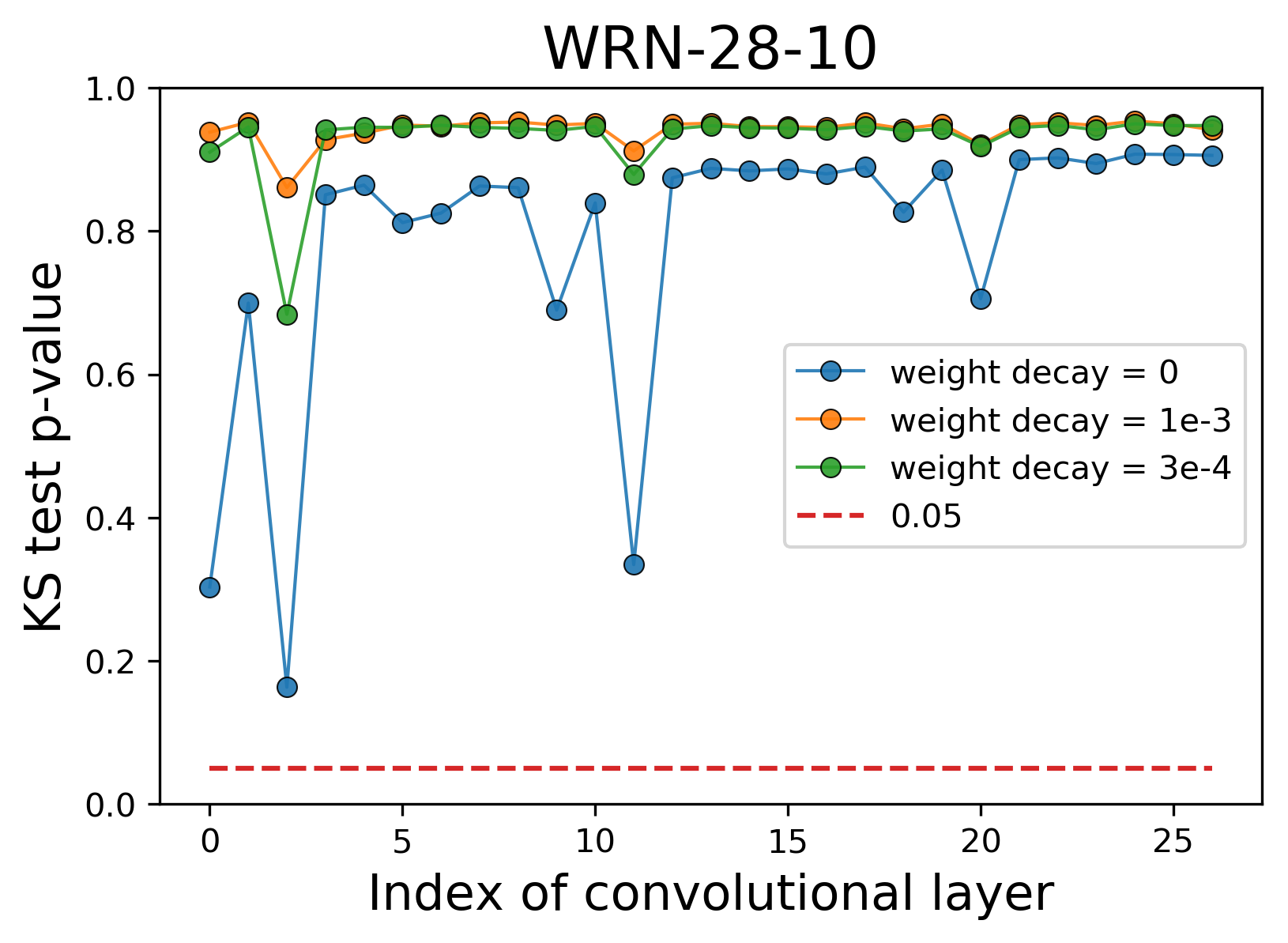} 
	\caption{KS test~\cite{lilliefors1967kolmogorov} while using different settings of weight decay.} 
	\label{fig:wd}
\end{figure}

We train the ResNet110 and WRN-28-10 on CIFAR100 with different weight decay~(1e-3, 3e-4 and 0) and use KS test to verify whether the parameters of different layers follow a normal distribution. In Fig.~\ref{fig:wd}, we can find

(1)~When weight decay~(wd) is non-zero, the normality is higher than that when weight decay is 0. 

(2)~If weight decay is 0, the p-value can still be much greater than 0.05, which means that the regularization of weight decay may not be the key reason for CWDA. The distribution of the parameters in these two networks~(weight decay is 0) are shown in Fig.~\ref{fig:wd_resnet} and Fig.~\ref{fig:wd_wrn}.

\begin{figure} [htbp]
	\centering 
	\includegraphics[width=1\linewidth]{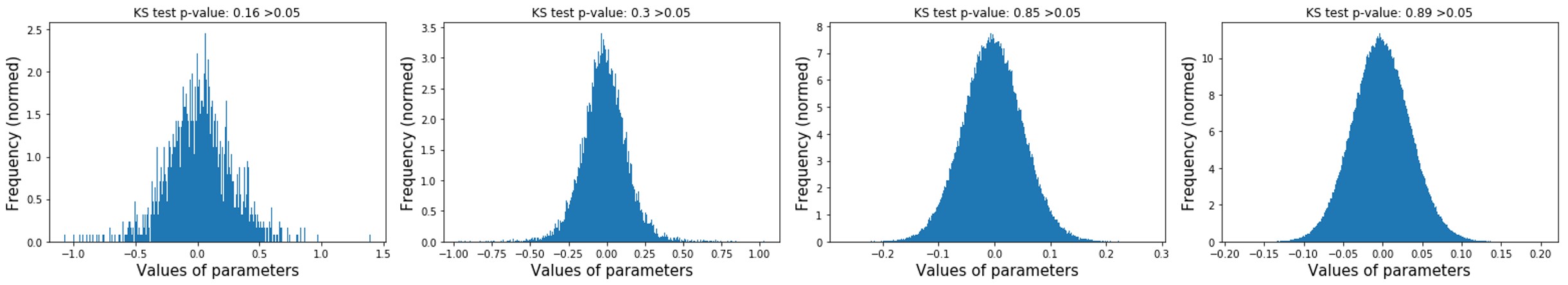}
	\caption{The distribution of parameters in different convolutional filters~(WRN-28-10, wd = 0).} 
	\label{fig:wd_wrn}
\end{figure}
\begin{figure} [htbp]
	\centering 
	\includegraphics[width=1\linewidth]{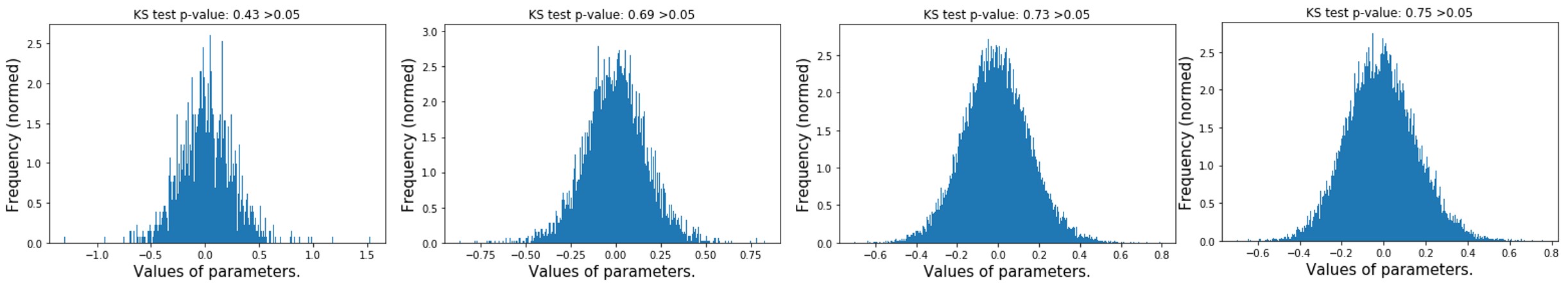}
	\caption{The distribution of parameters in different convolutional filters~(ResNet110, wd = 0).} 
	\label{fig:wd_resnet}
\end{figure}

\clearpage
\section{More visualizations of correlation matrix}
\label{app:diag_matrix}
\subsection{VGG16}

\begin{figure}[h]
	\centering 
	\includegraphics[width=0.75\linewidth]{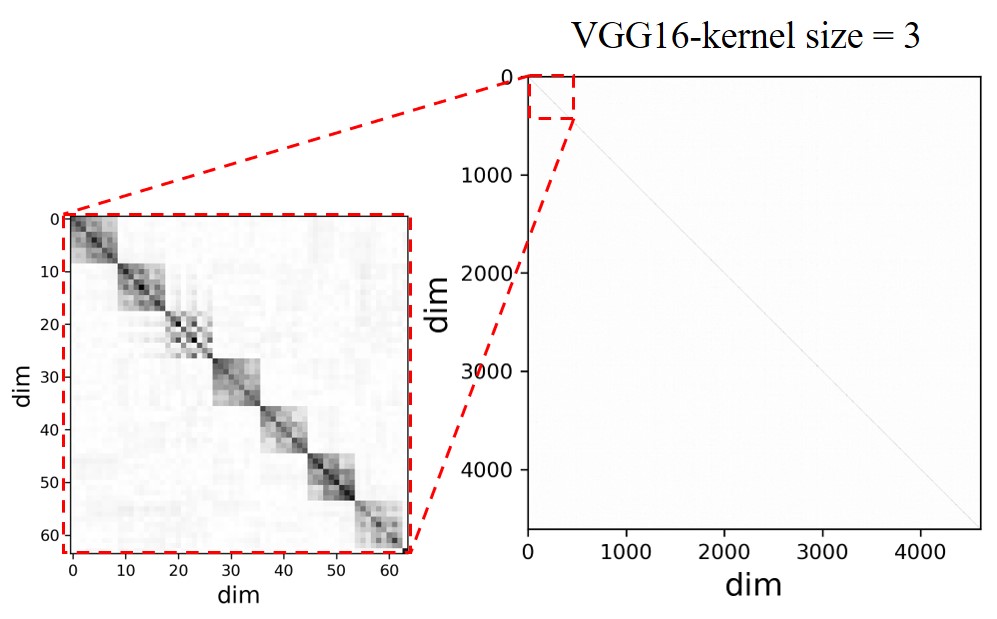}
	\label{fig:cwda_new1}
\end{figure}
\begin{figure}[h]
	\centering 
	\includegraphics[width=0.75\linewidth]{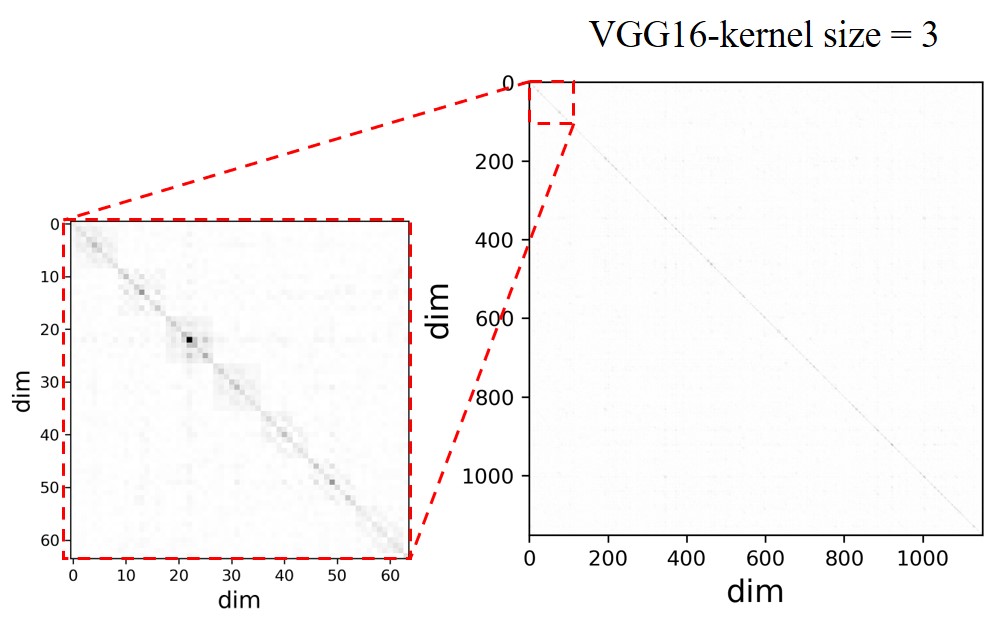}
	\label{fig:cwda_new1}
\end{figure}

\clearpage
\subsection{VGG19}

\begin{figure}[h]
	\centering 
	\includegraphics[width=0.75\linewidth]{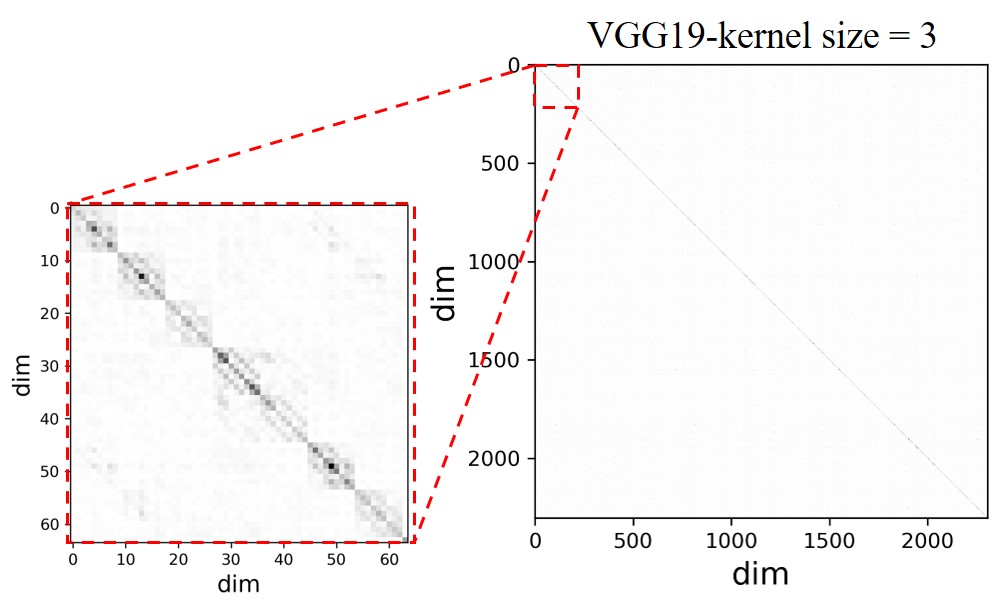}
	\label{fig:cwda_new1}
\end{figure}
\begin{figure}[h]
	\centering 
	\includegraphics[width=0.75\linewidth]{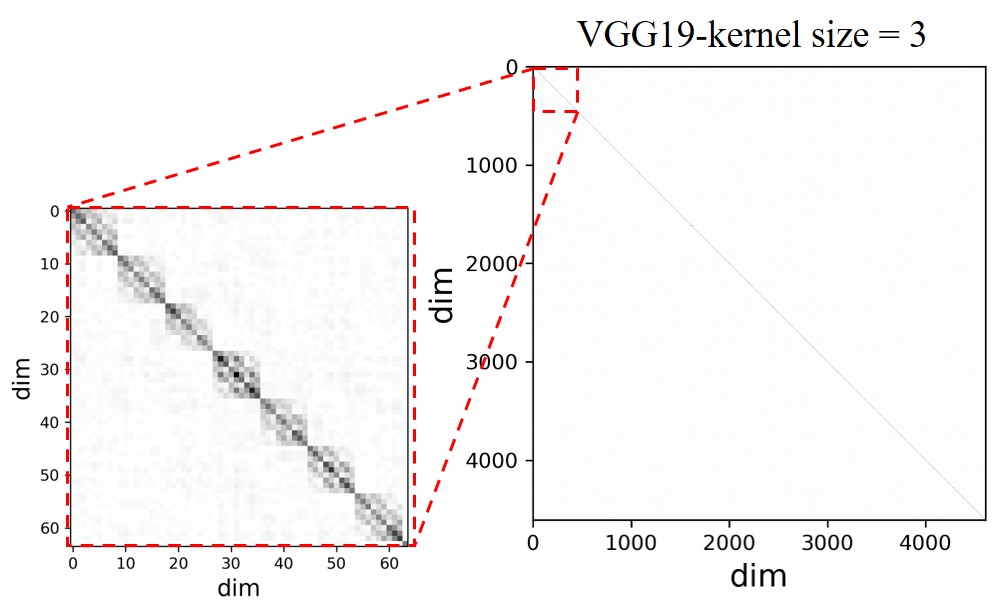}
	\label{fig:cwda_new1}
\end{figure}

\clearpage
\subsection{ResNet18}

\begin{figure}[h]
	\centering 
	\includegraphics[width=0.75\linewidth]{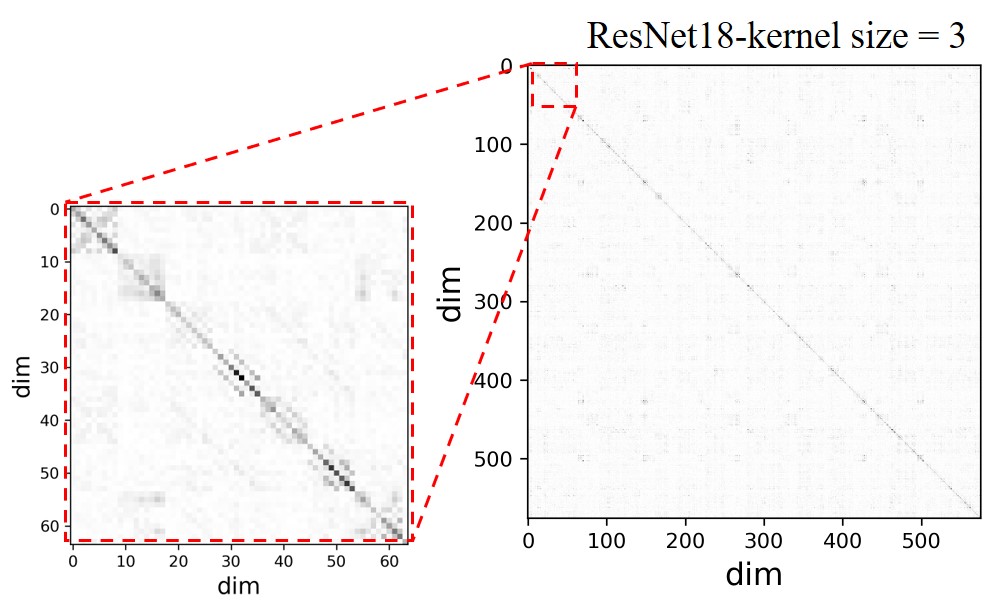}
	\label{fig:cwda_new1}
\end{figure}
\begin{figure}[h]
	\centering 
	\includegraphics[width=0.75\linewidth]{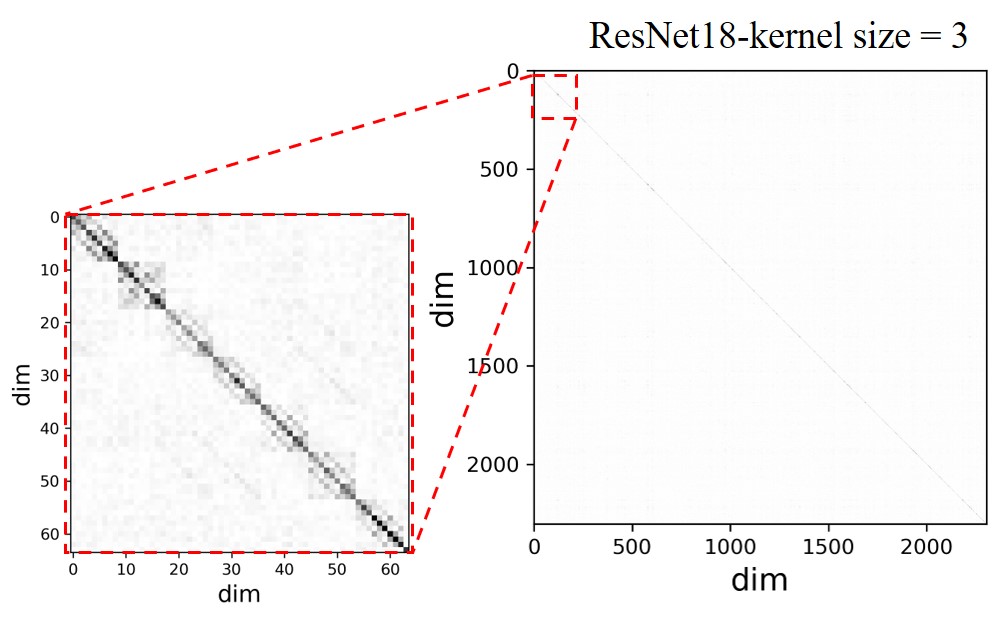}
	\label{fig:cwda_new1}
\end{figure}

\clearpage
\subsection{ResNet50}

\begin{figure}[h]
	\centering 
	\includegraphics[width=0.75\linewidth]{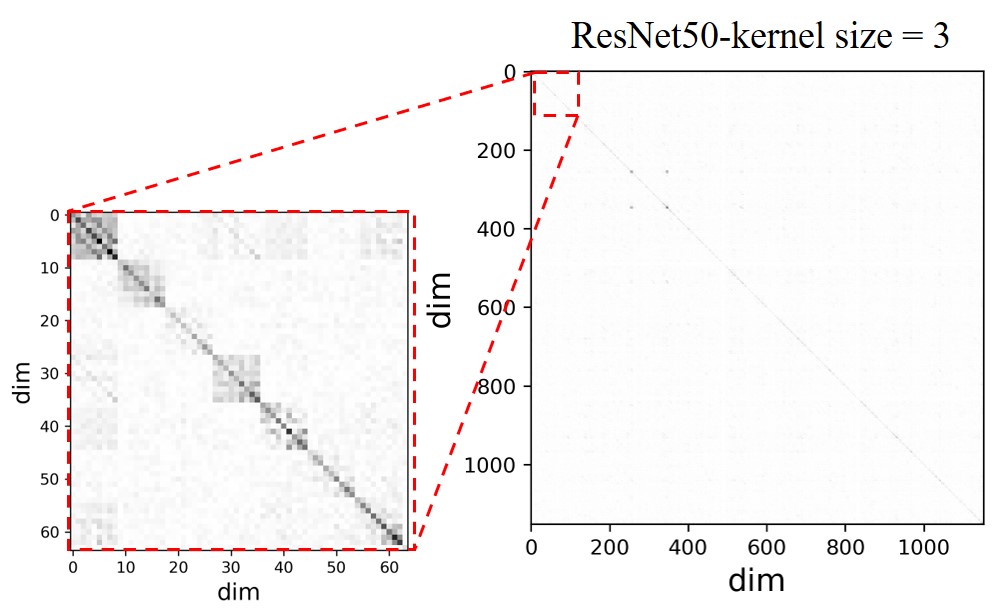}
	\label{fig:cwda_new1}
\end{figure}
\begin{figure}[h]
	\centering 
	\includegraphics[width=0.75\linewidth]{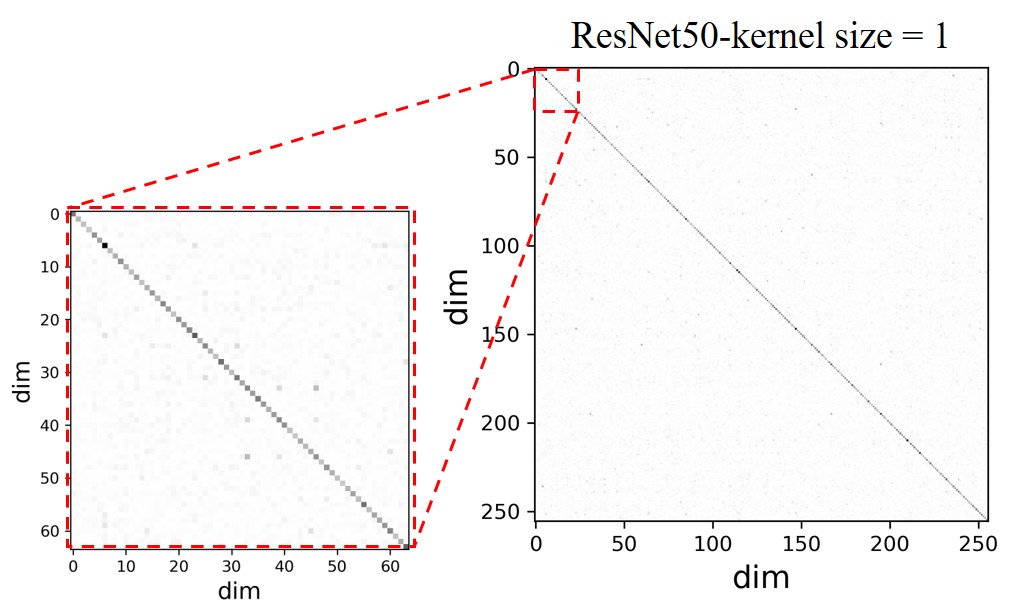}
	\label{fig:cwda_new1}
\end{figure}

\clearpage
\subsection{AlexNet}

\begin{figure}[h]
	\centering 
	\includegraphics[width=0.75\linewidth]{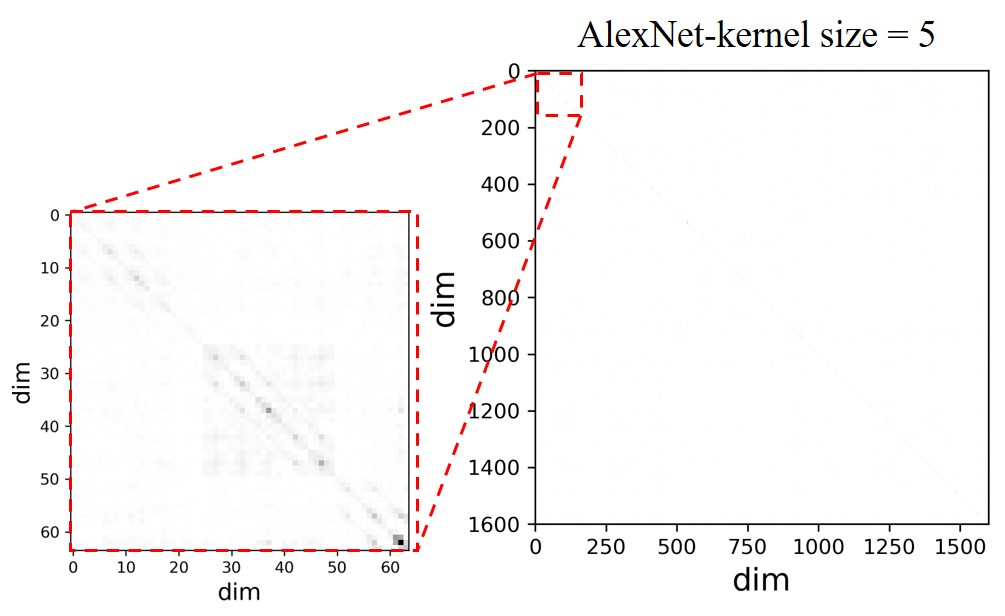}
	\label{fig:cwda_new1}
\end{figure}
\begin{figure}[h]
	\centering 
	\includegraphics[width=0.75\linewidth]{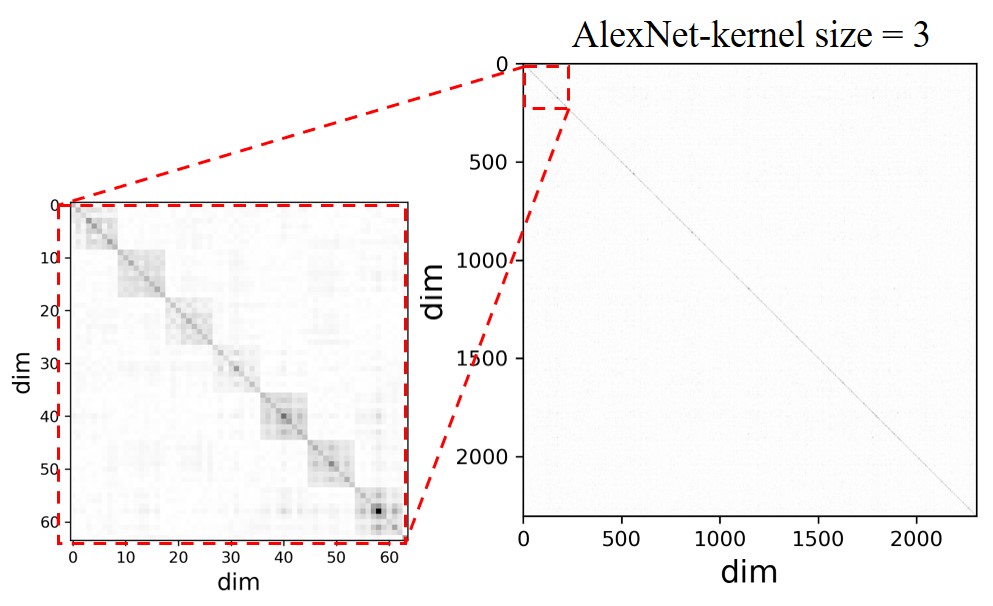}
	\label{fig:cwda_new1}
\end{figure}

\clearpage
\subsection{DenseNet}

\begin{figure}[h]
	\centering 
	\includegraphics[width=0.75\linewidth]{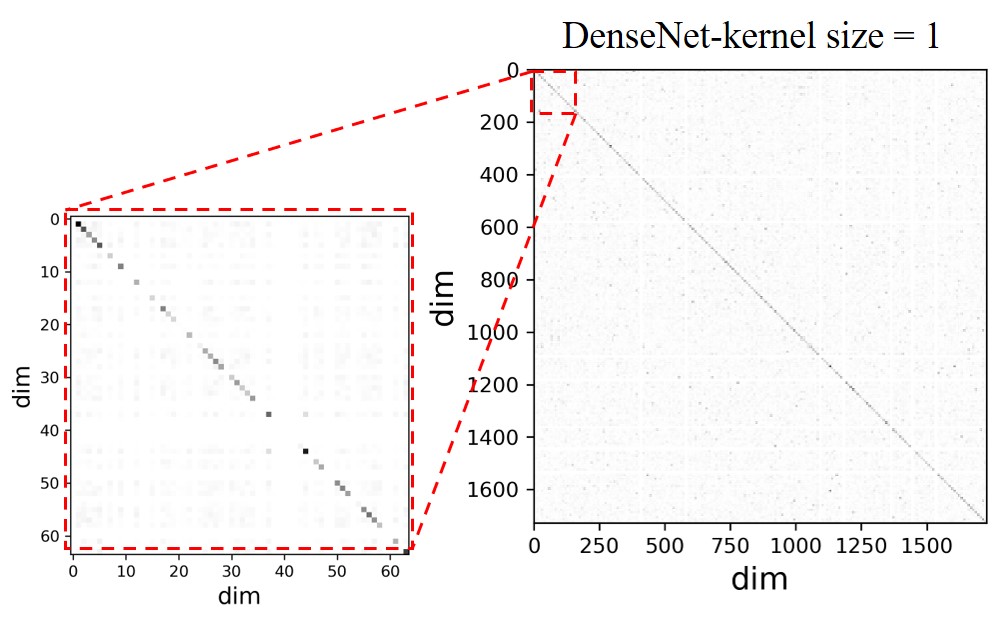}
	\label{fig:cwda_new1}
\end{figure}
\begin{figure}[h]
	\centering 
	\includegraphics[width=0.75\linewidth]{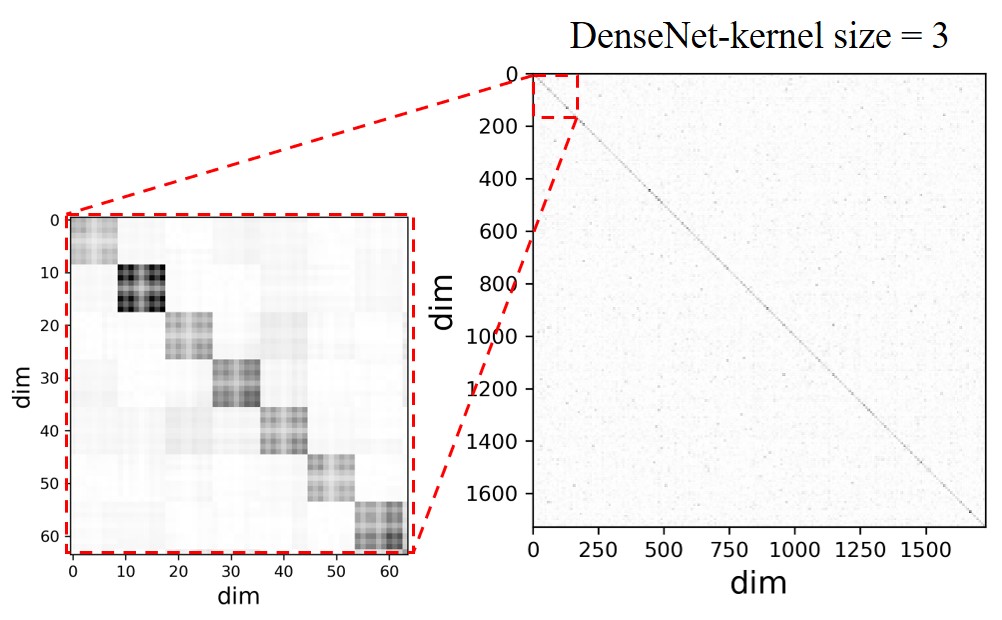}
	\label{fig:cwda_new1}
\end{figure}

\clearpage
\subsection{ResNext}

\begin{figure}[h]
	\centering 
	\includegraphics[width=0.75\linewidth]{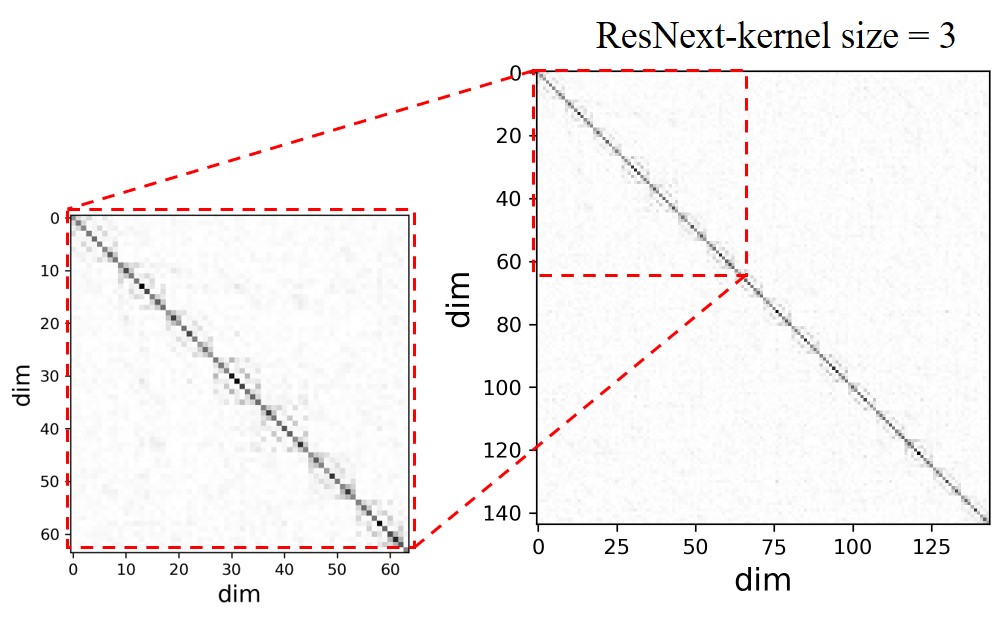}
	\label{fig:cwda_new1}
\end{figure}
\begin{figure}[h]
	\centering 
	\includegraphics[width=0.75\linewidth]{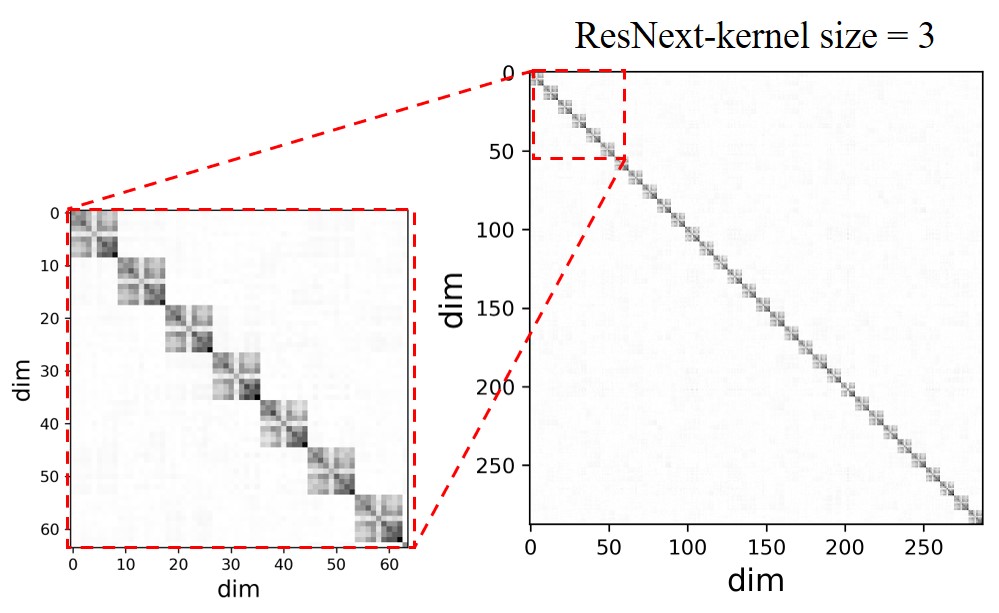}
	\label{fig:cwda_new1}
\end{figure}

\clearpage
\subsection{MobileNet}

\begin{figure}[h]
	\centering 
	\includegraphics[width=0.75\linewidth]{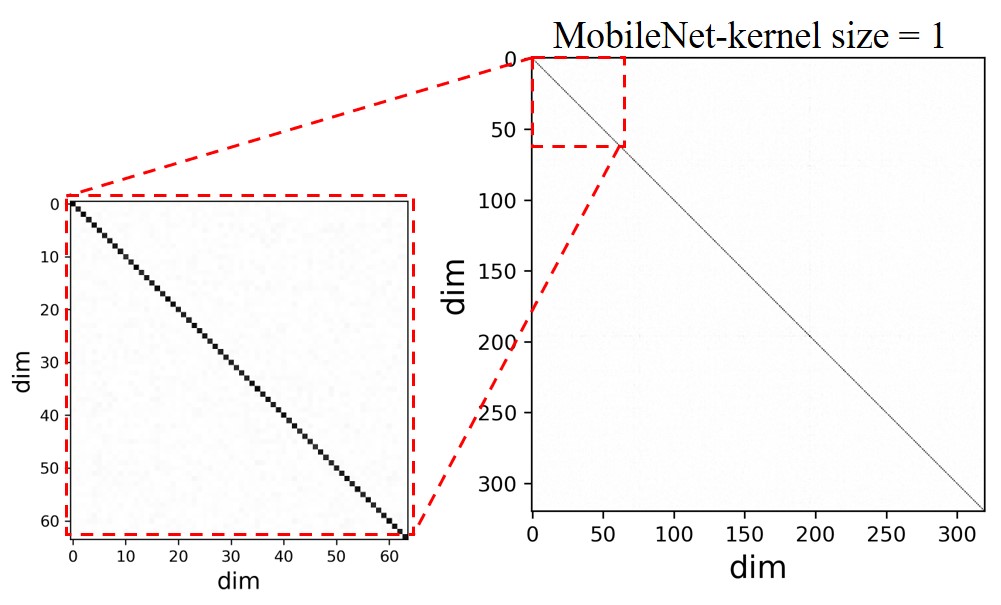}
	\label{fig:cwda_new1}
\end{figure}
\begin{figure}[h]
	\centering 
	\includegraphics[width=0.75\linewidth]{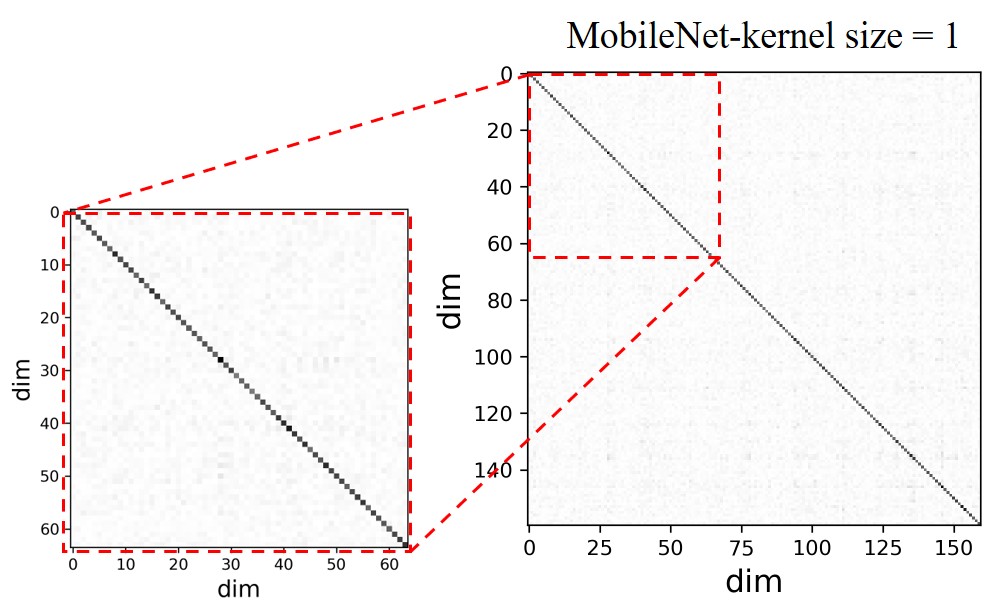}
	\label{fig:cwda_new1}
\end{figure}

\clearpage
\section{More experiments for supporting our analysis in global pruning}
\label{app:support}

\begin{figure}[h]
	\centering 
	\includegraphics[width=1\linewidth]{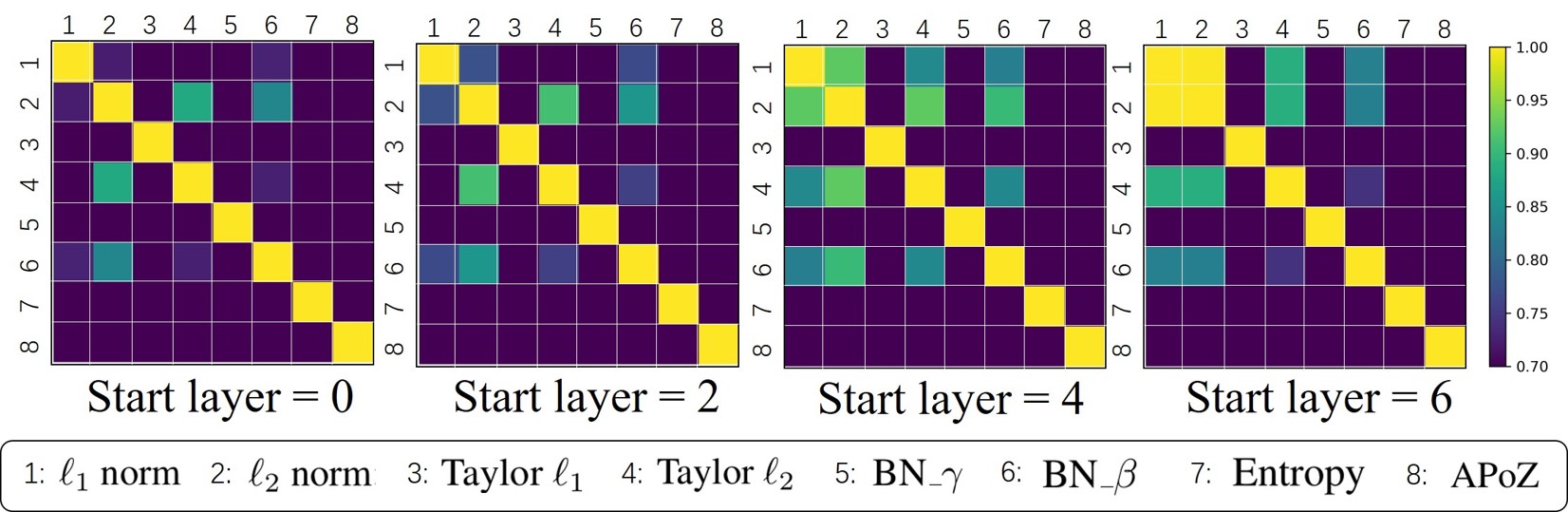} 
	\caption{Global pruning with different start layer.} 
	\label{more_exp}
\end{figure}

For VGG16. As shown in Fig.\ref{fig:magnitude}~(a-b), compared with ResNet56, VGG16 has some layers with different dimensions but similar \textit{Importance Score} measured by $\ell_1$ or $\ell_2$, such as ``layer 2'' and ``layer 8'' for $\ell_2$ criterion in Fig.\ref{fig:magnitude}~(a). From Table~\ref{tab:twolayer_global}~(3-4), these pairs of layers make the Sp small, which explain why the result of $\ell_1$ and $\ell_2$ pruning is not similar in Fig.~\ref{fig:other_simi}~(e) for VGG16. 
We consider a special class of global pruning, \textit{i.e.,} the convolutional filters from  one middle layer~(called ``Start layer'') to the last layer are pruned globally. According to our analysis and Fig.\ref{fig:magnitude}~(a-b), we can deduce that when ``Start layer'' $\geq 4$, the Sp between $\ell_1$ and $\ell_2$ is large enough. The experiments in Fig.\ref{more_exp} are consistent with our analysis, which imply our analysis is reasonable.

\clearpage    
\section{Statistical Test}
\label{app:Statistical Test}

In this section, according to Section \ref{Statistical test}, we have a series of statistical tests for the necessary conditions of CWDA. let $F_{ij}\in \mathbb{R}^{N_i\times k\times k}$ represent the $j^{\rm th}$ filter of the $i^{\rm th}$ convolutional layer.\footnote{The statistical tests about the situation with or without weight decay can be found in Appendix~\ref{app:weight_decay}.}

(1)~\textbf{Gaussian}.~We verify whether $F_{ij}$ approximatively follow a Gaussian-alike distribution. In $i^{\rm th}$ layer, we use Kolmogorov–Smirnov~(KS) test~\cite{lilliefors1967kolmogorov} to check if all the weights in the same layer follow a normal distribution.   

(2)~\textbf{Variance}.~ We verify whether the variance of the diagonal elements of $\Sigma_{\text{diag}}$ are small enough. Since Appendix~\ref{app:relax}, Let $\sigma_j$ denotes the standard deviation of all the weights of filter $F_{ij}$ in $i^{\rm th}$ layer. We use Student's t test~\cite{efron1969student} to check if the variance of these $\sigma_j$ is small enough. The null hypothesis $H_0$ and the alternative hypothesis $H_1$ are:
$$ H_0: \mathbf{Var}(\sigma_1^2,\sigma_2^2,..,\sigma_{N_i}^2)\leq \sigma_0^2,\quad \quad H_1: \mathbf{Var}(\sigma_1^2,\sigma_2^2,..,\sigma_{N_i}^2)> \sigma_0^2. $$
where $N_i$ denotes the number of the filters in $i^{\rm th}$ layer and $\sigma_0$ is a given real number which is 
small enough, like $\sigma_0^2 = 0.0001$.

(3)~\textbf{Mean}.~We verify whether the mean of $F_{ij}$ is 0. Let the mean of all the weights in the same layer is $\mu$. We use Student's t test~\cite{efron1969student} to check if $\mu$ is close to 0. First, we check the upper bound~(Mean-Left) of $\mu$, \textit{i.e.},
$$ H_0: \mu \leq \epsilon_0,\quad \quad H_1: \mu > \epsilon_0. $$
where $\epsilon_0$ is a small constant, like $\epsilon_0 = 0.01$. Next, we check the lower bound~(Mean-Right) and the null hypothesis $H_0$ and the alternative hypothesis $H_1$ are:
$$ H_0: \mu \geq -\epsilon_0,\quad \quad H_1: \mu < -\epsilon_0. $$

(4)~\textbf{Magnitude}.~We verify whether $\epsilon$ is small enough. Let $h$ denote the mean of the off-diagonal elements of $\mathbf{\Sigma}_{\text{diag}} + \epsilon\cdot\mathbf{\Sigma}_{\text{block}}$. 
$$ H_0: h \leq \epsilon_0,\quad \quad H_1: h > \epsilon_0. $$

	\begin{table*}[htbp]
		\centering
		\small
		\caption{The experiments for having the comprehensive statistical tests on CWDA.}
			\begin{tabular}{|l|l|l|}
				\hline
				\textbf{NETWORK STRUCTURE} & \textbf{OPTIMIZER} & \textbf{REGULARIZATION} \\
				\hline
				ResNet~\cite{He_2016_CVPR} & SGD~\cite{sutskever2013importance}   & L1 norm \\
				VGG~\cite{simonyan2014very}   & ASGD~\cite{polyak1992acceleration}  & L2 norm \\
				AlexNet~\cite{krizhevsky2014one} & Adam~\cite{kingma2014adam}  & RReLu~\cite{xu2015empirical} \\
				DenseNet~\cite{huang2017densely} & Adagrad~\cite{duchi2011adaptive} & Dropact~\cite{liang2018drop} \\
				PreResNet~\cite{he2016identity} & Adamax~\cite{kingma2014adam} & Autoaug~\cite{cubuk2019autoaugment} \\
				WRN~\cite{zagoruyko2016wide}   & Adadelta~\cite{zeiler2012adadelta} & Cutout~\cite{devries2017improved} \\
				ResNext~\cite{xie2017aggregated} &       & Cutmix~\cite{yun2019cutmix} \\
				\hline
				\textbf{ATTENTION MECHANISM} & \textbf{INITIALIZATION} & \textbf{DATASET} \\
				\hline
				SENet~\cite{hu2018squeeze} & Kaiming-normal~\cite{he2015delving} & CIFAR10~\cite{krizhevsky2009learning} \\
				DIANet~\cite{huang2019dianet} & Kaiming-uniform~\cite{he2015delving} & CIFAR100~\cite{krizhevsky2009learning} \\
				SRMNet~\cite{lee2019srm} & Xavier-normal~\cite{glorot2010understanding} & ImageNet~\cite{russakovsky2015imagenet} \\
				CBAM~\cite{woo2018cbam}  & Xavier-uniform~\cite{glorot2010understanding} & MNIST~\cite{lecun1998gradient} \\
				IEBN~\cite{liang2019instance}  & Orthogonal~\cite{saxe2013exact} &  \\
				SGENet~\cite{li2019spatial} &       &  \\
				\hline
				\textbf{SEGMENTATION} & \textbf{DETECTION} &  \textbf{BATCH NORMALIZATION}\\
				\hline
				SegNet~\cite{badrinarayanan2017segnet}& Faster RCNN~\cite{ren2015faster} & VGG \\
				PSPNet~\cite{zhao2017pyramid} &       & VGG-bn \\
				\hline
				\textbf{PYTORCH PRETRAIN} & \textbf{MATTING} & \textbf{LEARNING RATE} \\
				\hline
				ResNet18/34/50 & Deep image matting~\cite{xu2017deep} & Schedule150-225 \\
				VGG11/16/19 & AlphaGAN matting~\cite{lutz2018alphagan} & Schedule82-164 \\
				\cline{1-2}    \textbf{STYLE TRANSFER} & \textbf{GAN} & Schedule60-120 \\
				\cline{1-2}    Fast neural style~\cite{johnson2016perceptual} & DCGAN~\cite{radford2015unsupervised} & Cos-lr~\cite{loshchilov2016sgdr} \\
				\hline
			\end{tabular}%
		\label{tab:expassumption}
		\vspace{-0.4cm}
	\end{table*}%



\clearpage
Next, we show the passing rate about the statistical tests for different situations. ``in the front of network'' denotes whether all the failed cases are the layers whose position is in the front of the network.

For Network structure: \url{https://github.com/bearpaw/pytorch-classification}.

\begin{table}[htbp]
\small
  \centering
  \caption{\textbf{Network structure}.}
    \begin{tabular}{llccccc}
    \hline
    Experiments & Remark & \multicolumn{1}{l}{Gaussian} & \multicolumn{1}{l}{Variance} & \multicolumn{1}{l}{Mean} & \multicolumn{1}{l}{Magnitude} & in the front of network? \\
    \hline
    ResNet164 & CIFAR100 & 98.77\% & 97.55\% & 100\% & 97.55\% & {\color{green}{\Checkmark}}   \\
    VGG16 & CIFAR100 & 100\% & 93.75\% & 100\% & 100\% & {\color{green}{\Checkmark}}   \\
    AlexNet & CIFAR100 & 100\% & 100\% & 100\% & 100\% & {\color{green}{\Checkmark}}   \\
    DenseNet-BC-100-12 & CIFAR100 & 100\% & 98.99\% & 100\% & 98.99\% & {\color{green}{\Checkmark}}   \\
    PreResNet110 & CIFAR100 & 100\% & 99.08\% & 100\% & 100\% & {\color{green}{\Checkmark}}   \\
    WRN28-10 & CIFAR100 & 100\% & 100\% & 100\% & 100\% & {\color{green}{\Checkmark}}   \\
    ResNext-16x64d & CIFAR100 & 100\% & 100\% & 100\% & 100\% & {\color{green}{\Checkmark}}   \\
    \hline
    ResNet164 & CIFAR10 & 100.00\% & 97.55\% & 100\% & 97.55\% & {\color{green}{\Checkmark}}   \\
    VGG16 & CIFAR10 & 100\% & 93.75\% & 100\% & 93.75\% & {\color{green}{\Checkmark}}   \\
    AlexNet & CIFAR10 & 100\% & 100\% & 100\% & 100\% & {\color{green}{\Checkmark}}   \\
    DenseNet-BC-100-12 & CIFAR10 & 100\% & 100\% & 100\% & 98.99\% & {\color{green}{\Checkmark}}   \\
    PreResNet110 & CIFAR10 & 100\% & 99.08\% & 100\% & 100\% & {\color{green}{\Checkmark}}   \\
    WRN28-10 & CIFAR10 & 100\% & 100\% & 100\% & 100\% & {\color{green}{\Checkmark}}   \\
    ResNext-16x64d & CIFAR10 & 100\% & 100\% & 100\% & 100\% & {\color{green}{\Checkmark}}   \\
    \hline
    \end{tabular}%
  \label{tab:addlabel}%
\end{table}%

For Optimizer: \url{https://pytorch.org/docs/master/optim.html#torch-optim.}

\begin{table}[htbp]
  \centering
  \small
  \caption{\textbf{Optimizer}}
    \begin{tabular}{lcccccc}
    \hline
    Experiments & \multicolumn{1}{l}{Remark} & \multicolumn{1}{l}{Gaussian} & \multicolumn{1}{l}{Variance} & \multicolumn{1}{l}{Mean} & \multicolumn{1}{l}{Magnitude} & \multicolumn{1}{l}{in the front of network?} \\
    \hline
    ASGD  & ResNet164 & 100\% & 99.39\% & 99.39\% & 100\% & {\color{green}{\Checkmark}}   \\
    Adam  & ResNet164 & 99.39\% & 90.18\% & 100\% & 99.39\% & {\color{red}{\XSolidBrush}} \\
    Adagrad & ResNet164 & 100\% & 99.39\% & 100\% & 100\% & {\color{green}{\Checkmark}}   \\
    Adamax & ResNet164 & 100\% & 96.93\% & 100\% & 99.39\% & {\color{red}{\XSolidBrush}} \\
    Adadelta & ResNet164 & 100\% & 100\% & 100\% & 100\% & {\color{green}{\Checkmark}}   \\
    SGD   & ResNet164 & 98.77\% & 97.55\% & 100\% & 97.53\% & {\color{green}{\Checkmark}}   \\
    \hline
    ASGD  & VGG16 & 100\% & 100\% & 93.75\% & 100\% & {\color{green}{\Checkmark}}   \\
    Adam  & VGG16 & 93.75\% & 93.75\% & 100\% & 100.00\% & {\color{green}{\Checkmark}}   \\
    Adagrad & VGG16 & 100\% & 100\% & 100\% & 100\% & {\color{green}{\Checkmark}}   \\
    Adamax & VGG16 & 100\% & 100\% & 100\% & 93.75\% & {\color{red}{\XSolidBrush}} \\
    Adadelta & VGG16 & 100\% & 100\% & 100\% & 100\% & {\color{green}{\Checkmark}}   \\
    SGD   & VGG16 & 100\% & 93.75\% & 100\% & 100\% & {\color{green}{\Checkmark}}   \\
    \hline
    ASGD  & AlexNet & 100\% & 100\% & 100\% & 100\% & {\color{green}{\Checkmark}}   \\
    Adam  & AlexNet & 100\% & 100\% & 100\% & 100\% & {\color{green}{\Checkmark}}   \\
    Adagrad & AlexNet & 100\% & 100\% & 100\% & 100\% & {\color{green}{\Checkmark}}   \\
    Adamax & AlexNet & 100\% & 100\% & 100\% & 100\% & {\color{green}{\Checkmark}}   \\
    Adadelta & AlexNet & 100\% & 100\% & 100\% & 100\% & {\color{green}{\Checkmark}}   \\
    SGD   & AlexNet & 100\% & 100\% & 100\% & 100\% & {\color{green}{\Checkmark}}   \\
    \hline
    \end{tabular}%
  \label{tab:addlabel}%
\end{table}%

For Regularization:\url{https://github.com/LeungSamWai/Drop-Activation}

\url{https://github.com/uoguelph-mlrg/Cutout}

\url{https://github.com/clovaai/CutMix-PyTorch}

\url{https://github.com/DeepVoltaire/AutoAugment}

\begin{table}[htbp]
  \centering
  \caption{\textbf{Regularization}}
    \begin{tabular}{lcccccc}
    \hline
    Experiments & \multicolumn{1}{l}{Remark} & \multicolumn{1}{l}{Gaussian} & \multicolumn{1}{l}{Variance} & \multicolumn{1}{l}{Mean} & \multicolumn{1}{l}{Magnitude} & in the front of network? \\
    \hline
    L1 norm & ResNet164 & 100\% & 99.39\% & 99.39\% & 100\% & {\color{green}{\Checkmark}} \\
    L2 norm & ResNet164 & 98.77\% & 97.53\% & 100\% & 97.53\% & {\color{green}{\Checkmark}} \\
    RReLU & ResNet164 & 100\% & 99.39\% & 100\% & 100\% & {\color{green}{\Checkmark}} \\
    Dropact & ResNet164 & 100\% & 96.93\% & 100\% & 99.39\% & {\color{green}{\Checkmark}} \\
    Autoaugment & ResNet164 & 100\% & 96.93\% & 100\% & 99.39\% & {\color{green}{\Checkmark}} \\
    Cutout & ResNet164 & 100\% & 100\% & 100\% & 100\% & {\color{green}{\Checkmark}} \\
    Cutmix & ResNet164 & 98.77\% & 97.53\% & 100\% & 97.53\% & {\color{green}{\Checkmark}} \\
    \hline
    L1 norm & WRN28-10 & 100\% & 96.43\% & 100\% & 96.43\% & {\color{green}{\Checkmark}} \\
    L2 norm & WRN28-10 & 100\% & 100\% & 100\% & 100\% & {\color{green}{\Checkmark}} \\
    RReLU & WRN28-10 & 100\% & 96.43\% & 100\% & 100\% & {\color{green}{\Checkmark}} \\
    Dropact & WRN28-10 & 100\% & 96.43\% & 100\% & 100\% & {\color{green}{\Checkmark}} \\
    Autoaugment & WRN28-10 & 100\% & 96.43\% & 100\% & 100\% & {\color{green}{\Checkmark}} \\
    Cutout & WRN28-10 & 100\% & 96.43\% & 100\% & 100\% & {\color{green}{\Checkmark}} \\
    Cutmix & WRN28-10 & 100\% & 100\% & 100\% & 100\% & {\color{green}{\Checkmark}} \\
    \hline
    L1 norm & VGG16 & 100\% & 93.75\% & 100\% & 100\% & {\color{green}{\Checkmark}} \\
    L2 norm & VGG16 & 100\% & 93.75\% & 100\% & 100\% & {\color{green}{\Checkmark}} \\
    RReLU & VGG16 & 100\% & 93.75\% & 100\% & 93.75\% & {\color{green}{\Checkmark}} \\
    Dropact & VGG16 & 100\% & 93.75\% & 100\% & 100\% & {\color{green}{\Checkmark}} \\
    Autoaugment & VGG16 & 100\% & 93.75\% & 100\% & 100\% & {\color{green}{\Checkmark}} \\
    Cutout & VGG16 & 100\% & 93.75\% & 93.75\% & 93.75\% & {\color{green}{\Checkmark}} \\
    Cutmix & VGG16 & 100\% & 93.75\% & 100\% & 100\% & {\color{green}{\Checkmark}} \\
    \hline
    L1 norm & PreResNet110 & 100\% & 99.08\% & 100\% & 100\% & {\color{green}{\Checkmark}} \\
    L2 norm & PreResNet110 & 100\% & 99.08\% & 100\% & 100\% & {\color{green}{\Checkmark}} \\
    RReLU & PreResNet110 & 100\% & 100\% & 100\% & 100\% & {\color{green}{\Checkmark}} \\
    Dropact & PreResNet110 & 100\% & 99.08\% & 100\% & 100\% & {\color{green}{\Checkmark}} \\
    Autoaugment & PreResNet110 & 100\% & 100\% & 100\% & 100\% & {\color{green}{\Checkmark}} \\
    Cutout & PreResNet110 & 100\% & 99.08\% & 99.08\% & 99.08\% & {\color{green}{\Checkmark}} \\
    Cutmix & PreResNet110 & 100\% & 99.08\% & 100\% & 100\% & {\color{green}{\Checkmark}} \\
    \hline
    L1 norm & AlexNet & 100\% & 100\% & 100\% & 100\% & {\color{green}{\Checkmark}} \\
    L2 norm & AlexNet & 100\% & 100\% & 100\% & 100\% & {\color{green}{\Checkmark}} \\
    RReLU & AlexNet & 100\% & 100\% & 100\% & 100\% & {\color{green}{\Checkmark}} \\
    Dropact & AlexNet & 100\% & 100\% & 100\% & 100\% & {\color{green}{\Checkmark}} \\
    Autoaugment & AlexNet & 100\% & 100\% & 100\% & 100\% & {\color{green}{\Checkmark}} \\
    Cutout & AlexNet & 100\% & 100\% & 100\% & 100\% & {\color{green}{\Checkmark}} \\
    Cutmix & AlexNet & 100\% & 100\% & 100\% & 100\% & {\color{green}{\Checkmark}} \\
    \hline
    L1 norm & DenseNet-BC-100-12 & 100\% & 98.99\% & 100\% & 98.99\% & {\color{green}{\Checkmark}} \\
    L2 norm & DenseNet-BC-100-12 & 100\% & 98.99\% & 100\% & 98.99\% & {\color{green}{\Checkmark}} \\
    RReLU & DenseNet-BC-100-12 & 100\% & 98.99\% & 100\% & 98.99\% & {\color{green}{\Checkmark}} \\
    Dropact & DenseNet-BC-100-12 & 98.99\% & 98.99\% & 98.99\% & 98.99\% & {\color{green}{\Checkmark}} \\
    Autoaugment & DenseNet-BC-100-12 & 100\% & 98.99\% & 100\% & 98.99\% & {\color{green}{\Checkmark}} \\
    Cutout & DenseNet-BC-100-12 & 100\% & 98.99\% & 98.99\% & 98.99\% & {\color{green}{\Checkmark}} \\
    Cutmix & DenseNet-BC-100-12 & 100\% & 98.99\% & 100\% & 98.99\% & {\color{green}{\Checkmark}} \\
    \hline
    \end{tabular}%
  \label{tab:addlabel}%
\end{table}%

For Attention:\url{https://github.com/moskomule/senet.pytorch}

\url{https://github.com/gbup-group/DIANet}

\url{https://github.com/EvgenyKashin/SRMnet}

\url{https://github.com/luuuyi/CBAM.PyTorch}

\url{https://github.com/gbup-group/IEBN}

\url{https://github.com/implus/PytorchInsight}

\begin{table}[htbp]
  \centering
  \caption{\textbf{Attention}}
    \begin{tabular}{lcccccc}
    \hline
    Experiments & \multicolumn{1}{l}{Remark} & \multicolumn{1}{l}{Gaussian} & \multicolumn{1}{l}{Variance} & \multicolumn{1}{l}{Mean} & \multicolumn{1}{l}{Magnitude} & \multicolumn{1}{l}{in the front of network?} \\
    \hline
    SENet & ResNet164 & 99.39\% & 99.39\% & 100\% & 100\% & {\color{green}{\Checkmark}} \\
    DIANet & ResNet164 & 99.39\% & 99.39\% & 100\% & 100\% & {\color{green}{\Checkmark}} \\
    SRMNet & ResNet164 & 99.39\% & 97.55\% & 100\% & 99.39\% & {\color{green}{\Checkmark}} \\
    CBAM  & ResNet164 & 99.39\% & 99.39\% & 100\% & 100\% & {\color{green}{\Checkmark}} \\
    IEBN  & ResNet164 & 99.39\% & 99.39\% & 99.39\% & 99.39\% & {\color{green}{\Checkmark}} \\
    SGENet & ResNet164 & 99.39\% & 98.77\% & 100\% & 100\% & {\color{green}{\Checkmark}} \\
    \hline
    SENet & VGG16 & 100\% & 93.75\% & 100\% & 100\% & {\color{green}{\Checkmark}} \\
    DIANet & VGG16 & 100\% & 93.75\% & 100\% & 93.75\% & {\color{green}{\Checkmark}} \\
    SRMNet & VGG16 & 100\% & 100\% & 100\% & 100\% & {\color{green}{\Checkmark}} \\
    CBAM  & VGG16 & 100\% & 93.75\% & 100\% & 100\% & {\color{green}{\Checkmark}} \\
    IEBN  & VGG16 & 100\% & 93.75\% & 93.75\% & 93.75\% & {\color{green}{\Checkmark}} \\
    SGENet & VGG16 & 100\% & 93.75\% & 100\% & 100\% & {\color{green}{\Checkmark}} \\
    \hline
    SENet & PreResNet110 & 99.08\% & 100\% & 100\% & 100\% & {\color{green}{\Checkmark}} \\
    DIANet & PreResNet110 & 100\% & 99.08\% & 100\% & 100\% & {\color{green}{\Checkmark}} \\
    SRMNet & PreResNet110 & 100\% & 99.08\% & 99.08\% & 100\% & {\color{green}{\Checkmark}} \\
    CBAM  & PreResNet110 & 100\% & 100\% & 100\% & 100\% & - \\
    IEBN  & PreResNet110 & 100\% & 99.08\% & 100\% & 99.08\% & {\color{green}{\Checkmark}} \\
    SGENet & PreResNet110 & 100\% & 100\% & 100\% & 99.08\% & {\color{green}{\Checkmark}} \\
    \hline
    SENet & DenseNet-BC-100-12 & 100\% & 100\% & 100\% & 100\% & {\color{green}{\Checkmark}} \\
    DIANet & DenseNet-BC-100-12 & 98.99\% & 98.99\% & 100\% & 100\% & {\color{green}{\Checkmark}} \\
    SRMNet & DenseNet-BC-100-12 & 100\% & 98.99\% & 98.99\% & 98.99\% & {\color{green}{\Checkmark}} \\
    CBAM  & DenseNet-BC-100-12 & 100\% & 100\% & 100\% & 98.99\% & {\color{green}{\Checkmark}} \\
    IEBN  & DenseNet-BC-100-12 & 100\% & 98.99\% & 100\% & 100\% & {\color{green}{\Checkmark}} \\
    SGENet & DenseNet-BC-100-12 & 100\% & 100\% & 98.99\% & 100\% & {\color{green}{\Checkmark}} \\
    \hline
    SENet & WRN28-10 & 100\% & 96.43\% & 100\% & 100\% & {\color{green}{\Checkmark}} \\
    DIANet & WRN28-10 & 100\% & 96.43\% & 100\% & 100\% & {\color{green}{\Checkmark}} \\
    SRMNet & WRN28-10 & 100\% & 96.43\% & 100\% & 100\% & {\color{green}{\Checkmark}} \\
    CBAM  & WRN28-10 & 100\% & 96.43\% & 100\% & 100\% & {\color{green}{\Checkmark}} \\
    IEBN  & WRN28-10 & 100\% & 96.43\% & 100\% & 100\% & {\color{green}{\Checkmark}} \\
    SGENet & WRN28-10 & 100\% & 96.43\% & 100\% & 100\% & {\color{green}{\Checkmark}} \\
    \hline
    \end{tabular}%
  \label{tab:addlabel}%
\end{table}%

For initialization: 

\url{https://pytorch.org/docs/master/nn.init.html#nn-init-doc.}

\begin{table}[htbp]
  \centering
  \caption{\textbf{Initialization}}
    \begin{tabular}{lcccccc}
    \hline
    Experiments & \multicolumn{1}{l}{Remark} & \multicolumn{1}{l}{Gaussian} & \multicolumn{1}{l}{Variance} & \multicolumn{1}{l}{Mean} & \multicolumn{1}{l}{Magnitude} & \multicolumn{1}{l}{in the front of network?} \\
    \hline
    Kaiming-uniform & ResNet164 & 98.77\% & 97.55\% & 100\% & 100\% & {\color{green}{\Checkmark}} \\
    Kaiming-normal & ResNet164 & 98.77\% & 97.53\% & 100\% & 97.55\% & {\color{green}{\Checkmark}} \\
    Xavier-normal & ResNet164 & 98.77\% & 96.32\% & 100\% & 97.55\% & {\color{green}{\Checkmark}} \\
    Xarier-uniform & ResNet164 & 98.16\% & 96.32\% & 100\% & 99.39\% & {\color{green}{\Checkmark}} \\
    Orthogonal & ResNet164 & 97.55\% & 96.32\% & 100\% & 100\% & {\color{green}{\Checkmark}} \\
    \hline
    Kaiming-uniform & VGG16 & 100\% & 93.75\% & 100\% & 100\% & {\color{green}{\Checkmark}} \\
    Kaiming-normal & VGG16 & 100\% & 93.75\% & 100\% & 100\% & {\color{green}{\Checkmark}} \\
    Xavier-normal & VGG16 & 100\% & 93.75\% & 100\% & 93.75\% & {\color{green}{\Checkmark}} \\
    Xarier-uniform & VGG16 & 100\% & 93.75\% & 100\% & 93.75\% & {\color{green}{\Checkmark}} \\
    Orthogonal & VGG16 & 100\% & 93.75\% & 93.75\% & 93.75\% & {\color{green}{\Checkmark}} \\
    \hline
    Kaiming-uniform & WRN28-10 & 100\% & 96.43\% & 100\% & 100\% & {\color{green}{\Checkmark}} \\
    Kaiming-normal & WRN28-10 & 100\% & 100\% & 100\% & 100\% & {\color{green}{\Checkmark}} \\
    Xavier-normal & WRN28-10 & 100\% & 96.43\% & 100\% & 100\% & {\color{green}{\Checkmark}} \\
    Xarier-uniform & WRN28-10 & 100\% & 96.43\% & 100\% & 100\% & {\color{green}{\Checkmark}} \\
    Orthogonal & WRN28-10 & 100\% & 96.43\% & 100\% & 100\% & {\color{green}{\Checkmark}} \\
    \hline
    Kaiming-uniform & PreResNet110 & 100\% & 99.08\% & 100\% & 100\% & {\color{green}{\Checkmark}} \\
    Kaiming-normal & PreResNet110 & 100\% & 99.08\% & 100\% & 100\% & {\color{green}{\Checkmark}} \\
    Xavier-normal & PreResNet110 & 100\% & 100\% & 100\% & 100\% & {\color{green}{\Checkmark}} \\
    Xarier-uniform & PreResNet110 & 100\% & 99.08\% & 100\% & 100\% & {\color{green}{\Checkmark}} \\
    Orthogonal & PreResNet110 & 100\% & 100\% & 100\% & 100\% & {\color{green}{\Checkmark}} \\
    \hline
    Kaiming-uniform & AlexNet & 100\% & 100\% & 100\% & 100\% & {\color{green}{\Checkmark}} \\
    Kaiming-normal & AlexNet & 100\% & 100\% & 100\% & 100\% & {\color{green}{\Checkmark}} \\
    Xavier-normal & AlexNet & 100\% & 100\% & 100\% & 100\% & {\color{green}{\Checkmark}} \\
    Xarier-uniform & AlexNet & 100\% & 100\% & 100\% & 100\% & {\color{green}{\Checkmark}} \\
    Orthogonal & AlexNet & 100\% & 100\% & 100\% & 100\% & {\color{green}{\Checkmark}} \\
    \hline
    Kaiming-uniform & DenseNet-BC-100-12 & 100\% & 98.99\% & 100\% & 98.99\% & {\color{green}{\Checkmark}} \\
    Kaiming-normal & DenseNet-BC-100-12 & 100\% & 98.99\% & 100\% & 98.99\% & {\color{green}{\Checkmark}} \\
    Xavier-normal & DenseNet-BC-100-12 & 100\% & 98.99\% & 100\% & 98.99\% & {\color{green}{\Checkmark}} \\
    Xarier-uniform & DenseNet-BC-100-12 & 98.99\% & 98.99\% & 98.99\% & 98.99\% & {\color{green}{\Checkmark}} \\
    Orthogonal & DenseNet-BC-100-12 & 100\% & 98.99\% & 100\% & 98.99\% & {\color{green}{\Checkmark}} \\
    \hline
    \end{tabular}%
  \label{tab:addlabel}%
\end{table}%

For dataset:
\begin{table}[htbp]
  \centering
  \caption{\textbf{Dataset}}
    \begin{tabular}{lcccccc}
    \hline
    Experiments & \multicolumn{1}{l}{Remark} & \multicolumn{1}{l}{Gaussian} & \multicolumn{1}{l}{Variance} & \multicolumn{1}{l}{Mean} & \multicolumn{1}{l}{Magnitude} & in the front of network? \\
    \hline
    CIFAR10 & WRN28-10 & 100\% & 96.43\% & 100\% & 100\% & {\color{green}{\Checkmark}} \\
    CIFAR100 & WRN28-10 & 100\% & 100\% & 100\% & 100\% &  {\color{green}{\Checkmark}} \\
    ImageNet & WRN28-10 & 100\% & 96.43\% & 100\% & 100\% & {\color{green}{\Checkmark}} \\
    MINIST & WRN28-10 & 100\% & 96.43\% & 100\% & 96\%  & {\color{green}{\Checkmark}} \\
    \hline
    \end{tabular}%
  \label{tab:addlabel}%
\end{table}%

For other tasks:

\url{https://github.com/meetshah1995/pytorch-semse}

\url{https://github.com/jwyang/faster-rcnn.pytorch}

\url{https://github.com/speedinghzl/pytorch-segmentation-toolbox}

\url{https://github.com/foamliu/Deep-Image-Matting-PyTorch}

\url{https://github.com/CDOTAD/AlphaGAN-Matting}

\url{https://github.com/abhiskk/fast-neural-style}

\url{https://github.com/csinva/gan-pretrained-pytorch}

\begin{table}[htbp]
  \centering
  \caption{\textbf{Other tasks}}
    \begin{tabular}{lcccccc}
    \hline
    Experiments & \multicolumn{1}{l}{Remark} & \multicolumn{1}{l}{Gaussian} & \multicolumn{1}{l}{Variance} & \multicolumn{1}{l}{Mean} & \multicolumn{1}{l}{Magnitude} & \multicolumn{1}{l}{in the front of network?} \\
    \hline
    SgeNet(Cityscapes) & Segmentation & 100\% & 100\% & 100\% & 100\% & {\color{green}{\Checkmark}} \\
    PSPNet(Cityscapes) & Segmentation & 100\% & 99.12\% & 100\% & 99.12\% & {\color{green}{\Checkmark}} \\
    \hline
    ResNet101(COCO) & Faster RCNN & 100\% & 99.05\% & 100\% & 100\% & {\color{red}{\XSolidBrush}} \\
    ResNet101(VOC2007) & Faster RCNN & 100\% & 99.05\% & 100\% & 100\% & {\color{red}{\XSolidBrush}} \\
    VGG16(Visual Genome) & Faster RCNN & 100\% & 93.75\% & 100\% & 100\% & {\color{green}{\Checkmark}} \\
    \hline
    AlphaGAN & Image matting & 100\% & 95.00\% & 100\% & 95.00\% & {\color{green}{\Checkmark}} \\
    Deep image matting & Image matting & 100\% & 100\% & 100\% & 100\% & {\color{green}{\Checkmark}} \\
    \hline
    Fast neural style  & candy & 86.67\% & 100\% & 100\% & 100\% & {\color{red}{\XSolidBrush}} \\
    Fast neural style  & mosaic & 93.33\% & 100\% & 100\% & 100\% & {\color{green}{\Checkmark}} \\
    Fast neural style  & starry night & 86.67\% & 100\% & 100\% & 100\% & {\color{red}{\XSolidBrush}} \\
    Fast neural style  & udnie & 66.67\% & 100\% & 100\% & 100\% & {\color{red}{\XSolidBrush}} \\
    \hline
    DCGAN(MNIST) & GAN   & 100\% & 100\% & 100\% & 100\% & {\color{green}{\Checkmark}} \\
    DCGAN(CIFAR10) & GAN   & 100\% & 100\% & 100\% & 100\% & {\color{green}{\Checkmark}} \\
    DCGAN(CIFAR100) & GAN   & 100\% & 100\% & 100\% & 100\% & {\color{green}{\Checkmark}} \\
    \hline
    VGG19(CIFAR10) & without BN & 100\% & 100\% & 100\% & 100\% & {\color{green}{\Checkmark}} \\
    VGG19(CIFAR10) & with BN & 93.75\% & 100\% & 100\% & 100\% & {\color{green}{\Checkmark}} \\
    VGG19(CIFAR10-lr) & schedule(82-164) & 93.75\% & 100\% & 100\% & 100\% & {\color{green}{\Checkmark}} \\
    VGG19(CIFAR10-lr) & schedule(60-120) & 93.75\% & 100\% & 100\% & 100\% & {\color{green}{\Checkmark}} \\
    VGG19(CIFAR10-lr) & coslr & 93.75\% & 100\% & 100\% & 100\% & {\color{green}{\Checkmark}} \\
    \hline
    \end{tabular}%
  \label{tab:addlabel}%
\end{table}%

For pytorch pretrain:\url{http://pytorch.org/docs/master/torchvision/index.html.}
\begin{table}[htbp]
  \centering
  \caption{\textbf{Pytorch pretrian}}
    \begin{tabular}{lcccccc}
    \hline
    Experiments & \multicolumn{1}{l}{Remark} & \multicolumn{1}{l}{Gaussian} & \multicolumn{1}{l}{Variance} & \multicolumn{1}{l}{Mean} & \multicolumn{1}{l}{Magnitude} & \multicolumn{1}{l}{in the front of network?} \\
    \hline
    VGG11 & ImageNet & 100\% & 75.00\% & 100\% & 75.00\% & {\color{green}{\Checkmark}}  \\
    VGG16 & ImageNet & 100\% & 84.62\% & 100\% & 100\% & {\color{green}{\Checkmark}}  \\
    VGG19 & ImageNet & 100\% & 87.50\% & 100\% & 100\% & {\color{green}{\Checkmark}}  \\
    ResNet18 & ImageNet & 100\% & 88.24\% & 100\% & 100\% & {\color{green}{\Checkmark}}  \\
    ResNet34 & ImageNet & 100\% & 88.24\% & 100\% & 96.97\% & {\color{green}{\Checkmark}}  \\
    ResNet50 & ImageNet & 100\% & 83.67\% & 100\% & 100\% & {\color{red}{\XSolidBrush}} \\
    \hline
    \end{tabular}%
  \label{tab:addlabel}%
\end{table}%

\clearpage
\section{Training through slimming}
\label{slimming}

\begin{figure} [htbp]
	\centering 
	\includegraphics[width=1\linewidth]{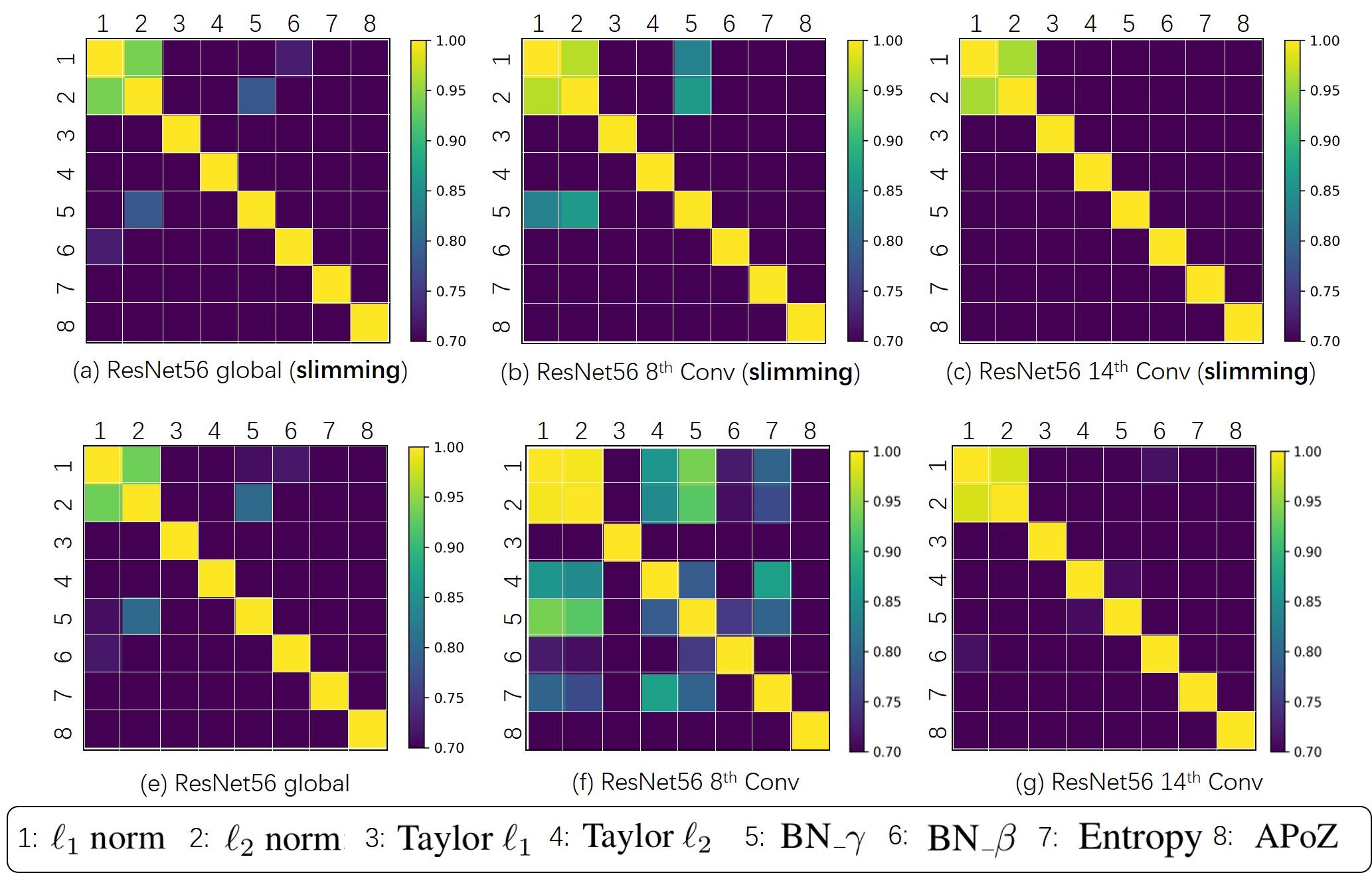}
	\caption{The Similarity for different criteria with/without slimming \cite{Liu_2017_ICCV}. } 
	\label{fig:slimming}
\end{figure}

As a representative of the BN-based pruning method, slimming pruning\cite{Liu_2017_ICCV} can not be directly compared with the criteria mentioned in the paper because it adopts a special training method. Therefore, we use the training method in \cite{Liu_2017_ICCV} to train another ResNet56 on cifar100. Then, the analysis of similarities between 8 different pruning criteria on such a model is shown in Fig.~\ref{fig:slimming}.

In this situation, the fifth criterion BN\_$\gamma$ is the method introduced in \cite{Liu_2017_ICCV}. From Fig.~\ref{fig:slimming}, there is no significant difference in the result of the similarity between ResNet56 obtained by slimming method and resnet56 trained in general.

\clearpage
\section{More experiments of Sp in Norm-based criteria}
\label{app:sp_network}

\begin{figure}[H]
	\centering 
	\includegraphics[width=0.4\linewidth]{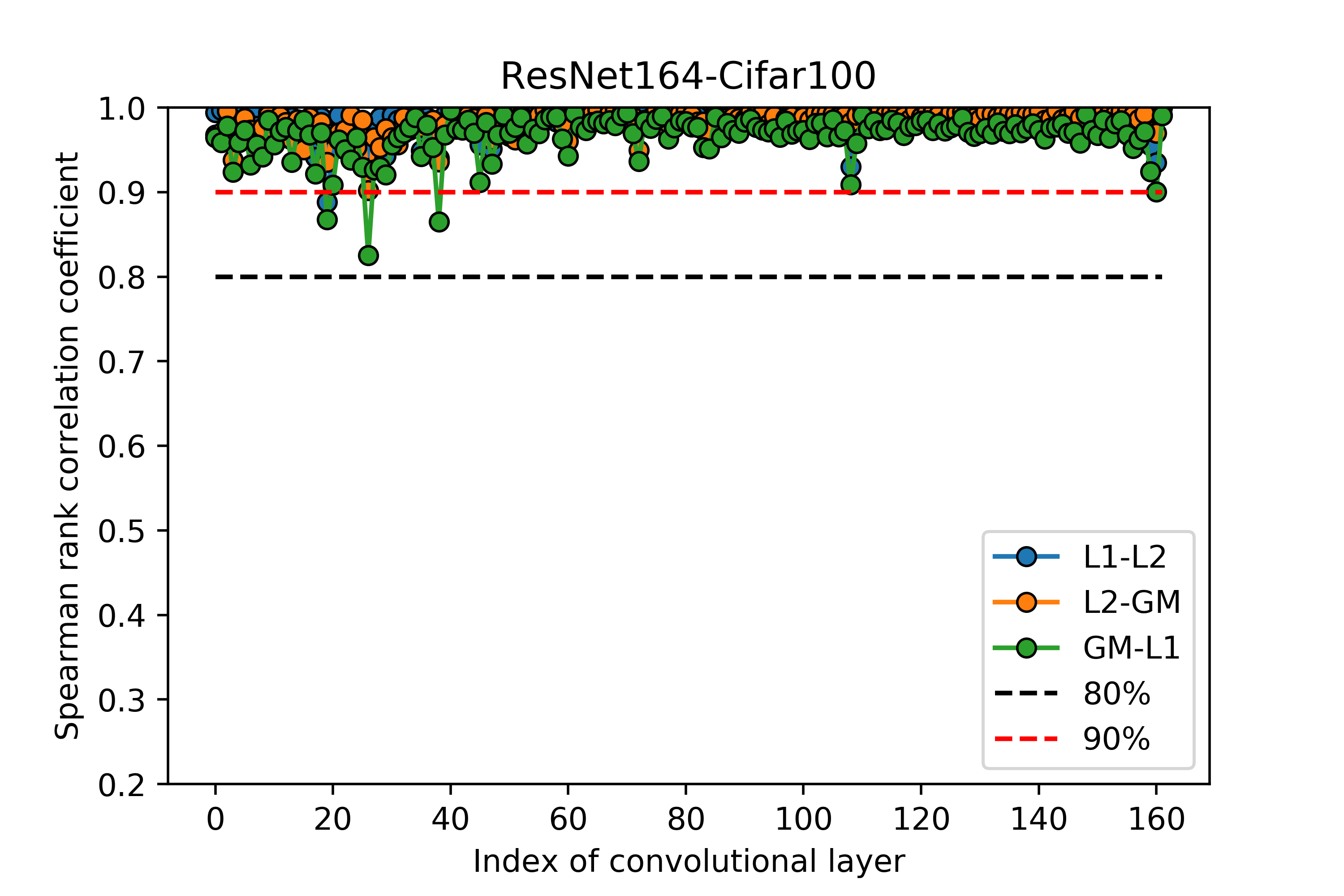} 
	\includegraphics[width=0.4\linewidth]{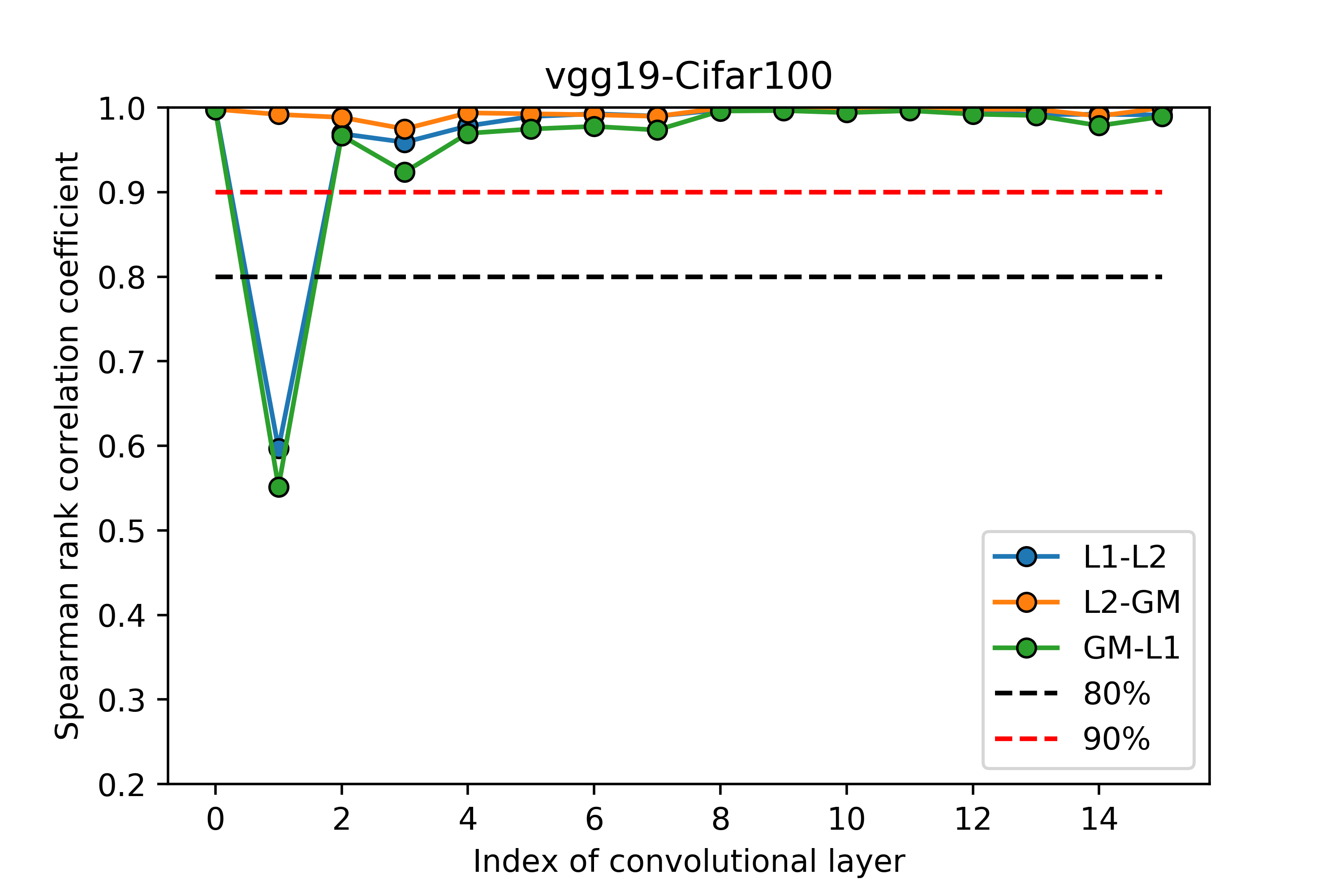} 
	\includegraphics[width=0.4\linewidth]{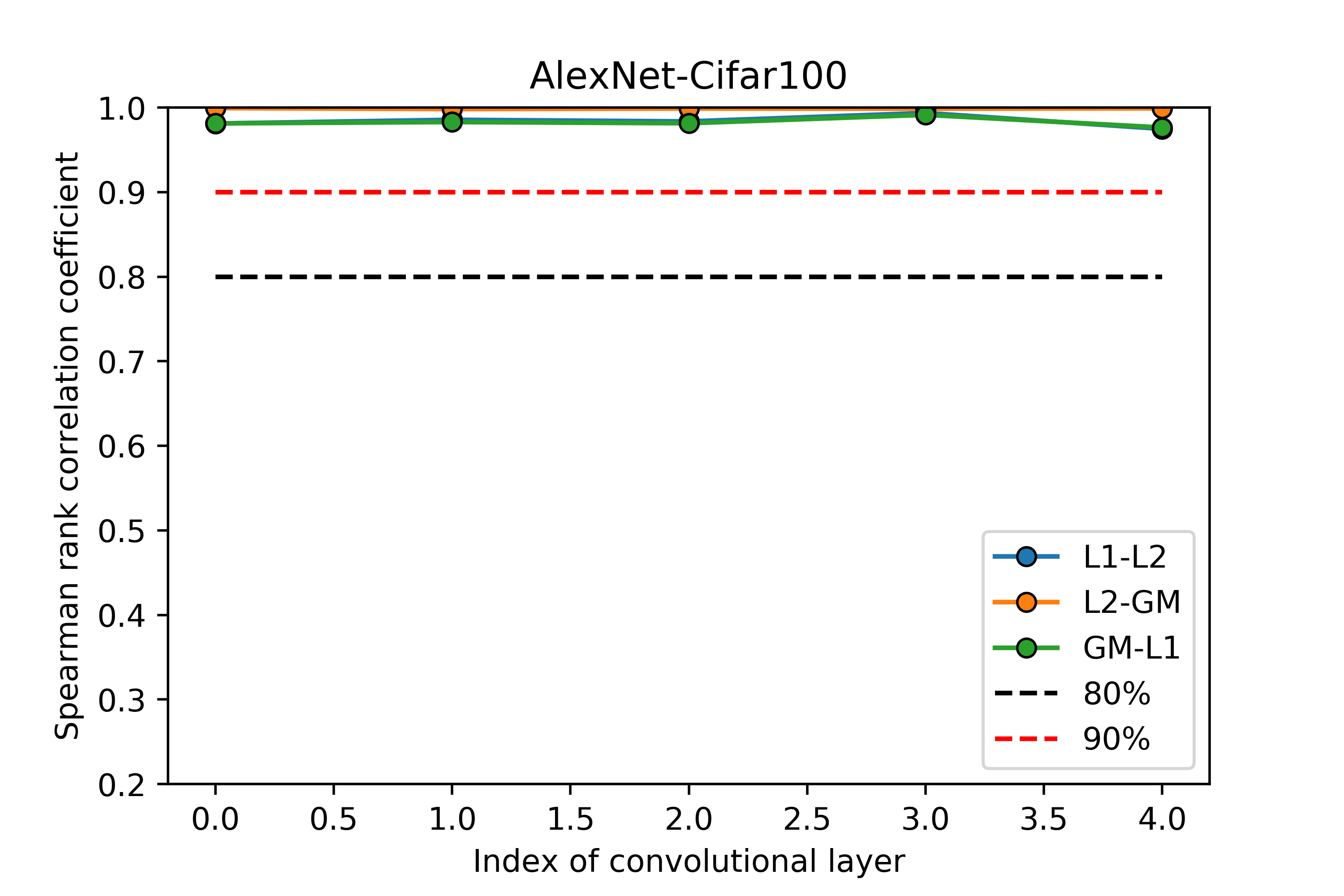} 
	\includegraphics[width=0.4\linewidth]{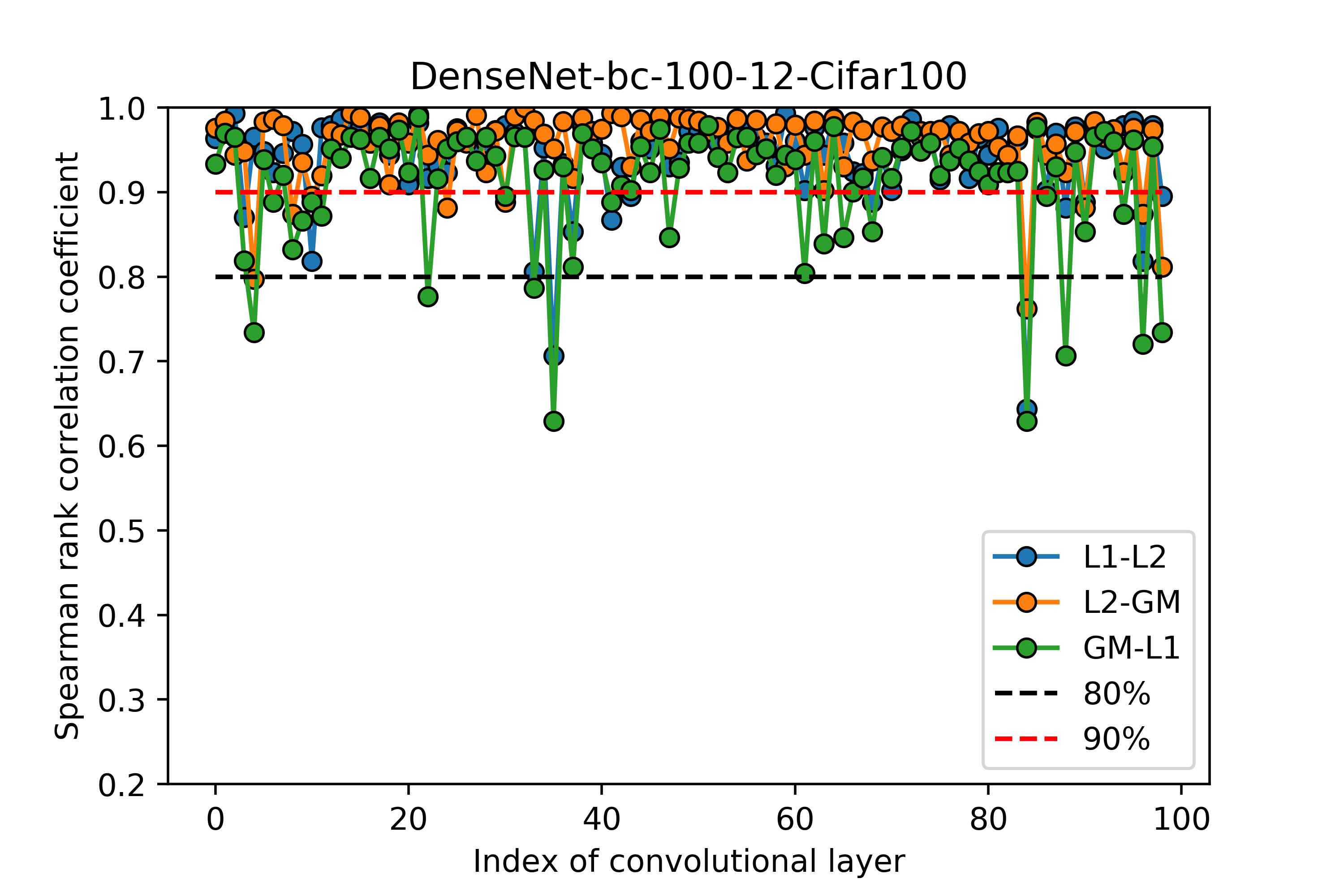} 
	\includegraphics[width=0.4\linewidth]{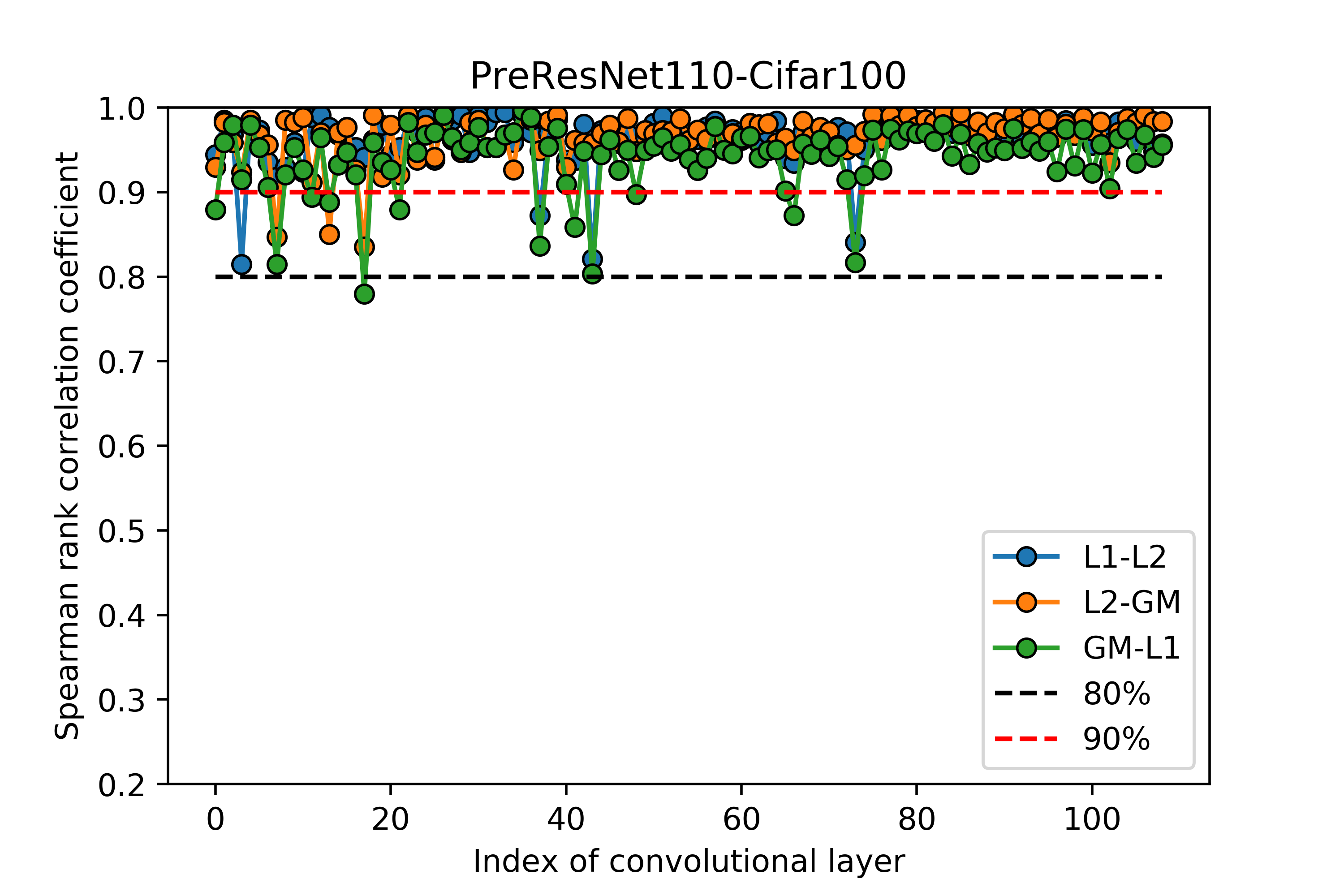} 
	\includegraphics[width=0.4\linewidth]{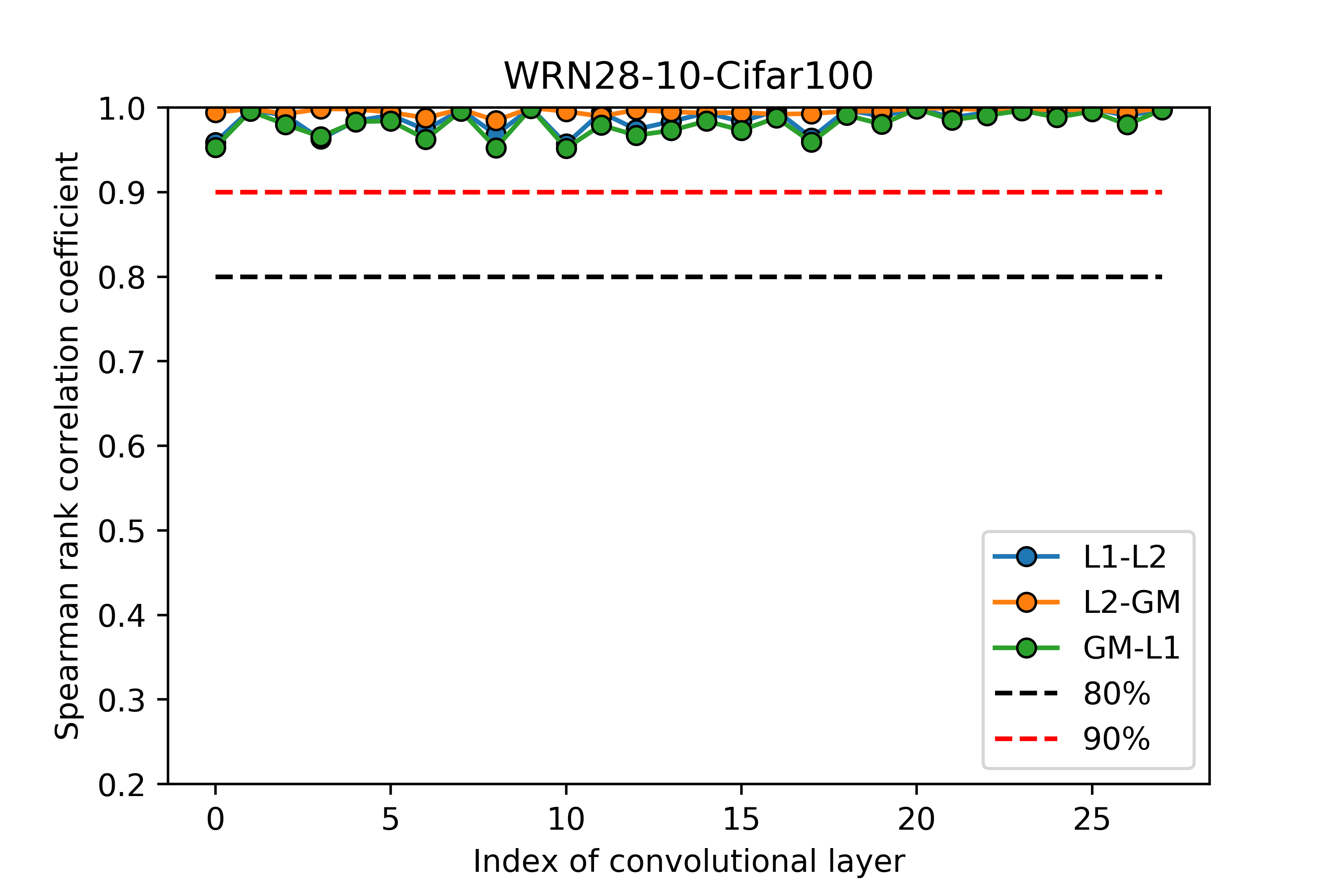} 
	\caption{Network Structure}
\end{figure}
\begin{figure}
	\centering 
	\includegraphics[width=0.3\linewidth]{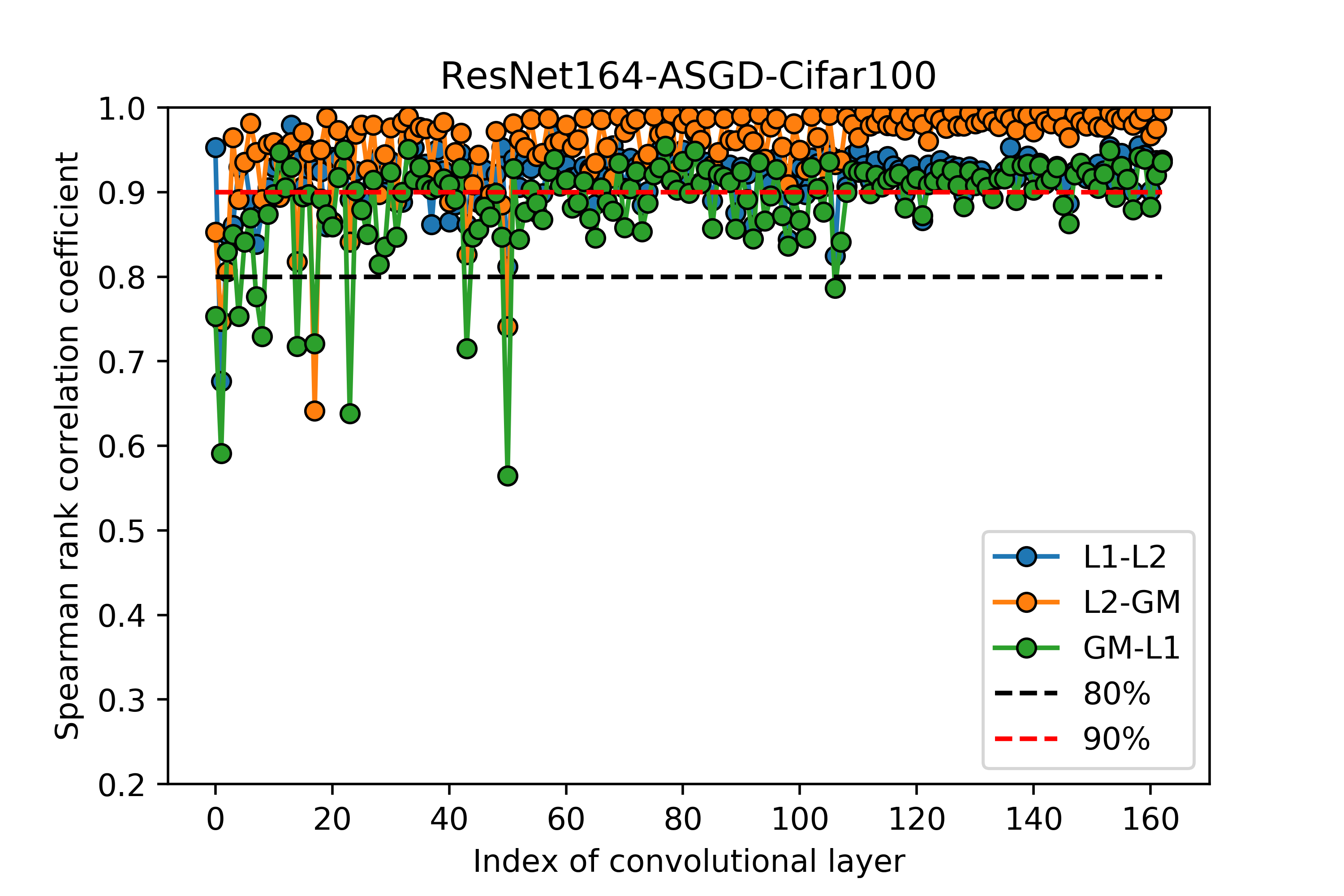} 
	\includegraphics[width=0.3\linewidth]{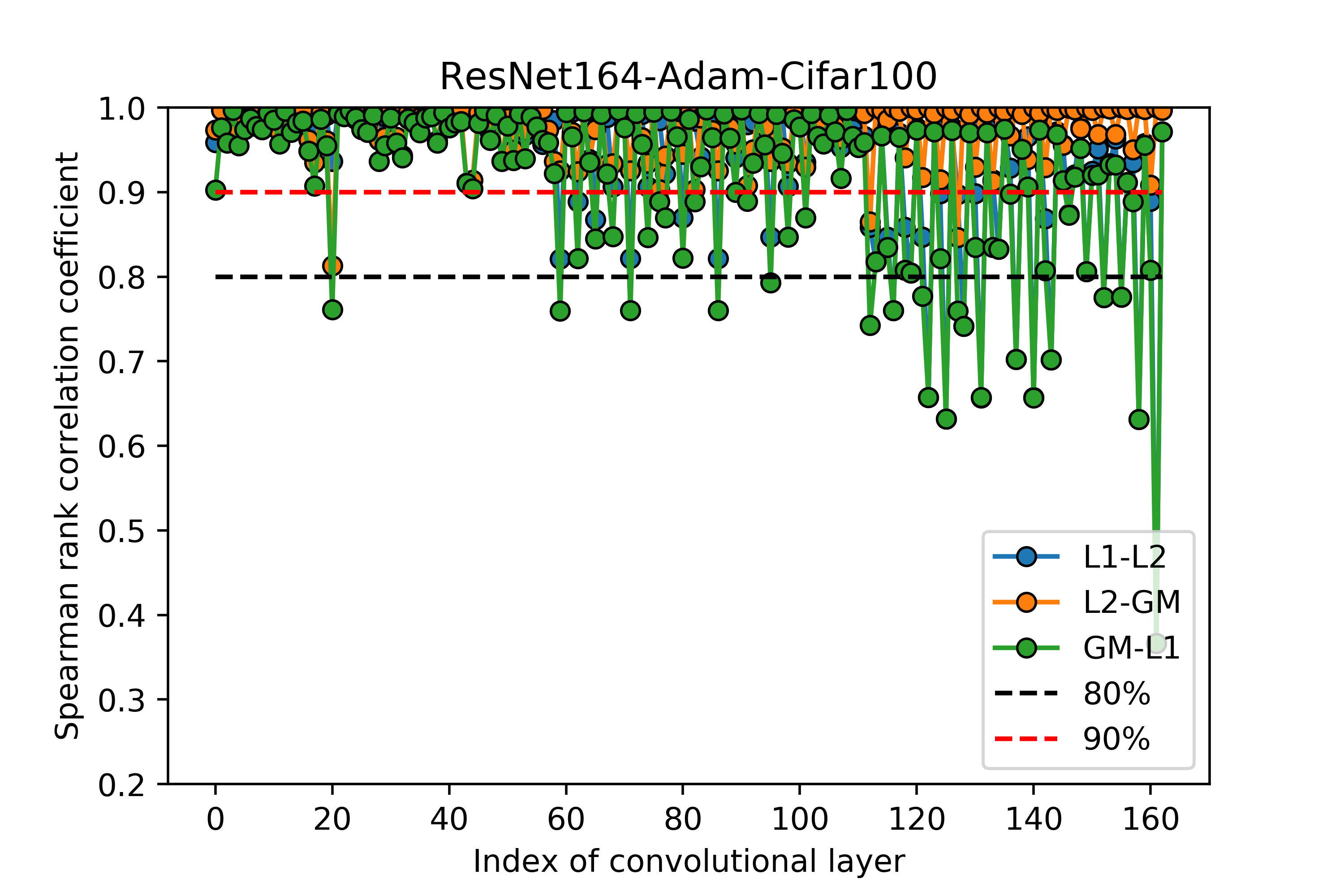} 
	\includegraphics[width=0.3\linewidth]{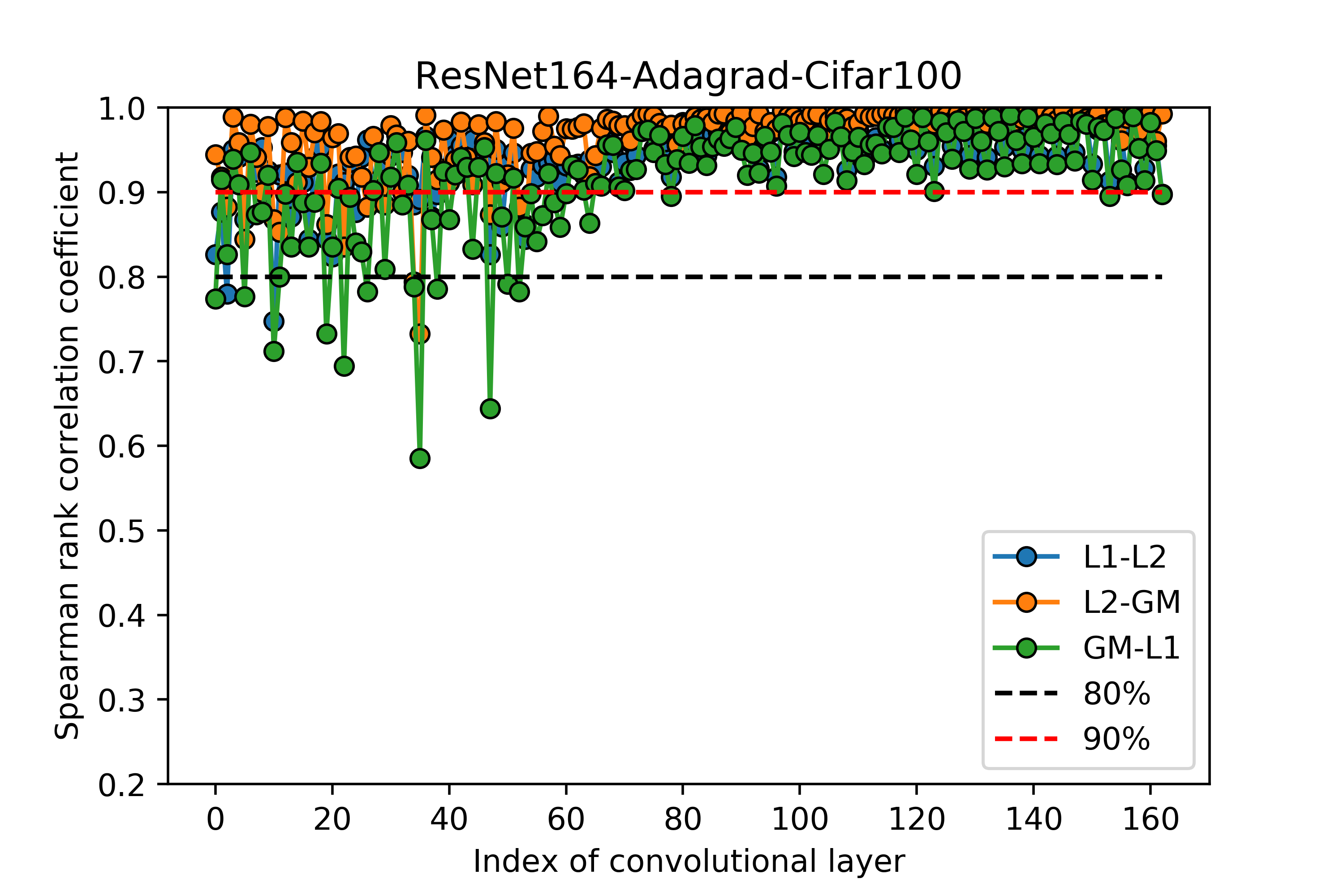} 
	\includegraphics[width=0.3\linewidth]{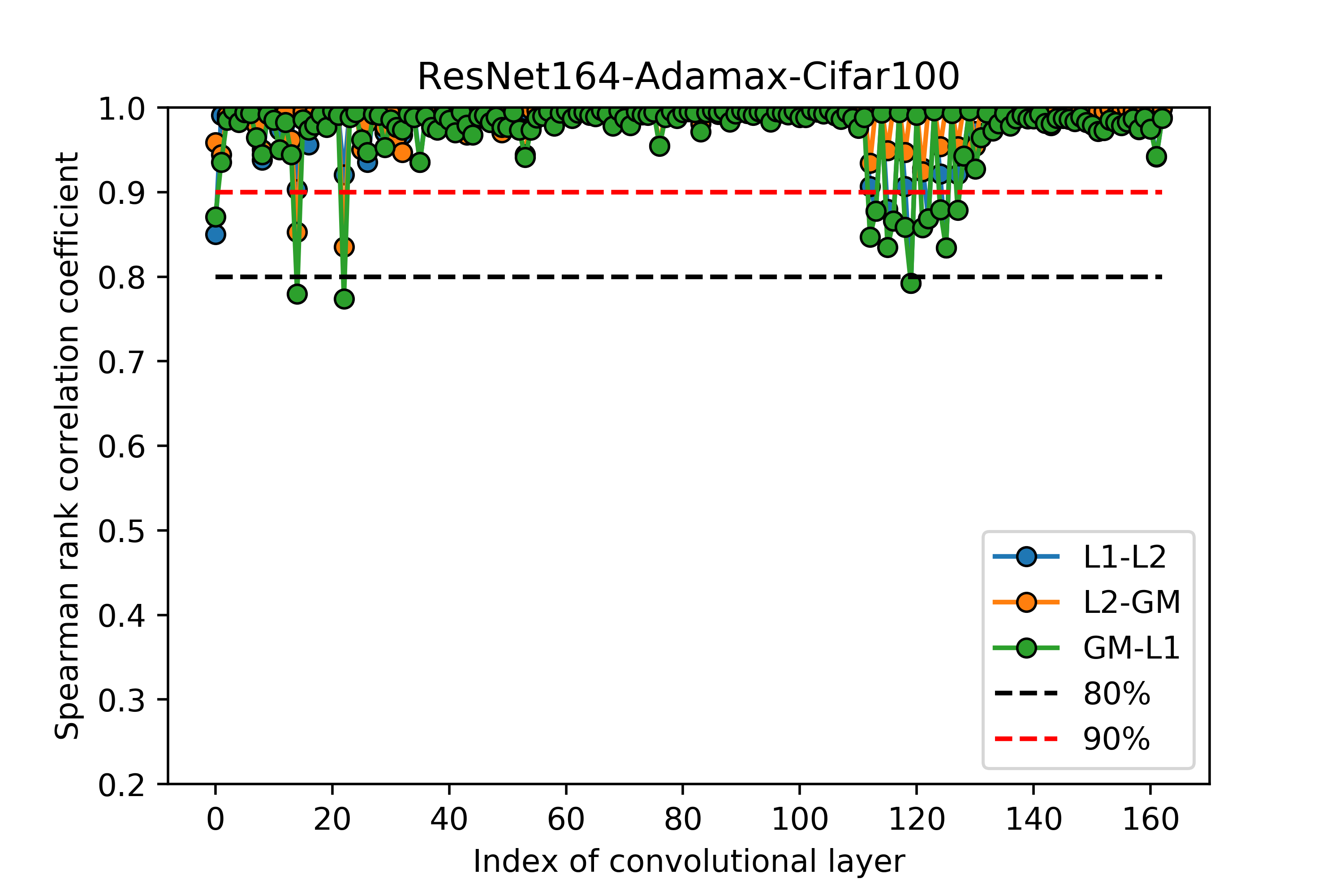} 
	\includegraphics[width=0.3\linewidth]{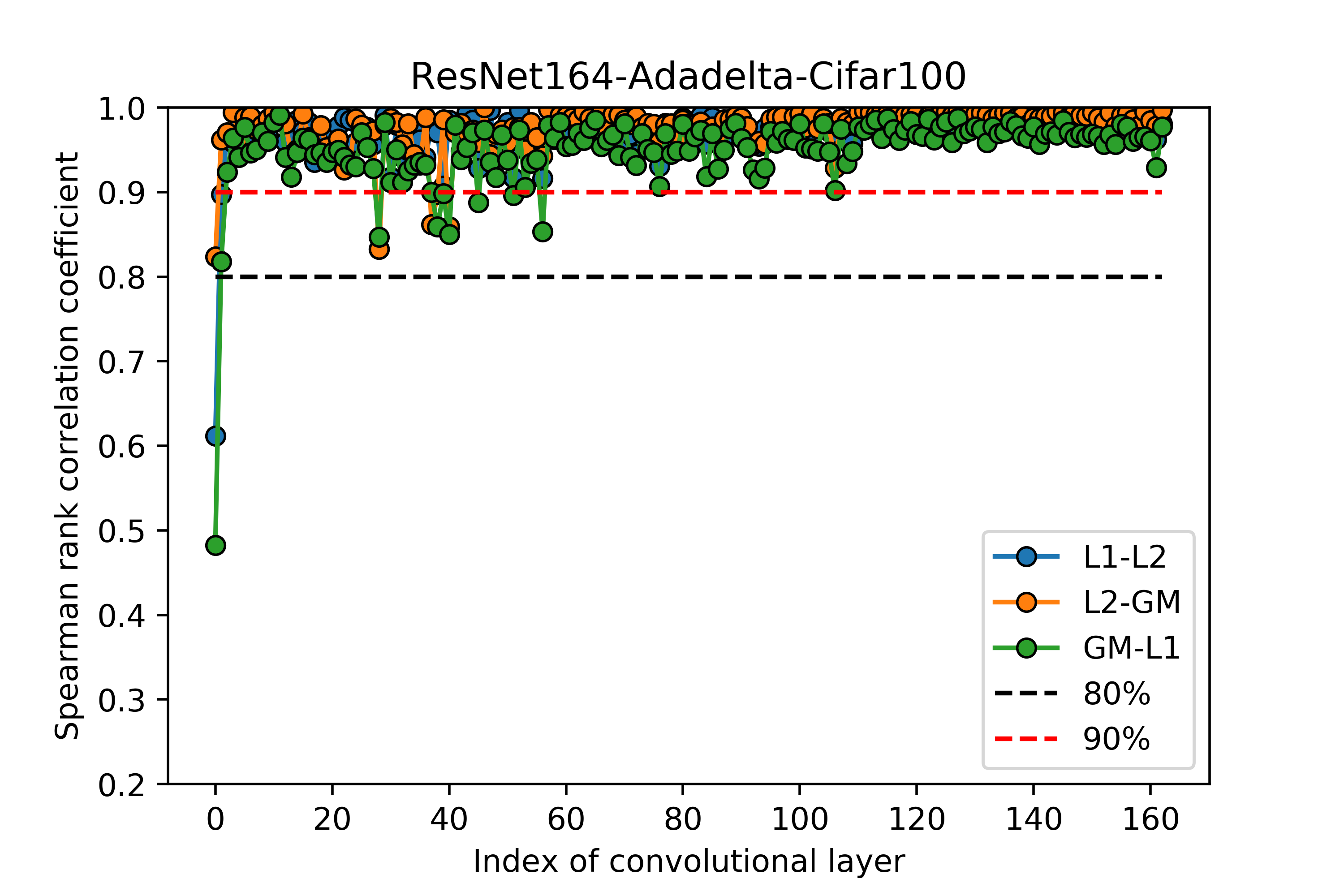} 
	\caption{Optimizer}
\end{figure}

\begin{figure}
	\centering 
	\includegraphics[width=0.4\linewidth]{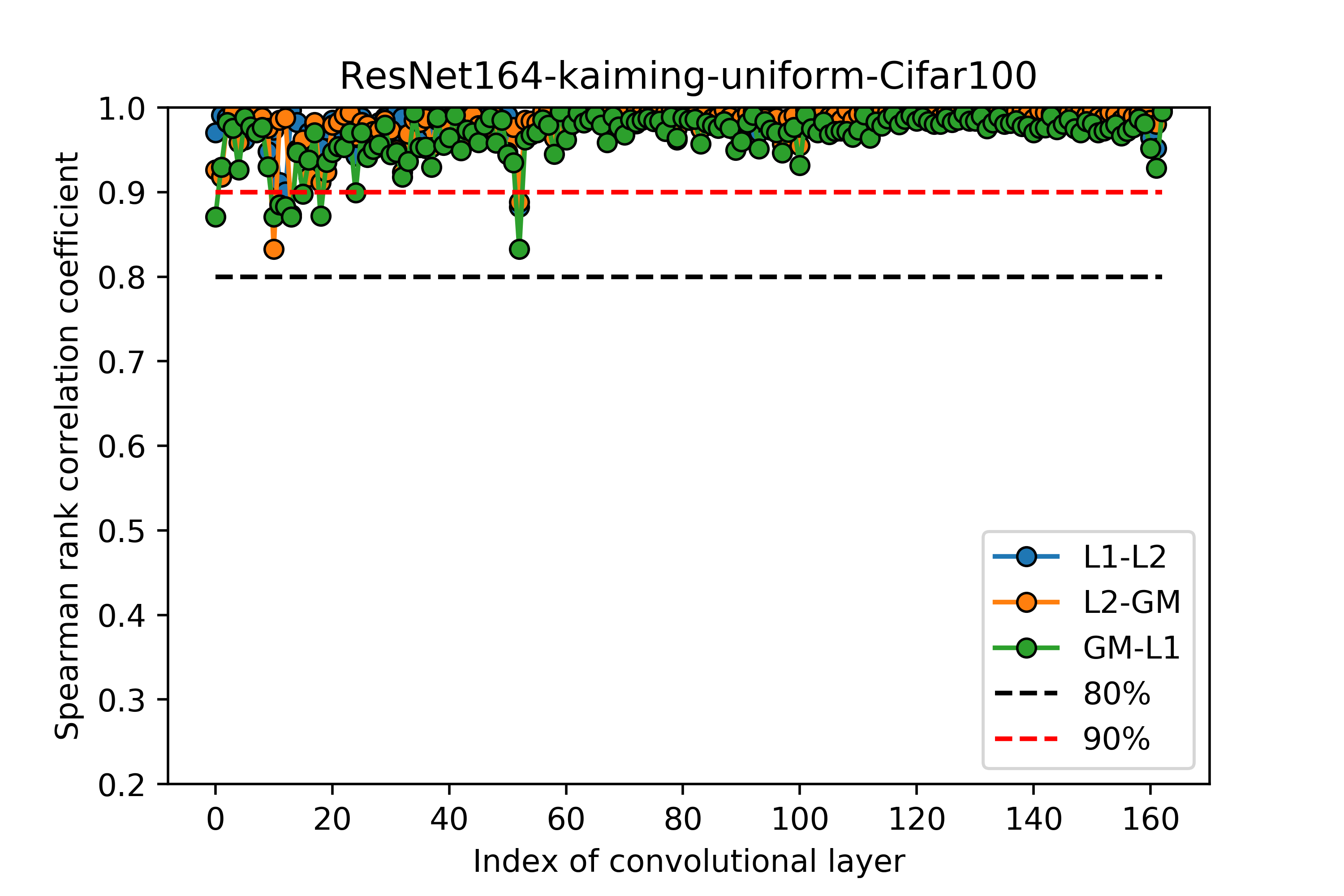} 
	\includegraphics[width=0.4\linewidth]{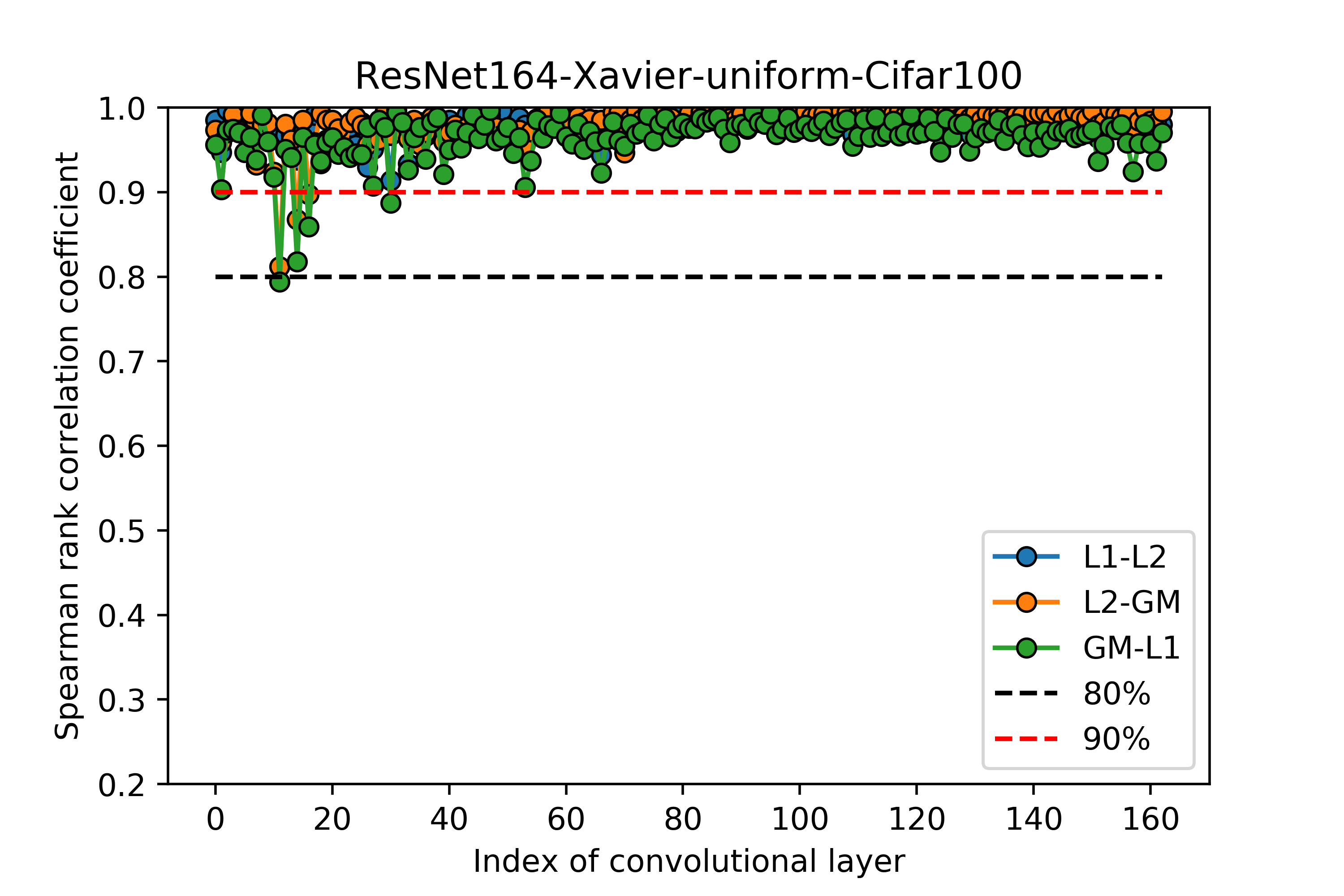} 
	\includegraphics[width=0.4\linewidth]{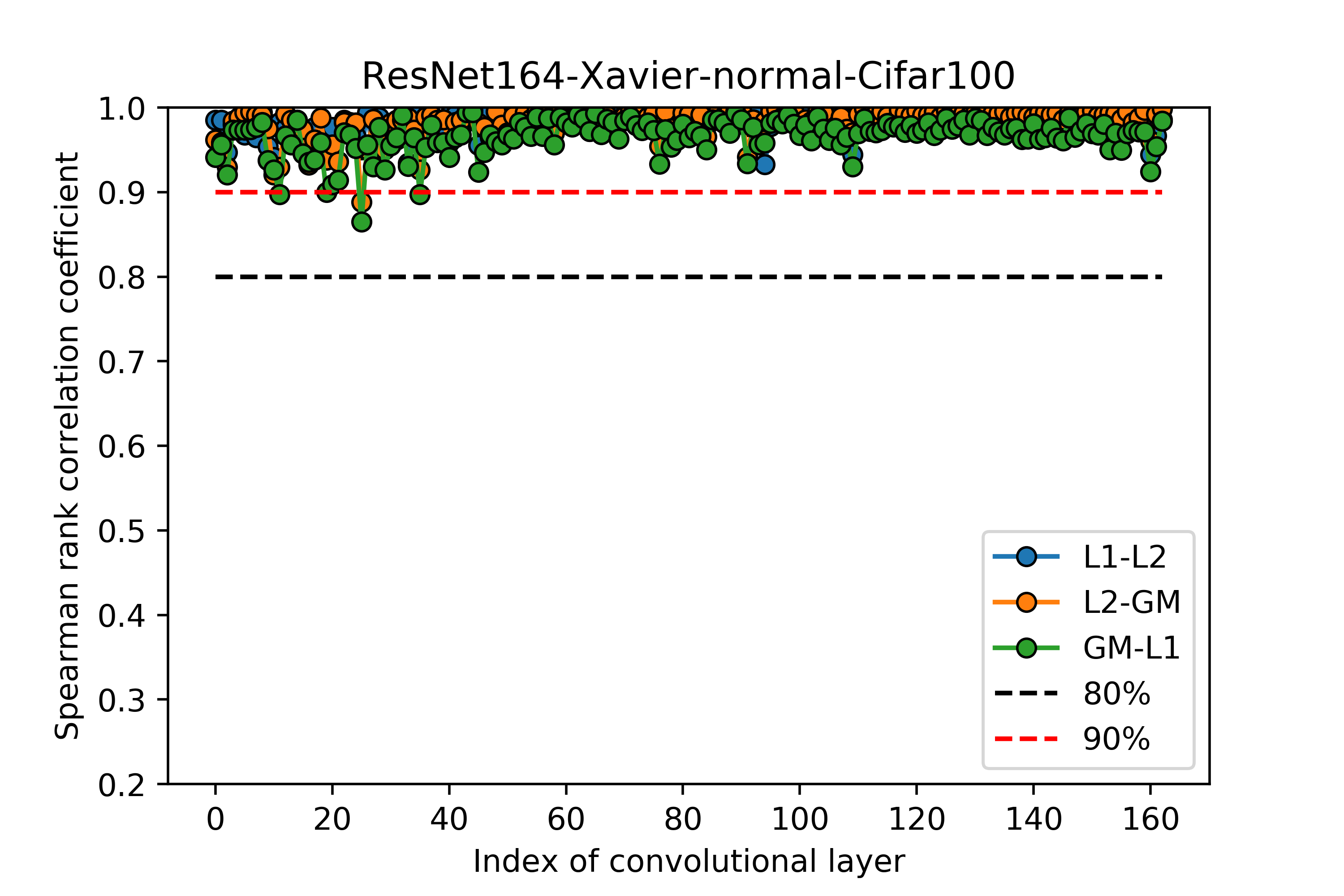} 
	\includegraphics[width=0.4\linewidth]{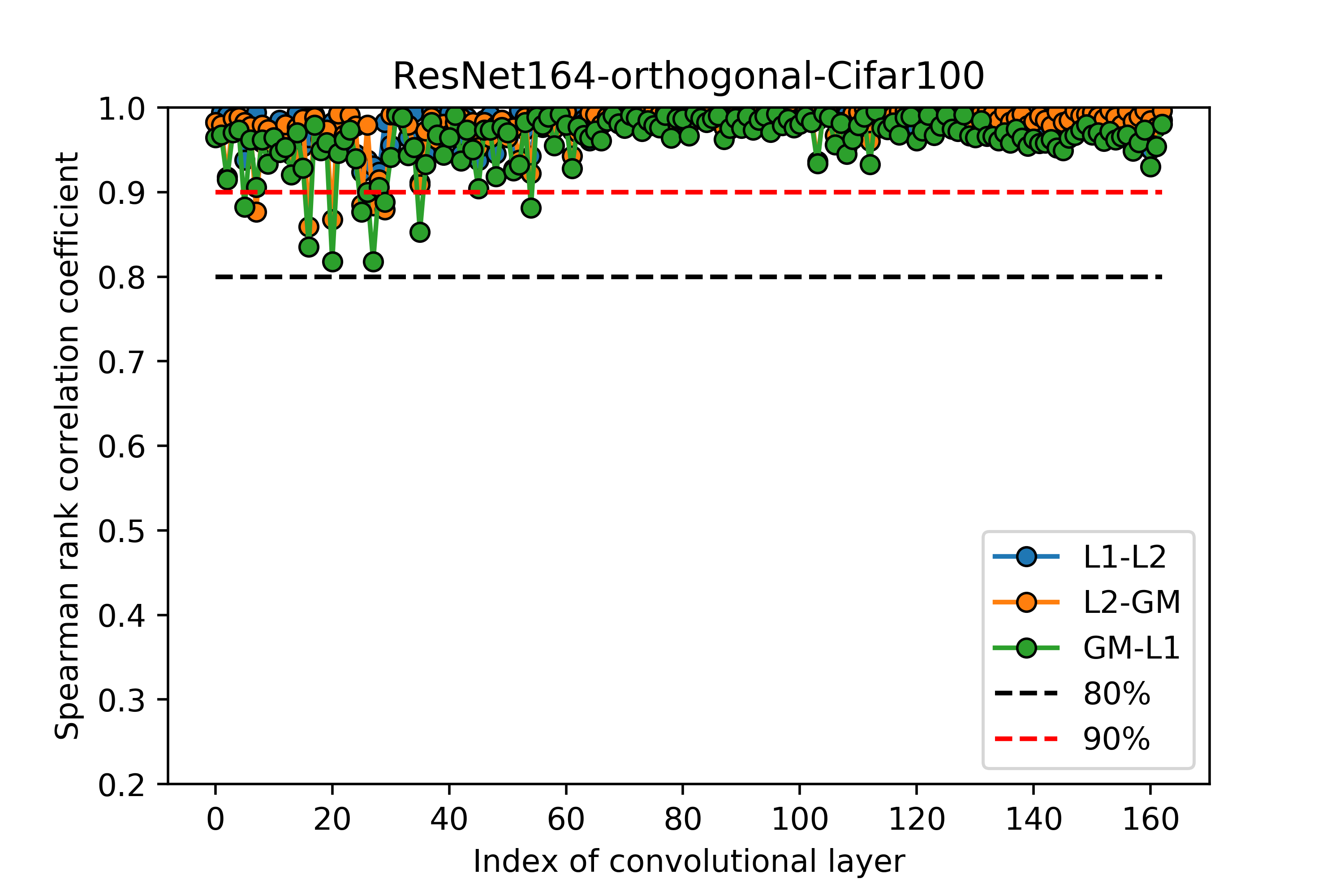} 
	\caption{Initialization}
\end{figure}    

\begin{figure}
	\centering 
	\includegraphics[height=2.2in, width=2.5in]{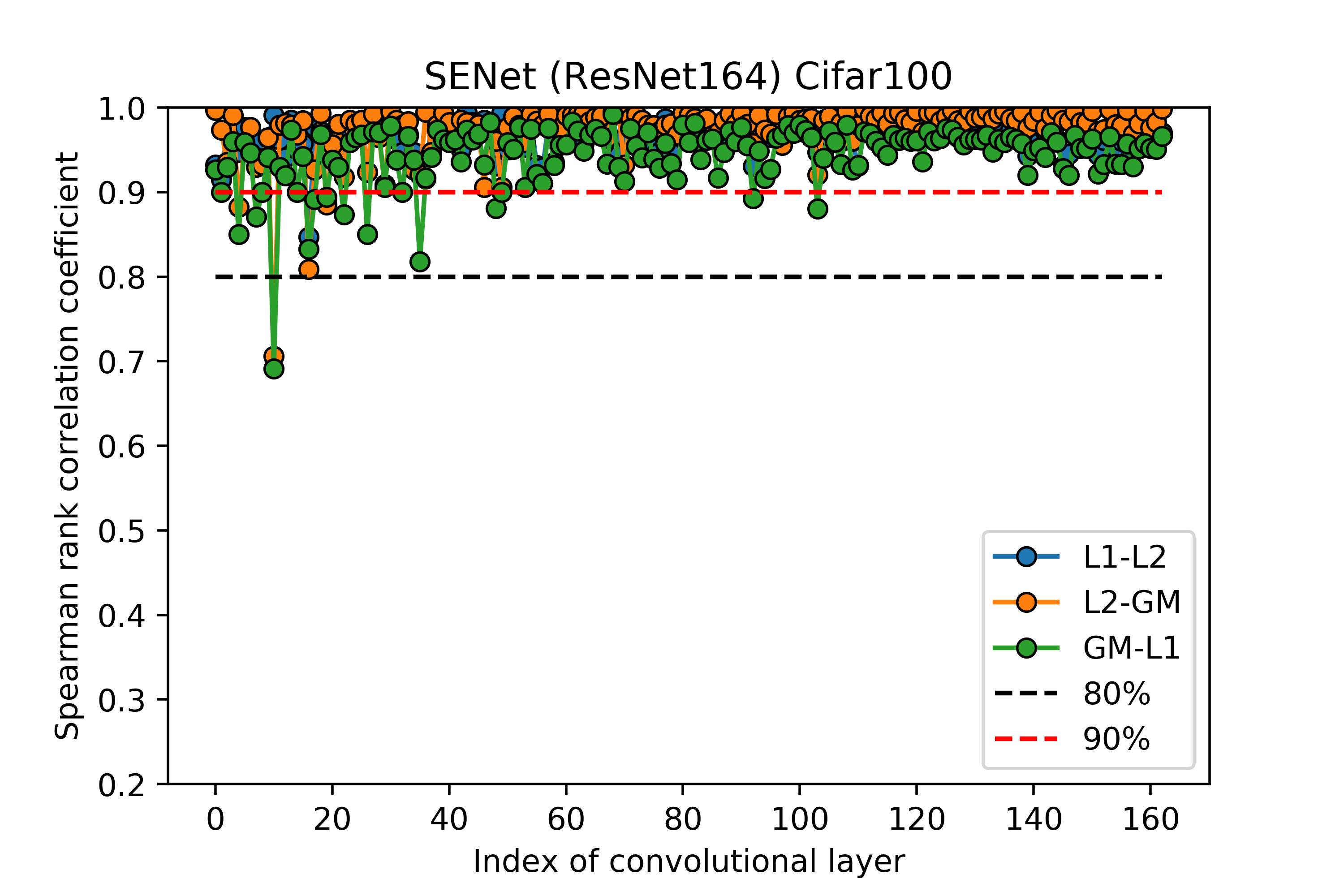} 
	\includegraphics[height=2.2in, width=2.5in]{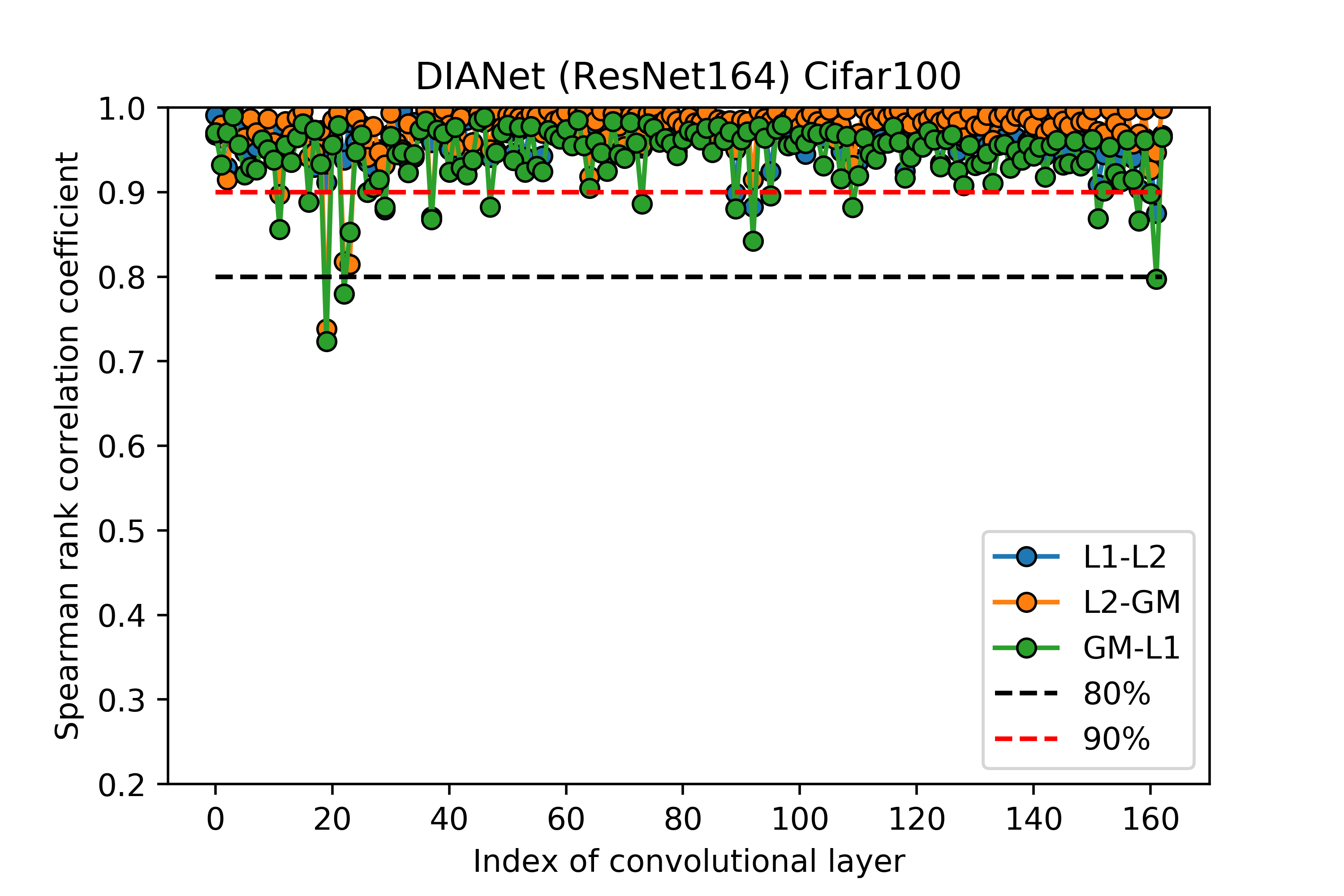} 
	\includegraphics[height=2.2in, width=2.5in]{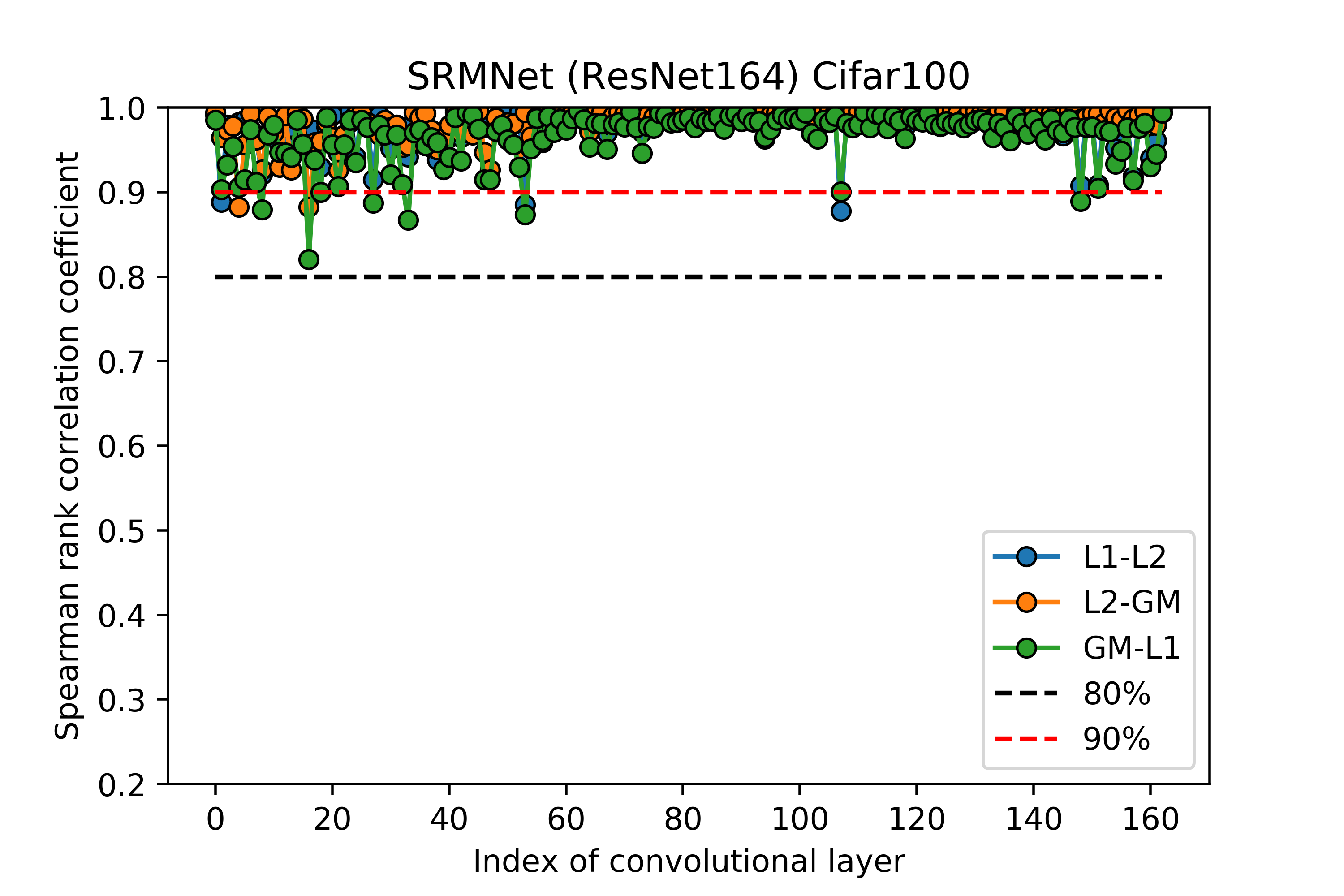} 
	\includegraphics[height=2.2in, width=2.5in]{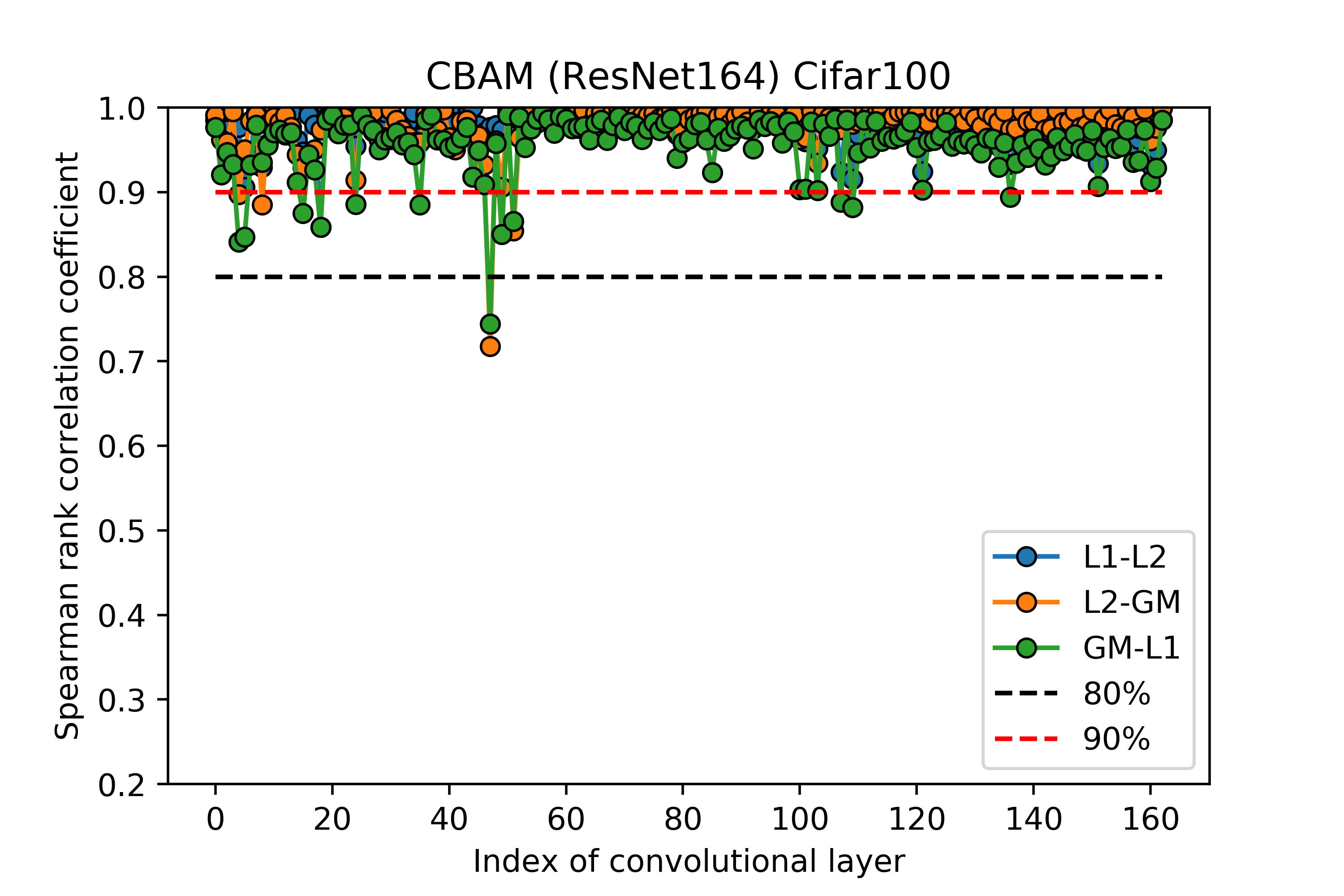} 
	\includegraphics[height=2.2in, width=2.5in]{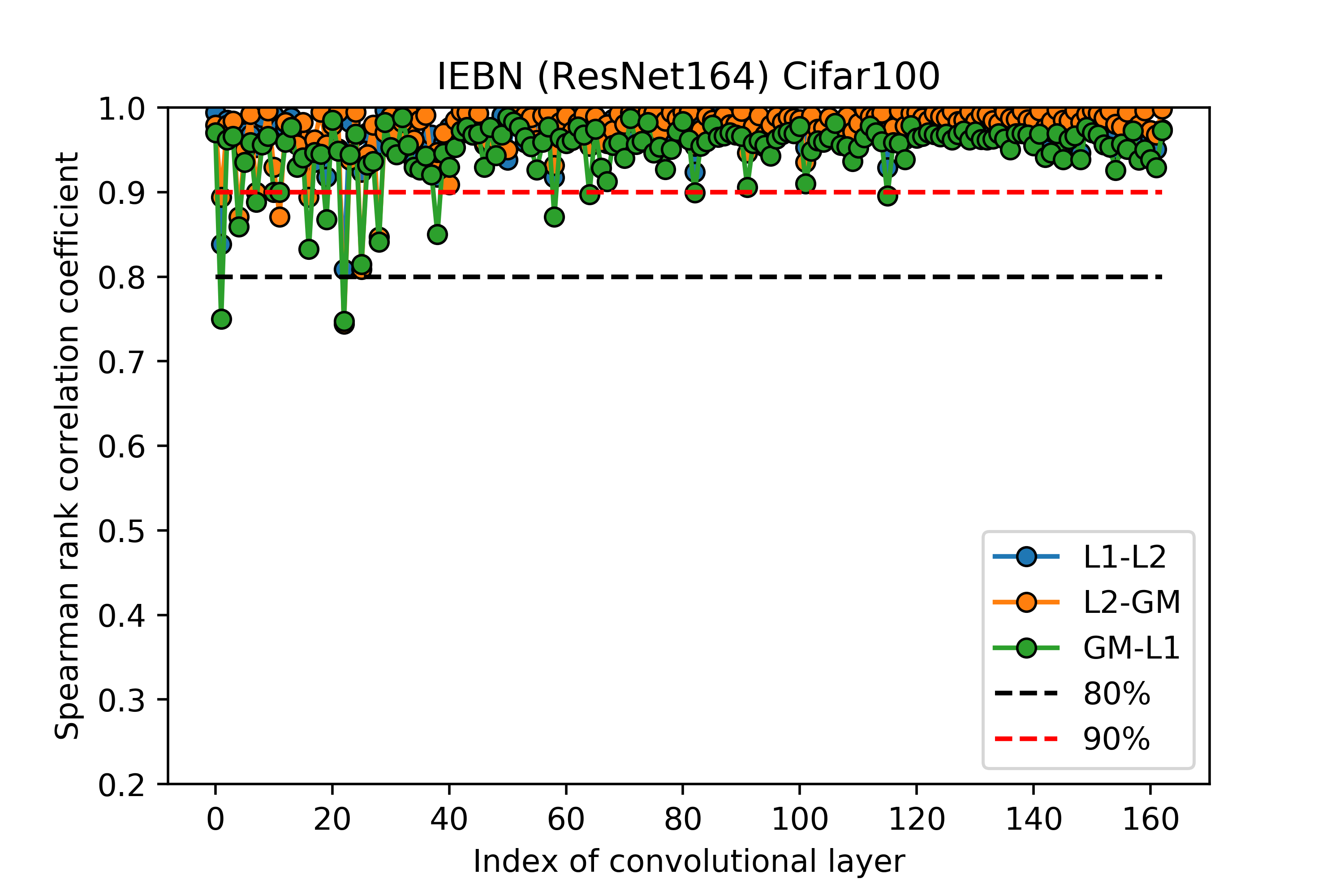} 
	\includegraphics[height=2.2in, width=2.5in]{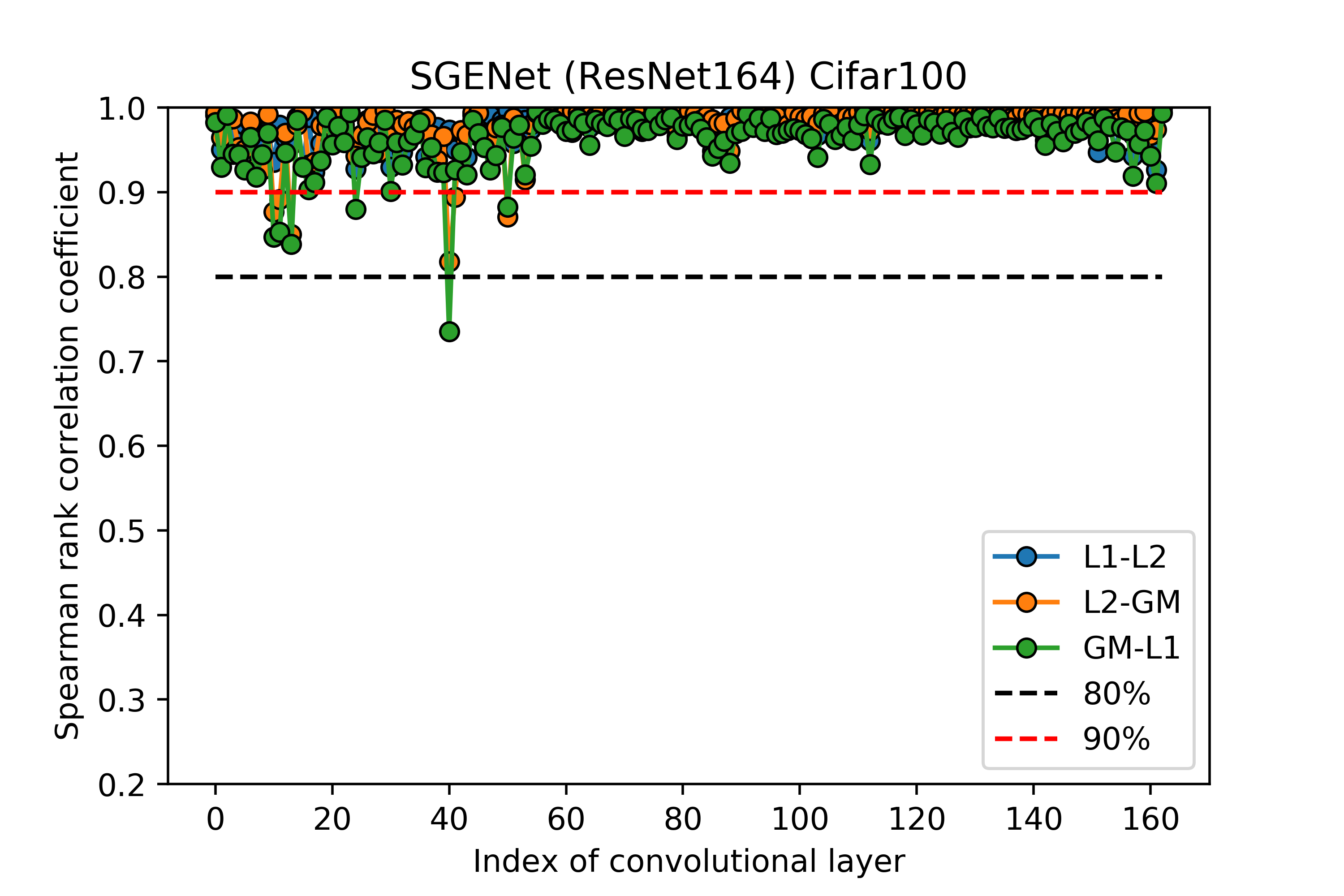} 
	
	\caption{Attention mechanism}
\end{figure}

\begin{figure}
	\centering 
	\includegraphics[height=2.2in, width=2.5in]{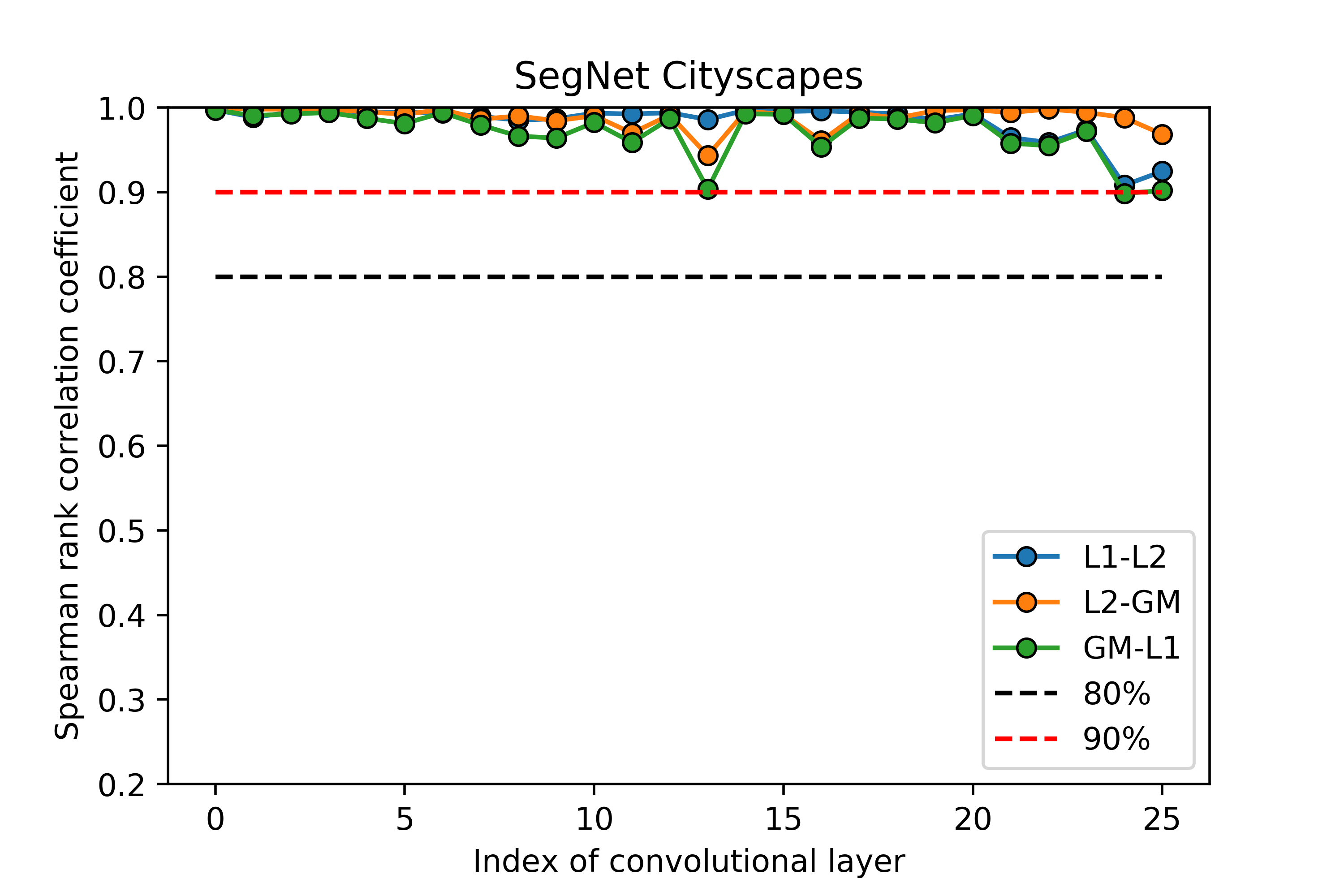} 
	\includegraphics[height=2.2in, width=2.5in]{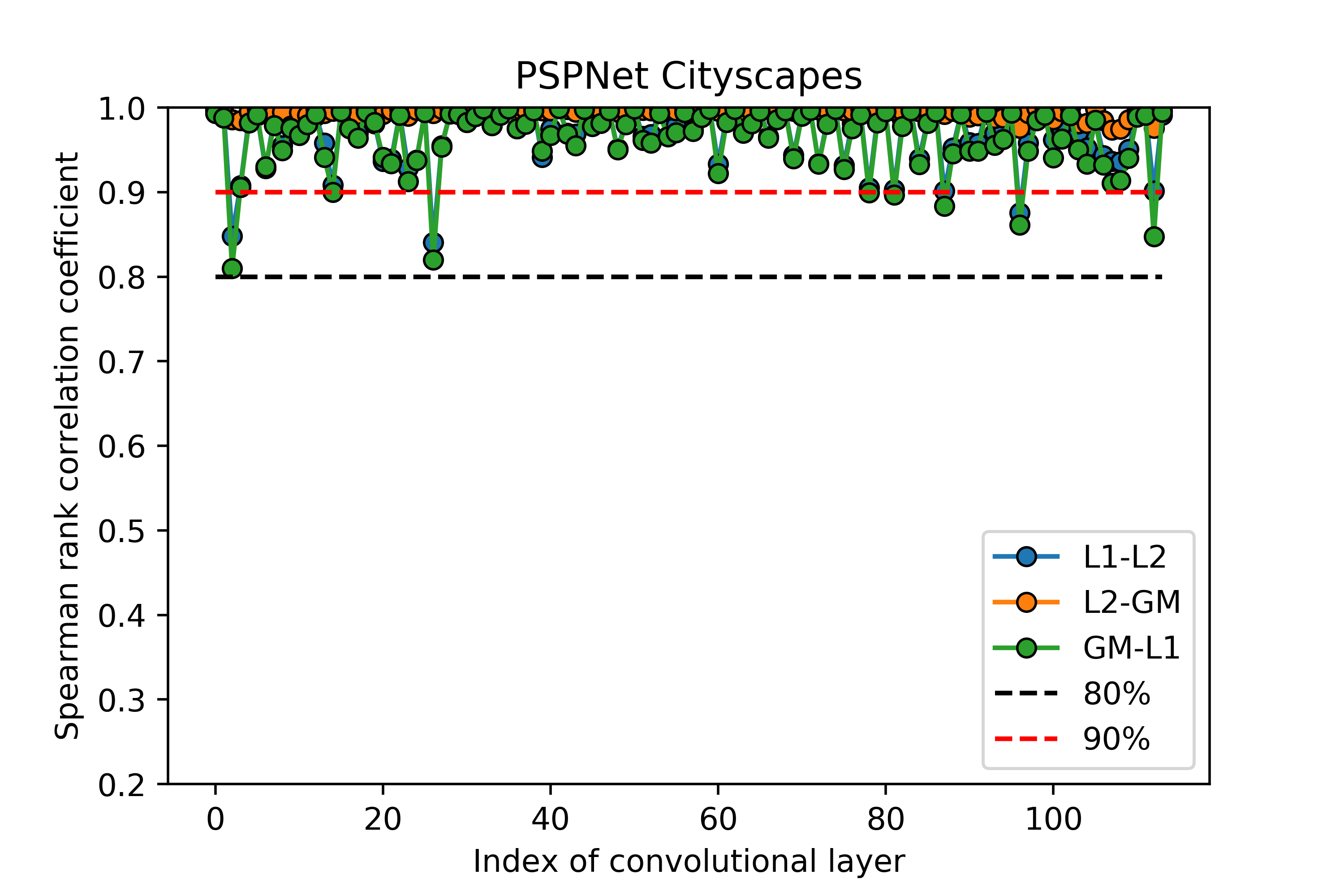} 
	\caption{Other task: segmentation}
\end{figure}    

\begin{figure}
	\centering 
	\includegraphics[height=2.2in, width=2.5in]{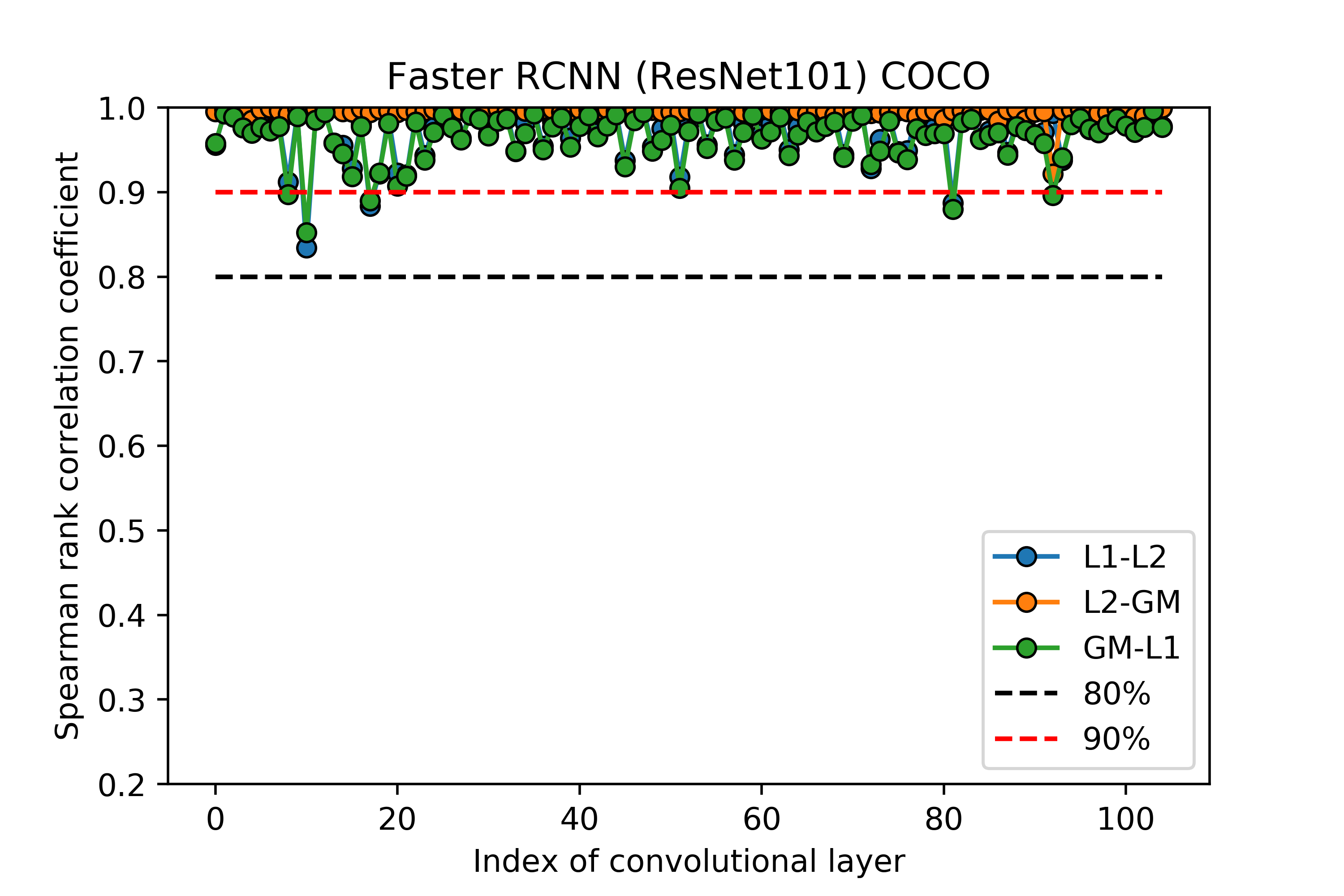} 
	\includegraphics[height=2.2in, width=2.5in]{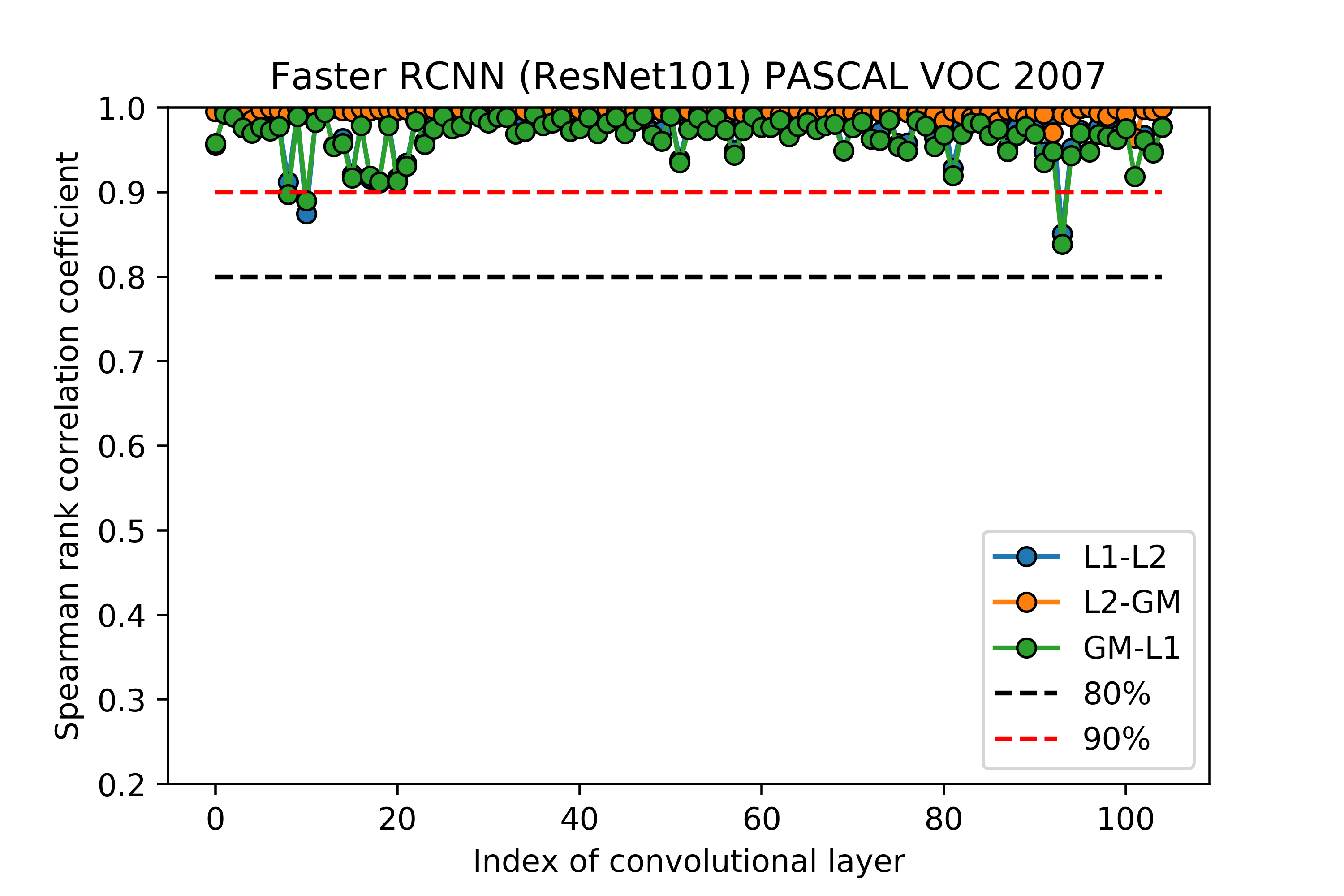} 
	\includegraphics[height=2.2in, width=2.5in]{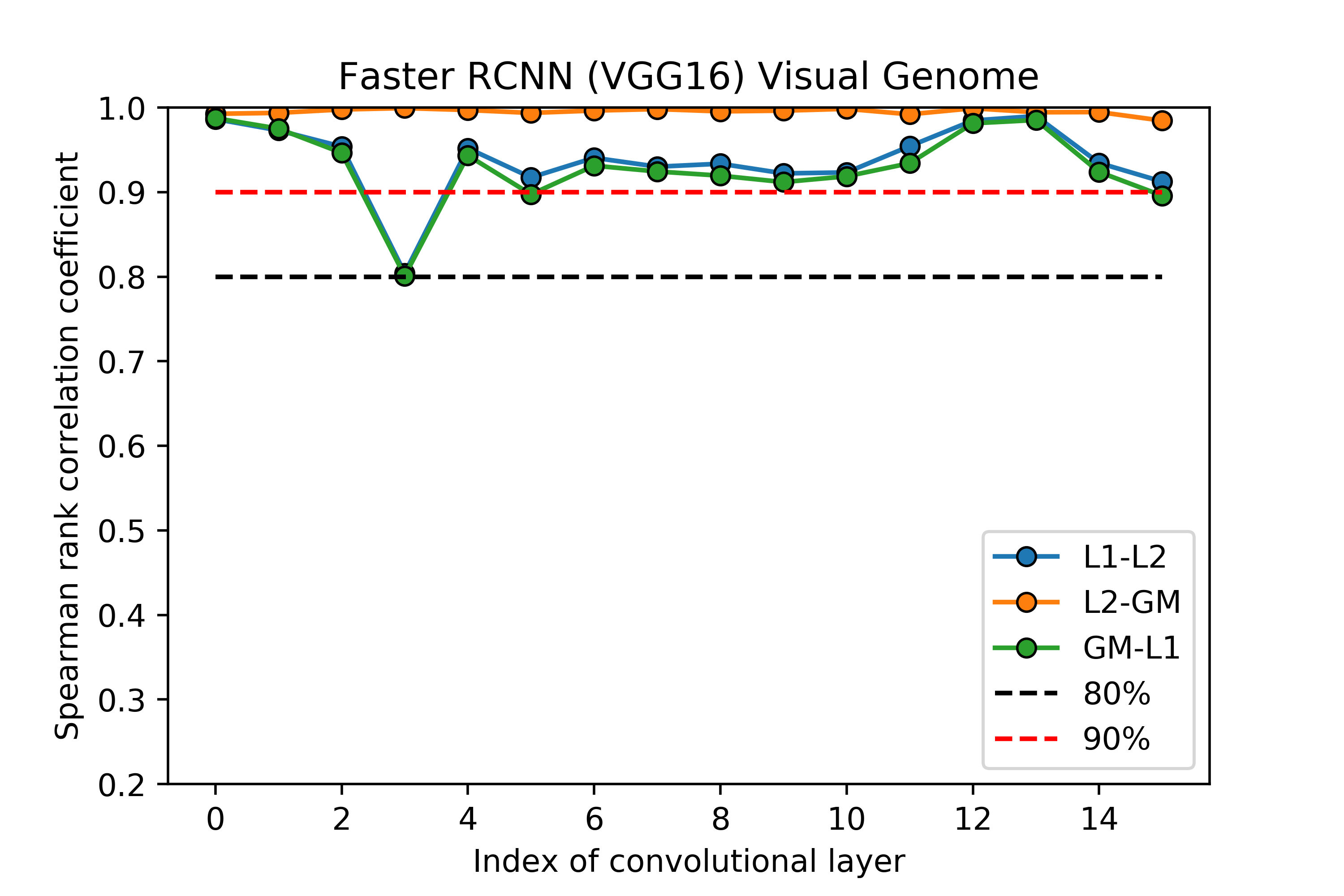} 
	\caption{Other task: Faster RCNN}
\end{figure}

\begin{figure}
	\centering 
	\includegraphics[height=2.2in, width=2.5in]{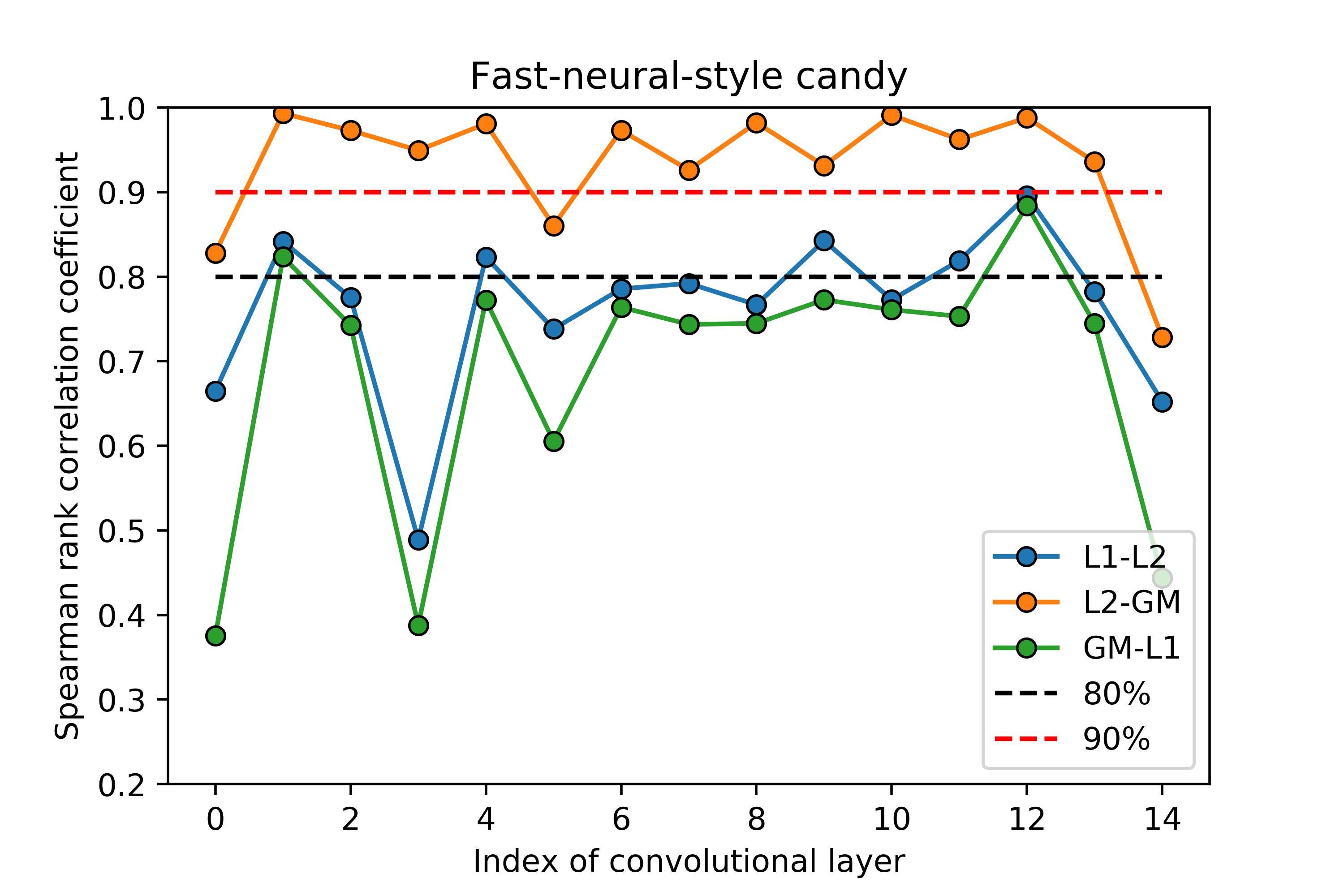} 
	\includegraphics[height=2.2in, width=2.5in]{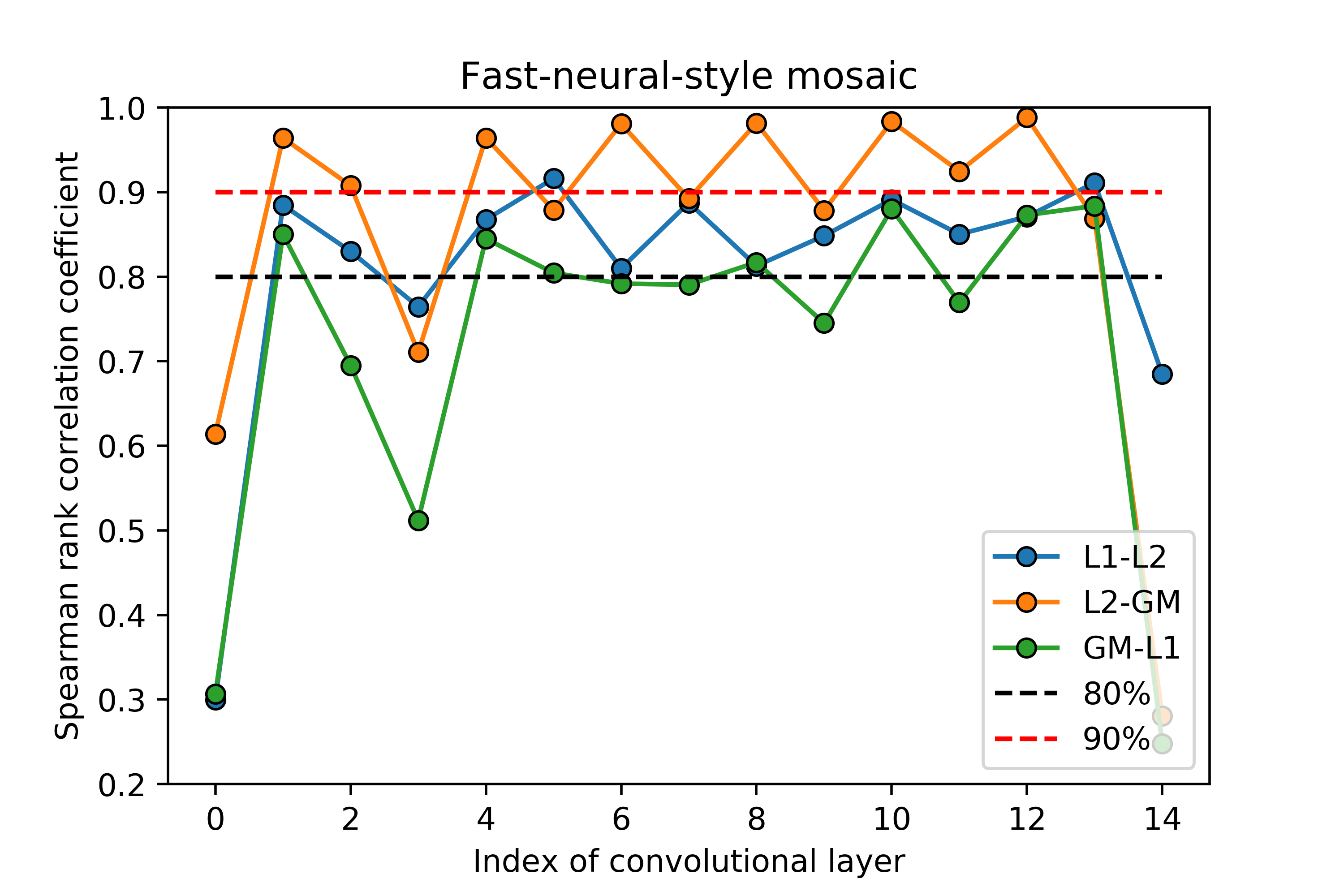} 
	\includegraphics[height=2.2in, width=2.5in]{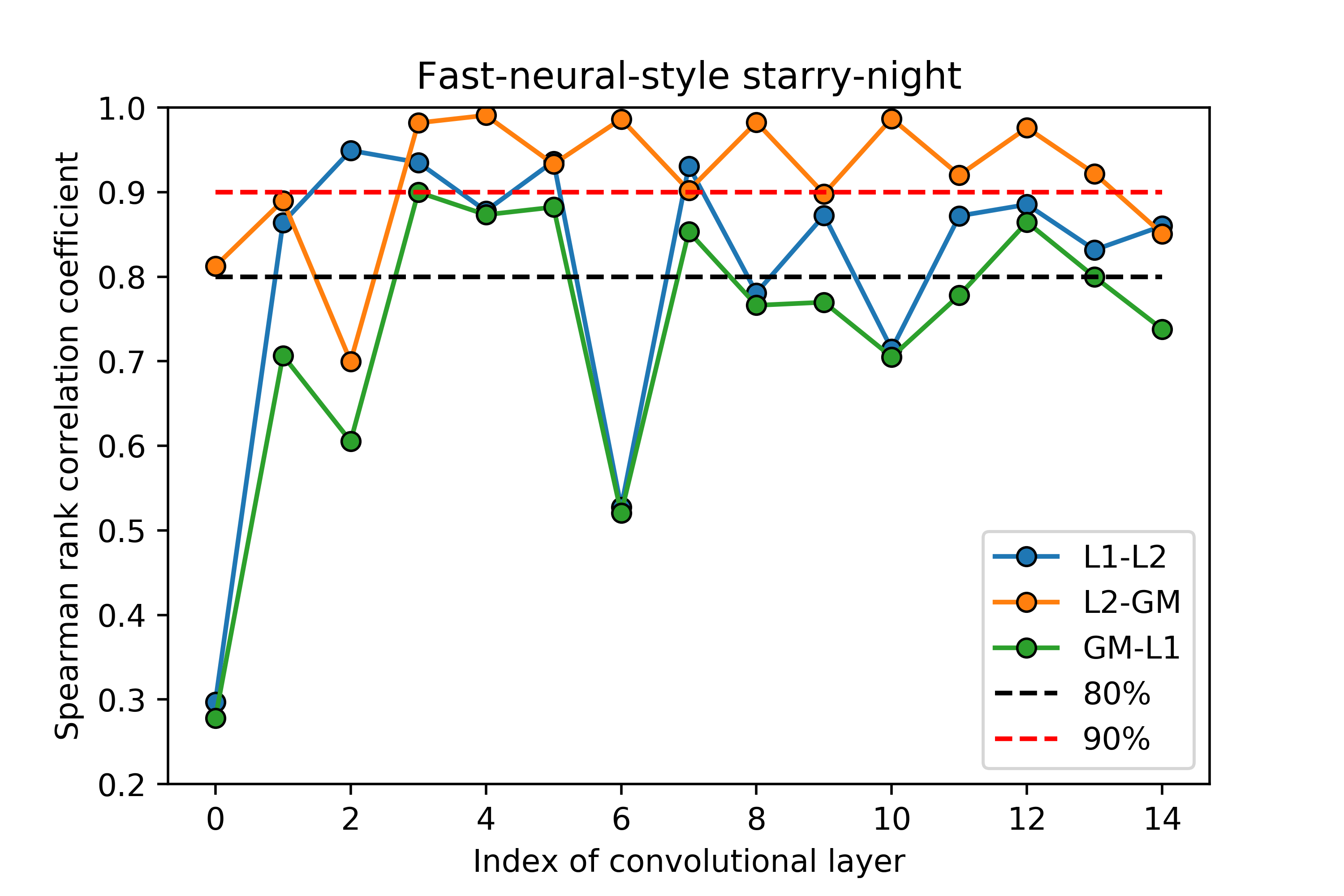} 
	\includegraphics[height=2.2in, width=2.5in]{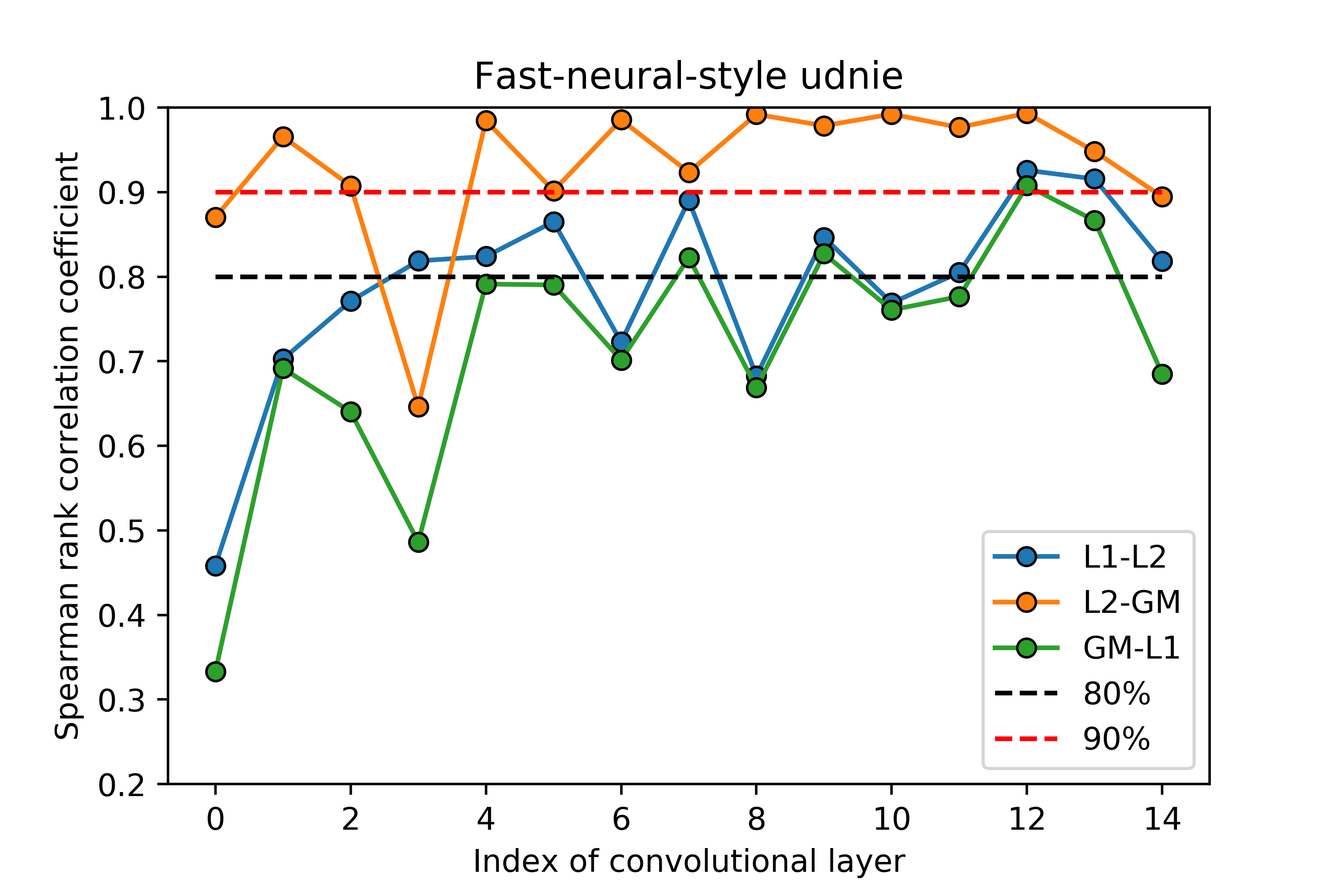}
	\caption{Other task: style transfer}
\end{figure}

\begin{figure}
	\centering 
	\includegraphics[height=2.2in, width=2.5in]{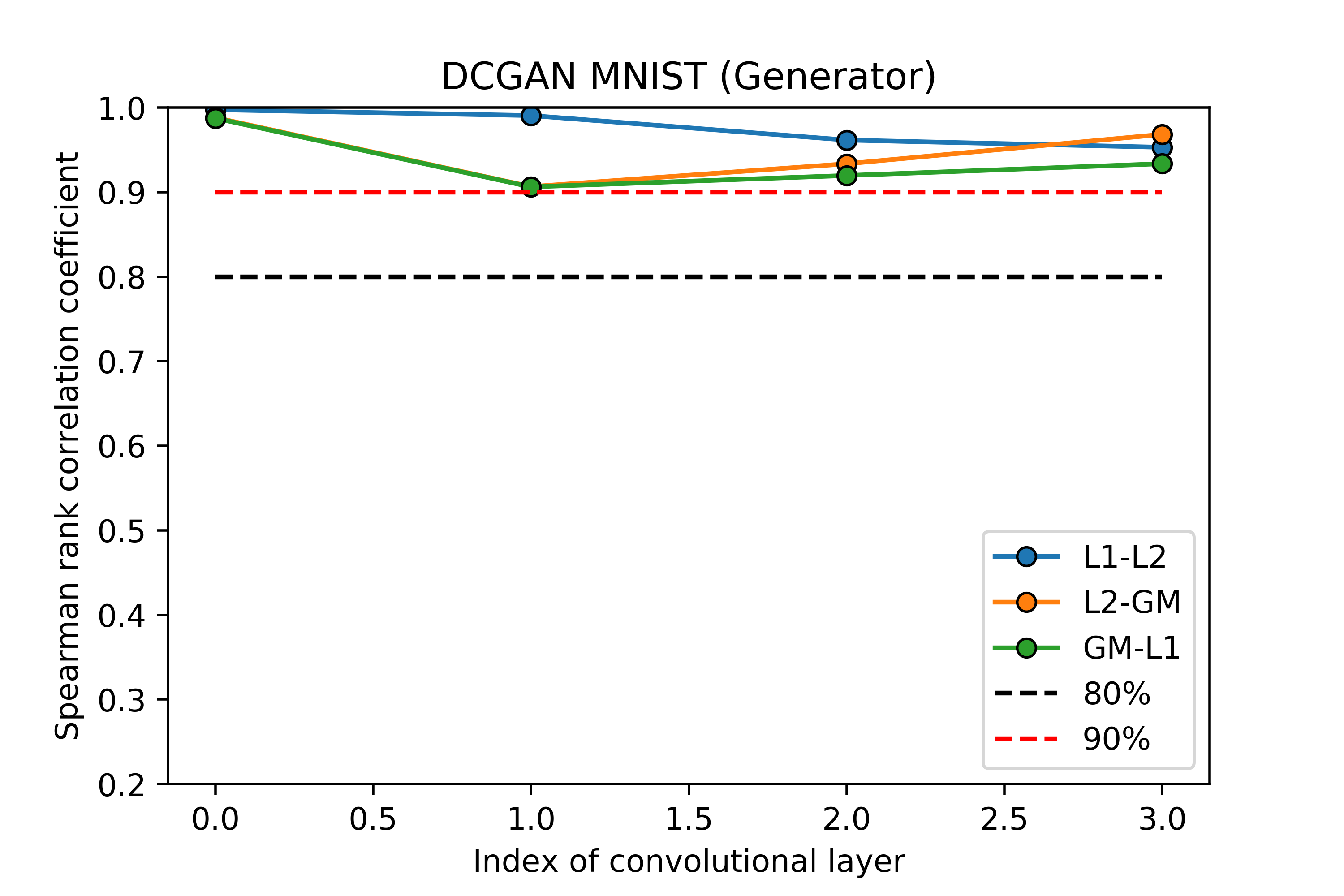} 
	\includegraphics[height=2.2in, width=2.5in]{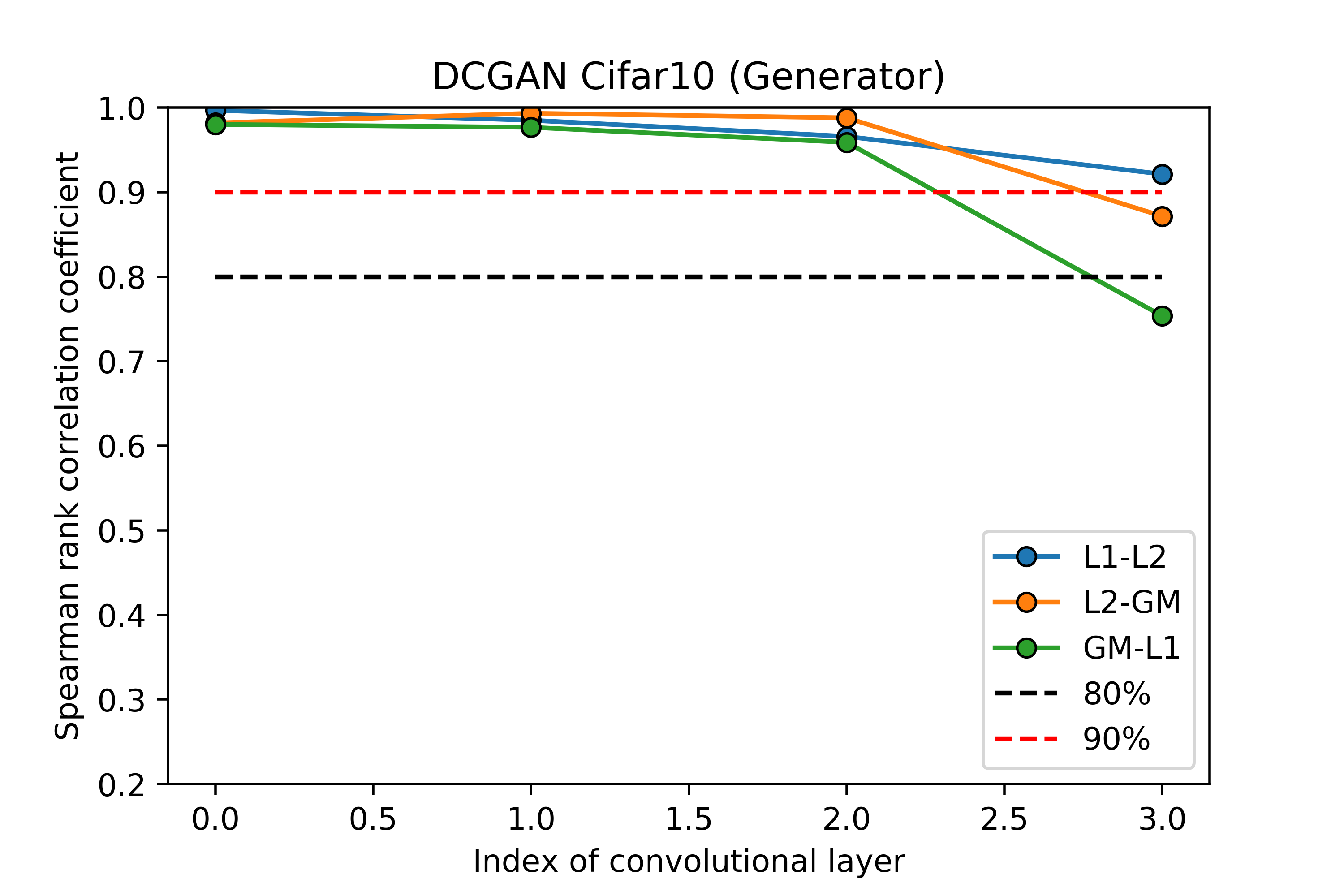} 
	\includegraphics[height=2.2in, width=2.5in]{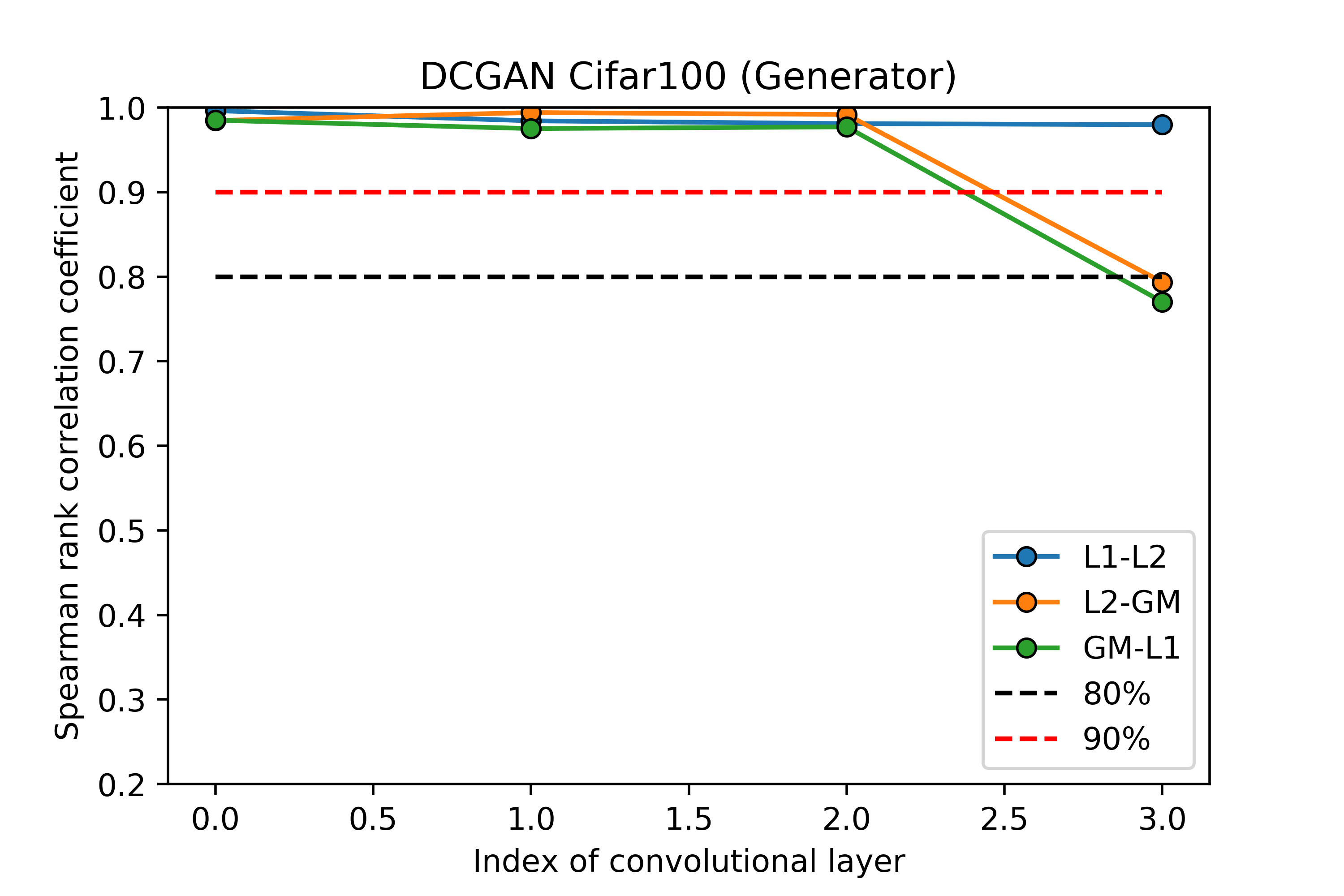} 
	\caption{Other task: GAN}
\end{figure}

\begin{figure}
	\centering 
	\includegraphics[height=2.2in, width=2.5in]{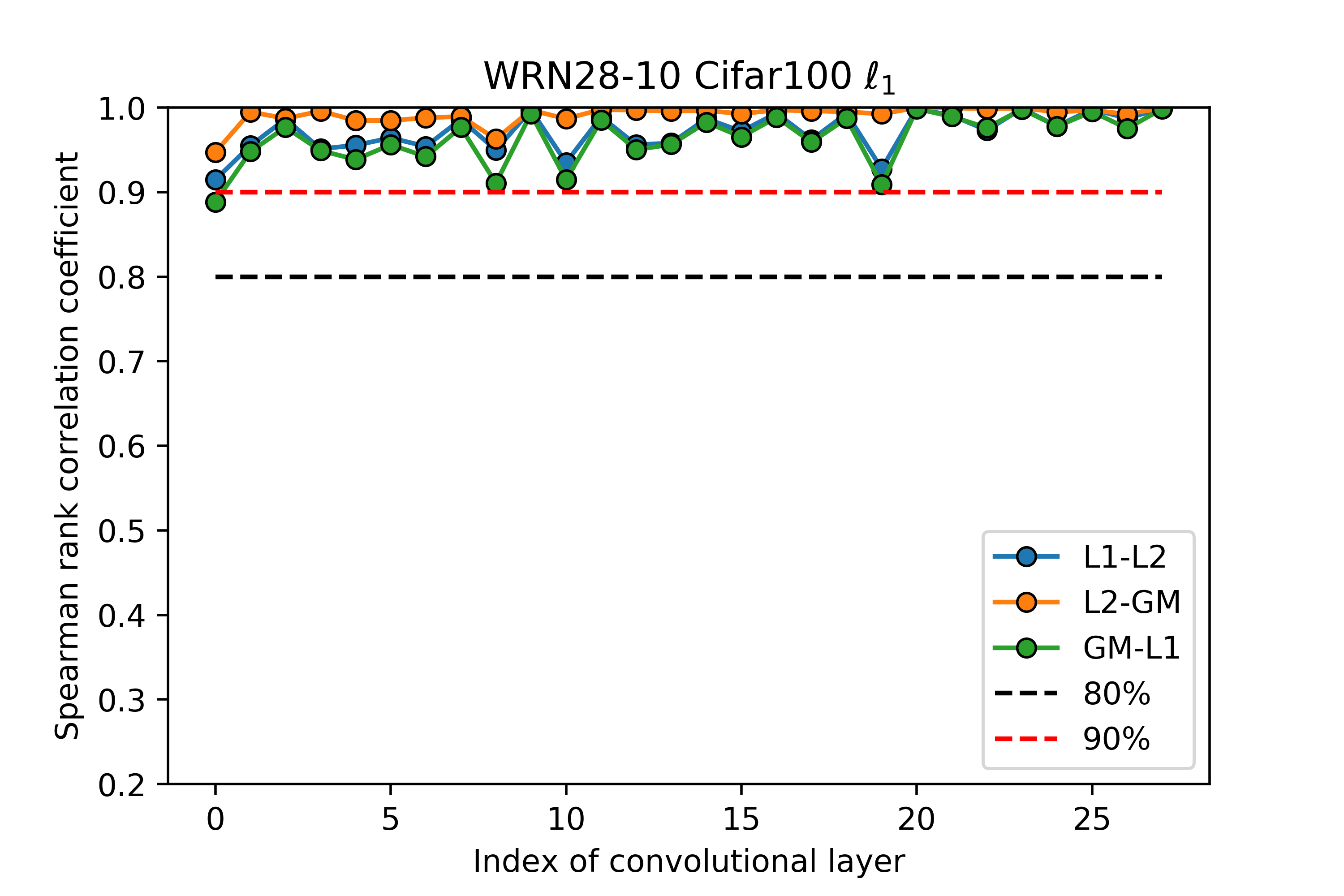} 
	\includegraphics[height=2.2in, width=2.5in]{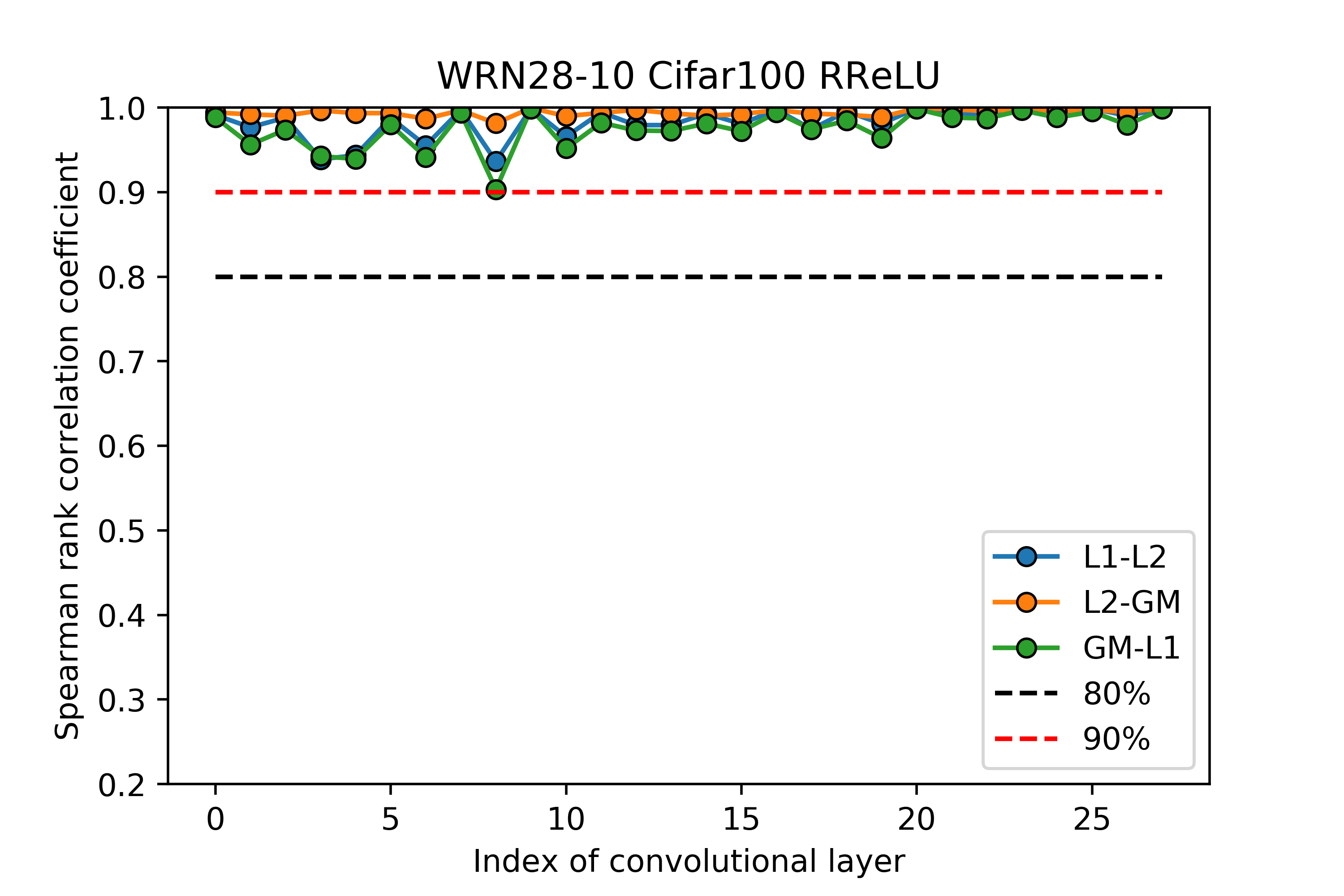} 
	\includegraphics[height=2.2in, width=2.5in]{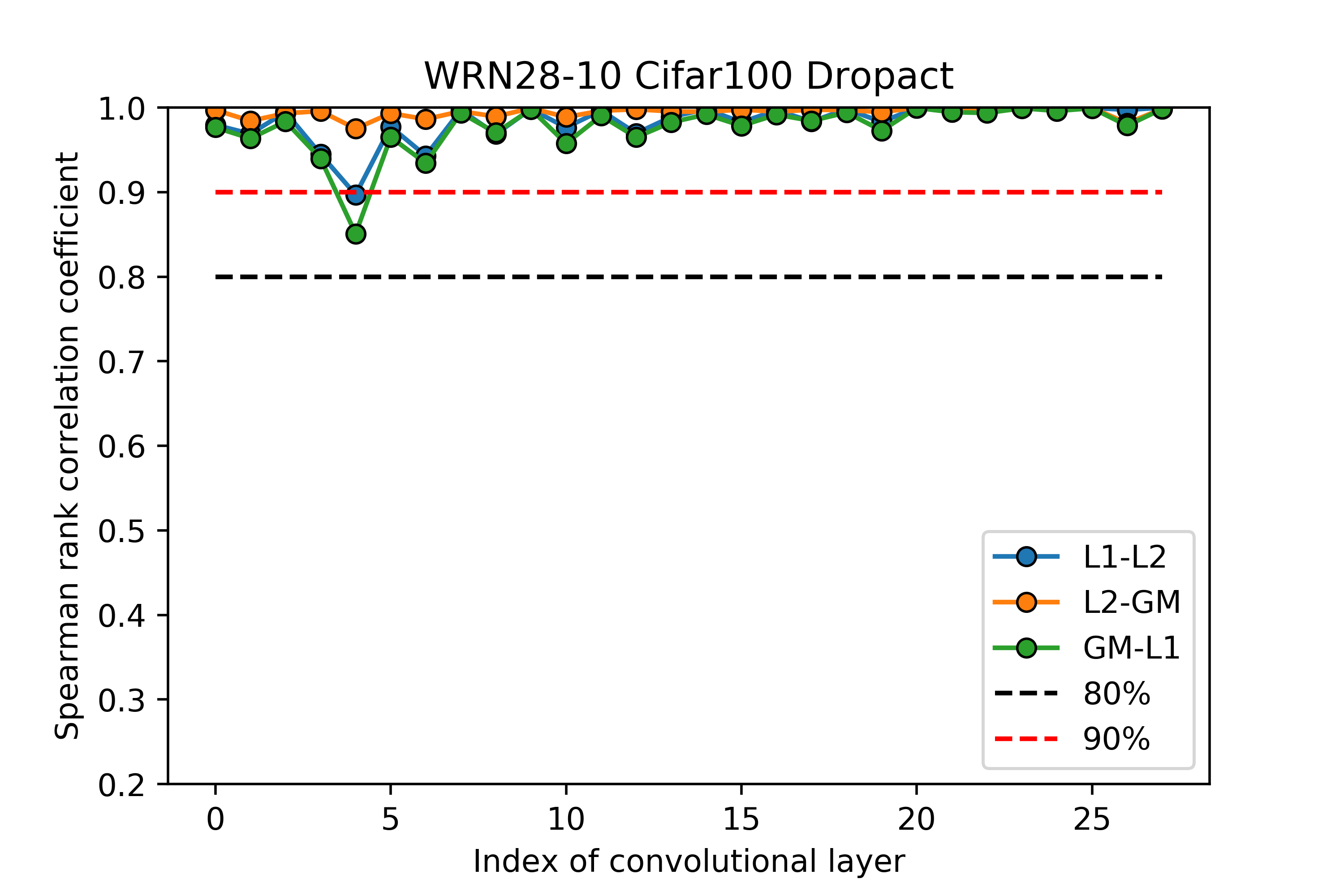} 
	\includegraphics[height=2.2in, width=2.5in]{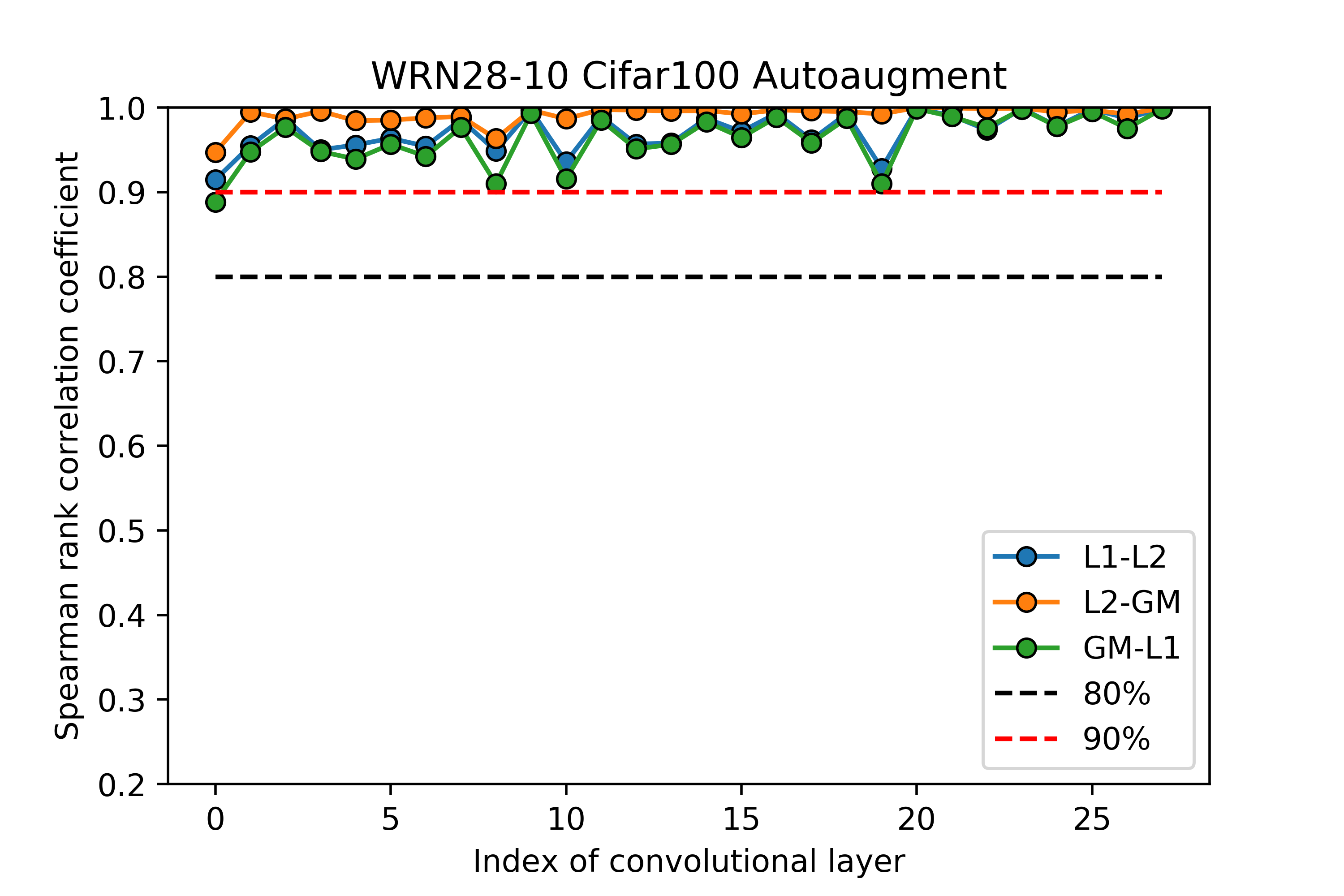} 
	\includegraphics[height=2.2in, width=2.5in]{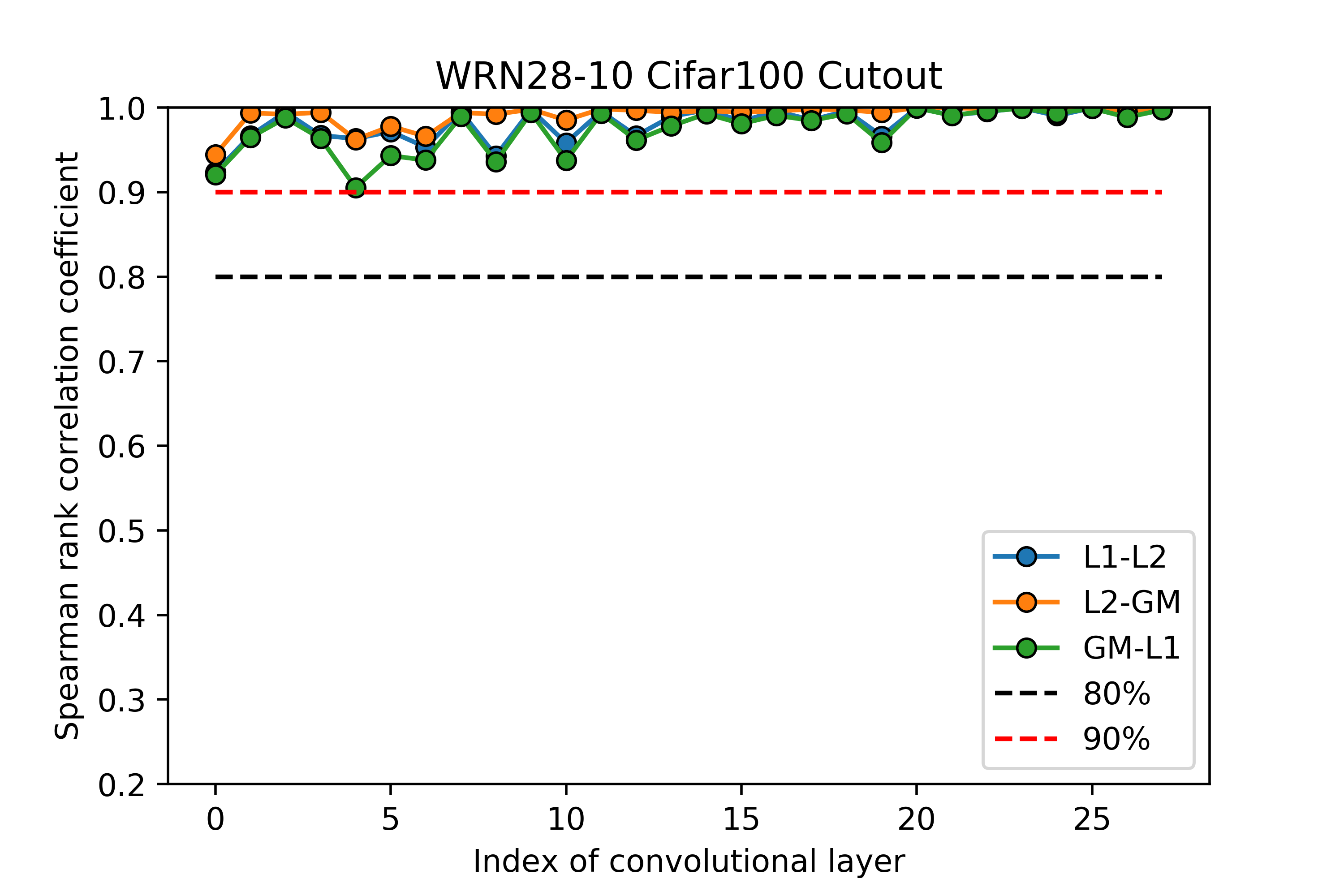} 
	\includegraphics[height=2.2in, width=2.5in]{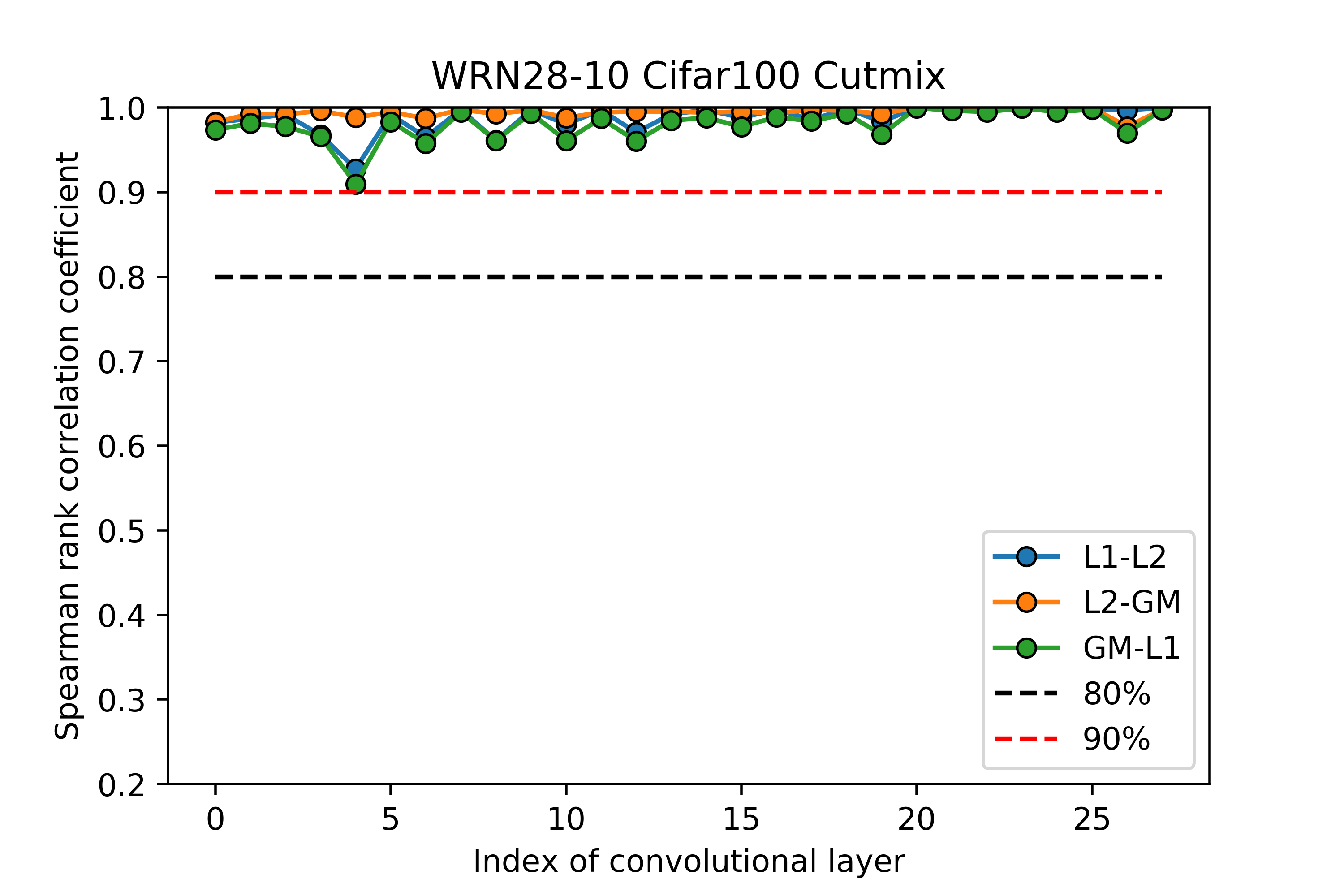} 
	
	\caption{Other task: Regularization}
\end{figure}

\begin{figure}
	\centering 
	\includegraphics[height=2.2in, width=2.5in]{wrn_cifar100.png} 
	\includegraphics[height=2.2in, width=2.5in]{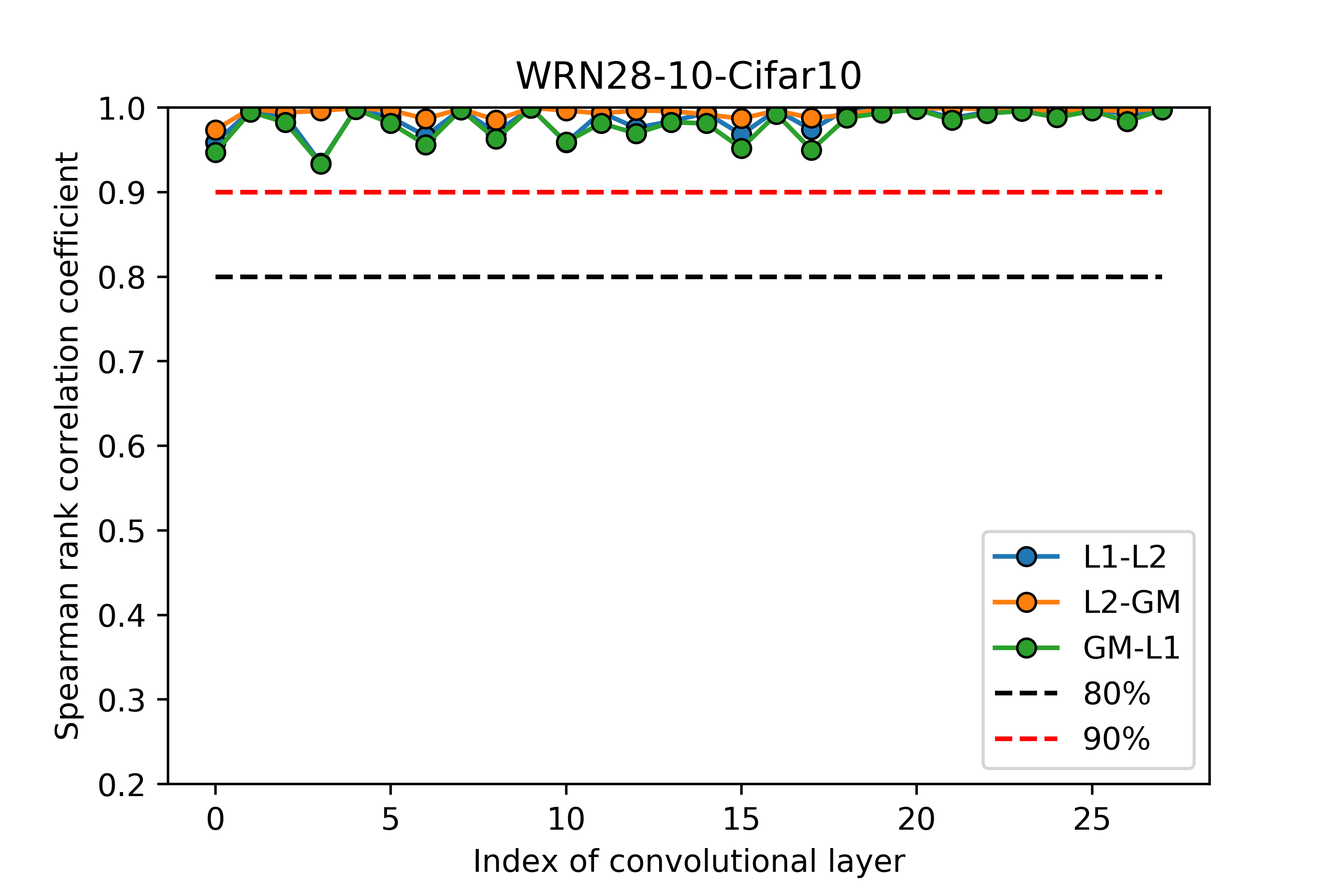} 
	\includegraphics[height=2.2in, width=2.5in]{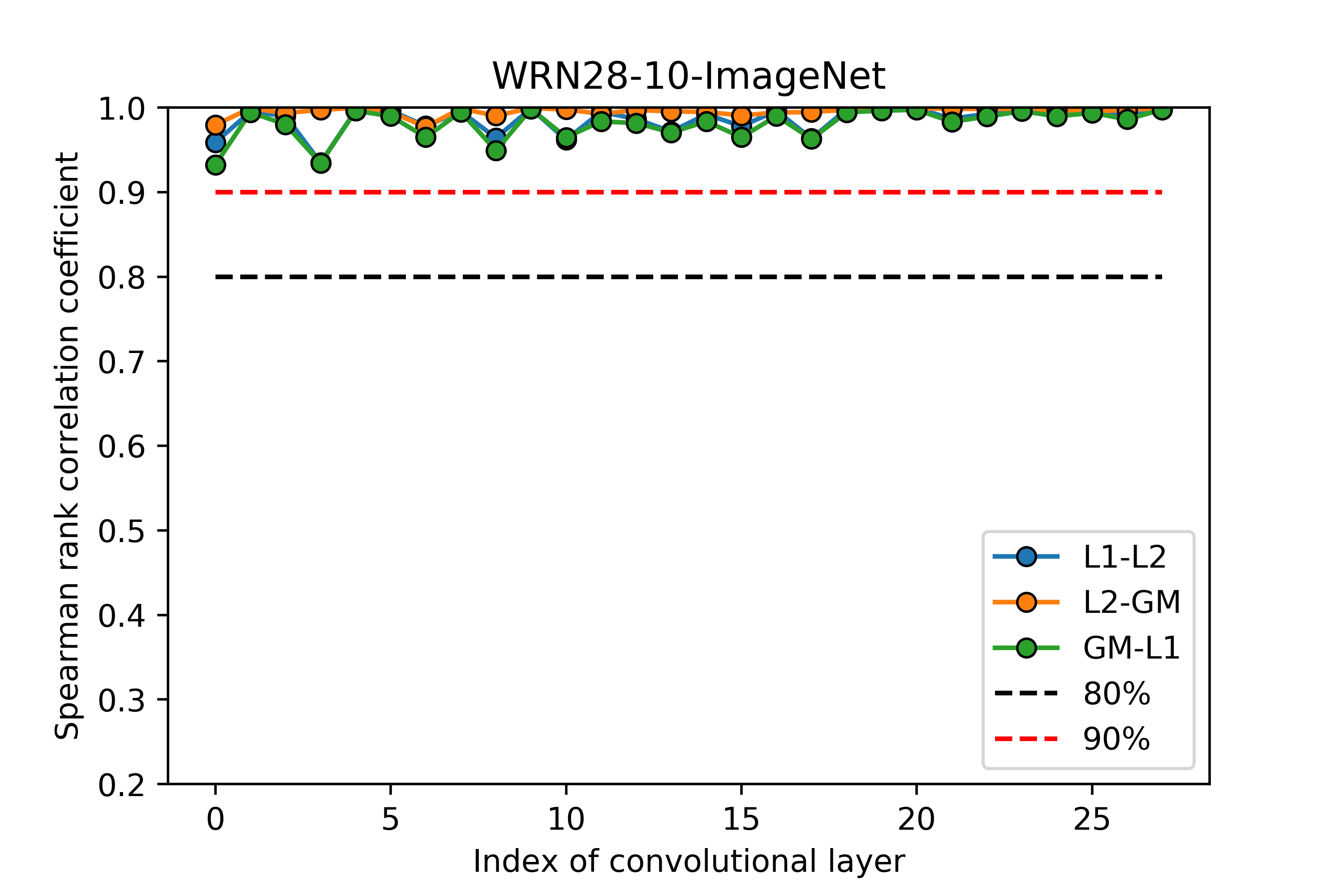} 
	\includegraphics[height=2.2in, width=2.5in]{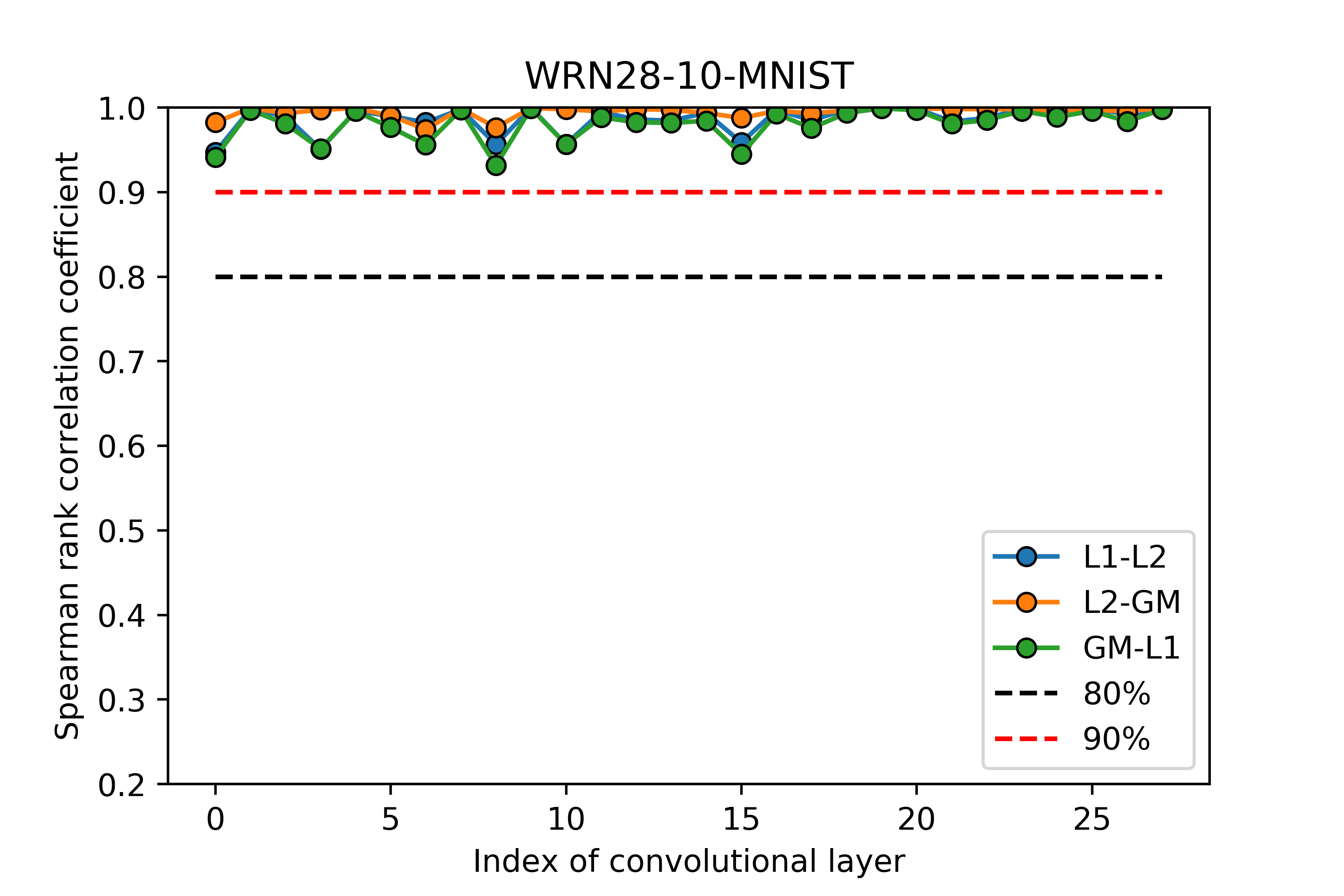} 
	\caption{Dataset}
\end{figure}    

\begin{figure}
	\centering 
	\includegraphics[height=2.2in, width=2.5in]{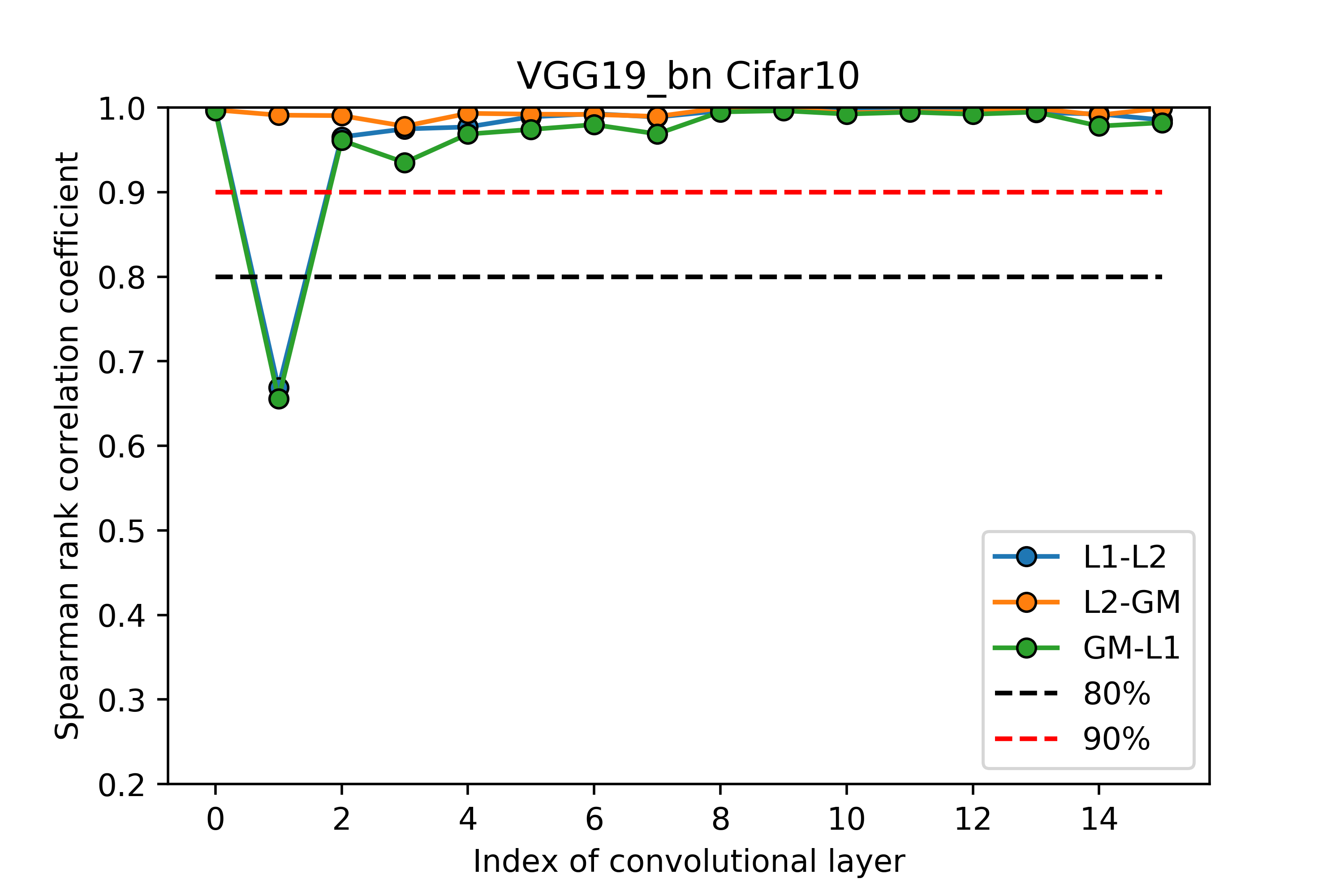} 
	\includegraphics[height=2.2in, width=2.5in]{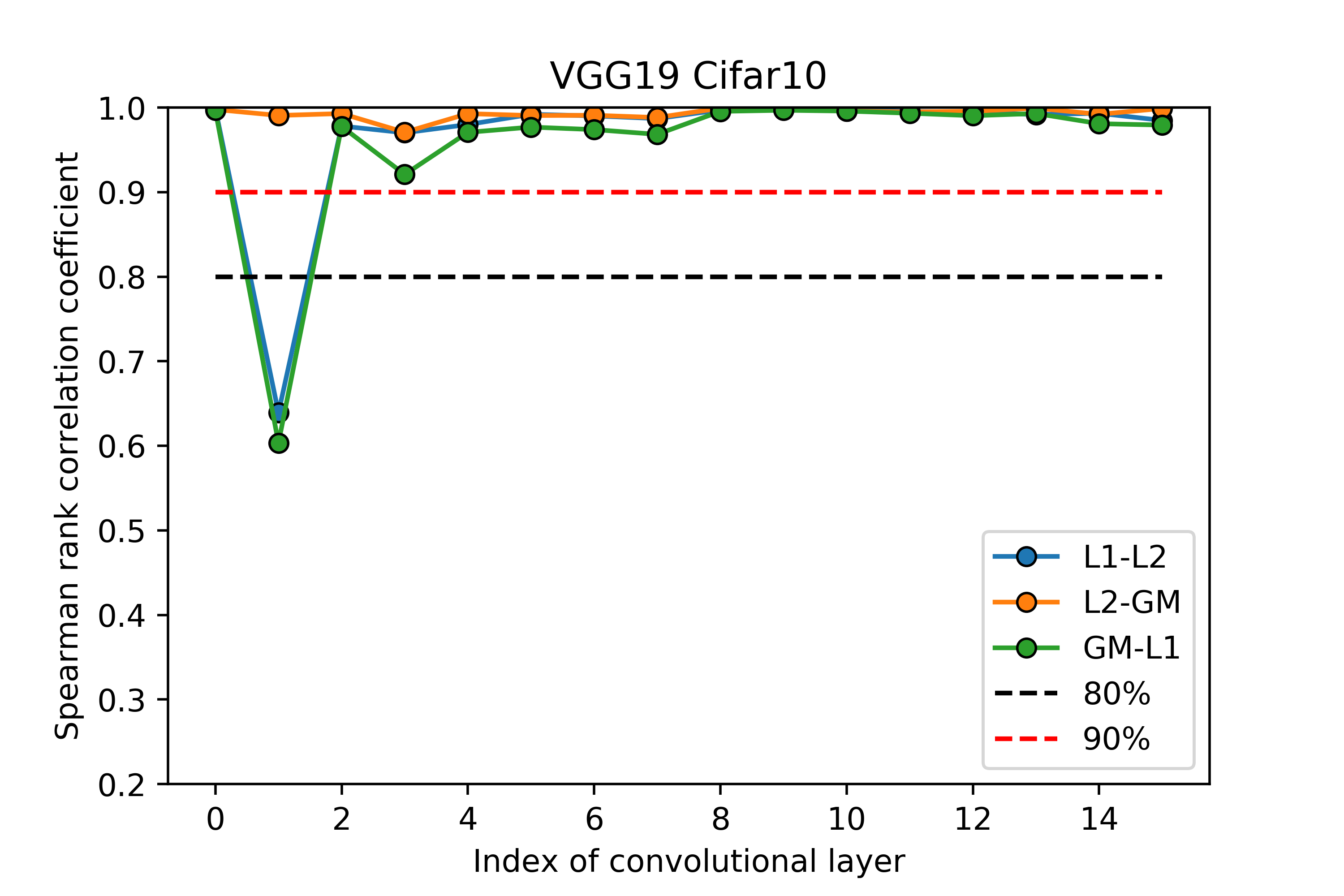} 
	\caption{Batch normalization}
\end{figure}

\begin{figure}
	\centering 
	\includegraphics[height=2.2in, width=2.5in]{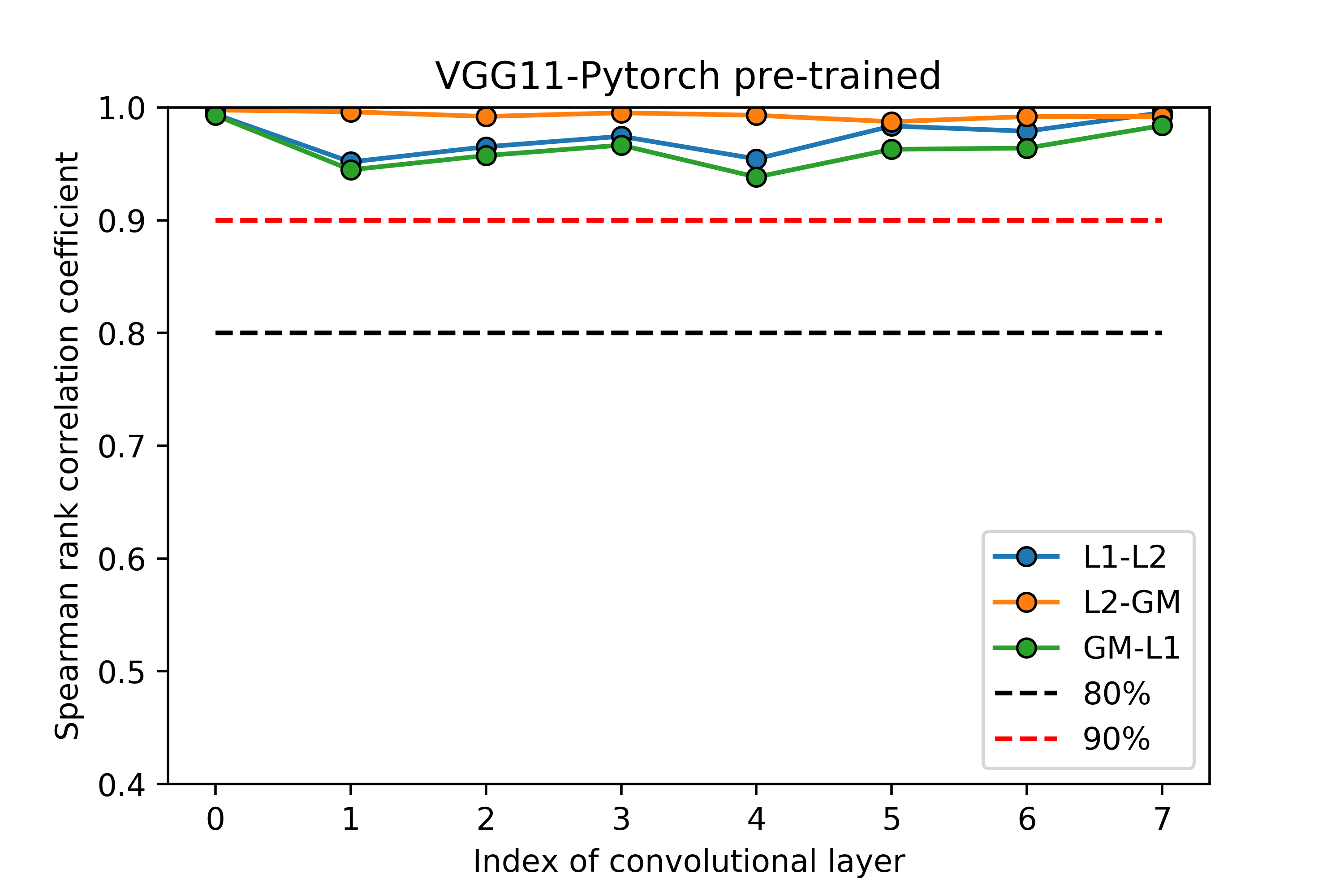} 
	\includegraphics[height=2.2in, width=2.5in]{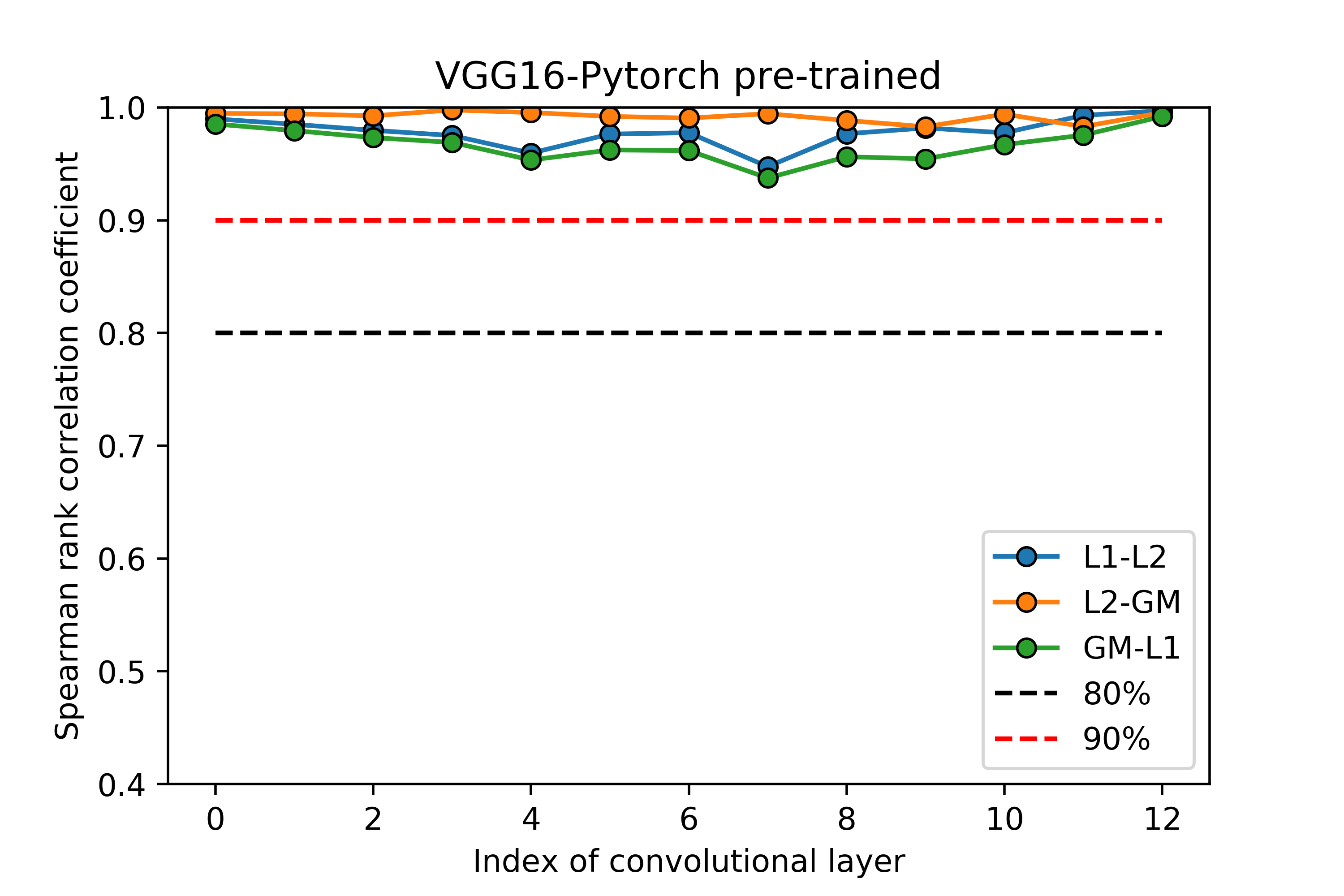} 
	\includegraphics[height=2.2in, width=2.5in]{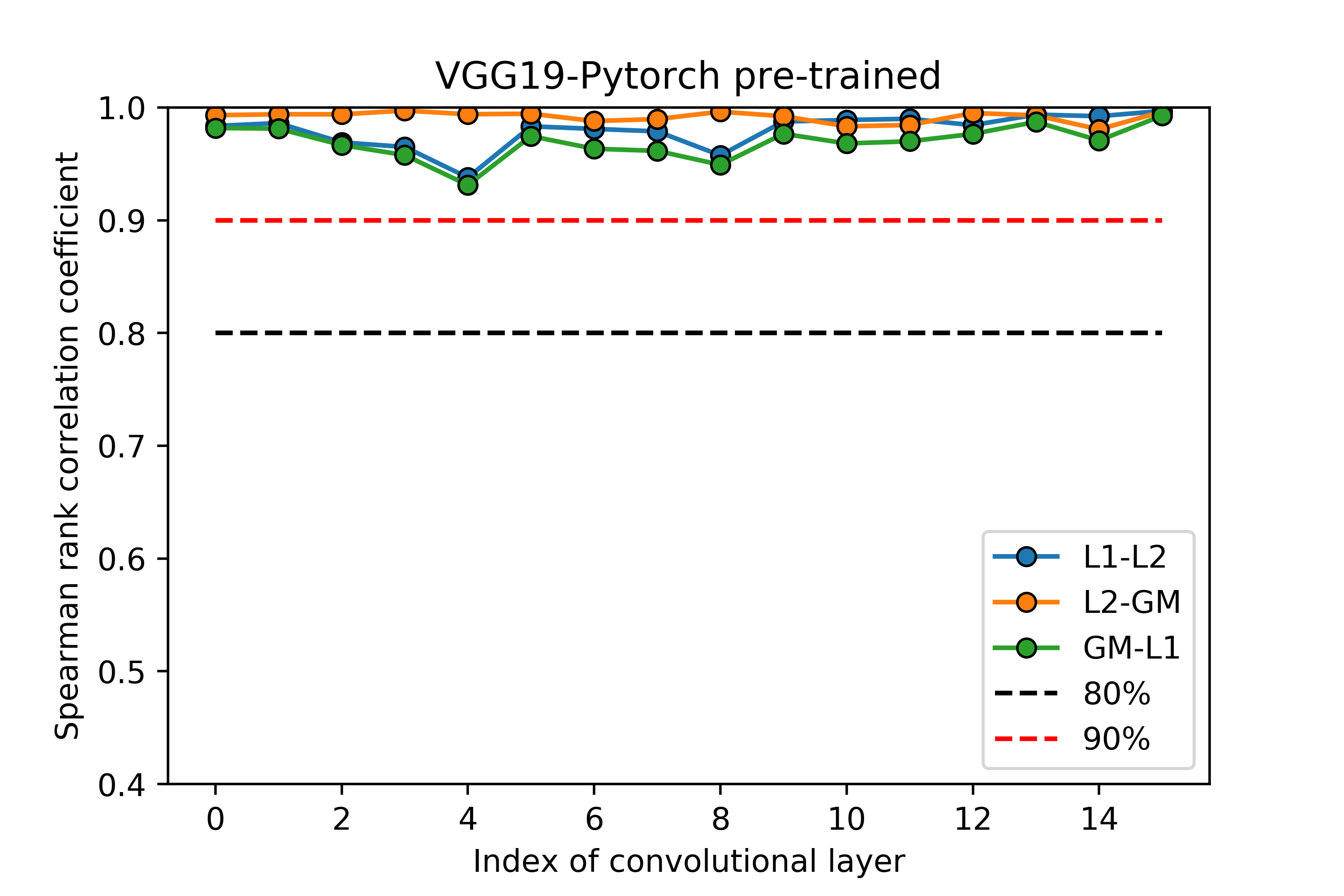} 
	\includegraphics[height=2.2in, width=2.5in]{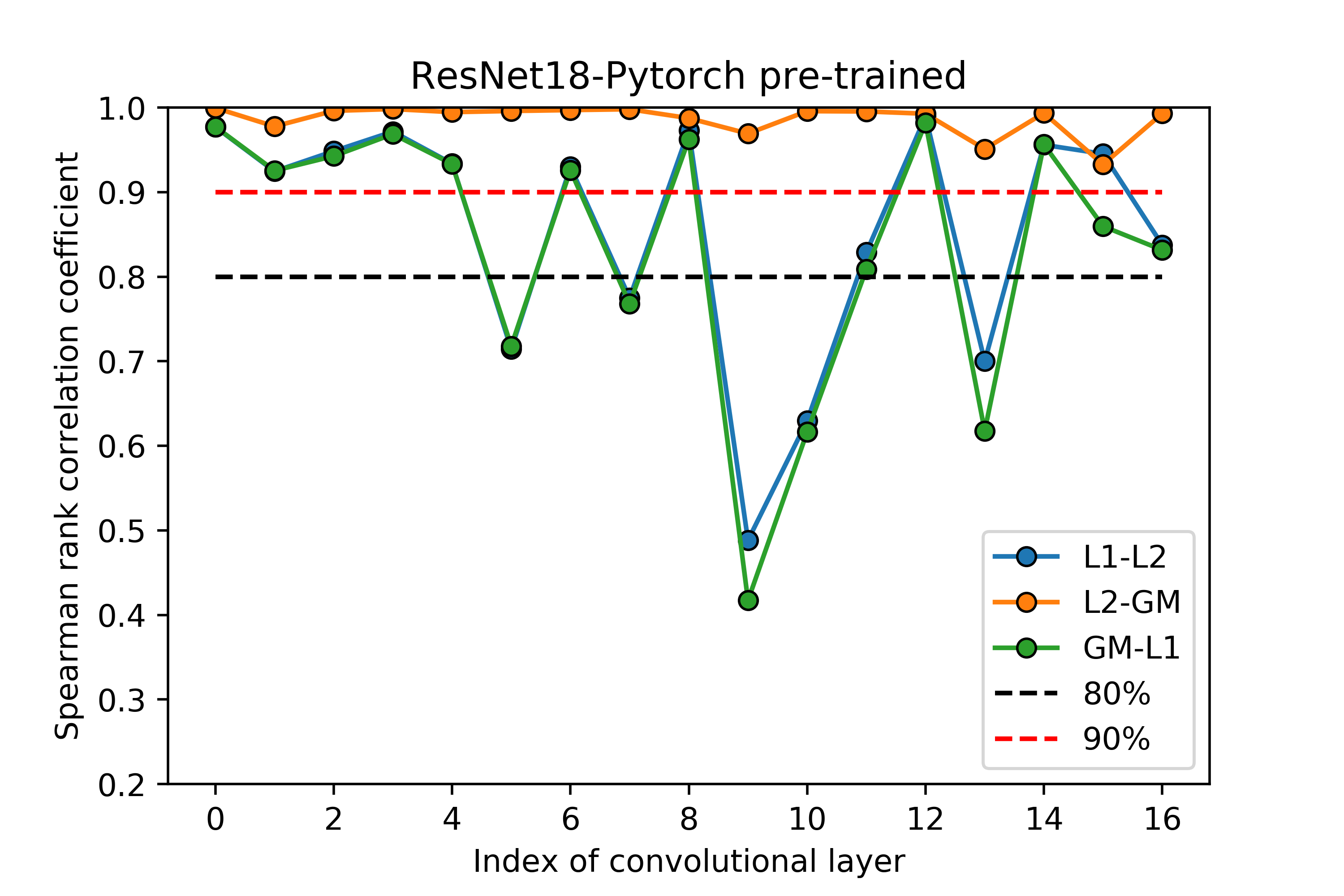} 
	\includegraphics[height=2.2in, width=2.5in]{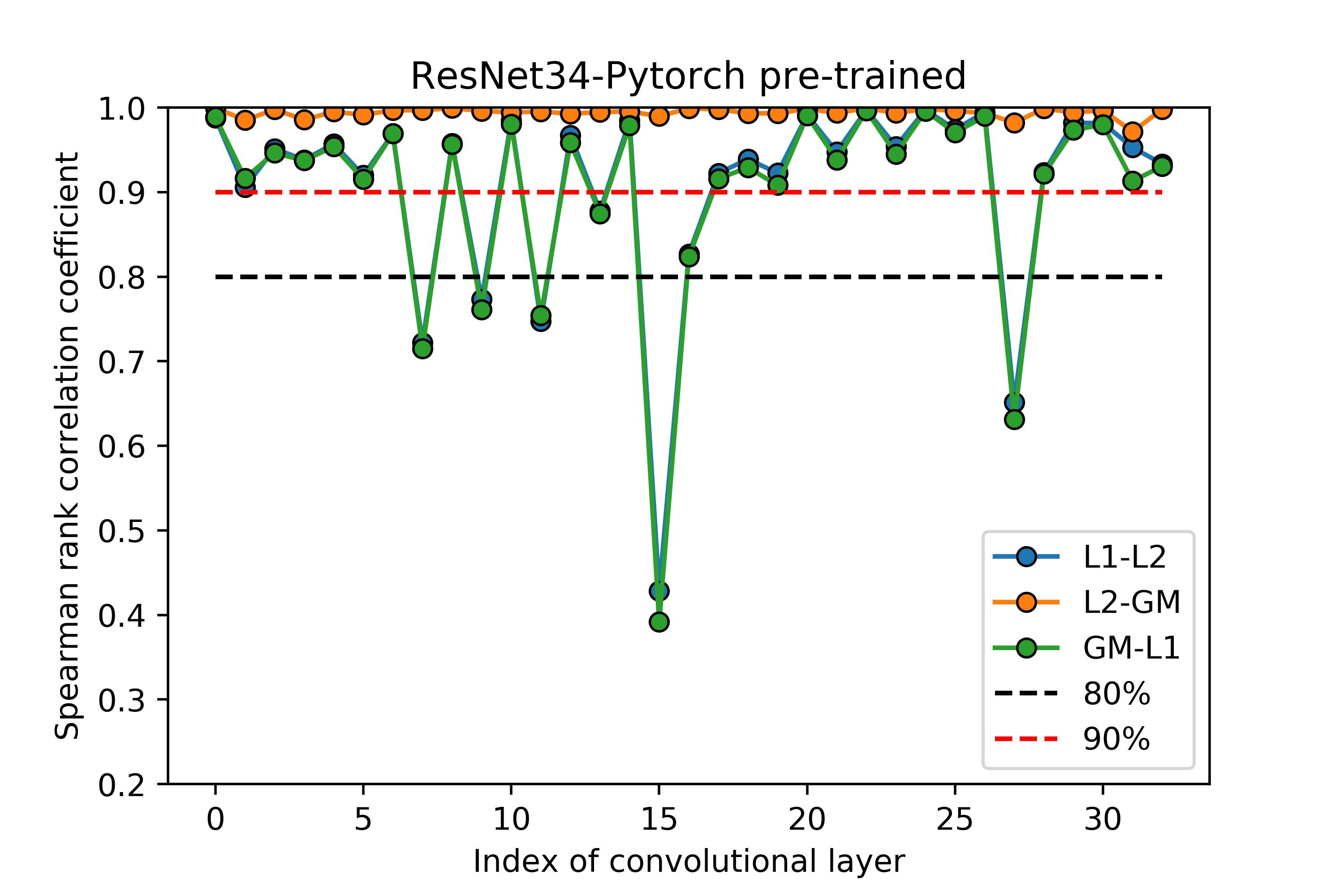} 
	\includegraphics[height=2.2in, width=2.5in]{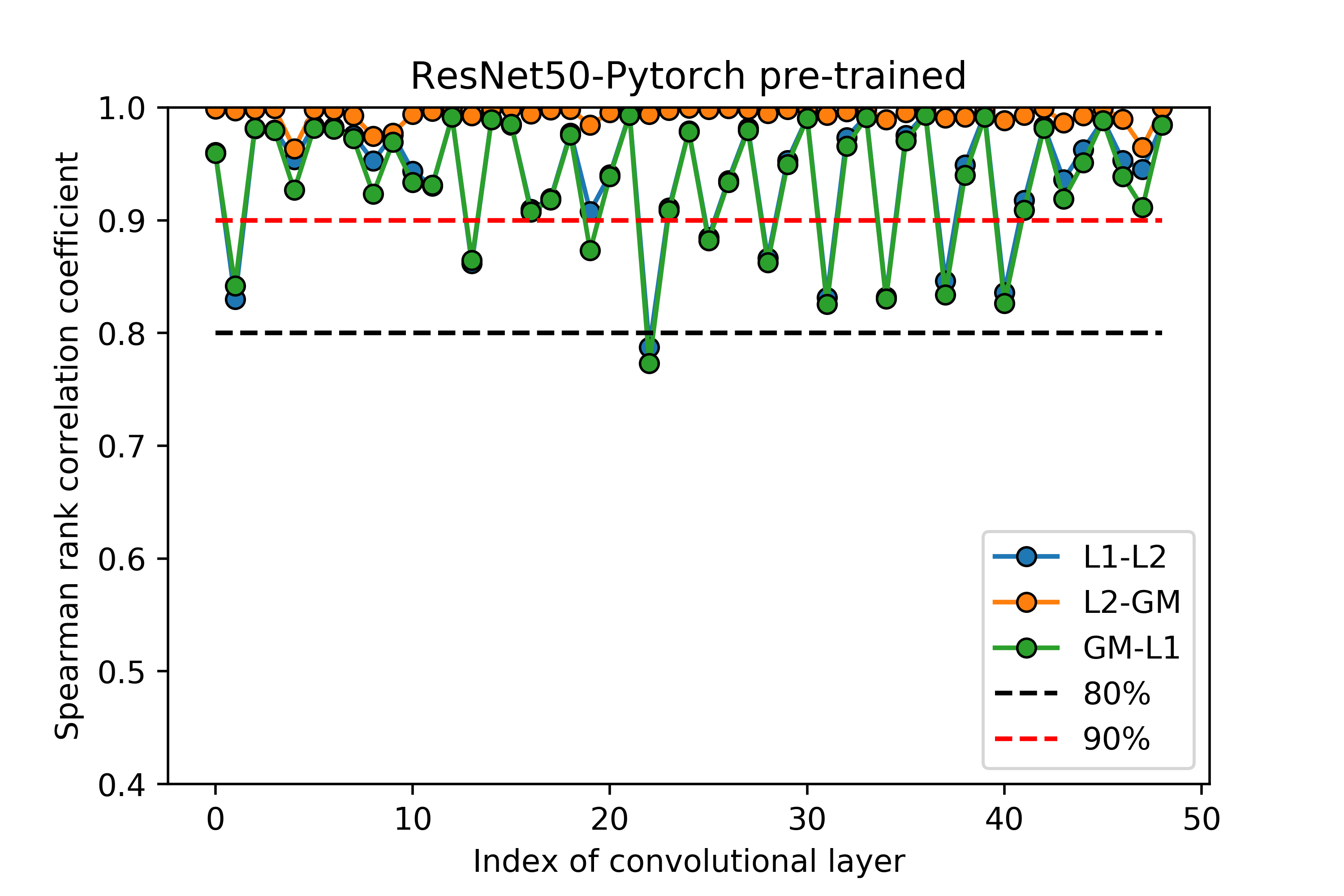}

	\caption{Pytorch pre-trained Model} 
\end{figure}    

\begin{figure}
	\centering 
	\includegraphics[height=2.2in, width=2.5in]{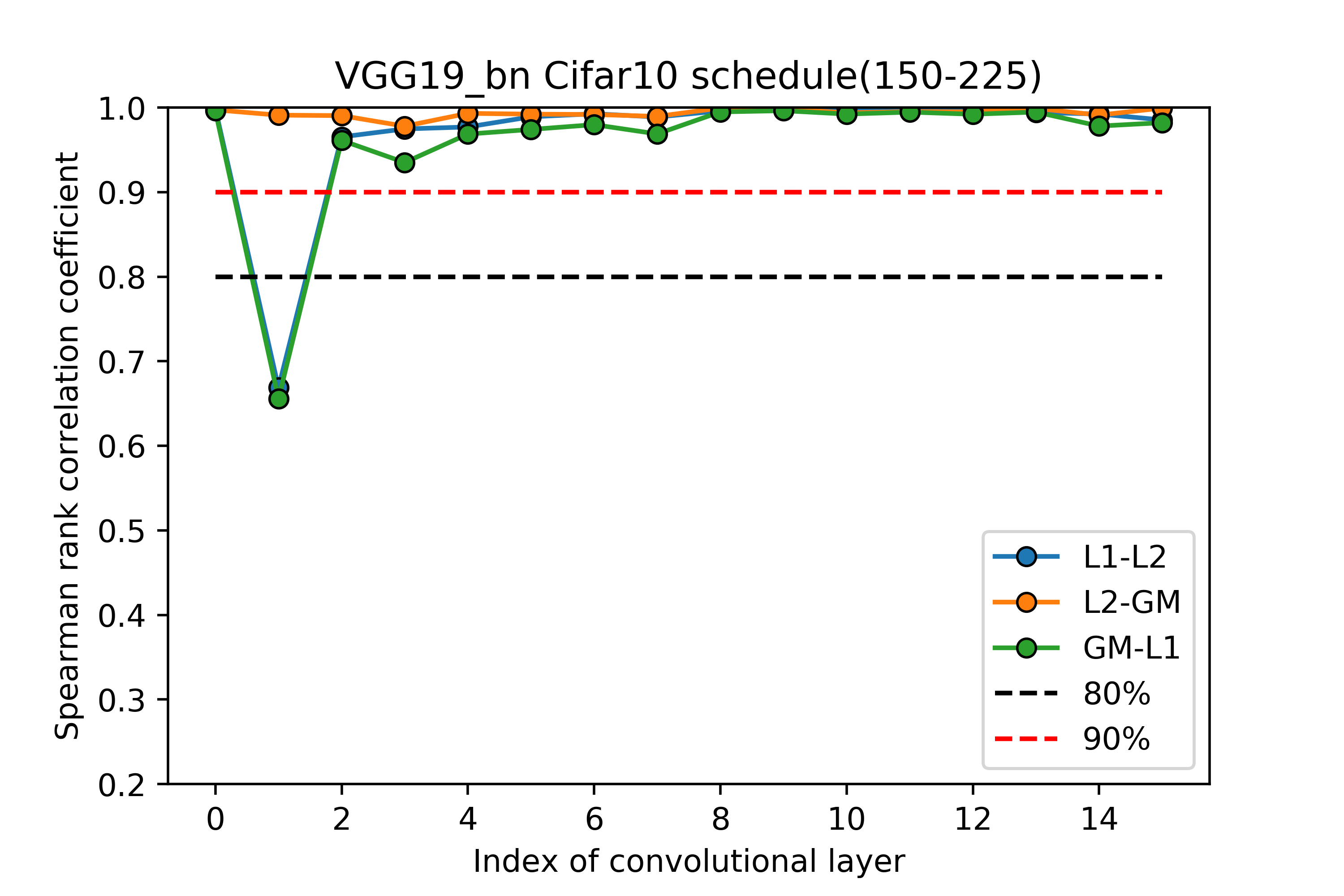} 
	\includegraphics[height=2.2in, width=2.5in]{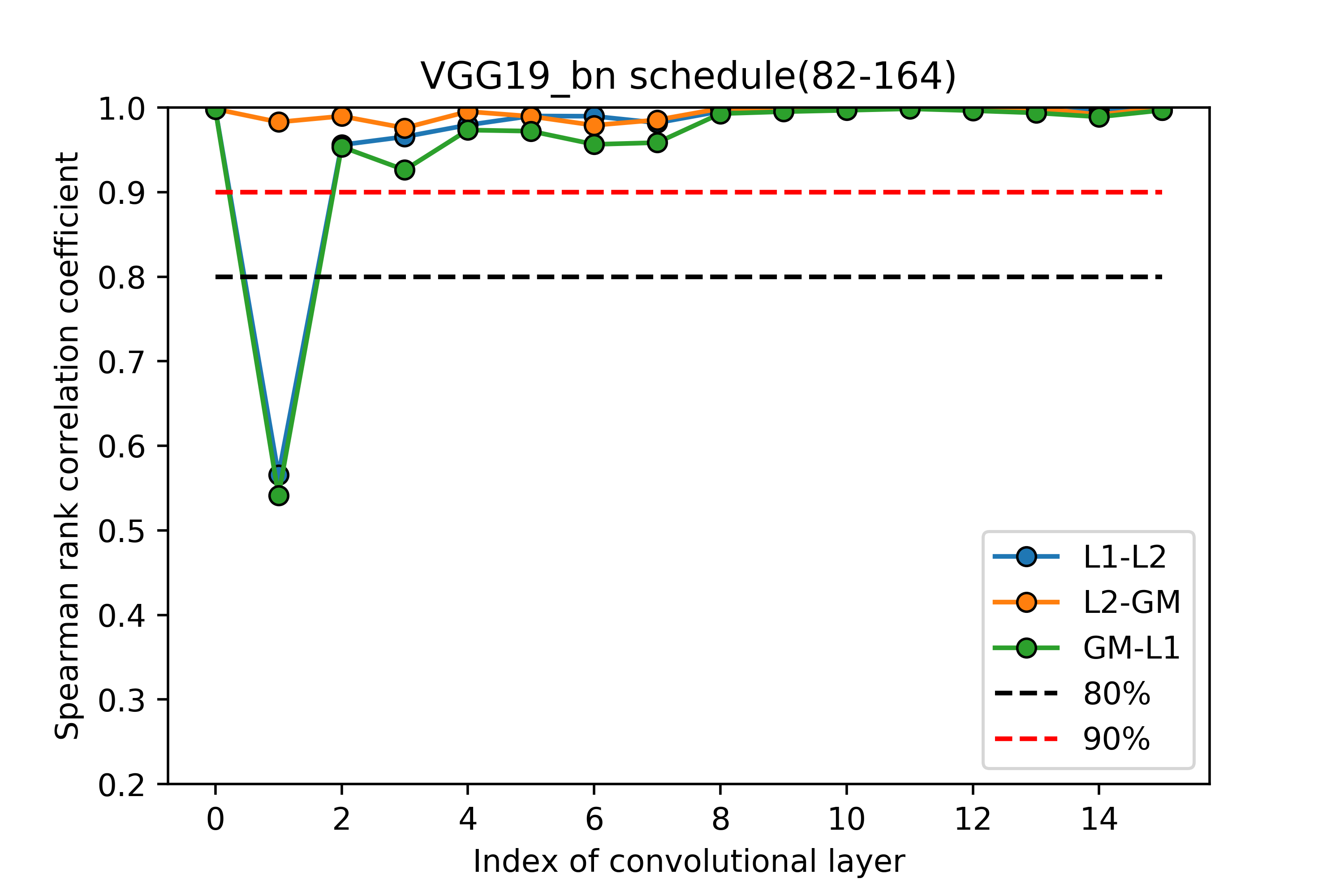} 
	\includegraphics[height=2.2in, width=2.5in]{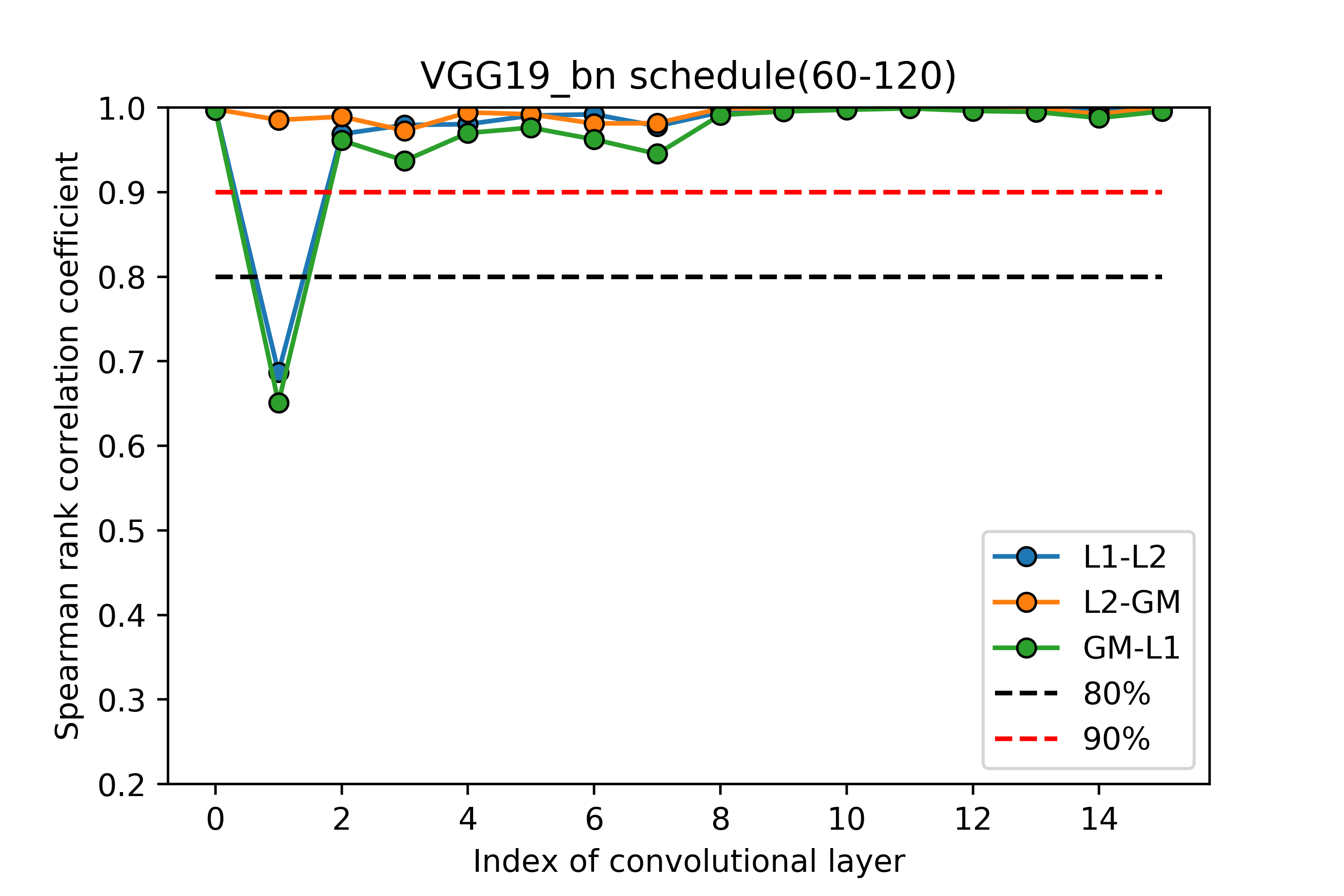} 
	\includegraphics[height=2.2in, width=2.5in]{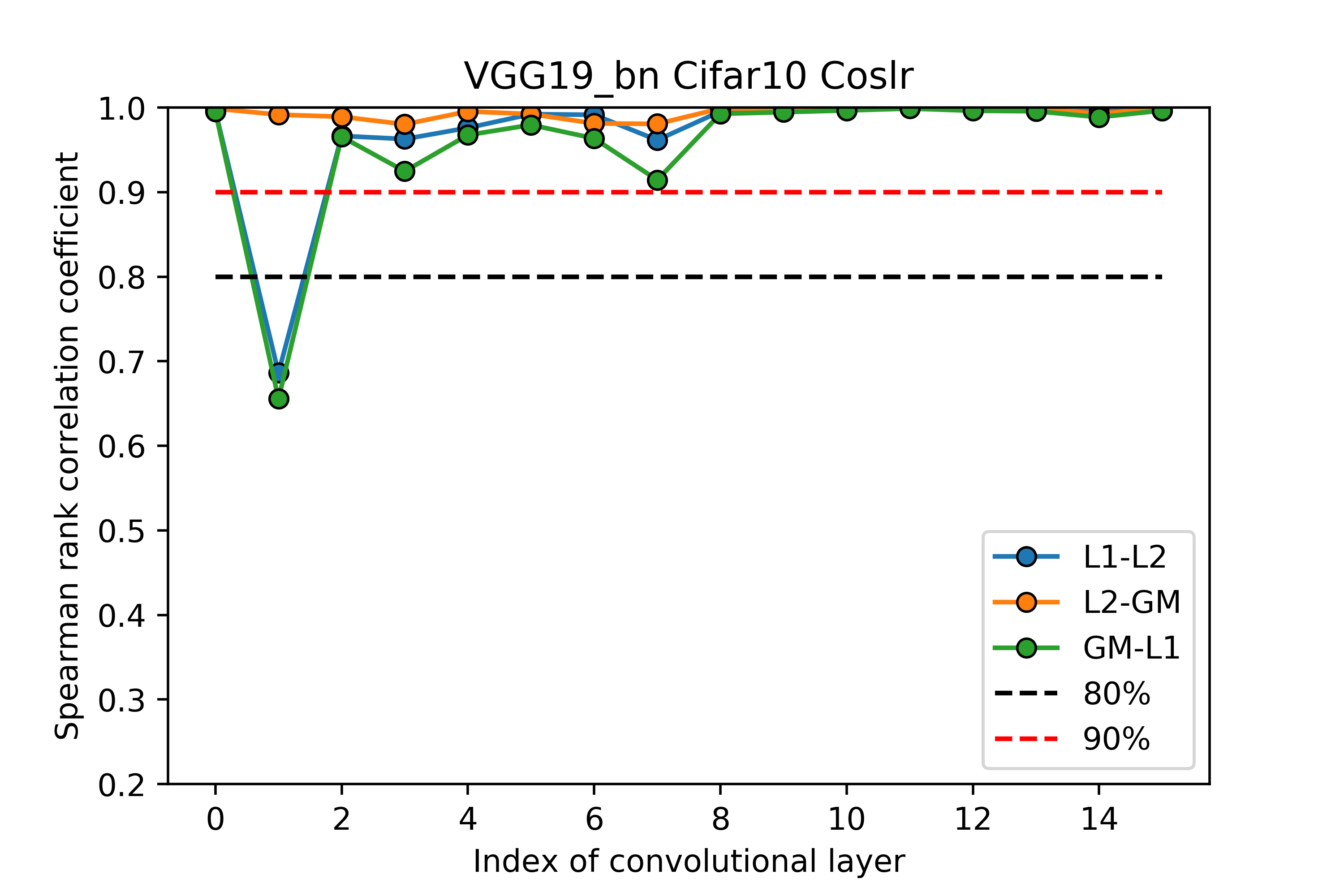}

	\caption{Learning rate} 
\end{figure}


\end{document}